\renewcommand*\backref[1]{\ifx#1\relax \else (Cited on #1) \fi}
\newtheorem{fact}{Fact}
\title{Provably and Practically Efficient Neural Contextual Bandits}
\author[$\ddagger$]{Sudeep Salgia}
\author[*]{Sattar Vakili}
\author[$\ddagger$]{Qing Zhao}
\affil[$\ddagger$]{School of Electrical \& Computer Engineering, Cornell University, Ithaca, NY, \emph{\{ss3827,qz16\}@cornell.edu} }
\affil[*]{MediaTek Research, UK, \emph{sattar.vakili@mtkresearch.com}}
\def\nn{\nonumber}
\def\NN{\textsc{NN}}
\def\NTK{\textsc{NTK}}
\def\Dc{\mathcal{D}}
\def\Oc{\mathcal{O}}
\def\Oct{\tilde{\mathcal{O}}}
\newcommand{\1}{\mathbbm{1}}
\def\E{\mathbb{E}}
\def\Ss{\mathbb{S}}
\def\Rr{\mathbb{R}}
\def\Ii{\mathbb{I}}
\newcommand{\R}{\mathbb{R}}
\newcommand{\N}{\mathbb{N}}
\def\sigmahat{\hat{\sigma}}
\newcommand{\cA}{\mathcal{A}}
\newcommand{\cB}{\mathcal{B}}
\newcommand{\cC}{\mathcal{C}}
\newcommand{\cD}{\mathcal{D}}
\newcommand{\cE}{\mathcal{E}}
\newcommand{\cF}{\mathcal{F}}
\newcommand{\cH}{\mathcal{H}}
\newcommand{\cM}{\mathcal{M}}
\newcommand{\cN}{\mathcal{N}}
\newcommand{\cT}{\mathcal{T}}
\newcommand{\cX}{\mathcal{X}}
\newcommand{\cZ}{\mathcal{Z}}
\newcommand{\bfa}{\mathbf{a}}
\newcommand{\bfb}{\mathbf{b}}
\newcommand{\bff}{\mathbf{f}}
\newcommand{\bfg}{\mathbf{g}}
\newcommand{\bfh}{\mathbf{h}}
\newcommand{\bfr}{\mathbf{r}}
\newcommand{\bfu}{\mathbf{u}}
\newcommand{\bfv}{\mathbf{v}}
\newcommand{\bfw}{\mathbf{w}}
\newcommand{\bfx}{\mathbf{x}}
\newcommand{\bfy}{\mathbf{y}}
\newcommand{\bfz}{\mathbf{z}}
\newcommand{\bfA}{\mathbf{A}}
\newcommand{\bfB}{\mathbf{B}}
\newcommand{\bfD}{\mathbf{D}}
\newcommand{\bfF}{\mathbf{F}}
\newcommand{\bfG}{\mathbf{G}}
\newcommand{\bfH}{\mathbf{H}}
\newcommand{\bfI}{\mathbf{I}}
\newcommand{\bfM}{\mathbf{M}}
\newcommand{\bfV}{\mathbf{V}}
\newcommand{\bfW}{\mathbf{W}}
\newcommand{\bfX}{\mathbf{X}}
\newcommand{\bfY}{\mathbf{Y}}
\newcommand{\bfZ}{\mathbf{Z}}
\newcommand{\bfxi}{\boldsymbol{\xi}}
\newcommand{\bfPi}{\boldsymbol{\Pi}}
\newcommand{\bfDelta}{\boldsymbol{\Delta}}
\newcommand{\bfSigma}{\boldsymbol{\Sigma}}
\newcommand{\ip}[2]{\left \langle #1, #2 \right \rangle}
\newcommand{\ec}[2]{{#1}^{(#2)}}
\newcommand{\diag}{\text{diag}}
\newcommand{\tr}{\mathrm{tr}}
\newcommand{\cov}{\text{Cov}}
\newcommand{\var}{\text{Var}}
\DeclareMathOperator*{\argmax}{arg\,max}
\newtheorem{lemma}{Lemma}
\newtheorem{theorem}{Theorem}
\newtheorem{definition}{Definition}
\newtheorem{assumption}{Assumption}
\begin{document}

\maketitle

\begin{abstract}
  We consider the neural contextual bandit problem. In contrast to the existing work which primarily focuses on ReLU neural nets, we consider a general set of smooth activation functions. Under this more general setting,
  (i) we derive non-asymptotic error bounds on the difference between an overparameterized neural net and its corresponding neural tangent kernel, (ii) we propose an algorithm with a provably sublinear regret bound that is also efficient in the finite regime as demonstrated by empirical studies. The non-asymptotic error bounds may be of broader interest as a tool to establish the relation between the smoothness of the activation functions in neural contextual bandits and the smoothness of the kernels in kernel bandits.
\end{abstract}

\section{Introduction}

The stochastic contextual bandit has been extensively studied in the literature~\citep[see,][and references therein]{ langford2007epoch,lattimore2020bandit}. In this problem, at each discrete sequential step, a \emph{context} is revealed by the environment. Then, a bandit agent chooses one of the $K$ available actions. Choosing each action yields a random reward, the distribution of which is determined by the context. The problem was primarily studied under a linear setting where the expected reward of each arm is a linear function of the context~\citep{chu2011contextual, li2019nearly, chen2021efficient}. It was later extended to kernel-based models~\citep{valko2013finite}, where the expected reward function associated with the context-action pairs belongs to the reproducing kernel Hilbert space (RKHS) determined by a known kernel. Recently, a variation of the problem has been proposed where the expected reward function associated with the context-action pairs is modeled using a neural net~\citep{zhou2020neuralUCB, zhang2020neuralTS, gu2021batched, kassraie2021neural}. The three contextual bandit settings mentioned above are increasingly more complex in the following order:
\emph{Linear $\rightarrow$ Kernel-based $\rightarrow$ Neural}. \\

Contextual bandits find application in recommender systems, information retrieval, healthcare and finance~\citep[see a survey on applications in][]{bouneffouf2019survey}. The problem can also be seen as a middle step from classic stochastic bandits~\citep{auer2002finite} and (non-contextual) linear bandits~\citep{abbasi2011improved} towards a reinforcement learning (RL) problem on a Markov decision process (MDP), where the contexts resemble the states of the MDP. One of the first formulations of linear contextual bandits was referred to as \emph{associative RL} with linear value function~\citep{auer2002using}. Nonetheless, contextual bandit is different from RL in that the contexts are determined arbitrarily (adversarially) rather than following a stochastic Markovian process.  \\

\subsection{Existing Results on Neural Contextual Bandits}

With neural nets demonstrating a great representation power (much more general than linear models), there has been an increasing interest in modeling bandit and RL problems based on neural nets. This setting can be implemented using the typical neural net toolboxes. The neural contextual bandit has been considered in several works: \cite{zhou2020neuralUCB} and \cite{zhang2020neuralTS}, respectively, adopted the upper confidence bound (UCB) and Thompson sampling (TS) algorithms to the neural bandit setting. The algorithms are, respectively, referred to as NeuralUCB and NeuralTS. \cite{gu2021batched} considered a batch observation setting, where the reward function can be evaluated only on batches of data. \cite{kassraie2021neural} provided analysis for a variation of NeuralUCB algorithm with diminishing instantaneous regret. \\

The analysis of neural bandits has been enabled through the theory of neural tangent (NT) kernel~\citep{jacot2018neural} which approximates an overparameterized neural net with the kernel corresponding to its infinite width, and the bounds on the approximation error (e.g., see, ~\citet{arora2019exact}). \cite{zhou2020neuralUCB,zhang2020neuralTS, gu2021batched} proved $\Oct(\Gamma_k(T)\sqrt{T})$\footnote{The notations $\Oc$ and $\Oct$ are used for mathematical order and that up to logarithmic factors, respectively. } bounds on cumulative regret, where $T$ is the number of steps and $\Gamma_k(T)$ is a complexity term determined by the neural tangent kernel $k$ associated with the particular neural network. Specifically, $\Gamma_k(T)$ is the information gain corresponding to $k$ for datasets of size $T$. For the ReLU neural nets on $d$-dimensional domains considered in \cite{zhou2020neuralUCB,zhang2020neuralTS, gu2021batched}, $\Gamma_k(T)=\Oct(T^{\frac{d-1}{d}})$ is the best upper bound known for the information gain~\citep{kassraie2021neural, vakili2021uniform}. Inserting this bound on $\Gamma_k(T)$, the aforementioned regret bounds become trivial (superlinear) generally when $d>1$, unless further restrictive assumptions are made on the contexts. \cite{kassraie2021neural} addressed this issue by considering the \emph{Sup} variant of NeuralUCB (referred to as SupNeuralUCB). The Sup variant is an adoption of SupLinUCB, which was initially introduced in~\citep{chu2011contextual} for linear contextual bandits, to the neural setting. \cite{kassraie2021neural} proved an $\Oct(\sqrt{\Gamma_k(T)T})$ regret bound for SupNeuralUCB in the case of ReLU neural nets, which solved the issue of superlinear regret bound (non-diminishing instantaneous regret).

\subsection{Contribution}

All the existing works mentioned above, are limited to neural nets with ReLU activation functions. This limitation is rooted in the fact that the existing error bound between an overparameterized neural net and the corresponding NT kernel~\citep[][ Theorem~$3.2$]{arora2019exact} holds only for the ReLU neural nets. In Theorem~\ref{theorem:appx_error}, which may be of broader interest, we extend this result by providing error bounds for neural nets with arbitrarily smooth activation functions. Using Theorem~\ref{theorem:appx_error}, together with the recently established bounds on $\Gamma_k(T)$ corresponding to arbitrarily smooth neural nets~\citep{vakili2021uniform}, we provide regret bounds for neural contextual bandits under a more general setting than only ReLU activation function. In particular, in Theorem~\ref{theorem:regret_guarantees}, we prove an $\Oct(T^{\frac{2d+2s-3}{2d+4s-4}})$ upper bound on regret, where $s$ is the smoothness parameter of the activation functions ($s=1$ in the case of ReLU, and $s$ grows larger as the activations become smoother). Our regret bounds recover that of \cite{kassraie2021neural} for ReLU activations, and can become as small as $\Oct(\sqrt{T})$ when the activations become infinitely smooth.  \\

Broadening the scope of neural bandits to general smooth activations is of interest in several aspects. The smooth activation functions are more suitable for some applications such as implicit representations (see ~\citet{li2019better, sitzmann2020implicit} and references therein). In addition, although sublinear, the regret bounds for ReLU neural nets grow at a fast $\Oct(T^{\frac{2d-1}{2d}})$ rate, that quickly approaches the linear rate as $d$ grows. We show that this rate can significantly reduce for smooth activations. Furthermore, our results help establish connections between varying smoothness of activations in neural bandits and varying smoothness of kernels in kernel-based bandits . Kernel-based bandits~\citep{srinivas2010gaussian} and Kernel-based contextual bandits~\citep{valko2013finite} are well studied problems with regret bounds reported for popular kernels such as Mat{\'e}rn family, which is perhaps the most commonly used family of kernels in practice~\citep{snoek2012practical, shahriari2015taking}. Our regret bound for a neural contextual bandit problem with activations with smoothness parameter $s$ is equivalent to the regret bound in the case of kernel-based bandits with a Mat{\'e}rn kernel~\citep{valko2013finite, li2021gaussian} with smoothness parameter $s-\frac{1}{2}$. This connection may be of general interest in terms of contrasting neural nets and kernel-based models through NT kernel theory. \\

In Theorem~\ref{theorem:conf_interval}, we prove confidence intervals for overparameterized neural nets with smooth activation functions, which are later used in the analysis of our algorithm. The confidence intervals are a consequence of our Theorem~\ref{theorem:appx_error} and those for kernel regression. In contrast to prior work, our confidence intervals are tighter and hold for a general set of smooth activation functions (see Sec.~\ref{sec:appx_error}). \\

In addition to the above analytical results for neural bandits with general smooth activation functions, we propose a practically efficient algorithm.
The Sup variants of UCB algorithms are known to perform poorly in experiments~\citep[][]{calandriello2019gaussian, li2021gaussian}, including SupNeuralUCB presented in~\cite{kassraie2021neural},  due to over-exploration. We propose NeuralGCB, a neural contextual bandit algorithm \underline{g}uided by both upper and lower \underline{c}onfidence \underline{b}ounds to provide a finer control of exploration-exploitation trade-off to avoid over-exploration. 
In the experiments, we show that NeuralGCB outperforms all other existing algorithms, namely NeuralUCB \citep[][]{zhou2020neuralUCB}, NeuralTS  \citep[][]{zhang2020neuralTS} and SupNeuralUCB \citep{kassraie2021neural}.    \\

\subsection{Other Related Work}

Linear bandits have been considered also in a non-contextual setting, where the expected rewards are linear in the actions in a fixed domain~\citep[e.g., see the seminal work of][]{abbasi2011improved}. The kernel-based bandits (also referred to as Gaussian process bandits) have been extensively studied under a non-contextual setting~\citep{srinivas2010gaussian, Chowdhury2017bandit}. The GP-UCB and GP-TS algorithms proposed in this setting also show suboptimal regret bounds (see~\citet{vakili2021open} for details). Recently there have been several works offering optimal order regret bounds for kernel-based (non-contextual) bandits~\citep{salgia2021domain, camilleri2021high, li2021gaussian}. These results however do not address the additional difficulties in the contextual setting.

\section{Preliminaries and Problem Formulation}
\label{sec:preliminaries}

In the stochastic contextual bandit problem, at each discrete sequential step $t=1,2,\dots, T$, for each action $a\in[K] := \{1,2,\dots,K\}$, a context vector $\bfx_{t}\in \cX$ is revealed, where $\cX$ is a compact subset of $\Rr^d$. Then, a bandit agent chooses an action $a_t\in[K]$ and receives the corresponding reward $r_{t} = h(\bfx_{t, a_t})+\xi_t$, where $h:\Rr^d\rightarrow \Rr$ is the underlying reward function and $\xi_t\in\Rr$ is a zero-mean martingale noise process. The goal is to choose actions which maximize the cumulative reward over $T$ steps. This is equivalent to minimizing the cumulative \emph{regret}, that is defined as the total difference between the maximum possible context-dependent reward and the actua  received reward
\begin{eqnarray}\label{eq:regret_def}
R(T) =\sum_{t=1}^T (h(\bfx_{t, a^*_t}) - h(\bfx_{t, a_t})),
\end{eqnarray}
where $a^*_t\in \arg\max_{a\in [K]}h(\bfx_{t, a})$ is the context-dependent maximizer of the reward function at step~$t$. Following is a formal statement of the assumptions on $\cX$, $f$ and $\xi_t$. These are mild assumptions which are shared with other works on neural bandits and NT kernel. 
\begin{assumption}\label{ass1}
\emph{(i)} The input space is the $d$ dimensional hypersphere:  $\cX=\Ss^{d-1}$. \emph{(ii)} The reward function $h$ belongs to the RKHS induced by the NT kernel, and $h(\bfx)\in [0, 1], \ \forall \ \bfx \in \cX$. \emph{(iii)} $\xi_t$ are assumed to be conditionally $\nu$-sub-Gaussian, i.e., for any $\zeta\in\Rr$, $\ln(\E[e^{\zeta \xi_t}|\xi_1, \dots, \xi_{t-1}]) \leq \zeta^2 \nu^2/2$. 
\end{assumption}

\subsection{The Neural Net Model and Corresponding NT Kernel}\label{subs:nn}

In this section, we briefly outline the neural net model and the associated NT kernel. 
Let $f(\bfx; \bfW)$ be a fully connected feedforward neural net with $L$ hidden layers of equal width $m$. We can express $f$ using the following set of recursive equations:
\begin{align}
    f^{(1)}(\bfx) & = \ec{\bfW}{1}\bfx, \ \  f^{(l)}(\bfx) = \sqrt{\frac{c_{s}}{m}}\ec{\bfW}{l}\sigma_s(\ec{f}{l-1}(\bfx)), ~~~1<l\le L  \nonumber \\
    f(\bfx; \bfW) & =\sqrt{\frac{c_{s}}{m}} \ec{\bfW}{L+1}\sigma_s(\ec{f}{L}(\bfx)).  \label{eq:nn}
\end{align}

Let $\bfW=(\bfW^{l})_{1\le l\le L+1}$ denote the collection of all weights. The dimensions of weight matrices satisfy: $\bfW^{(1)}\in \Rr^{m\times d}$; for $1<l\le L$, $\bfW^{(l)}\in \Rr^{m\times m}$; $\bfW^{(L+1)}\in \Rr^{1\times m}$.  All the weights $\bfW^{(l)}_{i,j}$, $\forall \  l,i,j$, are initialized to $\mathcal{N}(0,1)$ and $c_s := 2/(2s-1)!!$. With an abuse of notation, $\sigma_s: \Rr^m \rightarrow \Rr^m$ is used for coordinate-wise application the activations of the form $\sigma_s(u)=(\max(0,u))^s$ to the outputs of each layer. Note that $\sigma_s$ is $s-1$ times differentiable which gives our results a general interpretability across smoothness of activations and the resulting function classes and allows us to draw parallels between our results and well established results for kernel-based bandits.  \\

It has been shown that gradient based training of the neural net described above reaches a global minimum where the weights remain within a close vicinity of random initialization. As a result the model can be approximated with its linear projection on the tangent space at random initialization~$\bfW_0$:
\begin{eqnarray}\nonumber
    f(\bfx; \bfW)\approx f(\bfx;\bfW_0) +( \bfW-\bfW_0)^{\top}\bfg(\bfx;\bfW_0),
\end{eqnarray}
where the notation $\bfg(\bfx; \bfW)=\nabla_{\bfW} f(\bfx; \bfW)$ is used for the gradient of $f$ with respect to the parameters.
The approximation error can be bounded by the second order term $\Oc(\|\bfW-\bfW_0\|^2)$, and shown to be diminishing as the width $m$ grows, where the implied constant depends on the spectral norm of the Hessian matrix~\citep{liu2020linearity}. 
The NT kernel corresponding to this neural net when $m$ grows to infinity is defined as the kernel in the feature space determined by the gradient at initialization:
\begin{eqnarray}\nn
k(\bfx,\bfx')=\lim_{m\rightarrow \infty}\bfg^{\top}(\bfx; \bfW_0)\bfg(\bfx;\bfW_0).
\end{eqnarray}
The particular form on the NT kernel depends on the activation function and the number of layers. For example, for a two layer neural net with ReLU activations, we have
\begin{eqnarray}\nn
k(\bfx,\bfx') = \frac{1}{\pi}\left(\bfx^{\top}\bfx'(\pi-\arccos(\bfx^{\top}\bfx'))+\sqrt{1-(\bfx^{\top}\bfx')^2}\right) + \frac{\bfx^{\top}\bfx'}{\pi}(\pi-\arccos(\bfx^{\top}\bfx')),
\end{eqnarray}
For the closed form derivation of NT kernel for other values of $s$ and $L$, see~\cite{vakili2021uniform}.

\subsection{Assumptions on Neural Net and NT Kernel}

The following technical assumptions are mild assumptions which are used in the handling of the overparameterized neural nets using NT kernel. These assumptions are common in the literature and often are fulfilled without loss of generality~\citep{arora2019exact, zhou2020neuralUCB, zhang2020neuralTS, gu2021batched, kassraie2021neural}. 

\begin{assumption}\label{ass2}
\emph{(i)} Consider $\bfH\in\Rr^{KT\times KT}$ such that $[\bfH]_{i,j} = k(\bfz_i,\bfz_j)$ for all pairs in $\cZ \times \cZ$, where $\cZ = \{\{\bfx_{t, a}\}_{a=1}^K\}_{t=1}^T$. We assume $\bfH \succcurlyeq \lambda_0 \Ii$. 
\emph{(ii)} $f(\bfx; \bfW_0)$ is assumed to be $0$. \emph{(iii)} The number of epochs during training, $J$, and the learning rate, $\eta$, satisfy $J = \tilde{\Omega}(T)$ and $\eta = \Oc(1/T)$.
\end{assumption}

\subsection{Information Gain}
The regret bounds for neural bandits are typically given in terms of maximal information gain (or the effective dimension) of the NT kernel. The maximal information gain is defined as the maximum log determinant of the kernel matrix~\citep{srinivas2010gaussian}:
\begin{eqnarray}
\Gamma_k(t)=\sup_{\bfX_t\subseteq \cX} \log\left(\det\left(\Ii+\frac{1}{\lambda}k_{\bfX_t, \bfX_t}\right)\right).
\end{eqnarray}

It is closely related to the effective dimension of the kernel, which denotes the number of features with a significant impact on the regression model and can be finite for a finite dataset even when the feature space of the kernel is infinite dimensional~\citep{zhang2005learning, valko2013finite}. It is defined as 
\begin{eqnarray}
\tilde{d}_k(t)  = \tr\left(k_{\bfX_t,\bfX_t}(k_{\bfX_t,\bfX_t}+\lambda\Ii)^{-1}\right).
\end{eqnarray}

It is known that the information gain and the effective dimension are the same up to logarithmic factors~\citep{calandriello2019gaussian}. We give our results in terms of information gain; nonetheless, that can be replaced by effective dimension.

\section{Approximation Error and Confidence Bounds}\label{sec:appx_error}

As discussed earlier, the error between an overparameterized neural net and the associated NT kernel can be quantified and shown to be small for large $m$. To formalize this, we recall the technique of kernel ridge regression. Given a dataset $\Dc_t=\{(\bfx_i,y_i)\}_{i=1}^t$, let $f_{\NTK}$ denote the regressor obtained by kernel ridge regression using NT kernel. In particular, we have
\begin{eqnarray}
f_{\NTK}(\bfx) = k^{\top}_{\bfX_t}(\bfx)(\lambda\Ii_t+k_{\bfX_t,\bfX_t})^{-1}\bfY_t,
\end{eqnarray}
where $k_{\bfX_t}(\bfx)=[k(\bfx_1,\bfx), \dots, k(\bfx_t,\bfx)]^{\top}$ is the NT kernel evaluated between $\bfx$ and $t$ data points, $k_{\bfX_t,\bfX_t}=[k(\bfx_i, \bfx_j)]_{i,j=1}^t$ is the NT kernel matrix evaluated on the data, $\bfY_t=[y_1, \dots, y_t]^{\top}$ is the vector of output values, and $\lambda \geq 0$ is a free parameter. Note that $f_{\NTK} $ can be computed in closed form (without training a neural net). In addition, let $f_{\NN}$ be  prediction of the neural net at the end of training using $\Dc_t$. Theorem $3.2$ of~\cite{arora2019exact} states that, when the activations are ReLU and $m$ is sufficiently large, we have, with probability at least $1-\delta$, that $|f_{\NN}(\bfx)-f_{\NTK}(\bfx)|\le \epsilon$. In this theorem, the value of $m$ should be sufficiently large in terms of $t, L, \delta, \epsilon$, and $\lambda_0$. We now present a bound on the error between the neural net and the associated NT kernel for smooth activations. 

\begin{theorem}\label{theorem:appx_error}
Consider the neural net $f(\bfx; \bfW)$ defined in~\eqref{eq:nn} consisting of $L$ hidden layers each of width $m \geq \mathrm{poly}(1/\varepsilon, s, 1/\lambda_0, |\cD|, \log(1/\delta))$ with activations $\sigma_s$. Let $f_{\NTK}$ and $f_{\NN}$ be the kernel ridge regressor and trained neural net using a dataset $\Dc$. Then for any $\bfx \in \cX$, with probability at least $1 - \delta$ over the random initialization of the neural net, we have,
\begin{align*}
    |f_{\NTK}(\bfx) - f_{\NN}(\bfx)| \leq \varepsilon.
\end{align*}
\end{theorem}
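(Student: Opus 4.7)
The plan is to follow the template of \cite{arora2019exact} Theorem~3.2, adapted for smooth activations $\sigma_s$. I would decompose $|f_{\NN}(\bfx) - f_{\NTK}(\bfx)|$ through two intermediates --- the linearization of the trained network at initialization, and a kernel ridge regressor built with the \emph{empirical} NTK Gram matrix $\hat{\bfH} := \bfg(\bfX; \bfW_0)^\top \bfg(\bfX; \bfW_0)$ --- and control each gap by $\varepsilon/3$ under a polynomial lower bound on $m$. The first step is to show that $\hat{\bfH}$ concentrates around the infinite-width Gram matrix $\bfH$ in operator norm. Unlike the ReLU case, where $1$-homogeneity and closed-form arc-cosine kernels give immediate concentration, for $\sigma_s(u)=(\max(0,u))^s$ the forward pre-activations only admit polynomial (in $s$) moment bounds. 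I would proceed layer-wise: the inner-product statistics at each layer are sub-exponential random variables whose Orlicz norms scale as $\mathrm{poly}(s,L)$, so standard matrix-Bernstein / Hanson--Wright arguments yield $\|\hat{\bfH}-\bfH\|_\mathrm{op}\le \mathrm{poly}(s,L,|\Dc|)/\sqrt{m}$ with probability at least $1-\delta/3$. Assumption~\ref{ass2}(i) ($\bfH\succcurlyeq\lambda_0\bfI$) keeps the inverses that appear later well-conditioned.

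The second step is a lazy-training argument. I would show that gradient descent with $\eta=\Oc(1/T)$ run for $J=\tilde{\Omega}(T)$ epochs keeps $\|\bfW_J-\bfW_0\|_2 = O(1/\sqrt{m})$, and that the Hessian of $f(\bfx;\bfW)$ inside this ball is small. The Hessian bound uses that $\sigma_s$ is $s-1$ times differentiable and follows in the style of \cite{liu2020linearity}, giving $\|\nabla^2_\bfW f\|_\mathrm{op}=\mathrm{poly}(s,L)/\sqrt{m}$. A second-order Taylor expansion then yields $|f(\bfx;\bfW_J) - (\bfW_J-\bfW_0)^\top\bfg(\bfx;\bfW_0)| = O(1/\sqrt{m})$, using $f(\bfx;\bfW_0)=0$ from Assumption~\ref{ass2}(ii). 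For the linearized surrogate, gradient descent on the squared loss reduces to linear regression whose fixed point is, up to another $O(1/\sqrt{m})$ training-horizon term, the kernel ridge regressor $\hat{f}_{\NTK}(\bfx) = \bfg(\bfx;\bfW_0)^\top \bfg(\bfX;\bfW_0)(\hat{\bfH}+\lambda\bfI)^{-1}\bfY$. Combining this identity with the concentration step and the perturbation bound $\|(\hat{\bfH}+\lambda\bfI)^{-1} - (\bfH+\lambda\bfI)^{-1}\|_\mathrm{op}\le \lambda_0^{-2}\|\hat{\bfH}-\bfH\|_\mathrm{op}$ gives $|\hat{f}_{\NTK}(\bfx)-f_{\NTK}(\bfx)|\le \varepsilon/3$. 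Three triangle inequalities close the argument once $m\ge\mathrm{poly}(1/\varepsilon,s,1/\lambda_0,|\Dc|,\log(1/\delta))$.

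The main obstacle is bookkeeping: every polynomial dependence on $s$ and $L$ must be tracked explicitly through the layer-wise concentration in the first step and the Hessian control in the second, and the final polynomial in $m$ has to absorb all of them. For ReLU both pieces reduce to clean $1$-homogeneous identities, but for $\sigma_s$ the moments of pre-activations and the norms of higher derivatives amplify by factors like $c_s^l$ and $s^l$ at each layer, and uniform control across the dataset must be maintained so as not to incur exponential blow-up in $L$. The recent closed-form expressions for the smooth NT kernels in \cite{vakili2021uniform} and the Hessian-smallness estimates of \cite{liu2020linearity} provide the main technical building blocks, and the real work is in propagating the $s$-dependence carefully across layers and data points.
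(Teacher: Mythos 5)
Your overall architecture is the same as the paper's: (i) show the finite-width kernel at initialization concentrates to the NT kernel, (ii) control the trained net versus its linearization at $\bfW_0$ via the $\Oc(m^{-1/2})$ Hessian bound of \citet{liu2020linearity} (replacing the ReLU-specific Lemma F.6 of \citet{arora2019exact}), and (iii) assemble everything along Arora et al.'s Theorem 3.2 template; this is exactly how the paper proceeds, with its Lemma~\ref{lemma:kernel_init_to_NTK} playing the role of your concentration step and your lazy-training step matching the paper's almost verbatim. The gap is in how you claim to prove the concentration step, and it fails precisely in the regime the theorem is new for. For $s\ge 2$ the per-coordinate statistics $\sigma_s(u)\sigma_s(v)$ with $(u,v)$ jointly Gaussian are \emph{not} sub-exponential: their tails scale like $\exp(-c\,t^{1/s})$, so the Orlicz-norm/matrix-Bernstein route you invoke does not apply off the shelf. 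The paper has to prove a bespoke concentration inequality (Lemma~\ref{lemma:concentration_of_activation_fn}) with a two-regime tail bound — Gaussian-like below a threshold $t^*(n)$, Weibull-like above it — via a truncation argument, and the induced sample-size threshold $n_*(s,\delta,\rho)$, scaling like $\log^{2s-1}(1/\delta)$, is what actually drives the width requirement; asserting sub-exponentiality both breaks the argument for $s\ge 2$ and loses these quantitative dependencies.

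Two further points you fold into ``bookkeeping'' are where the real work lies. First, the forward-pass, layer-to-layer error propagation is controlled not by operator-norm concentration of a Gram matrix but by the Lipschitz constant of the dual activation $\breve{\sigma}_s$ on a restricted PSD cone (the paper's Lemma~\ref{lemma:decency_parameters}, $\beta_s\le 6s$ for $\gamma\le 1/s$), following the Daniely-style recursion; note also that what is needed is \emph{pointwise} kernel concentration including at the test point $\bfx$ (the cross-kernel terms in $f_{\NTK}$), not just $\|\hat{\bfH}-\bfH\|$ on the data. Second, the backward-pass quantities $\bfb^{(l)}$ are statistically entangled with the forward features, so a Hanson--Wright bound alone does not suffice: one needs the Gaussian conditioning/projection decomposition (conditioning $\bfW^{(l+1)}$ on the pre-activations and splitting along $\bfPi_{\bfG}$ and $\bfPi_{\bfG}^{\perp}$), plus the $s$-homogeneity of $\sigma_s$ to bound the projected component — this is the content of Lemmas~\ref{lemma:trace_D_times_b}--\ref{lemma:proj_G_times_W_b_bound} and the induction over layers in Lemma~\ref{theorem:ntk_derivative_approx_theorem}, which accounts for the $s^{L}$-type factors in the final width bound. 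With your concentration claim repaired along these lines (sub-Weibull concentration plus the conditioning argument), the rest of your outline goes through as in the paper.
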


Theorem~\ref{theorem:appx_error} is an generalization of~Theorem $3.2$ in~\cite{arora2019exact} to the class of smooth activation functions $\{\sigma_s, s \in \N\}$. 
To prove Theorem~\ref{theorem:appx_error}, we show the following bound on the error between NT kernel and the kernel induced by the finite width neural net at initialization. This result is an generalization of Theorem $3.1$ in \cite{arora2019exact} to smooth activation functions. 

\begin{lemma} \label{lemma:kernel_init_to_NTK}
Consider the neural net $f(\bfx; \bfW)$ defined in~\eqref{eq:nn} consisting of $L$ hidden layers each of width $m \geq \Oc(\frac{s^L L^2c_s^2}{\varepsilon^2}\log^2(sL/\delta \varepsilon))$ with activations $\sigma_s$. Let $\bfg(\bfx;\bfW)=\nabla_{\bfW}f(\bfx;\bfW)$ and $k$ be the associated NT kernel, as defined in Section~\ref{subs:nn}.
Fix $\varepsilon > 0$ and $\delta \in (0, 1)$. Then for any $\bfx, \bfx' \in \cX$, with probability at least $1 - \delta$ over the initialization of the network, we have,
\begin{align*}
    \left| \bfg^{\top}(\bfx;\bfW_0)\bfg(\bfx';\bfW_0) - k(\bfx, \bfx') \right| \leq (L + 1)\varepsilon.
\end{align*} 
\end{lemma}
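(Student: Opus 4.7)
The plan is to follow the layerwise induction strategy used by Arora et al.\ for ReLU in Theorem~$3.1$ of their paper, but to replace their bounded-activation concentration arguments by sub-Weibull concentration inequalities capable of handling the polynomial tails of $\sigma_s$ applied to Gaussian inputs.

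First I would set up the recursive kernel decomposition. For each layer $l$, define the population covariance $\Sigma^{(l)}(\bfx,\bfx')$ and its derivative analogue $\dot\Sigma^{(l)}(\bfx,\bfx')$ via
$$
\Sigma^{(0)}(\bfx,\bfx') = \bfx^\top \bfx', \qquad
\Sigma^{(l)}(\bfx,\bfx') = c_s\, \E_{(u,v)\sim \cN(0,\Lambda^{(l)})}\bigl[\sigma_s(u)\sigma_s(v)\bigr],
$$
with the $2\times 2$ covariance $\Lambda^{(l)}$ built from $\Sigma^{(l-1)}$ at $(\bfx,\bfx')$, and analogously $\dot\Sigma^{(l)}$ with $\dot\sigma_s(u)=s\,\sigma_{s-1}(u)$ in place of $\sigma_s(u)$. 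The constant $c_s=2/(2s-1)!!$ is precisely what makes $\Sigma^{(l)}(\bfx,\bfx)=1$ on $\Ss^{d-1}$. Standard NTK computations then give the telescoping representation
$$
k(\bfx,\bfx') = \sum_{l=1}^{L+1}\Sigma^{(l-1)}(\bfx,\bfx')\prod_{l'=l}^{L}\dot\Sigma^{(l')}(\bfx,\bfx'),
$$
and an identical decomposition for the finite-width gradient inner product $\bfg^{\top}(\bfx;\bfW_0)\bfg(\bfx';\bfW_0)$, where $\Sigma^{(l)}$ and $\dot\Sigma^{(l)}$ are replaced by empirical averages $\hat\Sigma^{(l)}$ and $\hat{\dot\Sigma}^{(l)}$ over the $m$ hidden units of the corresponding layer.

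Second, I would prove by induction on $l$ that, for $m$ as in the hypothesis, $|\hat\Sigma^{(l)}(\bfx,\bfx')-\Sigma^{(l)}(\bfx,\bfx')|\le\varepsilon/L$ and likewise for $\hat{\dot\Sigma}^{(l)}$, uniformly over $l$ with probability $1-\delta$. The base case $l=1$ is pure Gaussian concentration. For the inductive step I condition on the weights of layers $1,\dots,l-1$; then the pre-activations at layer $l$ are exactly jointly Gaussian with empirical covariance $\hat\Sigma^{(l-1)}$, which by the inductive hypothesis is close to $\Sigma^{(l-1)}$. The layer-$l$ error then splits into two pieces: (a) the concentration of $\frac{c_s}{m}\sum_{i=1}^m\sigma_s(u_i)\sigma_s(v_i)$ around its mean for fixed Gaussian covariance, and (b) a Lipschitz-in-covariance estimate showing that $|\Sigma^{(l)}(\Lambda)-\Sigma^{(l)}(\Lambda')|\lesssim s\,c_s\|\Lambda-\Lambda'\|$ for covariances with entries bounded by $1$. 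The same argument handles $\hat{\dot\Sigma}^{(l)}$, using $\dot\sigma_s=s\,\sigma_{s-1}$.

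Third, for piece (a) the summands $\sigma_s(u_i)\sigma_s(v_i)$ are products of two $s$-th powers of correlated standard Gaussians; they are sub-Weibull with tail parameter $2s$, with $p$-th moment growing like $(sp)^{sp/2}$. A Bernstein-type inequality for sub-Weibull variables then gives deviation $\varepsilon'$ with probability $1-\delta'$ provided $m\gtrsim (s\,c_s/\varepsilon')^2\log^2(1/\delta')$ up to absolute constants. Setting $\varepsilon'=\varepsilon/L$, taking a union bound over the $L+1$ layers with per-layer failure probability $\delta/(L+1)$, and combining with the Lipschitz-in-covariance estimate produces the scaling $m\gtrsim (s^L L^2 c_s^2/\varepsilon^2)\log^2(sL/\delta\varepsilon)$ announced in the lemma; the $s^L$ factor is exactly the price of compounding the sub-Weibull parameter through the $L$ successive layers. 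Finally, since the true $\Sigma^{(l)}$ and $\dot\Sigma^{(l)}$ are bounded by $1$ on $\Ss^{d-1}$, each of the $L+1$ terms in the telescoping decomposition of $k$ differs from its empirical counterpart by at most $\varepsilon$ (an elementary product-error expansion with per-factor error $\varepsilon/L$), yielding the claimed total bound $(L+1)\varepsilon$.

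The main obstacle will be the sub-Weibull concentration in step three: the classical Hoeffding/Bernstein bounds used in the ReLU analysis do not apply because $\sigma_s$ and $\dot\sigma_s$ are unbounded for $s\ge 2$, and a naive truncation would lose factors exponential in $s$. The delicate point is to extract the correct $s^L$ dependence in $m$ rather than something superexponential, which I expect requires using moment bounds for polynomials of Gaussians (Hanson--Wright-style estimates specialized to polynomial chaos of degree $2s$) to keep the scaling sharp and consistent with the lemma's statement.
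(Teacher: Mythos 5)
There is a genuine gap, and it sits exactly where the bulk of the paper's proof lives. Your forward-pass plan is essentially the paper's: condition on the first $l-1$ layers so the layer-$l$ pre-activations are jointly Gaussian with covariance given by the empirical Gram matrix, prove concentration of $\frac{c_s}{m}\sum_i \sigma_s(u_i)\sigma_s(v_i)$ despite the heavy ($\exp(-t^{1/s})$-type) tails, and propagate the error through a Lipschitz-in-covariance estimate for the dual activation (the paper's Lemmas~\ref{lemma:concentration_of_activation_fn}, \ref{lemma:decency_parameters}, \ref{lemma:daniely_error_lemma}). But your central structural claim — that $\bfg^{\top}(\bfx;\bfW_0)\bfg(\bfx';\bfW_0)$ admits "an identical decomposition" to the NTK with $\Sigma^{(l)}$ and $\dot\Sigma^{(l)}$ replaced by per-layer empirical averages $\hat\Sigma^{(l)}$, $\hat{\dot\Sigma}^{(l)}$ — is false at finite width. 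The correct factorization is $\langle \partial f/\partial \bfW^{(l)}(\bfx), \partial f/\partial \bfW^{(l)}(\bfx')\rangle = \langle \bfh^{(l-1)}(\bfx), \bfh^{(l-1)}(\bfx')\rangle\,\langle \bfb^{(l)}(\bfx), \bfb^{(l)}(\bfx')\rangle$, and the backward factor obeys $\langle \bfb^{(l)}(\bfx), \bfb^{(l)}(\bfx')\rangle = \frac{c_s}{d_l}\,(\bfb^{(l+1)}(\bfx))^{\top}\bfW^{(l+1)}\bfD^{(l)}(\bfx)\bfD^{(l)}(\bfx')(\bfW^{(l+1)})^{\top}\bfb^{(l+1)}(\bfx')$, a quadratic form in the \emph{same} matrix $\bfW^{(l+1)}$ that already appears in the forward pass (through $\bfD^{(l+1)}$, $\bfh^{(l+1)}$) and in $\bfb^{(l+1)}$ itself. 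So conditioning on earlier layers does not make the backward factor an average of i.i.d. terms, and your induction cannot be run as stated.

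Handling this dependence is the hard part of the proof. The paper (following Arora et al.) conditions on $\bfF^{(l+1)}=\bfW^{(l+1)}\bfG^{(l)}$ so that $\bfW^{(l+1)}$ remains Gaussian on the orthogonal complement of the span of $\bfh^{(l)}(\bfx),\bfh^{(l)}(\bfx')$ (Lemma~\ref{lemma:nullspace_equivalence}), splits the quadratic form via the projections $\bfPi_{\bfG}$ and $\bfPi_{\bfG}^{\perp}$, controls the orthogonal piece with a Hanson--Wright/Gaussian-chaos bound (Lemma~\ref{lemma:gaussian_chaos}), and controls the $\bfPi_{\bfG}$ piece using the $s$-homogeneity identity $f = s^{l-L}(\bfb^{(l+1)})^{\top}\bfW^{(l+1)}\bfh^{(l)}$ together with a bound on $|f(\bfx;\bfW_0)|$ (Lemmas~\ref{lemma:a_l_implies_c}, \ref{lemma:proj_G_times_W_b_bound}). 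For $s\ge 2$ there is the further complication, absent in ReLU, that $\bfD^{(l)}$ has unbounded entries, so one also needs a high-probability bound on $\|\bfD^{(l)}(\bfx)\bfD^{(l)}(\bfx')\|_2$ (Lemma~\ref{lemma:trace_of_diagonal}); this feeds polylogarithmic-in-$m$ factors into the quadratic-form bounds and into the final error accumulation, which is why the per-layer errors compound as roughly $L(\beta_s+1)s^{L+1}\varepsilon$ (Lemma~\ref{theorem:ntk_derivative_approx_theorem}) rather than the clean $\varepsilon/L$-per-layer telescoping you assume. Your sub-Weibull concentration ideas are fine for the forward covariances, but without the conditioning/projection machinery (or some substitute argument for the forward--backward weight reuse) the proposal does not prove the lemma.
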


The proof entails a non-trivial generalization of various lemmas used in the proof of~\citet[Theorem 3.2]{arora2019exact} to the class of smooth activation functions. We defer the detailed proof of the lemma and theorem to Appendix~\ref{sec:thm_1_proof}.

\paragraph{Confidence Intervals:} The analysis of bandit problems classically builds on confidence intervals applicable to the values
of the reward function. The NT kernel allows us to use the confidence intervals for kernel ridge regression, in building confidence intervals for overparameterized neural nets. In particular, given a dataset $\cD_t$, let 
\begin{eqnarray}
\bfV_t=\lambda\Ii+\sum_{i=1}^t\bfg(\bfx_i;\bfW_0)\bfg^{\top}(\bfx_i;\bfW_0)
\end{eqnarray} 
be the Gram matrix in the feature space determined by the gradient. Taking into account the linearity of the model in the feature space, we can define a (surrogate) posterior variance as follows,
\begin{eqnarray}
\sigmahat^2_t(\bfx) = \bfg^{\top}(\bfx)\bfV_t^{-1}\bfg(\bfx), \label{eqn:def_sigmahat}
\end{eqnarray}
that can be used as a measure of uncertainty in the prediction provided by the trained neural net. 
Using Theorem~\ref{theorem:appx_error} in conjunction with the confidence intervals for kernel ridge regression, we prove the following confidence intervals for a sufficiently wide neural net. 

\begin{theorem}\label{theorem:conf_interval}
Let $\cD_{t} = \{(\bfx_i, r_i)\}_{i = 1}^t$ denote a dataset obtained under the observation model described in Section~\ref{sec:preliminaries} such that the points $\{\bfx_i\}_{i = 1}^t$ are independent of the noise sequence $\{\xi_i\}_{i = 1}^t$. Suppose Assumptions~\ref{ass1} and~\ref{ass2} hold. Suppose the neural net defined in~\eqref{eq:nn} consisting of $L$ hidden layers each of width $m \geq \mathrm{poly}(T, s, L, K, \lambda^{-1}, \lambda_0^{-1}, \log(1/\delta))$ is trained using this dataset. Then, with probability at least $1 - \delta$, the following relation holds for any $\bfx \in \cX$:
\begin{align}
    \left|h(\bfx) - f(\bfx; \bfW_t) \right| \leq \frac{C_{s, L}t}{\lambda m} +  \beta_t  \sigmahat_t(\bfx),
    % \sqrt{\bfg(\bfx;\bfW_0)^{\top}\bfZ^{-1}\bfg(\bfx;\bfW_0)/m}  ,
\end{align}
where $\bfW_t$ denotes the parameters of the trained model, $\beta_t  = S +  2 \nu \sqrt{ \log (1/\delta) } +  C_{s,L}'t(1 - \eta \lambda)^{J/2}\sqrt{t/\lambda} + C''_{s, L}\sqrt{t^3/\lambda m}$, $S$ is the RKHS (corresponding to the NT kernel) norm of $h$ and $C_{s, L}, C_{s, L}', C_{s,L}''$ are constants depending only on $s$ and $L$. 
\end{theorem}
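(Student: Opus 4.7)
The plan is to decompose the error $|h(\bfx) - f(\bfx;\bfW_t)|$ via a triangle inequality, with the kernel ridge regressor in the gradient-feature space at initialization as the intermediate quantity. Concretely, set $\hat k(\bfx,\bfx') := \bfg^{\top}(\bfx;\bfW_0)\bfg(\bfx';\bfW_0)$ for the empirical kernel induced by the random features at initialization, and let $\tilde f(\bfx)$ denote the kernel ridge regressor on $\cD_t$ using $\hat k$. Because the model is linear in these features, $\sigmahat_t^2(\bfx)$ in~\eqref{eqn:def_sigmahat} is exactly the posterior variance of this linear regression, so $\tilde f$ is the natural stepping stone between $h$ and the trained net $f(\cdot;\bfW_t)$. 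I would then split $|h(\bfx) - f(\bfx;\bfW_t)| \le |h(\bfx) - \tilde f(\bfx)| + |\tilde f(\bfx) - f(\bfx;\bfW_t)|$ and bound each piece separately.

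For the first piece, $|h(\bfx) - \tilde f(\bfx)|$, I would invoke a classical RKHS confidence inequality in the style of~\cite{abbasi2011improved,Chowdhury2017bandit}. Since the hypothesis explicitly requires $\{\bfx_i\}$ to be independent of the noise $\{\xi_i\}$, the fixed-design Hoeffding-type version of the bound applies and yields the clean coefficient $S + 2\nu\sqrt{\log(1/\delta)}$ in terms of the true posterior standard deviation $\sigma_t$ of the $k$-kernel regressor, without the usual $\Gamma_k(t)$ penalty of the self-normalized martingale bound. To convert between $k$ and $\hat k$ (and hence between $\sigma_t$ and $\sigmahat_t$), I would use Lemma~\ref{lemma:kernel_init_to_NTK} to bound $|\hat k(\bfx,\bfx') - k(\bfx,\bfx')|$ uniformly on $\cX$ and propagate it through the $(\lambda\Ii + k_{\bfX_t,\bfX_t})^{-1}$ operator, producing the $C''_{s,L}\sqrt{t^3/(\lambda m)}$ contribution to $\beta_t$.

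For the second piece, $|\tilde f(\bfx) - f(\bfx;\bfW_t)|$, I would rely on the NTK-regime dynamics of gradient descent. With the stated polynomial overparameterization $m \ge \mathrm{poly}(T,s,L,K,\lambda^{-1},\lambda_0^{-1},\log(1/\delta))$, the iterates $\bfW_j$ remain in a small ball around $\bfW_0$ and gradient descent on the regularized squared loss converges linearly to a minimizer essentially equal to $\tilde f$. Tracking the residual over $J$ epochs with learning rate $\eta$ yields the optimization term $C'_{s,L} t(1-\eta\lambda)^{J/2}\sqrt{t/\lambda}$ inside $\beta_t$, while the linearization error $|f(\bfx;\bfW_t) - (\bfW_t-\bfW_0)^{\top}\bfg(\bfx;\bfW_0)|$, an $\Oc(\|\bfW_t-\bfW_0\|^2)$ quantity by the Hessian bound cited in Section~\ref{subs:nn}, produces the $C_{s,L} t/(\lambda m)$ additive term sitting outside $\sigmahat_t$. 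Combining the two pieces and collecting constants recovers the stated bound.

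The main obstacle, I expect, is the second decomposition step: showing that gradient descent on the nonlinear $L$-layer network truly tracks its linearization throughout all $J$ epochs, uniformly over the test point $\bfx\in\cX$. This requires simultaneous control of the parameter drift $\|\bfW_j - \bfW_0\|$, deviations of the loss Hessian from its value at initialization, and how the derivatives of $\sigma_s$ (and thus the layerwise growth factors) scale with $s$. Theorem~\ref{theorem:appx_error} already packages much of this machinery at a single test point, so the remaining task is to combine its guarantee with the RKHS-style inequality above, taking care that all ``with probability at least $1-\delta$'' events are coupled via a union bound and that the noise term does not pick up an extra $\Gamma_k(t)$ factor under the fixed-design hypothesis.
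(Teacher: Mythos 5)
Your decomposition and the paper's are essentially the same: the intermediate object is exactly the ridge regressor in the feature space of $\bfg(\cdot;\bfW_0)$ (your $\tilde f$ is the paper's $\bfg^{\top}(\bfx;\bfW_0)\bfV_t^{-1}\bfr$), the noise term is handled by the independence-based (fixed-design) kernel confidence bound of \citet{vakili2021optimal} precisely to avoid the $\Gamma_k$ inflation, and the gap between $\tilde f$ and $f(\cdot;\bfW_t)$ is split into an optimization term $(1-\eta\lambda)^{J/2}\sqrt{t/\lambda}$ and a linearization term controlled by the $\Oc(m^{-1/2})$ Hessian bound, exactly as in the paper (which imports Lemmas B.3, B.5, C.2 of \citet{zhou2020neuralUCB} for the weight--ridge mismatch). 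The one substantive divergence is in the first piece: you apply the confidence bound with the exact NT kernel $k$ and then convert both the regressor and the posterior variance to their $\hat k$ counterparts via Lemma~\ref{lemma:kernel_init_to_NTK}, whereas the paper never compares $\sigma_t$ with $\sigmahat_t$ at all --- it uses Theorem~\ref{theorem:appx_error} together with Lemma 5.1 of \citet{zhou2020neuralUCB} to write $h(\bfx)=\ip{\bfg(\bfx;\bfW_0)}{\bfW^*-\bfW_0}$ with $\|\bfW^*-\bfW_0\|\le S$ and applies the confidence bound directly to the empirical kernel $\hat k$, so the extra $\sqrt{t^3/\lambda m}$ term in $\beta_t$ arises from $\|\bfW_t-\bfW_0-\bfV_t^{-1}\bfr\|\,\|\bfV_t\|$ rather than from any kernel conversion. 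Your variant can be made to work, but it costs extra bookkeeping you should not gloss over: Lemma~\ref{lemma:kernel_init_to_NTK} is stated for a fixed pair of points, so you need a union bound over the (finitely many) context--action pairs plus the test point (this is also why the paper's guarantee effectively holds on $\cZ$, with $m$ polynomial in $K$ and $T$); the perturbation $|f_{\NTK}(\bfx)-\tilde f(\bfx)|$ involves the noisy labels, so you need a high-probability bound on $\|\bfY_t\|$; and the propagation through $(\lambda\Ii+k_{\bfX_t,\bfX_t})^{-1}$ picks up factors of $t/\lambda$ that must be absorbed by taking $m$ polynomially large. The paper's route buys a shorter argument by keeping everything in the empirical feature space; yours buys a statement more directly tied to the exact NTK quantities at the price of these perturbation steps.
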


Both results presented in this section may be of broader interest in neural net literature. We would like to emphasize that these bounds hold for all activation functions $\{\sigma_{s}: s \in \N\}$ improving upon the existing results which hold only for ReLU activation function. Furthermore, this bound is tighter than the one derived in~\cite{kassraie2021neural}, even for the case of ReLU activation function.The confidence intervals are important building blocks in the analysis of NeuralGCB in Section~\ref{sec:alg}. Please refer to Appendix~\ref{sec:thm_2_proof} for a detailed proof of the theorem.

\section{Algorithm}\label{sec:alg}

The UCB family of algorithms achieve optimal regret in classic stochastic finite action bandits~\citep{auer2002finite}.
The optimal regret, however, hinges on the statistical independence of the actions. In more complex settings such as kernel-based and neural bandits, the inherent statistical dependence across adaptively chosen actions leads to skewed posterior distributions, resulting in sub-optimal and potentially trivial (i.e., superlinear) regret guarantees of UCB based algorithms~\citep{vakili2021open}. To address this issue, one approach adopted in the literature is to use samples with limited adaptivity. These algorithms are typically referred to as Sup variant of UCB algorithms, and have been developed for linear~\citep{chu2011contextual}, kernel-based~\citep{valko2013finite} and neural~\citep{kassraie2021neural} contextual bandits. These Sup variants, however, are known to perform poorly in practice due to their tendency to overly explore suboptimal arms. \\

%%%%% Description
We propose NeuralGCB, a neural bandit algorithm guided by both upper and lower confidence bounds
to avoid over-exploring, leading to superior performance in practice  while preserving the nearly optimal regret guarantee. The key idea of NeuralGCB is a finer control of the exploration-exploitation tradeoff based on the predictive standard deviation of past actions. This finer control encourages more exploitation and reduces unnecessary exploration.  Specifically, NeuralGCB partitions past action-reward pairs into $R = \log T$ subsets (some subsets may be empty at the beginning of the learning horizon). Let $\{\Psi^{(r)}\}_{r=1}^R$ denotes these subsets of action-reward pairs where the index $r$ represents the level of uncertainty about the data points in that set (with $r=1$ representing the highest uncertainty). Upon seeing a new context at time $t$, NeuralGCB traverses down the subsets starting from $r = 1$. At each level $r$, it evaluates the largest predictive standard deviation among current set of candidate actions, $A_r$, and compares it with $2^{-r}$. \\

If this value is smaller than $2^{-r}$, NeuralGCB checks if it can exploit based on predictions at the $r^{\text{th}}$ level. Specifically, if the predictive standard deviation of the action that maximizes the UCB score is $\Oc(1/\sqrt{t})$ (indicating a high reward with reasonably high confidence), then NeuralGCB plays that action. Otherwise, it updates $A_r$ by eliminating actions with small UCB score and moves to the next level. This encourages exploitation of less certain actions in a controlled manner as opposed to SupNeuralUCB which exploits only actions with much higher certainty.  \\

On the other hand, if the value is greater than $2^{-r}$, NeuralGCB directly exploits the maximizer of the mean computed using data points in $\Psi^{(r-1)}$ until the allocated budget for exploitation at level $r$ is exhausted. It then resorts to more exploratory actions by choosing actions with large values of $\sigmahat_{t-1}^{(r)}$. The exploitation budget is set to match the length of exploration sequence at the corresponding level to ensure an optimal balance of exploitation and exploration and preserve the optimal regret order while boosting empirical performance.  \\

In addition to improved empirical performance, NeuralGCB takes into consideration the practical requirements for training the neural nets. Specifically, as pointed out in~\cite{gu2021batched}, it is practically more efficient to train the neural net over batches of observations, rather than sequentially at each step. Consequently, before evaluating the mean and variance at any level $r$, NeuralGCB retrains the neural net corresponding to that level only if $q_r$ samples have been added to that level since it was last trained. This index-dependent choice of batch size, $q_r$ lends a natural adaptivity to the retraining frequency by reducing it as time progresses.  \\

A pseudo code of the algorithm is outlined in Algorithm~\ref{alg:NeuralGCBmainpaper}. In the pseudo code, $\text{UCB}_{t}^{(r)}$ and $\text{LCB}_t^{(r)}$ refer to the upper and lower confidence scores respectively at time $t$ corresponding to index $r$ and are defined as $\text{UCB}_{t}^{(r)}(\cdot) = f(\cdot; \bfW_{t}^{(r)}) + \beta \sigmahat_{t}^{(r)}(\cdot)$ and $\text{LCB}_{t}^{(r)}(\cdot) = f(\cdot; \bfW_{t}^{(r)}) - \beta \sigmahat_{t}^{(r)}(\cdot)$. GetPredictions is a local routine that calculates the predictive mean and variance after appropriately retraining the neural net (see supplementary for a pseudo code). Lastly, the arrays \texttt{ctr}, \texttt{max\_mu} and \texttt{fb} are used to store the exploitation count, maximizer of the neural net output and feedback time instants. We now formally state the regret guarantees for NeuralGCB in the following theorem.

\begin{theorem} \label{theorem:regret_guarantees}
Suppose Assumptions~\ref{ass1} and \ref{ass2} hold. Consider NeuralGCB given in Algorithm~\ref{alg:NeuralGCB}, with $R$ neural nets, as defined in~\eqref{eq:nn} with $L$ hidden layers each of width $m \geq \text{poly}(T, L, K, \lambda^{-1}, \lambda_0^{-1}, S^{-1}, \log(1/\delta))$. Suppose NeuralGCB is run with a fixed batch size for each group, then the regret~defined in~\eqref{eq:regret_def} satisfies
\begin{align*}
    R(T) =\Oct\left( \sqrt{T \Gamma_k(T)} +  \sqrt{T \Gamma_k(T) \log(1/\delta)} +\max_rq_r\Gamma_k(T)\right) 
\end{align*} 
\end{theorem}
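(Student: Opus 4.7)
\textbf{Proof plan for Theorem~\ref{theorem:regret_guarantees}.}

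The plan is to first set up a good event on which the confidence intervals of Theorem~\ref{theorem:conf_interval} hold simultaneously at every level and every step, then to perform a per-level regret decomposition in the spirit of SupLinUCB \citep{chu2011contextual} / SupKernelUCB \citep{valko2013finite}, and finally to aggregate over the $R=\log T$ levels. The key enabling structural property is that, by the design of NeuralGCB, the contexts added to level $r$ only depend on predictions built from $\Psi^{(1)},\dots,\Psi^{(r-1)}$, while the rewards observed at level $r$ are never used to select those contexts. Consequently, within each level the dataset satisfies the independence-from-noise condition required by Theorem~\ref{theorem:conf_interval}, and a union bound over the $R$ levels and $T$ rounds gives a high-probability event $\mathcal{E}$ on which, for every $r$, $t$ and $\mathbf{x}$, $|h(\mathbf{x}) - f(\mathbf{x};\bfW_t^{(r)})| \le C_{s,L} t/(\lambda m) + \beta_t \sigmahat_t^{(r)}(\mathbf{x})$. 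Choosing $m$ as prescribed in the theorem collapses the first term so that the effective width is $\beta\,\sigmahat^{(r)}_t$ with $\beta=\tilde{\mathcal{O}}(\sqrt{\Gamma_k(T)} + \sqrt{\log(1/\delta)})$.

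Next I would partition the time horizon by which branch of Algorithm~\ref{alg:NeuralGCBmainpaper} was triggered at step $t$ and bound the instantaneous regret on each branch. On $\mathcal{E}$, three cases arise. (a) If NeuralGCB exploits at some level $r$ because the UCB-maximizer has $\sigmahat_{t-1}^{(r)}=\mathcal{O}(1/\sqrt{t})$ and all arms still in $A_r$ have width $\le 2^{-r}$, the elimination rule at every previous level $r'<r$ together with the confidence bounds guarantees that no optimal action has been eliminated and that the gap to $a_t^\ast$ is at most $\mathcal{O}(\beta/\sqrt{t})$ plus the residual confidence width, which contributes $\tilde{\mathcal{O}}(\sqrt{\Gamma_k(T)/t})$. (b) If NeuralGCB moves to the next level after eliminating arms, the action played later incurs a gap at most $\mathcal{O}(2^{-r})$ by the elimination rule applied at level $r$. (c) If the large-$\sigmahat$ branch is taken at level $r$, either NeuralGCB is in its exploitation-budget phase (in which case the previous level's elimination controls the gap by $\mathcal{O}(2^{-(r-1)})$), or it is exploring with $\sigmahat^{(r)}_{t-1}(\mathbf{x}_{t,a_t})>2^{-r}$, in which case the instantaneous regret is bounded directly by the confidence width $2\beta\,\sigmahat^{(r)}_{t-1}$.

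The step I expect to be most delicate is counting, for each level $r$, the number $T_r$ of exploratory rounds assigned to level $r$, which also controls the length of the exploitation-budget phase by construction. Because each such round has $\sigmahat^{(r)} \ge 2^{-r}$, the elliptical-potential / information-gain lemma applied to the level-$r$ feature covariance yields $\sum_{t\in\text{explore}(r)} (\sigmahat_t^{(r)})^2 = \tilde{\mathcal{O}}(\Gamma_k(T))$, and hence $T_r = \tilde{\mathcal{O}}(4^r \Gamma_k(T))$ up to the additive cost $\mathcal{O}(q_r \Gamma_k(T))$ due to the delayed retraining (batch size $q_r$): between retrainings $\sigmahat^{(r)}$ is evaluated on stale data, so one loses a factor of $q_r$ in the potential argument, producing the $\max_r q_r \Gamma_k(T)$ term. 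Combining these counts with the per-step regret from the previous paragraph, the exploration contribution of level $r$ is $\tilde{\mathcal{O}}(2^{-r}\cdot T_r \cdot \beta) = \tilde{\mathcal{O}}(2^{r}\beta \Gamma_k(T))$; however summing this naively is too large, so, as in SupLinUCB, one instead uses Cauchy--Schwarz on the exploration rounds, $\sum_{t\in\text{explore}(r)} 2\beta\sigmahat^{(r)}_{t-1} \le 2\beta\sqrt{T_r \cdot \sum_t (\sigmahat^{(r)}_t)^2} = \tilde{\mathcal{O}}(\beta \sqrt{T \Gamma_k(T)})$, and bounds the exploitation phases trivially by $T\cdot 2^{-r}$ at level $r$.

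Summing over the $R=\log T$ levels, the $\beta\sqrt{T\Gamma_k(T)}$ contribution multiplies by $\log T$ (absorbed into $\tilde{\mathcal{O}}$), the geometric series $\sum_r 2^{-r} T$ telescopes to $\tilde{\mathcal{O}}(T)\cdot 2^{-R} = \tilde{\mathcal{O}}(1)$ since $R=\log T$, and the batched retraining slack aggregates to $\tilde{\mathcal{O}}(\max_r q_r \Gamma_k(T))$. Substituting $\beta = \tilde{\mathcal{O}}(\sqrt{\Gamma_k(T)} + \sqrt{\log(1/\delta)})$ yields the claimed bound $R(T) = \tilde{\mathcal{O}}(\sqrt{T\Gamma_k(T)} + \sqrt{T\Gamma_k(T)\log(1/\delta)} + \max_r q_r \Gamma_k(T))$. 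The main obstacle, as indicated, is the case analysis for branch~(c) combined with the exploitation-budget accounting: one must show that the novel UCB/LCB-guided exploitation introduced by NeuralGCB never violates the invariant ``$a_t^\ast\in A_r$ for all $r$'' that underlies the elimination argument, and that the per-level exploitation budget set equal to the exploration count is large enough to reap the empirical benefit without inflating the regret beyond the exploration term.
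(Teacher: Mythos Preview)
Your overall architecture---partition by branch of the algorithm, level-wise elimination invariant, elliptical-potential count, then aggregate over $R=\log T$ levels---matches the paper, but there are two genuine gaps.

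\textbf{Wrong confidence width $\beta$.} You take $\beta=\tilde{\mathcal{O}}(\sqrt{\Gamma_k(T)}+\sqrt{\log(1/\delta)})$, the self-normalized (GP-UCB style) width. The whole point of the Sup-style layering together with Theorem~\ref{theorem:conf_interval} is that, because the level-$r$ inputs are independent of the level-$r$ noise, one gets the \emph{fixed-design} width $\beta_t=S+2\nu\sqrt{\log(1/\delta)}+o(1)$, i.e.\ a quantity that does \emph{not} grow with $\Gamma_k(T)$. With your $\beta$, the step $\beta\sqrt{T\Gamma_k(T)}$ gives $\Gamma_k(T)\sqrt{T}+\sqrt{T\Gamma_k(T)\log(1/\delta)}$, not the claimed $\sqrt{T\Gamma_k(T)}+\sqrt{T\Gamma_k(T)\log(1/\delta)}$; your final substitution is simply inconsistent. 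In the paper, after choosing $m,J,\eta$ as in Assumption~\ref{ass2}, $\beta_T\le 2S+\nu\sqrt{2\lambda^{-1}\log(1/\delta)}$, and it is this constant $\beta$ that yields the stated rate.

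\textbf{Exploitation-phase accounting.} You bound the exploitation contribution at level $r$ ``trivially by $T\cdot 2^{-r}$'' and then assert $\sum_r T\,2^{-r}=\tilde{\mathcal{O}}(T)\cdot 2^{-R}=\tilde{\mathcal{O}}(1)$; but $\sum_{r=1}^{R}T\,2^{-r}\approx T$, not $T\,2^{-R}$, so this would be a linear term. The paper instead uses the explicit budget: $|\cT_3(r)|\le\alpha_0 4^r$ with $\alpha_0=\mathcal{O}(\Gamma_k(T))$, rewrites this as $|\cT_3(r)|\le 2^r\sqrt{\alpha_0\,|\cT_3(r)|}$, multiplies by the per-step regret $\mathcal{O}(\beta\,2^{-r})$ (coming from the level-$(r-1)$ width cap $\sigma_0 2^{1-r}$, not from $\sigmahat$ directly), and then applies Cauchy--Schwarz over $r$ to get $\beta\sqrt{R\,\alpha_0\,T}=\tilde{\mathcal{O}}(\beta\sqrt{T\Gamma_k(T)})$. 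The batch term $\max_r q_r\,\Gamma_k(T)$ likewise arises not from ``losing a factor $q_r$ in the potential argument'' but from a separate count: batches with $\det(\bfV_t)/\det(\bfV_{\texttt{fb}})>4$ number at most $\mathcal{O}(\Gamma_k(T))$ per level, each contributing at most $q_r$ regret, while on the remaining batches the stale mean is controlled by a factor-$2$ blowup via $\sigmahat_{t_{\texttt{fb}}}\le 2\sigmahat_{t-1}$.
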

As suggested by the above theorem, NeuralGCB preserves the regret guarantees of SupNeuralUCB which are much tighter than those of NeuralUCB. Moreover, these guarantees hold for even for smooth activation functions as opposed to just for ReLU activation. Furthermore, the above regret guarantees show that the retraining neural nets only at the end of batches of observations increases the regret at most with an additive term in the batch size. Our results also hold with an adaptive batch size (see supplementary material).

% Here are the psuedo codes:

\begin{algorithm}
	\caption{NeuralGCB}
	\label{alg:NeuralGCBmainpaper}
	\begin{algorithmic}[1]
		\STATE \textbf{Require}: Time horizon $T$, maximum initial variance $\sigma_0$, error probability $\delta$
		\STATE \textbf{Initialize}: $R \gets \lceil \log_2 T \rceil$, Ensemble of $R$ Neural Nets with $\bfW_0^{(r)} = \bfW_0$, $\Psi_{0}^{(r)} \leftarrow \emptyset, \ \  \forall \ r \in [R]$, $\cH \leftarrow \emptyset$, arrays \texttt{ctr}, \texttt{max\_mu}, \texttt{fb} of size $R$ with all elements set to $0$, batch sizes $\{q_r\}_{r = 1}^R$
		\FOR{$t = 1,2,3, \dots T$}
		\STATE $r \leftarrow 1, \hat{A}_r(t) = [K]$
		\WHILE{True}
		\STATE Receive the context-action pairs $\{\bfx_{t, a}\}_{a = 1}^K$
		\STATE $\{f(\bfx_{t,a}; \bfW_t^{(r)}), \sigmahat_{t-1}^{(r)}(\bfx_{t,a})\}_{a =1}^{K}, \bfW_{t}^{(r)}, \texttt{fb}[r] \leftarrow$ GetPredictions$\left(\cH, \Psi_{t-1}^{(r)}, \{\bfx_{t, a}\}_{a = 1}^K, \texttt{fb}[r], \bfW_{t-1}^{(r)}, q_r\right)$ 
		\STATE $\tilde{\sigma}_{t-1}^{(r)} \leftarrow \max_{a \in \hat{A}_r(t)} \sigmahat_{t-1}^{(r)}(x_{t, a})$, \quad $\texttt{max\_mu}[r] \leftarrow \argmax_{a \in \hat{A}_r(t)} f(x_{t, a}; \bfW_{t-1}^{(r)})$
		\IF{$\tilde{\sigma}_{t-1}^{(r)} \leq \sigma_0 2^{-r}$}
		\STATE $a_{\text{UCB}} \leftarrow \argmax_{a \in \hat{A}_r(t)} \text{UCB}_{t-1}^{(r)}(\bfx_{t,a})  $
		\IF{${\sigma}_{t-1}^{(r)}(x_{t, a_{\text{UCB}}}) \leq \eta_0/\sqrt{t}$}
		\STATE Choose $a_t \gets a_{\text{UCB}}$ and set $\upsilon_t \gets 1$
		\STATE Receive $y_t = h(\bfx_{t, a_t}) + \xi_t$ and update $\cH \leftarrow \cH \cup \{(x_{t, a_t}, y_t)\}$
		\STATE Set $\Psi_{t}^{(r+1)} \leftarrow \Psi_{t-1}^{(r+1)} \cup \{(t, \upsilon_t)\}$ and $\Psi_{t}^{(r')} \leftarrow \Psi_{t-1}^{(r')} $ for all $r' \in [R]\setminus\{r+1\}$
		\STATE \textbf{break}
		\ELSE
		\STATE $\hat{A}_{r+1}(t) \leftarrow \{a \in \hat{A}_{r}(t) : \text{UCB}_{t-1}^{(r)}(\bfx_{t,a})\geq \max_{a' \in \hat{A}_r(t)} \text{LCB}_{t-1}^{(r)}(\bfx_{t,a'})\}$, $r \leftarrow r + 1$
% 		\STATE 
		\ENDIF
		\ELSE
		\IF{$r = 1$ \algorithmicor $\ \texttt{ctr}[r] > \alpha_0 4^r$}
		\STATE Choose any $a_t \in \hat{A}_r(t)$ such that $\sigmahat_{t-1}^{(r)}(x_{t, a}) > \sigma_0 2^{-r}$ and set $\upsilon_t \gets 2$
		\ELSE
		\STATE Choose $a_t \gets \texttt{max\_mu}[r-1]$ and set $\upsilon_t \gets 3$
		\ENDIF
		\STATE Receive $y_t = h(\bfx_{t, a_t}) + \xi_t$ and update $\cH \leftarrow \cH \cup \{(x_{t, a_t}, y_t)\}$, $\texttt{ctr}[r] \leftarrow \texttt{ctr}[r] + 1$
		\STATE Set $\Psi_{t}^{(r)} \leftarrow \Psi_{t-1}^{(r)} \cup \{(t, \upsilon_t)\}$ and $\Psi_{t}^{(r')} \leftarrow \Psi_{t-1}^{(r')} $ for all $r' \in [R]\setminus\{r\}$
		\STATE \textbf{break}
		\ENDIF
		\ENDWHILE
		\ENDFOR
	\end{algorithmic}
\end{algorithm}

\section{Empirical Studies}\label{sec:exp}

In this section, we provide numerical experiments on comparing NeuralGCB with several representative baselines, namely, LinUCB~\citep{chu2011contextual}, NeuralUCB~\citep{zhou2020neuralUCB}, NeuralTS~\citep{zhang2020neuralTS}, SupNeuralUCB~\citep{kassraie2021neural} and Batched NeuralUCB~\citep{gu2021batched}.  \\

We perform the empirical studies on three synthetic and two real-world datasets. We first compare NeuralGCB with the fully sequential algorithms (LinUCB, NeuralUCB, NeuralTS ans SupNeuralUCB). We then compare the regret incurred and the time taken by NeuralGCB, NeuralUCB and Batch NeuralUCB. We perform both set of experiments with two different activation functions. The construction of the synthetic datasets and the experimental settings are described below.

\begin{figure*}[p]
\centering
\subfloat[$h_1(x)$ with $\sigma_1(x)$]{\label{fig:ip_s1}\centering \includegraphics[scale = 0.23]{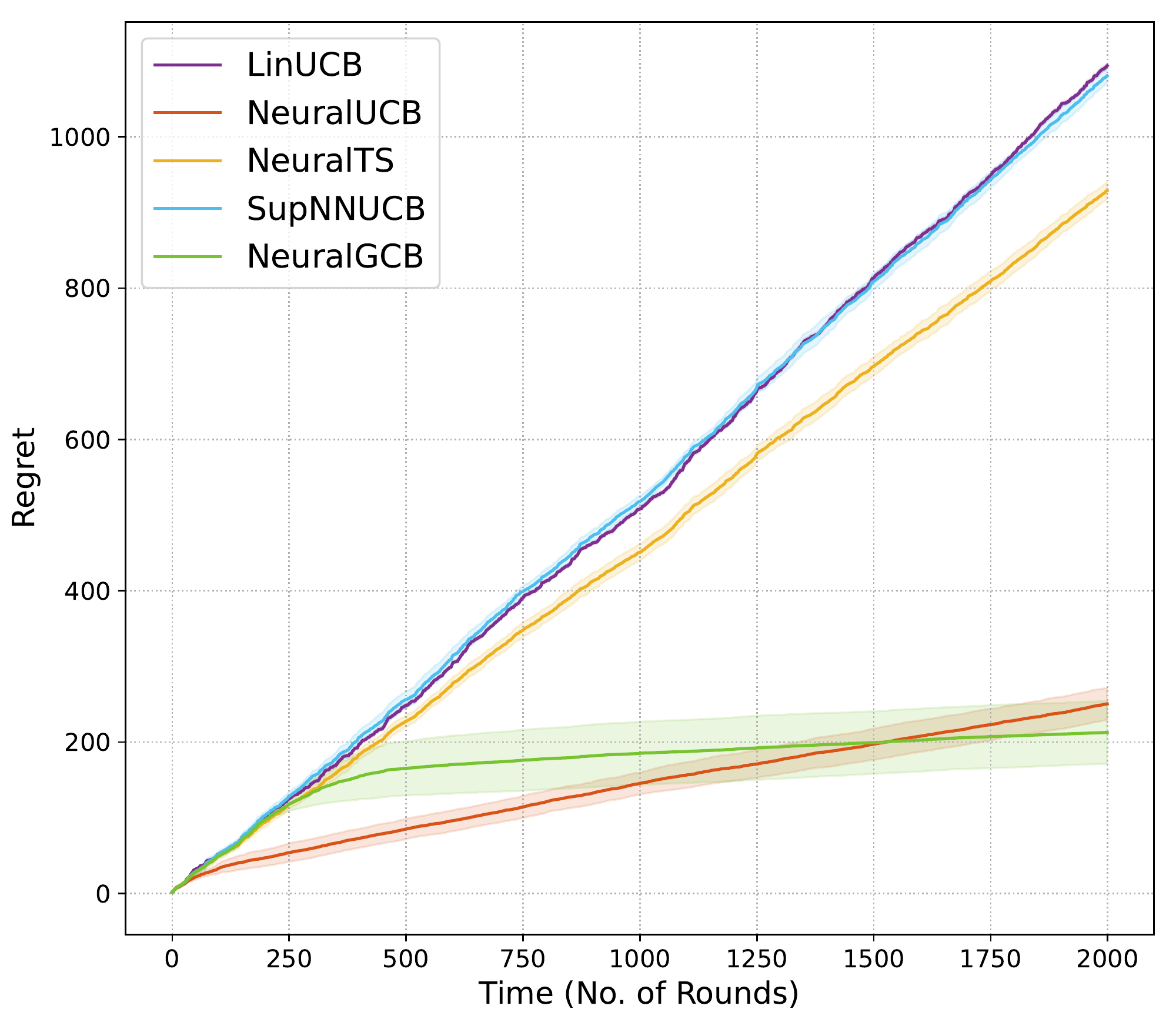}}
~
\subfloat[$h_1(x)$ with $\sigma_2(x)$]{\label{fig:ip_s2}\centering \includegraphics[scale = 0.23]{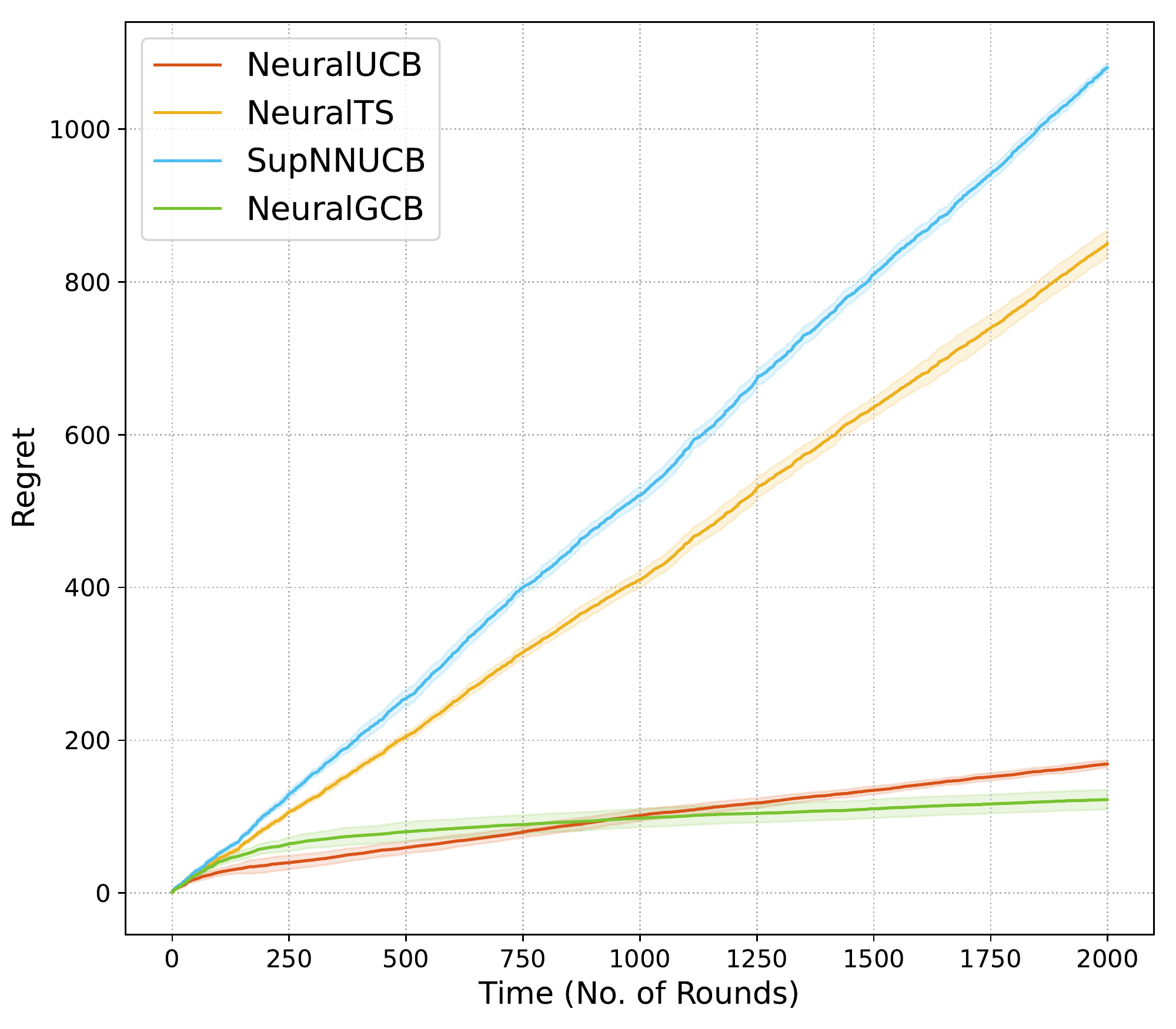}}
~
\subfloat[Batched algorithms on $h_1(x)$]{\label{fig:cosine_s1} \centering \includegraphics[scale = 0.23]{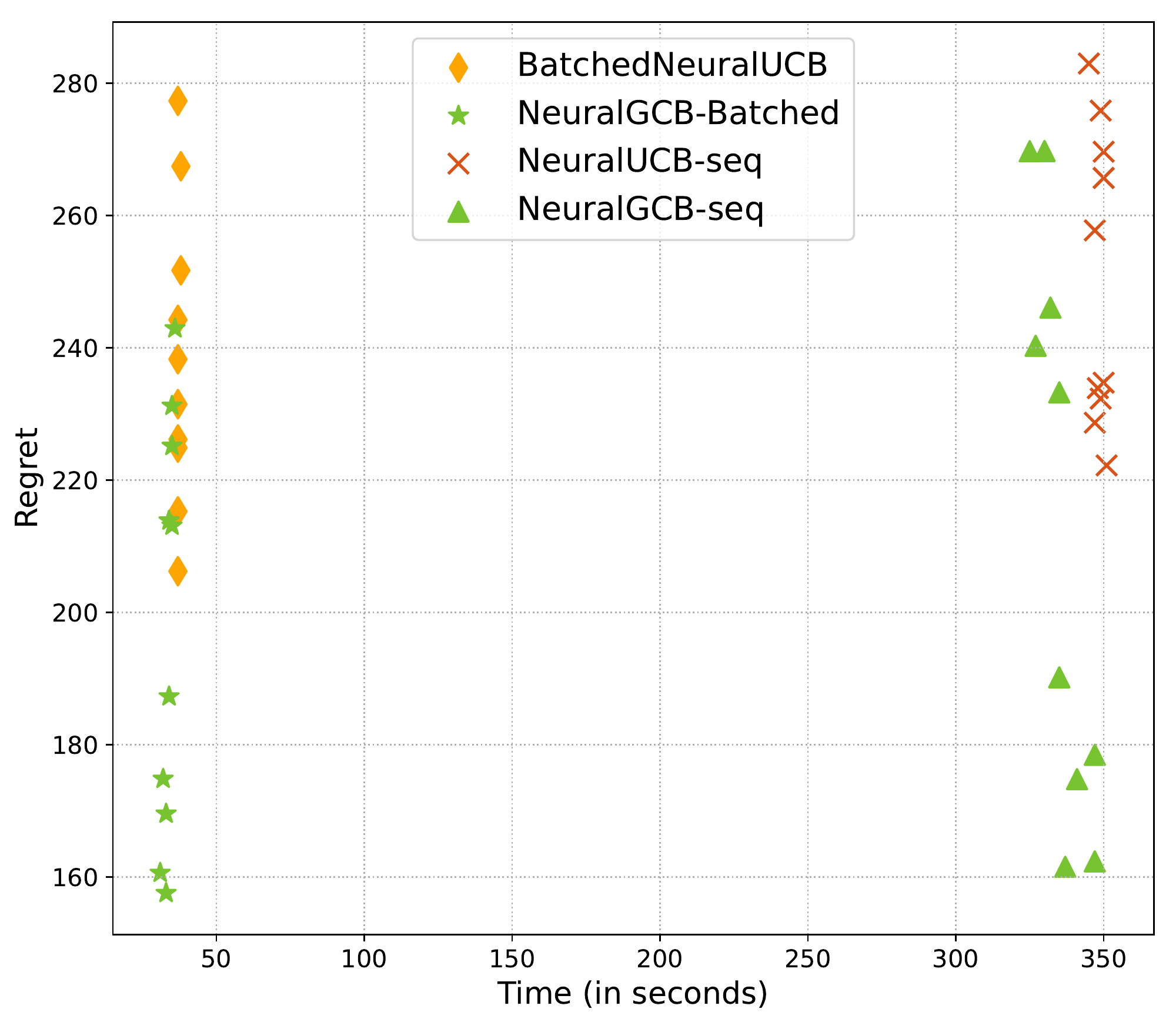}}

\subfloat[$h_2(x)$ with $\sigma_1(x)$]{\label{fig:cosine_s1}\centering \includegraphics[scale = 0.23]{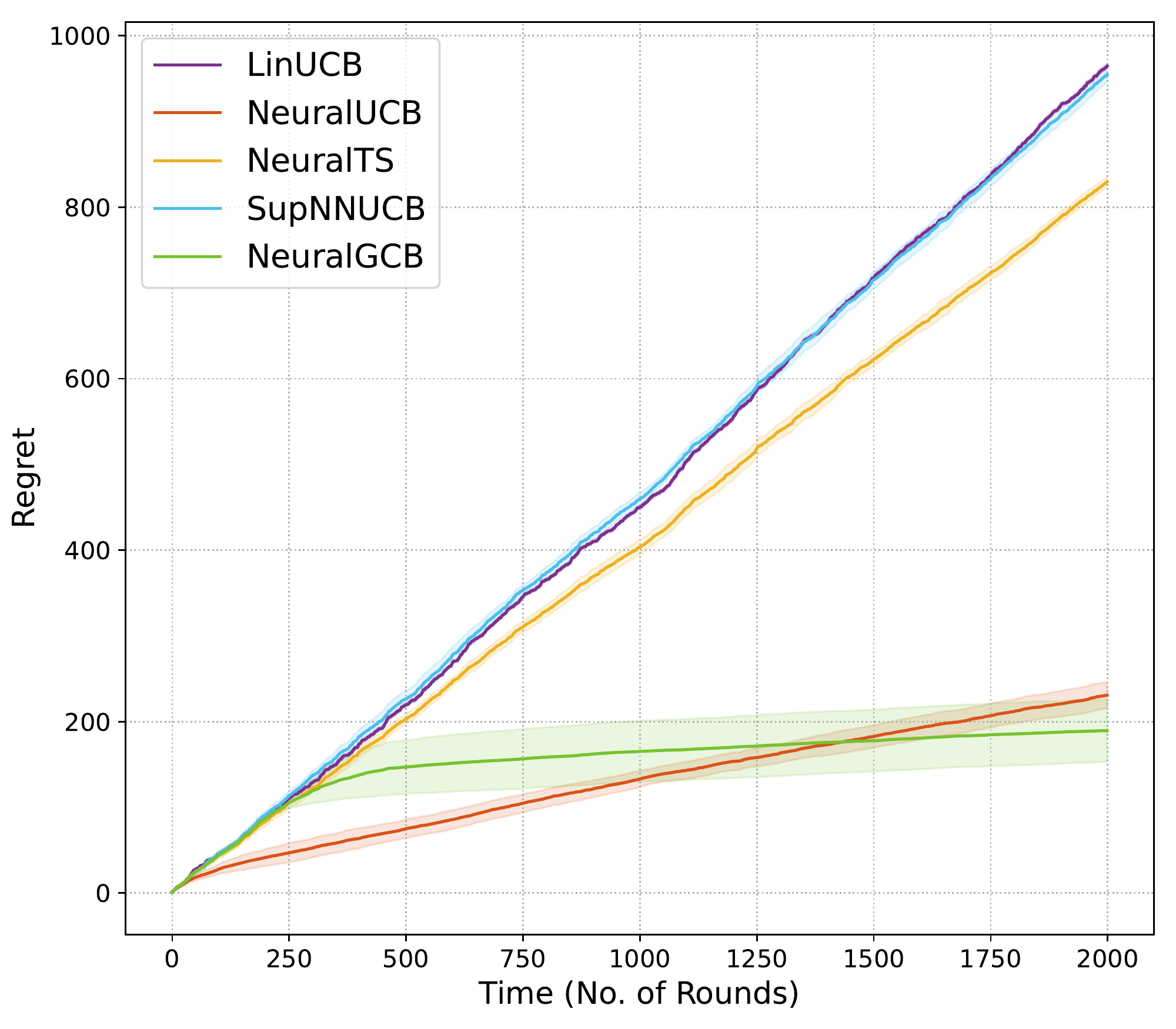}}
~
\subfloat[$h_2(x)$ with $\sigma_2(x)$]{\label{fig:cosine_s2}\centering \includegraphics[scale = 0.23]{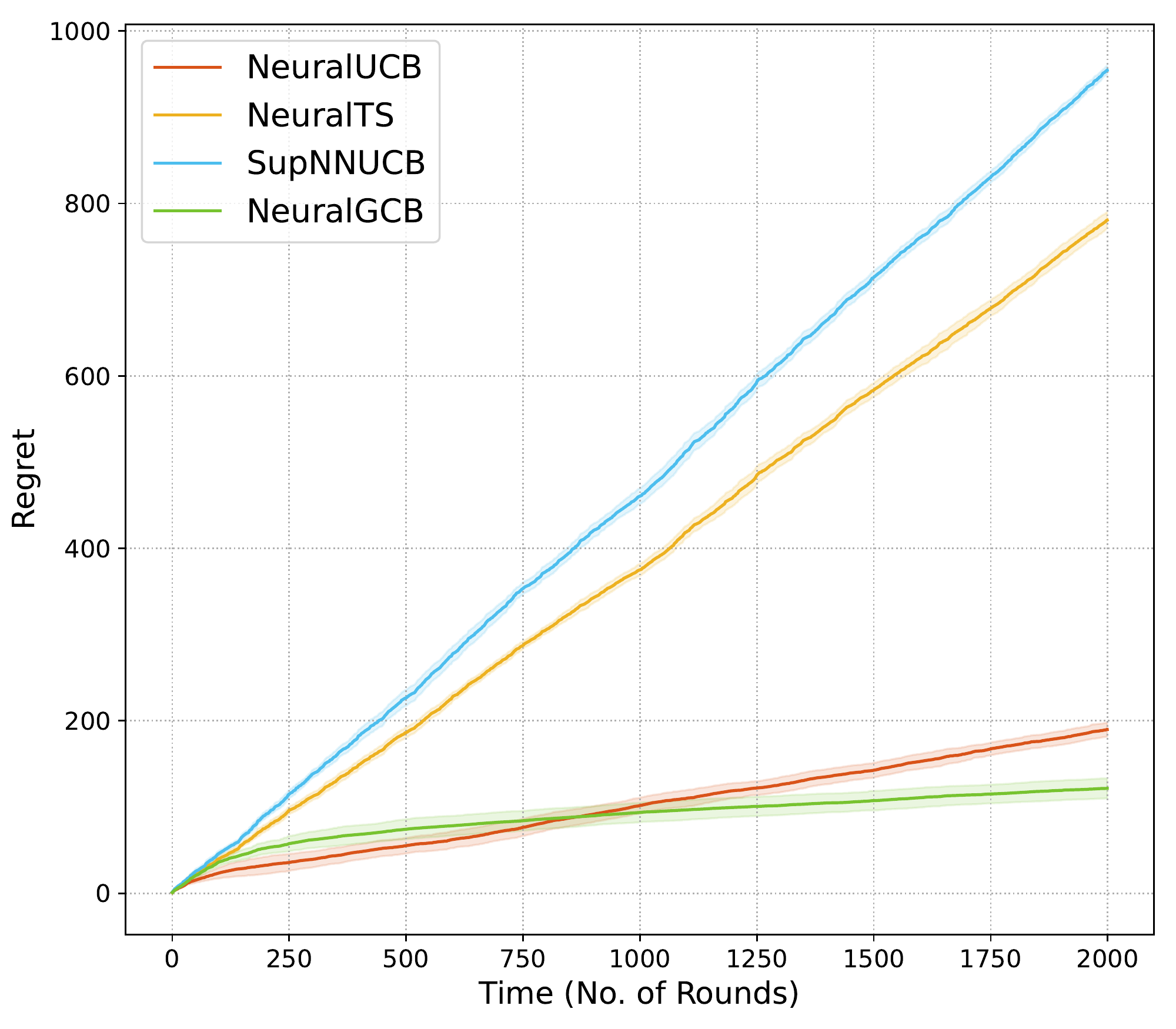}}
~
\subfloat[Batched algorithms on $h_2(x)$]{\label{fig:cosine_batched} \centering \includegraphics[scale = 0.23]{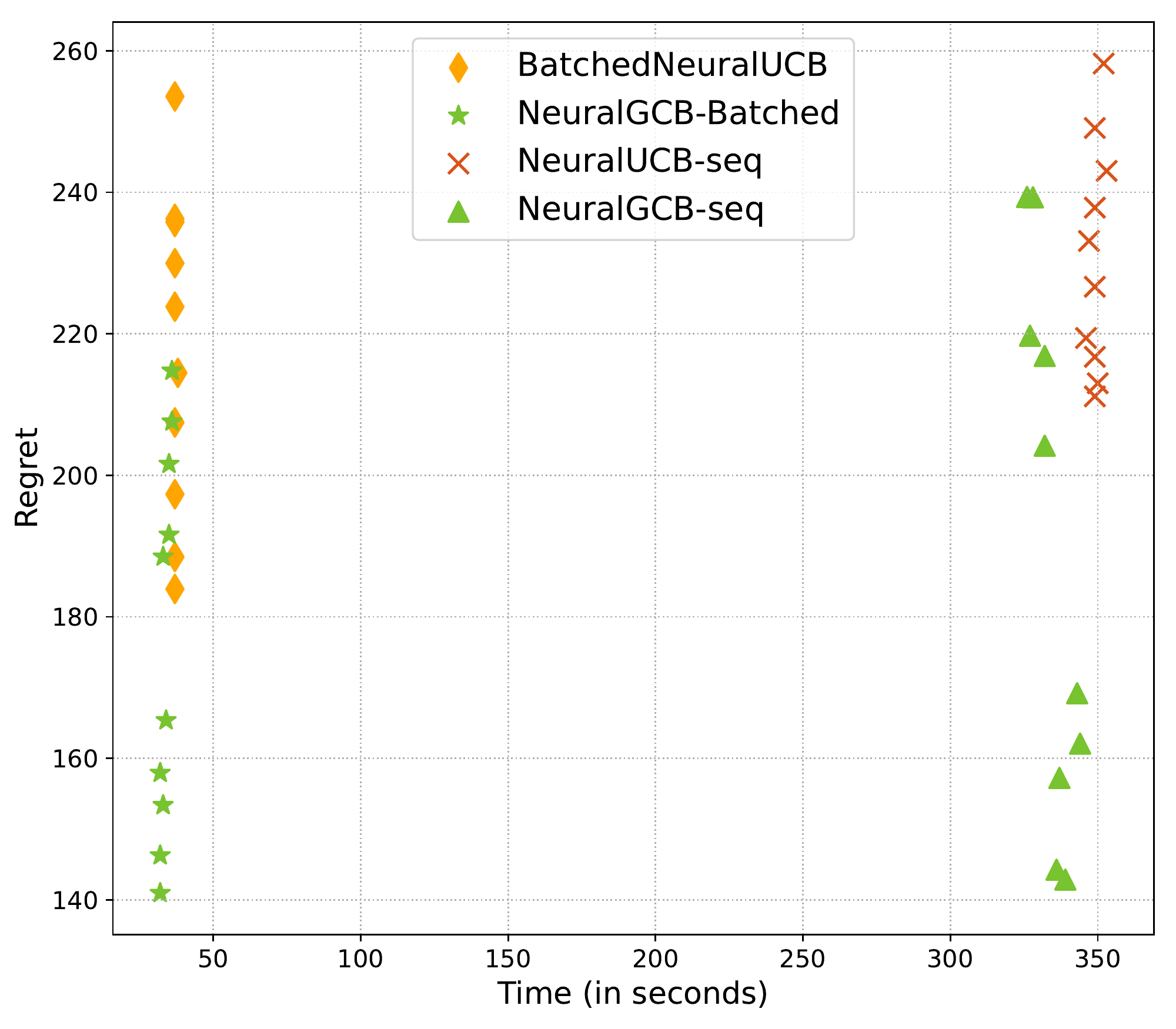}}

\subfloat[Mushroom with $\sigma_1(x)$]{\label{fig:mushroom_s1}\centering \includegraphics[scale = 0.23]{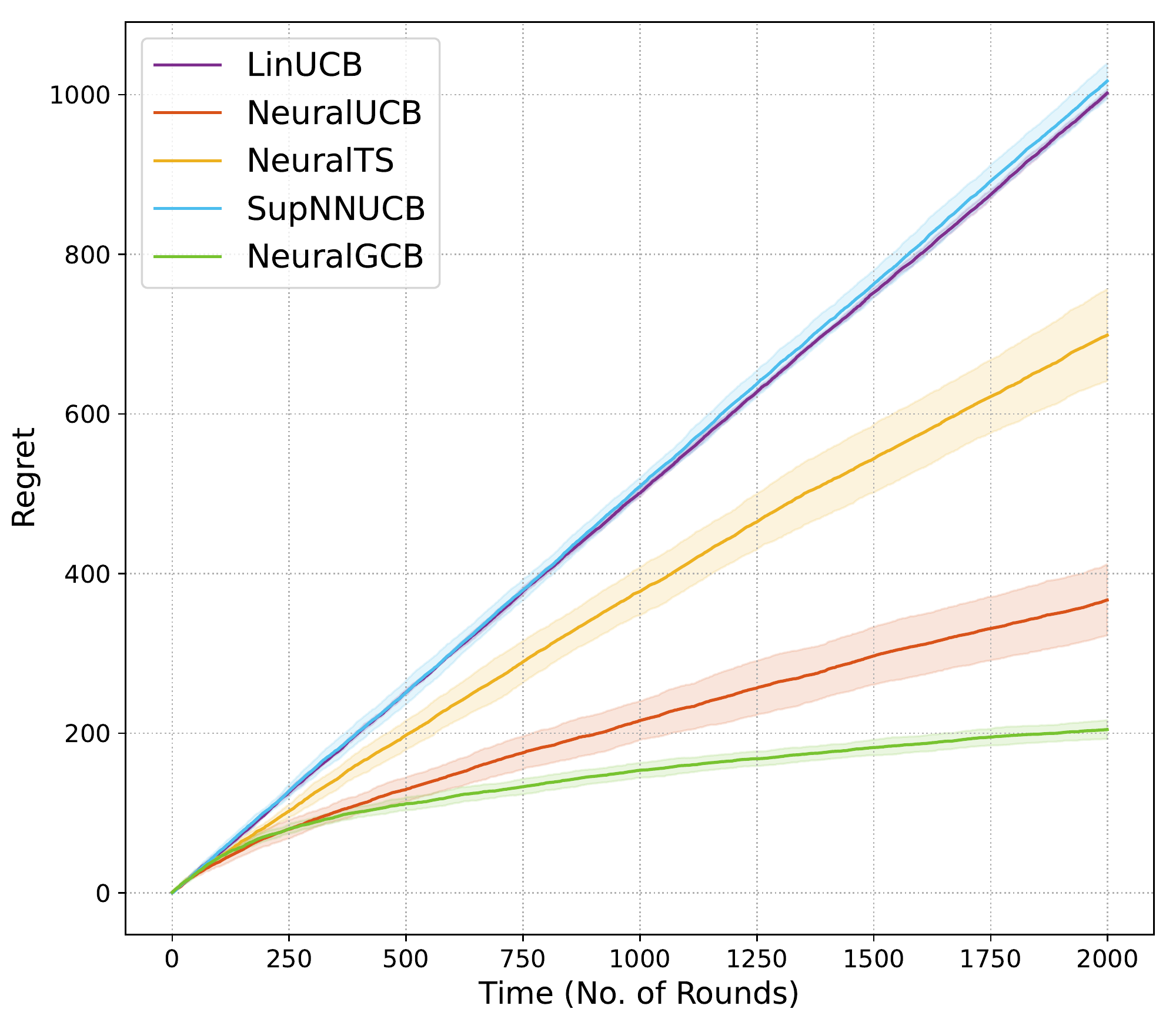}}
~
\subfloat[Mushroom with $\sigma_2(x)$]{\label{fig:mushroom_s2}\centering \includegraphics[scale = 0.23]{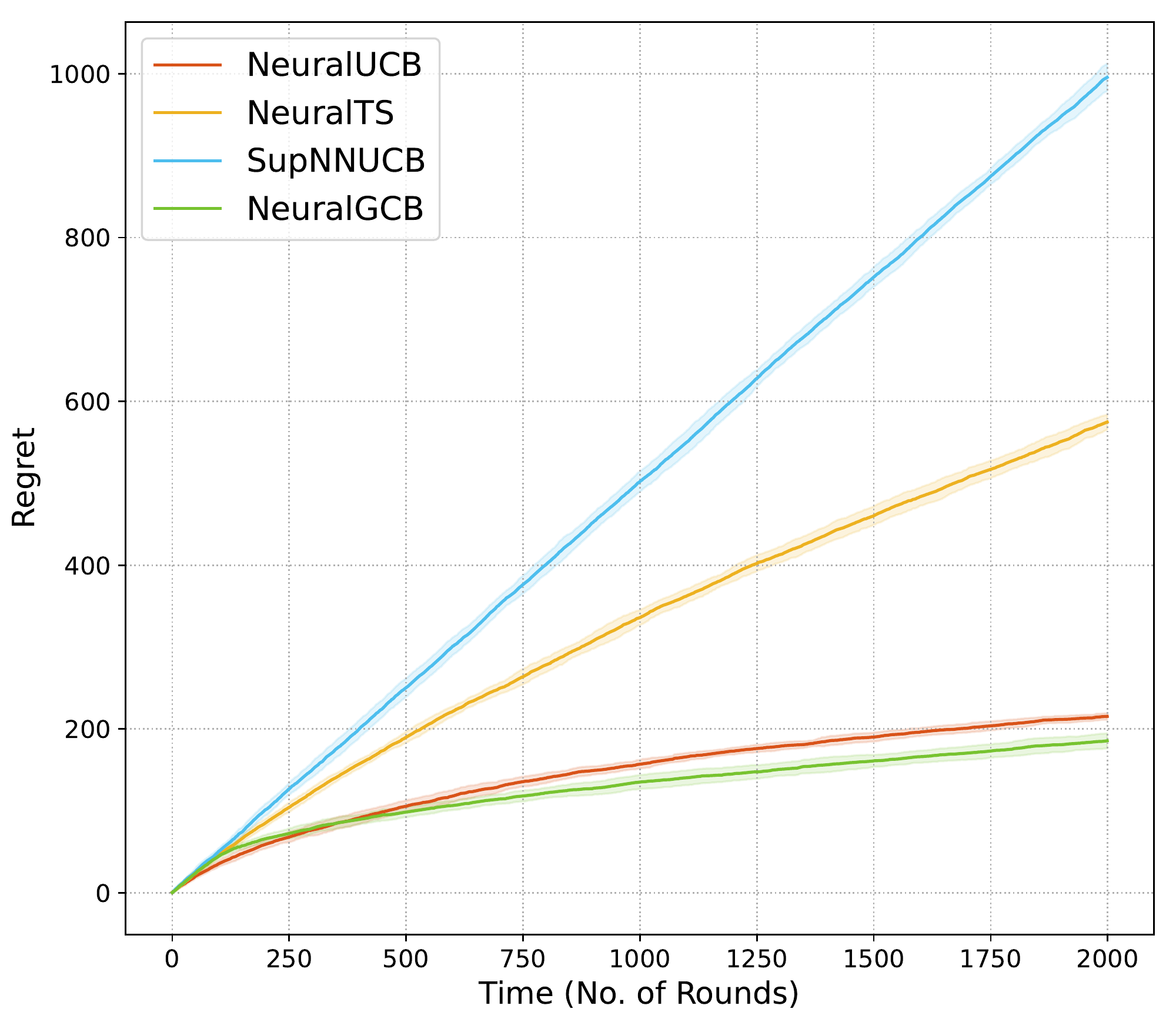}}
~
\subfloat[Batched algorithms on Mushroom]{\label{fig:mushroom_batched} \centering \includegraphics[scale = 0.23]{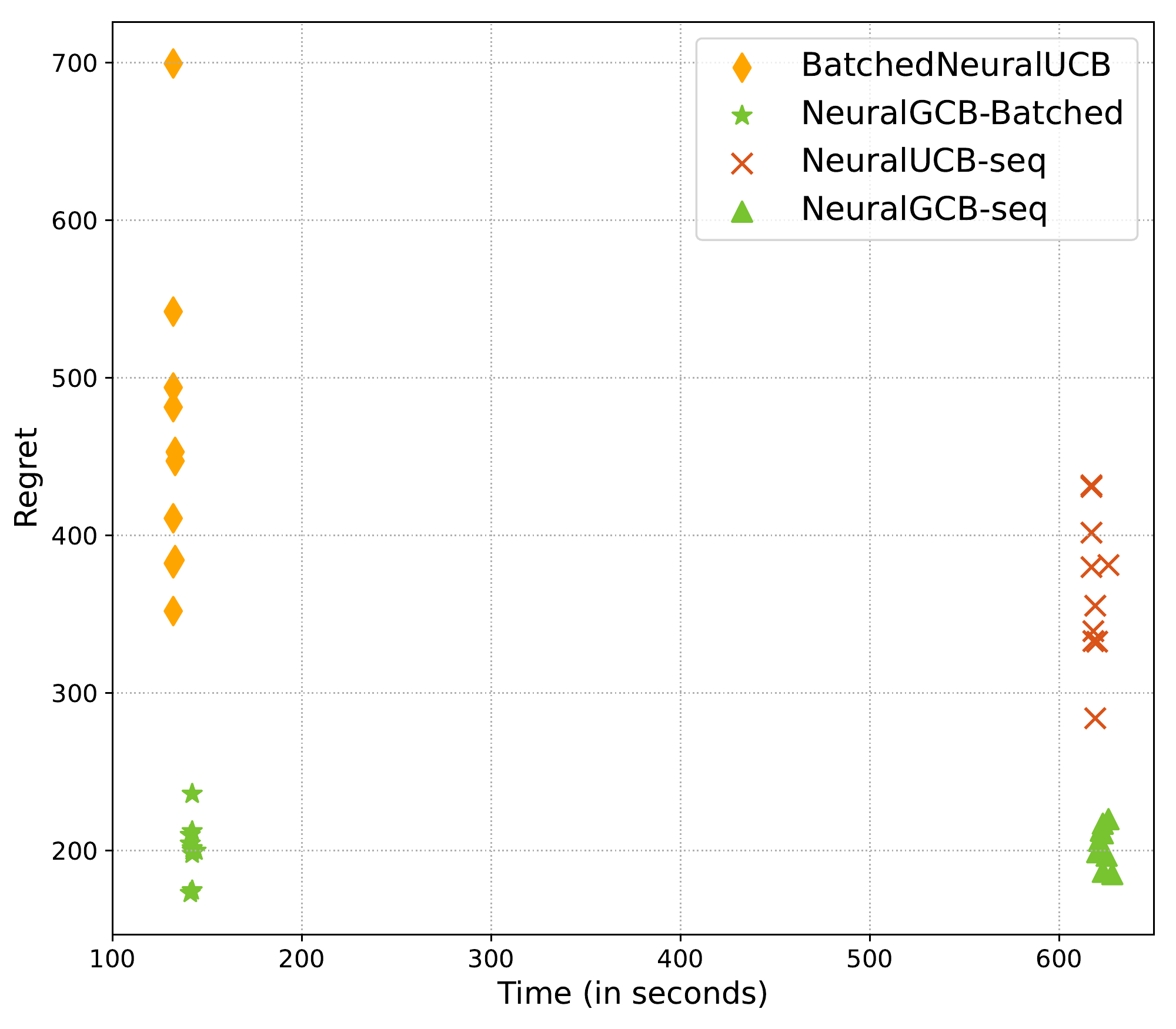}}
\caption{First, second and third rows correspond to the reward functions $h_1(x)$, $h_2(x)$ and the Mushroom dataset, respectively. The two leftmost columns show the cumulative regret incurred by the algorithms against number of steps, with $\sigma_1$ activation functions for the first column and $\sigma_2$ for the second. The rightmost column compares the regret incurred and the time taken (in seconds) for batched and sequential versions of NeuralUCB and NeuralGCB.}
\label{fig:plots_all}
\vspace{-1em}
\end{figure*}

\subsection{Datasets}

For each of the synthetic datasets, we construct a contextual bandit problem with a feature dimension of $d = 10$ and $K = 4$ actions per context running over a time horizon of $T = 2000$ rounds. The set of context vectors $\{\{\bfx_{t, a}\}_{a = 1}^K\}_{t = 1}^T$ are drawn uniformly from the unit sphere. Similar to~\citet{zhou2020neuralUCB}, we consider the following three reward functions:
\begin{align}
    h_1(\bfx) = 4 |\bfa^{\top}\bfx|^2; \quad \ h_2(\bfx) = 4 \sin^2(\bfa^{\top} \bfx); \quad \ h_3(\bfx) = \|\bfA \bfx\|_2.
\end{align}
For the above functions, the vector $\bfa$ is drawn uniformly from the unit sphere and each entry of matrix $\bfA$ is randomly generated from $\cN(0, 0.25)$.

We also consider two real datasets for classification namely Mushroom and Statlog (Shuttle), both of which are available on the UCI repository~\citep{UCI}.
The classification problem is then converted into a contextual bandit problem using techniques outlined in~\cite{Li2010contextualclassification}. Each datapoint in the dataset $(\bfx, y) \in \R^d \times \R$ is transformed into $K$ vectors of the form $\ec{\bfx}{1} = (\bfx, \mathbf{0}, \dots, \mathbf{0}) \dots, \ec{\bfx}{K} = (\mathbf{0}, \dots, \mathbf{0}, \bfx) \in \R^{Kd} $ corresponding to the $K$ actions associated with context $\bfx$. Here $K$ denotes the number of classes in the original classification problem. The reward function is set to $h(\ec{\bfx}{k}) = \1\{y = k\}$, that is, the agent receives a reward of $1$ if they classify the context correctly and $0$ otherwise. We present the results for $h_1(x), h_2(x)$ and the Mushroom dataset in the main paper, and the additional results in the supplementary material.

\subsection{Experimental Setting}

For all the experiments, the rewards are generated by adding zero mean Gaussian noise with a standard deviation of $0.1$ to the reward function. All the experiments are run for a time horizon of $T = 2000$. We report the regret averaged over $10$ Monte Carlo runs with different random seeds. For the real datasets, we shuffle the context vectors for each Monte Carlo run. For all the algorithms, we set the parameter $\nu$ to $0.1$, and $S$, the RKHS norm of the reward, to $4$ for synthetic functions, and $1$ for real datasets. The exploration parameter $\beta_t$ is set to the value prescribed by each algorithm. \\  

We consider a 2 layered neural net for all the experiments as described in Equation~\eqref{eq:nn}. We carry two sets of experiments each with different activation functions, namely $\sigma_1$ (or equivalently, ReLU) and $\sigma_2$. For the experiments with $\sigma_1$ as the activation, we set the number of hidden neurons to $m = 20$ and $m=50$ for synthetic and real datasets respectively. Similarly, for $\sigma_2$, $m$ is set to $30$ and $80$ for synthetic and real datasets, respectively. For all the experiments, we perform a grid search for $\lambda$ and $\eta$ over $\{0.05, 0.1, 0.5\}$ and $\{0.001, 0.01, 0.1\}$, respectively, and choose the best ones for each algorithm. The number of epochs is set to $200$ for synthetic datasets and Mushroom, and to $400$ for Statlog. For the experiments with sequential algorithms, we retrain the neural nets at every step, including NeuralGCB. For Batched NeuralUCB, we use a fixed batch size of $10$ for synthetic datasets, and $20$ for Mushroom. For NeuralGCB we set batch size to $q_r = 5\cdot2^{r-1}$ for synthetic datasets, and $q_r = 5\cdot2^{r+1}$ for Mushroom. More details and additional experiments are given in Appendix~\ref{sec:expts_appendix}.

\subsection{Results}

The two leftmost columns in Fig.~\ref{fig:plots_all}show that NeuralGCB outperforms other algorithms in the case of both synthetic and real world datasets, corroborating the theoretical claims. That holds true for both set of experiments with different activation functions, further bolstering the practical efficiency of the proposed algorithm. In addition, the regret incurred by the algorithms in experiments with $\sigma_2$ as the activation is less than that for the experiments with $\sigma_1$ as the activation, demonstrating the effect of smooth kernels on the performance of the algorithms in practice. \\

In the third column, we compare the regret and running time between the batched and the sequential versions of NeuralUCB and NeuralGCB. We plot the regret incurred against time taken for different training schedules for $10$ different runs. For all functions, the regret incurred by the batched version is comparable to that of the sequential version while having a significantly less running time. Furthermore, NeuralGCB has a smaller regret compared to Batched NeuralUCB for comparable running times.

\section{Conclusion}

In this work, we studied the problem of neural contextual bandits with general set of smooth activation functions. We established non-asymptotic error bounds on the difference between an overparametrized neural net and its corresponding NT kernel along with confidence bounds for prediction using neural nets under this general setting. Furthermore, we proposed a new algorithm that incurs sublinear regret under this general setting is also efficient in practice, as demonstrated by extensive empirical studies.

\bibliography{references}

\begin{thebibliography}{38}
\providecommand{\natexlab}[1]{#1}
\providecommand{\url}[1]{\texttt{#1}}
\expandafter\ifx\csname urlstyle\endcsname\relax
  \providecommand{\doi}[1]{doi: #1}\else
  \providecommand{\doi}{doi: \begingroup \urlstyle{rm}\Url}\fi

\bibitem[Abbasi-Yadkori et~al.(2011)Abbasi-Yadkori, P{\'a}l, and
  Szepesv{\'a}ri]{abbasi2011improved}
Y.~Abbasi-Yadkori, D.~P{\'a}l, and C.~Szepesv{\'a}ri.
\newblock Improved algorithms for linear stochastic bandits.
\newblock \emph{Advances in neural information processing systems}, 24, 2011.

\bibitem[Arora et~al.(2019)Arora, Du, Hu, Li, Salakhutdinov, and
  Wang]{arora2019exact}
S.~Arora, S.~S. Du, W.~Hu, Z.~Li, R.~R. Salakhutdinov, and R.~Wang.
\newblock On exact computation with an infinitely wide neural net.
\newblock \emph{Advances in Neural Information Processing Systems}, 32, 2019.

\bibitem[Auer(2002)]{auer2002using}
P.~Auer.
\newblock Using confidence bounds for exploitation-exploration trade-offs.
\newblock \emph{Journal of Machine Learning Research}, 3\penalty0
  (Nov):\penalty0 397--422, 2002.

\bibitem[Auer et~al.(2002)Auer, Cesa-Bianchi, and Fischer]{auer2002finite}
P.~Auer, N.~Cesa-Bianchi, and P.~Fischer.
\newblock Finite-time analysis of the multiarmed bandit problem.
\newblock \emph{Machine learning}, 47\penalty0 (2):\penalty0 235--256, 2002.

\bibitem[Boucheron et~al.(2013)Boucheron, Lugosi, and Massart]{boucheron2013}
S.~Boucheron, G.~Lugosi, and P.~Massart.
\newblock \emph{Concentration Inequalities - {A} Nonasymptotic Theory of
  Independence}.
\newblock Oxford University Press, 2013.
\newblock ISBN 978-0-19-953525-5.
\newblock \doi{10.1093/acprof:oso/9780199535255.001.0001}.
\newblock URL \url{https://doi.org/10.1093/acprof:oso/9780199535255.001.0001}.

\bibitem[Bouneffouf and Rish(2019)]{bouneffouf2019survey}
D.~Bouneffouf and I.~Rish.
\newblock A survey on practical applications of multi-armed and contextual
  bandits.
\newblock \emph{arXiv preprint arXiv:1904.10040}, 2019.

\bibitem[Calandriello et~al.(2019)Calandriello, Carratino, Lazaric, Valko, and
  Rosasco]{calandriello2019gaussian}
D.~Calandriello, L.~Carratino, A.~Lazaric, M.~Valko, and L.~Rosasco.
\newblock Gaussian process optimization with adaptive sketching: Scalable and
  no regret.
\newblock In \emph{Conference on Learning Theory}, pages 533--557. PMLR, 2019.

\bibitem[Camilleri et~al.(2021)Camilleri, Jamieson, and
  Katz-Samuels]{camilleri2021high}
R.~Camilleri, K.~Jamieson, and J.~Katz-Samuels.
\newblock High-dimensional experimental design and kernel bandits.
\newblock In \emph{International Conference on Machine Learning}, pages
  1227--1237. PMLR, 2021.

\bibitem[Chen et~al.(2021)Chen, Luo, Zhang, Yu, and Lian]{chen2021efficient}
C.~Chen, L.~Luo, W.~Zhang, Y.~Yu, and Y.~Lian.
\newblock Efficient and robust high-dimensional linear contextual bandits.
\newblock In \emph{Proceedings of the Twenty-Ninth International Conference on
  International Joint Conferences on Artificial Intelligence}, pages
  4259--4265, 2021.

\bibitem[Chowdhury and Gopalan(2017)]{Chowdhury2017bandit}
S.~R. Chowdhury and A.~Gopalan.
\newblock On kernelized multi-armed bandits.
\newblock In \emph{International Conference on Machine Learning}, pages
  844--853, 2017.

\bibitem[Chu et~al.(2011)Chu, Li, Reyzin, and Schapire]{chu2011contextual}
W.~Chu, L.~Li, L.~Reyzin, and R.~Schapire.
\newblock Contextual bandits with linear payoff functions.
\newblock In \emph{Proceedings of the Fourteenth International Conference on
  Artificial Intelligence and Statistics}, pages 208--214. JMLR Workshop and
  Conference Proceedings, 2011.

\bibitem[Cram{\'e}r(1938)]{Cramer1938}
H.~Cram{\'e}r.
\newblock Sur un nouveau th{\'e}or{\`e}me-limite de la th{\'e}orie des
  probabilit{\'e}s.
\newblock Actual. sci. industr. 736, 5-23. ({Conf{\'e}r}. internat. {Sci}.
  math. {Univ}. {Gen{\`e}ve}. {Th{\'e}orie} des probabilit{\'e}s. {III}: {Les}
  sommes et les fonctions de variables al{\'e}atoires.) (1938)., 1938.

\bibitem[Daniely et~al.(2016)Daniely, Frostig, and Singer]{Daniely2016}
A.~Daniely, R.~Frostig, and Y.~Singer.
\newblock {Toward deeper understanding of neural networks: The power of
  initialization and a dual view on expressivity}.
\newblock In \emph{Advances in Neural Information Processing Systems}, pages
  2261--2269, 2016.

\bibitem[Dua and Graff(2017)]{UCI}
D.~Dua and C.~Graff.
\newblock {UCI} machine learning repository, 2017.
\newblock URL \url{http://archive.ics.uci.edu/ml}.

\bibitem[Gantert et~al.(2014)Gantert, Ramanan, and
  Rembart]{Gantert2014concentration}
N.~Gantert, K.~Ramanan, and F.~Rembart.
\newblock {Large deviations for weighted sums of stretched exponential random
  variables}.
\newblock \emph{Electronic Communications in Probability}, 19, 2014.
\newblock ISSN 1083589X.
\newblock \doi{10.1214/ECP.v19-3266}.

\bibitem[Gu et~al.(2021)Gu, Karbasi, Khosravi, Mirrokni, and
  Zhou]{gu2021batched}
Q.~Gu, A.~Karbasi, K.~Khosravi, V.~Mirrokni, and D.~Zhou.
\newblock Batched neural bandits.
\newblock \emph{arXiv preprint arXiv:2102.13028}, 2021.

\bibitem[Jacot et~al.(2018)Jacot, Gabriel, and Hongler]{jacot2018neural}
A.~Jacot, F.~Gabriel, and C.~Hongler.
\newblock Neural tangent kernel: Convergence and generalization in neural
  networks.
\newblock \emph{arXiv preprint arXiv:1806.07572}, 2018.

\bibitem[Kassraie and Krause(2022)]{kassraie2021neural}
P.~Kassraie and A.~Krause.
\newblock Neural contextual bandits without regret.
\newblock In \emph{AISTATS}, 2022.

\bibitem[Langford and Zhang(2007)]{langford2007epoch}
J.~Langford and T.~Zhang.
\newblock The epoch-greedy algorithm for multi-armed bandits with side
  information.
\newblock \emph{Advances in neural information processing systems}, 20, 2007.

\bibitem[Lattimore and Szepesv{\'a}ri(2020)]{lattimore2020bandit}
T.~Lattimore and C.~Szepesv{\'a}ri.
\newblock \emph{Bandit algorithms}.
\newblock Cambridge University Press, 2020.

\bibitem[Li et~al.(2019{\natexlab{a}})Li, Tang, and Yu]{li2019better}
B.~Li, S.~Tang, and H.~Yu.
\newblock Better approximations of high dimensional smooth functions by deep
  neural networks with rectified power units.
\newblock \emph{arXiv preprint arXiv:1903.05858}, 2019{\natexlab{a}}.

\bibitem[Li et~al.(2010)Li, Chu, Langford, and
  Schapire]{Li2010contextualclassification}
L.~Li, W.~Chu, J.~Langford, and R.~E. Schapire.
\newblock {A contextual-bandit approach to personalized news article
  recommendation}.
\newblock In \emph{Proceedings of the 19th International Conference on World
  Wide Web, WWW '10}, pages 661--670, feb 2010.
\newblock ISBN 9781605587998.
\newblock \doi{10.1145/1772690.1772758}.
\newblock URL \url{http://arxiv.org/abs/1003.0146
  http://dx.doi.org/10.1145/1772690.1772758}.

\bibitem[Li et~al.(2019{\natexlab{b}})Li, Wang, and Zhou]{li2019nearly}
Y.~Li, Y.~Wang, and Y.~Zhou.
\newblock Nearly minimax-optimal regret for linearly parameterized bandits.
\newblock In \emph{Conference on Learning Theory}, pages 2173--2174. PMLR,
  2019{\natexlab{b}}.

\bibitem[Li and Scarlett(2022)]{li2021gaussian}
Z.~Li and J.~Scarlett.
\newblock Gaussian process bandit optimization with few batches.
\newblock In \emph{AISTATS}, 2022.

\bibitem[Liu et~al.(2020)Liu, Zhu, and Belkin]{liu2020linearity}
C.~Liu, L.~Zhu, and M.~Belkin.
\newblock On the linearity of large non-linear models: when and why the tangent
  kernel is constant.
\newblock \emph{Advances in Neural Information Processing Systems},
  33:\penalty0 15954--15964, 2020.

\bibitem[Salgia et~al.(2021)Salgia, Vakili, and Zhao]{salgia2021domain}
S.~Salgia, S.~Vakili, and Q.~Zhao.
\newblock A domain-shrinking based {B}ayesian optimization algorithm with
  order-optimal regret performance.
\newblock \emph{Conference on Neural Information Processing Systems}, 34, 2021.

\bibitem[Shahriari et~al.(2015)Shahriari, Swersky, Wang, Adams, and
  De~Freitas]{shahriari2015taking}
B.~Shahriari, K.~Swersky, Z.~Wang, R.~P. Adams, and N.~De~Freitas.
\newblock Taking the human out of the loop: A review of bayesian optimization.
\newblock \emph{Proceedings of the IEEE}, 104\penalty0 (1):\penalty0 148--175,
  2015.

\bibitem[Sitzmann et~al.(2020)Sitzmann, Martel, Bergman, Lindell, and
  Wetzstein]{sitzmann2020implicit}
V.~Sitzmann, J.~Martel, A.~Bergman, D.~Lindell, and G.~Wetzstein.
\newblock Implicit neural representations with periodic activation functions.
\newblock \emph{Advances in Neural Information Processing Systems},
  33:\penalty0 7462--7473, 2020.

\bibitem[Snoek et~al.(2012)Snoek, Larochelle, and Adams]{snoek2012practical}
J.~Snoek, H.~Larochelle, and R.~P. Adams.
\newblock Practical bayesian optimization of machine learning algorithms.
\newblock \emph{Advances in neural information processing systems}, 25, 2012.

\bibitem[Srinivas et~al.(2010)Srinivas, Krause, Kakade, and
  Seeger]{srinivas2010gaussian}
N.~Srinivas, A.~Krause, S.~Kakade, and M.~Seeger.
\newblock Gaussian process optimization in the bandit setting: no regret and
  experimental design.
\newblock In \emph{International Conference on Machine Learning}, pages
  1015--1022. Omnipress, 2010.

\bibitem[Vakili et~al.(2021{\natexlab{a}})Vakili, Bouziani, Jalali, Bernacchia,
  and Shiu]{vakili2021optimal}
S.~Vakili, N.~Bouziani, S.~Jalali, A.~Bernacchia, and D.-s. Shiu.
\newblock Optimal order simple regret for {G}aussian process bandits.
\newblock In \emph{Conference on Neural Information Processing Systems},
  2021{\natexlab{a}}.

\bibitem[Vakili et~al.(2021{\natexlab{b}})Vakili, Bromberg, Garcia, Shiu, and
  Bernacchia]{vakili2021uniform}
S.~Vakili, M.~Bromberg, J.~Garcia, D.-s. Shiu, and A.~Bernacchia.
\newblock Uniform generalization bounds for overparameterized neural networks.
\newblock \emph{arXiv preprint arXiv:2109.06099}, 2021{\natexlab{b}}.

\bibitem[Vakili et~al.(2021{\natexlab{c}})Vakili, Scarlett, and
  Javidi]{vakili2021open}
S.~Vakili, J.~Scarlett, and T.~Javidi.
\newblock Open problem: Tight online confidence intervals for {RKHS} elements.
\newblock In \emph{Conference on Learning Theory}, pages 4647--4652. PMLR,
  2021{\natexlab{c}}.

\bibitem[Valko et~al.(2013)Valko, Korda, Munos, Flaounas, and
  Cristianini]{valko2013finite}
M.~Valko, N.~Korda, R.~Munos, I.~Flaounas, and N.~Cristianini.
\newblock Finite-time analysis of kernelised contextual bandits.
\newblock \emph{arXiv preprint arXiv:1309.6869}, 2013.

\bibitem[Wang et~al.(2021)Wang, Zhou, and Gu]{Wang2021}
T.~Wang, D.~Zhou, and Q.~Gu.
\newblock {Provably Efficient Reinforcement Learning with Linear Function
  Approximation Under Adaptivity Constraints}.
\newblock 2021.
\newblock URL \url{http://arxiv.org/abs/2101.02195}.

\bibitem[Zhang(2005)]{zhang2005learning}
T.~Zhang.
\newblock Learning bounds for kernel regression using effective data
  dimensionality.
\newblock \emph{Neural Computation}, 17\penalty0 (9):\penalty0 2077--2098,
  2005.

\bibitem[Zhang et~al.(2021)Zhang, Zhou, Li, and Gu]{zhang2020neuralTS}
W.~Zhang, D.~Zhou, L.~Li, and Q.~Gu.
\newblock Neural {T}hompson sampling.
\newblock In \emph{International Conference on Learning Representations}, 2021.

\bibitem[Zhou et~al.(2020)Zhou, Li, and Gu]{zhou2020neuralUCB}
D.~Zhou, L.~Li, and Q.~Gu.
\newblock Neural contextual bandits with {UCB}-based exploration.
\newblock In \emph{International Conference on Machine Learning}, pages
  11492--11502. PMLR, 2020.

\end{thebibliography}
\bibliographystyle{abbrvnat}

%%%%%%%%%%%%%%%%%%%%%%%%%%%%%%%%%%%%%%%%%%%%%%%%%%%%%%%%%%%%

\newpage
 \section*{Appendix}

\appendix

We present the proofs of the theorems and lemmas stated in the main paper, as well as additional empirical results, in the appendix. The appendix is structured as follows: In Appendix ~\ref{sec:thm_1_proof}, we provide the proof of  Theorem~\ref{theorem:appx_error}, while we defer the proof of auxiliary lemmas to Appendix~\ref{sec:helper_lemma_proofs}. Proof of Theorem~\ref{theorem:conf_interval} is given in Appendix ~\ref{sec:thm_2_proof}. Further details on NeuralGCB and proof of Theorem~\ref{theorem:regret_guarantees} are provided in Appendix ~\ref{sec:thm_3_proof}. Lastly, the details on empirical studies and further results are reported in Appendix~\ref{sec:expts_appendix}.

\section{Proof of Theorem~\ref{theorem:appx_error}}
\label{sec:thm_1_proof}

Before we provide the proof of Theorem~\ref{theorem:appx_error}, we first set up some preliminaries and notation that would be useful throughout the proof. 

\subsection{Proof Preliminaries}

\subsubsection{Notation} 
For any $n \in \N$, $[n]$ denotes the set $\{1,2, \dots, n\}$. For any vector $\bfv \in \R^{n}$, $\diag(\bfv)$ is the $\R^{n \times n}$ diagonal matrix with the elements on $\bfv$ on its diagonal entries. $\|\bfv\|$ denotes the $L_2$ norm of the vector $\bfv$. $\|\bfM\|_2$ and $\|\bfM\|_{F}$ denotes the spectral and Frobenius norms respectively of a matrix $\bfM$. For events $A$ and $B$, we define $A \Rightarrow B := \neg A \lor B$. For matrix $\bfA$, we denote the projection matrix for the column space of $\bfA$ by $\bfPi_{\bfA} := \bfA \bfA^{\dagger}$, where $\bfA^{\dagger}$ denotes the pseudo-inverse of $\bfA$. Similarly, we denote the orthogonal projection matrix as $\bfPi_{\bfA}^{\perp} := \bfI - \bfA \bfA^{\dagger}$. For two random variables, $X$ and $Y$, $X \overset{d}{=}_{A} Y$ means $X$ is equal to $Y$ in distribution conditioned on the $\sigma$-algebra generated by the event $A$. For any $\rho \in [-1,1]$, we use $\Sigma_{\rho}$ to denote the matrix $\begin{pmatrix} 1 & \rho \\ \rho & 1 \end{pmatrix}$. Lastly, for $n \in \N$, we define $(2n -1)!! = \prod_{i = 1}^n (2i - 1)$. For example $3!! = 3$ and $5!! = 15$.

\subsubsection{Fully connected Neural Network}

In this proof, we consider a general fully connected neural net consisting of $L$ hidden layers defined recursively as follows:
\begin{align}
    \ec{\bff}{l}(\bfx) = \ec{\bfW}{l}\ec{\bfh}{h-1}(\bfx) \in \R^{d_l}, \quad \ec{\bfh}{l}(\bfx) = \sqrt{\frac{c_{\sigma}}{d_l}} \sigma \left( \ec{\bff}{l}(\bfx)\right) \in \R^{d_l}, \ \ \ \ l = 1,2,\dots, L,
\end{align}
where $\ec{\bfh}{0}(\bfx) = \bfx$ and $\bfx \in \cX$ is the input to the network. In the above expression, $\ec{\bfW}{l} \in \R^{d_l \times d_{l-1}}$ is the weight matrix in the $l^{\text{th}}$ layer for $l \in [L]$ and $d_0 = d$, $\sigma( \cdot): \R \to \R$ is a coordinate-wise activation function and the constant $c_{\sigma} := \left(\E_{z \sim \mathcal{N}(0,1)}[\sigma(z)^2] \right)^{-1}$. The output of the neural network is given by its last layer defined as follows:
\begin{align}
    f(\bfx; \bfW) & = \ec{\bff}{L+1}(\bfx) = \ec{\bfW}{L+1}\ec{\bfh}{L}(\bfx),
\end{align}
where $\ec{\bfW}{L+1} \in \R^{1 \times d_{L}}$ is the weight matrix in the final layer, and $\bfW = (\ec{\bfW}{1},\ec{\bfW}{2}, \dots, \ec{\bfW}{L+1})$ represents all the parameters of the network. Recall that the domain is assumed to be the hypersphere $\mathbb{S}^{d-1}$. Consequently, $\|\bfx\| = 1$ for all $\bfx \in \cX$. This is just the generalization of the of the neural net defined in eqn.~\eqref{eq:nn} with possibly different width in each layer. \\

% Throughout this work, we assume that the weights are initialized as $\cN(0,1)$ random variables and we consider the limit of large hidden widths, i.e., $d_1, d_2, \dots, d_L \to \infty$. \\

The partial derivative of the output of the neural network with respect to a particular weight matrix is given as
\begin{align}
    \frac{\partial f(\bfx; \bfW)}{\partial \ec{\bfW}{l}} = \ec{\bfb}{l}(\bfx) \cdot \left( \ec{\bfh}{l-1}(\bfx) \right)^{\top}, \quad l = 1, 2, \dots, L+1,
    \label{eqn:gradient_def}
\end{align}
where $\ec{\bfb}{l}(\bfx)$ is defined recursively as
\begin{align}
    \ec{\bfb}{l}(\bfx) = \begin{cases} 1 & l = L + 1, \\ \sqrt{\dfrac{c_{\sigma}}{d_l}} \ec{\bfD}{l}(\bfx) \left(\ec{\bfW}{l + 1}\right)^{\top} \ec{\bfb}{l + 1}(\bfx) & l = 1,2, \dots, L. \end{cases}
\end{align}
In the above definition,  
\begin{align}
    \ec{\bfD}{l}(\bfx):= \diag\left( \sigma'\left(\ec{\bff}{l}(\bfx)\right) \right) \in \R^{d_l \times d_l},
\end{align}
is a diagonal matrix, where $\sigma'(\cdot)$ is the derivative of the activation function $\sigma$ and is also applied coordinate wise. Note that $\ec{\bfb}{l}(\bfx) \in \R^{d_l}$ for $l =1, 2, \dots, L$ and $\ec{\bfb}{L + 1}(\bfx) \in \R$. In other words, $\ec{\bfb}{l}(\bfx)$ is the gradient of output of the neural network $f(\bfx; \bfW)$ with respect to $\ec{\bff}{l}$, the pre-activation of layer $l$.

\subsection{Neural Tangent Kernel}

In the infinite width limit, the pre-activation functions $\ec{\bff}{l}$ at every hidden layer $l \in [L]$ has all its coordinates tending to i.i.d. centered Gaussian Processes with covariance matrix $\ec{\Sigma}{l-1}: \R^d \times \R^d \to \R$ defined recursively for $l \in [L]$ as:
\begin{align}
    \ec{\Sigma}{0}(\bfx, \bfx') & = \bfx^{\top} \bfx', \nonumber \\
    \ec{\Lambda}{l}(\bfx, \bfx') & = \begin{bmatrix}\ec{\Sigma}{l-1}(\bfx, \bfx) & \ec{\Sigma}{l-1}(\bfx, \bfx')   \\ \ec{\Sigma}{l-1}(\bfx', \bfx)  & \ec{\Sigma}{l-1}(\bfx', \bfx') \\ \end{bmatrix}, \nonumber \\
    \ec{\Sigma}{l}(\bfx, \bfx') & = c_{\sigma} \E_{(u,v) \sim \cN(0, \ec{\Lambda}{l}(\bfx, \bfx')) } [\sigma(u)\sigma(v)].
    \label{eqn:NTK_kernels_def}
\end{align}
Similar to $\ec{\Sigma}{l}(\bfx, \bfx')$, we also define $\ec{\dot{\Sigma}}{l}(\bfx, \bfx')$ as follows:
\begin{align}
    \ec{\dot{\Sigma}}{l}(\bfx, \bfx') & = c_{\sigma} \E_{(u,v) \sim \cN(0, \ec{\Lambda}{l}(\bfx, \bfx')) } [\sigma'(u)\sigma'(v)],
\end{align}
for $l \in [L]$ and $\ec{\dot{\Sigma}}{L +1}(\bfx, \bfx') = 1$ for $\bfx, \bfx' \in \cX$. The final NTK expression for the fully-connected network is given as
\begin{align}
    \ec{\Theta}{L}(\bfx, \bfx') = \sum_{l = 1}^{L + 1} \left( \ec{\Sigma}{l - 1}(\bfx, \bfx') \prod_{j = l}^{L+1} \ec{\dot{\Sigma}}{j}(\bfx, \bfx') \right).
\end{align}
Note that this is same as $k(\bfx, \bfx')$ referred to in the main text.

\subsection{The Activation Function}

In this work, we assume that the activation function $\sigma \in \cA_{\sigma}$, where $\cA_{\sigma} = \{\sigma_s(x): s \in \N \}$ and ${\sigma}_s(x) = (\max(0, x))^s$. Note that ${\sigma}_1(x)$ corresponds to the popular ReLU activation function and existing results hold only for ${\sigma}_1(x)$. 
% In this work, we consider a generalization to ${a}_s(x)$ for any $s \in \N$. \\

We state some definitions which we will be later used to establish certain properties of activation functions in $\cA$.

\begin{fact}{(\citet{vakili2021uniform})}
The normalizing constant $c_{\sigma_s} = \dfrac{2}{(2s - 1)!!}$ for all $\sigma_s \in \cA$.
\end{fact}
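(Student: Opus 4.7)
The plan is to derive the claimed value of $c_{\sigma_s}$ directly from its definition by computing the Gaussian second moment of $\sigma_s$ in closed form. Recall from the preliminaries that $c_{\sigma} := (\E_{z \sim \cN(0,1)}[\sigma(z)^2])^{-1}$, so it suffices to establish $\E_{z \sim \cN(0,1)}[\sigma_s(z)^2] = (2s-1)!!/2$ and then invert.

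First I would exploit the half-rectifying structure of $\sigma_s(z) = (\max(0,z))^s$. Since $\sigma_s(z)^2$ vanishes on $(-\infty,0)$ and equals $z^{2s}$ on $[0,\infty)$, and since $z^{2s}$ is an even function while the standard Gaussian density is symmetric about the origin, I can fold the integral and write
\begin{equation*}
\E_{z \sim \cN(0,1)}[\sigma_s(z)^2] \;=\; \int_0^{\infty} z^{2s}\,\frac{e^{-z^2/2}}{\sqrt{2\pi}}\,dz \;=\; \tfrac{1}{2}\,\E_{z \sim \cN(0,1)}[z^{2s}].
\end{equation*}

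Next I would invoke the classical formula $\E_{z \sim \cN(0,1)}[z^{2s}] = (2s-1)!!$ for the even moments of a standard normal. This identity follows from a single integration by parts yielding the recursion $\E[z^{2s}] = (2s-1)\,\E[z^{2s-2}]$, with base case $\E[z^0] = 1$, whose iterate gives $(2s-1)(2s-3)\cdots 3 \cdot 1 = (2s-1)!!$ under the double-factorial convention fixed earlier. Substituting into the previous display yields $\E[\sigma_s(z)^2] = (2s-1)!!/2$, and inverting gives $c_{\sigma_s} = 2/(2s-1)!!$, as claimed.

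The statement is a warm-up fact, so there is no genuine obstacle; the only points requiring mild care are the use of evenness-plus-symmetry to convert a half-line integral into a full-line one (the source of the factor $1/2$) and the bookkeeping that matches the convention $(2s-1)!! = \prod_{i=1}^{s}(2i-1)$ recorded in the notation section. Neither step involves any estimation or approximation, so the identity is exact for every $s \in \N$ and, in particular, specializes to $c_{\sigma_1} = 2$ for the ReLU case treated in prior work.
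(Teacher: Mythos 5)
Your computation is correct: splitting off the half-line support of $\sigma_s$, using the symmetry of the Gaussian to pick up the factor $\tfrac12$, and applying the even-moment formula $\E[z^{2s}]=(2s-1)!!$ gives exactly $c_{\sigma_s}=2/(2s-1)!!$. The paper itself states this fact only by citation to \citet{vakili2021uniform} without reproducing a proof, and your direct derivation is the standard argument behind it, so there is nothing to add.
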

For simplicity of notation we use $c_s$ instead of $c_{\sigma_s}$ for the rest of the proof.

\begin{definition}
A function $f:\R \to \R$ is said to be $\alpha$-homogeneous, if $f(\lambda x) = \lambda^{\alpha} f(x)$ for all $x \in \R$ and $\lambda > 0$.
\end{definition}
It is straightforward to note that $\sigma_s(x)$ is $s$-homogeneous. \\

Let $\cM_{+}(d)$ denote the set of positive semi-definite matrices of dimension $d$, that is, $\cM_{+} = \{\bfM \in \R^{d \times d} : \bfx^{\top}\bfM \bfx \geq 0 \ \ \forall \ \bfx \in \R^d\}$. Similarly, we use $\cM_{++}(d)$ to denote the class of positive definite matrices of dimension $d$. For $\gamma \in [0,1]$, we denote
\begin{align}
    \cM_{+}^{\gamma} = \left\{ \begin{pmatrix} \Sigma_{11} & \Sigma_{12} \\ \Sigma_{12} & \Sigma_{22} \\ \end{pmatrix} \in \cM_{+}(2) \bigg| 1 - \gamma \leq \Sigma_{11}, \Sigma_{22} \leq 1 + \gamma \right\}.
\end{align}

% \begin{definition}
% Let $\cS_{+}(d)$ be the set of positive semi-definite kernels over $\R^d$, that is,
% \begin{align*}
%     \cS_{+}(d) = \left\{ K: \R^d \times \R^d \to R \bigg| \ \forall \ n \in \N, \ \bfx_1, \bfx_2, \dots, \bfx_n \in \R^d, k_{XX} = [K(\bfx_i, \bfx_j)]_{i,j = 1}^n \in \cM_{+}(d) \right\}.
% \end{align*}
% \end{definition}

\begin{definition}
The dual of an activation function $\sigma(\cdot)$ is the function $\bar{\sigma}: [-1, 1] \to \R$ defined as 
\begin{align}
    \bar{\sigma}(\rho) = c_{\sigma} \E_{(X, Y) \sim \cN(0, \Sigma_{\rho})}[\sigma(X) \sigma(Y)].
\end{align}
% where $\Sigma_{\rho} = \begin{pmatrix} 1 & \rho \\ \rho & 1 \end{pmatrix}$.
\end{definition}
Note that from definition of $c_{\sigma}$, $\bar{\sigma}(\rho) \in [-1,1]$ and $\bar{\sigma}(1) = 1$.  \\

\begin{fact}[\citet{Daniely2016}, Lemma 11]
The dual $\bar{\sigma}$ is continuous in $[-1,1]$, smooth in $(-1,1)$, convex in $[0,1]$ and is non-decreasing. 
\end{fact}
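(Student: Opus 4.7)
The plan is to derive this fact from the Hermite expansion of the activation under the standard Gaussian measure, which converts the correlated-Gaussian expectation defining $\bar{\sigma}$ into an explicit power series in $\rho$. Since $\sigma_s(X)$ has all polynomial moments when $X \sim \cN(0,1)$, it lies in $L^2(\cN(0,1))$, so we may write $\sigma(x) = \sum_{n=0}^\infty a_n h_n(x)$ in the orthonormal basis of (probabilists') Hermite polynomials $\{h_n\}_{n \geq 0}$. Parseval's identity then gives $\sum_{n=0}^\infty a_n^2 = \E_{X \sim \cN(0,1)}[\sigma(X)^2] = 1/c_\sigma < \infty$.

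The key algebraic input is the reproducing property of Hermite polynomials under a correlated pair: if $(X,Y) \sim \cN(0, \Sigma_\rho)$ with $\Sigma_\rho$ as in the preliminaries, then $\E[h_m(X) h_n(Y)] = \delta_{mn}\, \rho^n$. After justifying the interchange of expectation and infinite summation (see below), this yields
\[
\bar{\sigma}(\rho) \;=\; c_\sigma \sum_{n = 0}^\infty a_n^2\, \rho^n,
\]
a power series with non-negative coefficients $b_n := c_\sigma a_n^2 \geq 0$ whose values at $\rho = 1$ sum to $\bar{\sigma}(1) = 1$. In particular $\{b_n\}$ lies in $\ell^1$.

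From this representation the four claimed properties follow in order. For continuity on $[-1,1]$, the Weierstrass $M$-test with dominating series $\sum b_n < \infty$ shows uniform convergence of partial sums on $[-1,1]$, so $\bar{\sigma}$ is continuous there. For smoothness on $(-1,1)$, the bound $b_n \leq 1$ gives a radius of convergence at least $1$, so $\bar{\sigma}$ is real analytic (in particular $C^\infty$) on $(-1,1)$. For monotonicity on $[0,1]$, term-by-term differentiation inside the radius of convergence gives $\bar{\sigma}'(\rho) = \sum_{n \geq 1} n b_n \rho^{n-1} \geq 0$, extended to the endpoints by continuity. Convexity on $[0,1]$ follows identically from $\bar{\sigma}''(\rho) = \sum_{n \geq 2} n(n-1) b_n \rho^{n-2} \geq 0$.

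The only delicate step I expect is rigorously justifying the interchange of expectation and infinite sum that produces the power series. I would handle this by working with truncations $\sigma_N := \sum_{n=0}^N a_n h_n$, noting that $\E[\sigma_N(X) \sigma_N(Y)] = \sum_{n=0}^N a_n^2 \rho^n$ exactly by orthogonality, and then passing to the limit. Since $\sigma_N \to \sigma$ in $L^2(\cN(0,1))$ and the marginals of $(X,Y)$ under $\cN(0,\Sigma_\rho)$ are standard Gaussian regardless of $\rho$, Cauchy-Schwarz gives $|\E[(\sigma - \sigma_N)(X) \sigma(Y)]| \leq \|\sigma - \sigma_N\|_{L^2}\|\sigma\|_{L^2}$ uniformly in $\rho$, which closes the argument. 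Once the power-series representation is in hand, everything else is elementary from properties of power series with non-negative coefficients.
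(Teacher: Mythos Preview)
Your Hermite-expansion argument is precisely the approach in \citet{Daniely2016}, which the paper cites without reproving. The representation $\bar{\sigma}(\rho) = c_\sigma \sum_{n \geq 0} a_n^2\, \rho^n$ with $\sum_n c_\sigma a_n^2 = 1$ cleanly yields continuity on $[-1,1]$ (Weierstrass $M$-test), analyticity on $(-1,1)$, and both convexity and monotonicity on $[0,1]$; your $L^2$-truncation plus Cauchy--Schwarz handles the interchange of expectation and summation correctly.

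One caveat worth flagging: your power-series argument gives monotonicity only on $[0,1]$, and for a general $\sigma \in L^2(\cN(0,1))$ that is all one can hope for (take $\sigma = h_2$, whose dual is proportional to $\rho^2$). The paper, however, subsequently uses non-decreasing on the full interval $[-1,1]$ to deduce $\bar{\sigma}_s(\rho)\in[0,1]$ from $\bar{\sigma}_s(-1)=0$. For the specific activations $\sigma_s$ this does hold, because $\sigma_s' = s\,\sigma_{s-1} \geq 0$ pointwise, and Price's theorem (equivalently, Gaussian integration by parts) gives
\[
\bar{\sigma}'(\rho) \;=\; c_\sigma\,\E_{(X,Y)\sim\cN(0,\Sigma_\rho)}\!\big[\sigma'(X)\,\sigma'(Y)\big] \;\geq\; 0
\]
for every $\rho \in (-1,1)$, since the integrand is pointwise non-negative. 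Adding this one observation closes the gap and covers the paper's intended use.
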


The dual of an activation function can be extended to $\breve{\sigma}:\cM_{+}(2) \to \R$ as $\breve{\sigma}(\Sigma) = c_{\sigma} \E_{(X, Y) \sim \cN(0, \Sigma)}[\sigma(X) \sigma(Y)]$. Note that if $\Sigma = \begin{pmatrix} \Sigma_{11} & \Sigma_{12} \\ \Sigma_{12} & \Sigma_{22} \\ \end{pmatrix}$ and $\sigma$ is $k$-homogeneous, we have,
\begin{align*}
    \breve{\sigma}(\Sigma) = (\Sigma_{11} \Sigma_{22})^{k/2} \bar{\sigma} \left( \frac{\Sigma_{12}}{\sqrt{\Sigma_{11} \Sigma_{22}}} \right).
\end{align*}

Let $\bar{\sigma}_s$ be the dual function of $\sigma_s \in \cA_{\sigma}$ for all $s \in \N$ and $\bar{\cA}_{\sigma} = \{\bar{\sigma}_{s}: s \in \N\}$ denote the set of dual functions. It is not difficult to note that $\bar{\sigma}_s(-1) = 0$ and $\bar{\sigma}_s(1) = 1$ for all $s \in \N$. Since $\bar{\sigma}_s$ is non-decreasing, $\bar{\sigma}_s(\rho) \in [0,1]$ for all $\rho \in [-1, 1]$.

\begin{fact}[\citet{vakili2021uniform}, Lemma 1]
The functions in $\bar{\cA}_{\sigma}$ satisfy,
\begin{align}
    \bar{\sigma}_s'(\rho) = \frac{s^2}{2s - 1} \bar{\sigma}_{s-1}(\rho) \label{eqn:dual_derivative}
\end{align}
for $s > 1$. Here $\bar{\sigma}_s'$ denotes the derivative of $\bar{\sigma}_s$. 
\end{fact}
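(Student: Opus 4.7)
The plan is to establish the identity through Price's theorem for bivariate Gaussians, combined with the specific algebraic structure of $\sigma_s$. Price's theorem states that if $(X, Y) \sim \cN(0, \Sigma_\rho)$ with unit variances and correlation $\rho \in (-1, 1)$, then for sufficiently regular $f, g : \R \to \R$,
\[
\frac{d}{d\rho} \E[f(X) g(Y)] = \E[f'(X) g'(Y)].
\]
This follows from the elementary identity $\partial_\rho p_\rho(x, y) = \partial_x \partial_y p_\rho(x, y)$ on the bivariate Gaussian density (a direct computation on the explicit form of $p_\rho$), combined with integration by parts in $x$ and $y$; the rapid decay of $p_\rho$ kills the boundary terms.

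I would apply this identity with $f = g = \sigma_s$. The restriction $s > 1$ in the statement is precisely what makes $\sigma_s$ continuously differentiable across the kink at $0$: for $s \geq 2$, one has $\sigma_s'(x) = s (\max(0, x))^{s-1} = s\, \sigma_{s-1}(x)$ pointwise, with no singularity. Since $\sigma_s$ and $\sigma_s'$ have polynomial growth, the Gaussian tails make the differentiation under the integral sign and the integration by parts valid without further fuss. Price's theorem therefore yields
\[
\frac{d}{d\rho} \E[\sigma_s(X) \sigma_s(Y)] = s^2 \, \E[\sigma_{s-1}(X) \sigma_{s-1}(Y)].
\]

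Finally, I would convert both sides back to duals. By the definitions of $\bar{\sigma}_s$ and $\bar{\sigma}_{s-1}$, the left-hand side equals $\bar{\sigma}_s'(\rho) / c_s$ and the right-hand side equals $s^2\, \bar{\sigma}_{s-1}(\rho) / c_{s-1}$, which gives $\bar{\sigma}_s'(\rho) = (c_s / c_{s-1}) \cdot s^2\, \bar{\sigma}_{s-1}(\rho)$. Using the earlier stated value $c_s = 2/(2s - 1)!!$, the ratio $c_s / c_{s-1} = (2s-3)!!/(2s-1)!! = 1/(2s - 1)$, and the claimed formula $\bar{\sigma}_s'(\rho) = \tfrac{s^2}{2s-1}\, \bar{\sigma}_{s-1}(\rho)$ follows.

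The only real subtlety is the regularity required to invoke Price's theorem: one needs $\sigma_s \in C^1$ with locally integrable, polynomially-growing derivative, which is guaranteed precisely by $s > 1$, and a single integration by parts in each variable is all that is needed. This is presumably also why the statement excludes $s = 1$: the ReLU derivative is only defined almost everywhere, and the discontinuity at the origin generates distributional boundary terms that a direct application of Price's theorem does not handle; that case would require a separate treatment via, for example, the explicit closed form of $\bar{\sigma}_1$.
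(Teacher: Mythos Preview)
Your proof via Price's theorem is correct and cleanly executed: the identity $\partial_\rho p_\rho = \partial_x\partial_y p_\rho$ for the bivariate Gaussian density, the pointwise relation $\sigma_s' = s\,\sigma_{s-1}$ for $s>1$, and the ratio $c_s/c_{s-1} = 1/(2s-1)$ all check out, and the regularity discussion is on point.

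Note, however, that the paper does not actually prove this statement---it is recorded as a \emph{Fact} with a citation to \citet{vakili2021uniform}, Lemma~1, and invoked without argument. So there is no ``paper's own proof'' to compare against here; you have supplied a self-contained justification where the paper simply appeals to the literature. Your Price's theorem argument is the standard route for identities of this type and is essentially what one would expect the cited reference to contain.
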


Consequently, $|\bar{\sigma}_s'(\rho)| \leq \dfrac{s^2}{2s - 1} |\bar{\sigma}_{s-1}(\rho)| \leq \dfrac{s^2}{2s - 1}$. Thus, $\bar{\sigma}_s$ is $s^2/(2s - 1)$ Lipschitz.

\begin{definition}
For any function $\sigma_s \in \cA_{\sigma}$, we define $\mu_{s, \rho} := \E_{(X, Y) \sim \cN(0, \Sigma_{\rho})}[\sigma_s(X) \sigma_s(Y)] = \dfrac{\bar{\sigma}_s}{c_{s}}$.
\end{definition}

% \begin{definition}
% An activation function $\sigma$ is $(n^*, \beta, \gamma)$-decent for $\beta, \gamma \geq 0$ if the following conditions hold:
% \begin{enumerate}
%     \item The dual activation function $\hat{\sigma}$ is $\beta$-Lipschitz in $\cM_{+}^{\gamma}$ w.r.t. the $\infty$-norm,
%     \item If $(X_1, Y_1), (X_2, Y_2), \dots, (X_n, Y_n)$ are independent samples from $\cN(0, \Sigma)$ for $\Sigma \in \cM_{+}^{\gamma}$, then for all $\vartheta \geq 0$ and $\nu \in (0,1)$,
%     \begin{align*}
%         \Pr\left( \left|\frac{c_{a}}{n} \sum_{i = 1}^n a(X_i) a(Y_i) -  \hat{a}(\Sigma)\right| > \vartheta \right) \leq \eta
%     \end{align*}
%     holds for all $n \geq n^*(\vartheta, \nu)$.
% \end{enumerate}\label{def:abc_decent}
% \end{definition}

\subsection{Proof of Lemma~\ref{lemma:kernel_init_to_NTK}}

The central piece in the proof of Theorem~\ref{theorem:appx_error} is Lemma~\ref{lemma:kernel_init_to_NTK}. We focus our attention on first proving Lemma~\ref{lemma:kernel_init_to_NTK}. Since the proof is involved, we first provide an outline of the proof to give the reader an overview of the approach before delving into the technical details.

Informally, Lemma~\ref{lemma:kernel_init_to_NTK} states that for sufficiently large network widths, the following relation holds with probability of at least $1 - \delta$ over the random initialization of the network weights.
\begin{align*}
    \left| \ip{\frac{\partial f( \bfx; \bfW)}{\partial \bfW}}{\frac{\partial f(\bfx'; \bfW)}{\partial \bfW}} - \ec{\Theta}{L}(\bfx, \bfx') \right| \leq (L + 1)\varepsilon.
\end{align*}
Firstly, note that
\begin{align*}
    \ip{\frac{\partial f(\bfx; \bfW)}{\partial \bfW}}{\frac{\partial f(\bfx'; \bfW)}{\partial \bfW}} = \sum_{l = 1}^{L + 1} \ip{\frac{\partial f(\bfx; \bfW)}{\partial \ec{\bfW}{l}}}{\frac{\partial f(\bfx'; \bfW)}{\partial \ec{\bfW}{l}}}
\end{align*}
and recall that 
\begin{align*}
    \ec{\Theta}{L}(\bfx, \bfx') = \sum_{l = 1}^{L + 1} \left( \ec{\Sigma}{l - 1}(\bfx, \bfx') \prod_{j = l}^{L+1} \ec{\dot{\Sigma}}{j}(\bfx, \bfx') \right).
\end{align*}
Using these relations, note that it is sufficient to show that, 
\begin{align}
    \left| \ip{\frac{\partial f(\bfx; \bfW)}{\partial \ec{\bfW}{l}}}{\frac{\partial f(\bfx'; \bfW)}{\partial \ec{\bfW}{l}}} - \ec{\Sigma}{l - 1}(\bfx, \bfx') \prod_{j = l}^{L+1} \ec{\dot{\Sigma}}{j}(\bfx, \bfx') \right| \leq \varepsilon
    \label{eqn:proof_sketch_1}
\end{align}
holds for all $l \in [L]$ with probability $1 - \delta$. Furthermore, we have, 
\begin{align*}
    \ip{\frac{\partial f(\bfx; \bfW)}{\partial \ec{\bfW}{l}}}{\frac{\partial f(\bfx'; \bfW)}{\partial \ec{\bfW}{l}}} & = \ip{\ec{\bfb}{l}(\bfx) \cdot \left( \ec{\bfh}{l-1}(\bfx) \right)^{\top}}{\ec{\bfb}{l}(\bfx') \cdot \left( \ec{\bfh}{l-1}(\bfx') \right)^{\top}} \\
    & = \ip{\ec{\bfh}{l-1}(\bfx)}{\ec{\bfh}{l-1}(\bfx)'} \ip{\ec{\bfb}{l}(\bfx)}{\ec{\bfb}{l}(\bfx')}.
\end{align*}

The proof revolves around establishing $\ip{\ec{\bfh}{l-1}(\bfx)}{\ec{\bfh}{l-1}(\bfx')}$ is close to $\ec{\Sigma}{l - 1}(\bfx, \bfx')$ while $\ip{\ec{\bfb}{l}(\bfx)}{\ec{\bfb}{l}(\bfx')}$ is close to $\prod_{j = l}^{L+1} \ec{\dot{\Sigma}}{j}(\bfx, \bfx')$. On combining these two relations, we obtain the result in~\eqref{eqn:proof_sketch_1} and consequently prove the theorem.

Throughout the proof, we fix some $s \in \N$ and hence $\sigma = \sigma_s$. Recall from equation~\eqref{eqn:NTK_kernels_def}, we have,
\begin{align*}
    \ec{\Sigma}{0}(\bfx, \bfx') & = \bfx^{\top} \bfx', \nonumber \\
    \ec{\Lambda}{l}(\bfx, \bfx') & = \begin{bmatrix}\ec{\Sigma}{l-1}(\bfx, \bfx) & \ec{\Sigma}{l-1}(\bfx, \bfx')   \\ \ec{\Sigma}{l-1}(\bfx', \bfx)  & \ec{\Sigma}{l-1}(\bfx', \bfx') \\ \end{bmatrix}, \nonumber \\
    \ec{\Sigma}{l}(\bfx, \bfx') & = c_{s} \E_{(u,v) \sim \cN(0, \ec{\Lambda}{l}(\bfx, \bfx')) } [\sigma_s(u)\sigma_s(v)].
\end{align*}
Since $\|\bfx\| = 1$ for all $x \in \cX$, $\ec{\Sigma}{0}(\bfx, \bfx) = 1$ for all $x \in \cX$. Using induction, we can establish that $\ec{\Sigma}{l}(\bfx, \bfx) = 1$ for all $x \in \cX$, for all $l \in \{0,1,2, \dots, L\}$. The base case follows immediately as $\ec{\Sigma}{0}(\bfx, \bfx) = 1$ for all $x \in \cX$. Assume true for $l - 1$. Consequently, we have, $\ec{\Lambda}{l}(\bfx, \bfx) = \begin{bmatrix} 1 & 1 \\ 1& 1 \\ \end{bmatrix}$. On plugging this value in the definition of $\ec{\Sigma}{l}(\bfx, \bfx)$, we obtain $\ec{\Sigma}{l}(\bfx, \bfx) = 1$, completing the inductive step. As a result, $|\ec{\Sigma}{l}(\bfx, \bfx')| \leq 1$ for all $\bfx, \bfx' \in \cX$ and hence we can write $\ec{\Sigma}{l}(\bfx, \bfx') = \bar{\sigma}_s(\ec{\Sigma}{l - 1}(\bfx, \bfx'))$. \\

As the final step before the proof, we define a sequence of events which will be used throughout the proof. 
% Similar to [], we define a sequence of events. 
Let $\ec{\bfDelta}{l}(\bfx, \bfx') := \ec{\bfD}{l}(\bfx)\ec{\bfD}{l}(\bfx')$. We define the following events:
\begin{itemize}
    \item $\displaystyle \cA^{l}(\bfx, \bfx', \varepsilon_1) = \left\{ \left| \left(\ec{\bfh}{l}(\bfx) \right)^{\top}\ec{\bfh}{l}(\bfx') - \ec{\Sigma}{l}(\bfx, \bfx')\right| \leq \varepsilon_1\right\}$,
    \item $\displaystyle \bar{\cA}^l(\bfx, \bfx', \varepsilon_1) = \cA^{l}(\bfx, \bfx', \varepsilon_1) \cap \cA^{l}(\bfx, \bfx, \varepsilon_1) \cap \cA^{l}(\bfx', \bfx', \varepsilon_1)$
    \item $\displaystyle \bar{\cA}(\bfx, \bfx', \varepsilon_1) = \bigcap_{l = 0}^{L} \bar{\cA}^l(\bfx, \bfx', \varepsilon_1)$
    \item $\displaystyle \cB^{l}(\bfx, \bfx', \varepsilon_2) = \left\{ \left| \ip{\ec{\bfb}{l}(\bfx)}{\ec{\bfb}{l}(\bfx')} - \prod_{j = l}^{L+1} \ec{\dot{\Sigma}}{j}(\bfx, \bfx')\right| \leq \varepsilon_2 \right\} $
    \item $\displaystyle \bar{\cB}^l(\bfx, \bfx', \varepsilon_2) = \cB^{l}(\bfx, \bfx', \varepsilon_2) \cap \cB^{l}(\bfx, \bfx, \varepsilon_2) \cap \cB^{l}(\bfx', \bfx', \varepsilon_2)$
    \item $\displaystyle \bar{\cB}(\bfx, \bfx', \varepsilon_2) = \bigcap_{l = 1}^{L + 1} \bar{\cB}^l(\bfx, \bfx', \varepsilon_2)$
    \item $\displaystyle \bar{\cC}(\bfx, \bfx', \varepsilon_3) = \{|f(\bfx; \bfW)| \leq \varepsilon_3, |f( \bfx'; \bfW)| \leq \varepsilon_3\} $
    \item $\displaystyle \cD^{l}(\bfx, \bfx', \varepsilon_4) = \left\{ \left|c_s \dfrac{\tr(\ec{\bfDelta}{l}(\bfx, \bfx'))}{d_l} -  \ec{\dot{\Sigma}}{l}(\bfx, \bfx')\right| < \varepsilon_4 \right\}$
    \item $\displaystyle \bar{\cD}^l(\bfx, \bfx', \varepsilon_4) = \cD^{l}(\bfx, \bfx', \varepsilon_4) \cap \cD^{l}(\bfx, \bfx, \varepsilon_4) \cap \cD^{l}(\bfx', \bfx', \varepsilon_4)$
    \item $\displaystyle \bar{\cD}(\bfx, \bfx', \varepsilon_4) = \bigcap_{l = 1}^{L + 1} \bar{\cD}^l(\bfx, \bfx', \varepsilon_4)$
    \item $\displaystyle \cE^{l}(\bfx, \bfx', \varepsilon_5) = \left\{ \|\ec{\bfDelta}{l}(\bfx, \bfx')\|_2 < \varepsilon_5 \right\}$
    \item $\displaystyle \bar{\cE}^l(\bfx, \bfx', \varepsilon_5) = \cE^{l}(\bfx, \bfx', \varepsilon_4) \cap \cE^{l}(\bfx, \bfx, \varepsilon_4) \cap \cE^{l}(\bfx', \bfx', \varepsilon_5)$
    \item $\displaystyle \bar{\cE}(\bfx, \bfx', \varepsilon_5) = \bigcap_{l = 1}^{L + 1} \bar{\cE}^l(\bfx, \bfx', \varepsilon_5)$
\end{itemize}

We also state the following two Lemmas taken from~\citet{arora2019exact} which are used at several points in the proof before beginning with the first part.

\begin{lemma}
For any two events, $A$ and $B$, $\Pr(A\Rightarrow B) \geq \Pr(B|A)$. \label{lemma:pr_a_implies_b}
\end{lemma}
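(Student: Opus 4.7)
The plan is to unpack the definition $A \Rightarrow B := \neg A \lor B$ and decompose the left-hand side by disjointness. Since $\neg A$ and $A \cap B$ partition the event $\neg A \lor B$, I will write
\[
\Pr(A \Rightarrow B) \;=\; \Pr(\neg A) + \Pr(A \cap B) \;=\; \Pr(\neg A) + \Pr(A)\,\Pr(B \mid A),
\]
assuming $\Pr(A)>0$ so that the conditional probability is defined (the edge case $\Pr(A)=0$ is handled by convention, as then $\Pr(\neg A)=1 \geq \Pr(B\mid A)$ by any standard convention, and the inequality is trivial).

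Next, I will reduce the desired inequality $\Pr(A \Rightarrow B) \geq \Pr(B \mid A)$ to a transparent non-negativity statement. Subtracting $\Pr(B\mid A)$ from both sides of the identity above gives
\[
\Pr(A\Rightarrow B) - \Pr(B\mid A) \;=\; \Pr(\neg A) - (1-\Pr(A))\,\Pr(B\mid A) \;=\; \Pr(\neg A)\bigl(1 - \Pr(B\mid A)\bigr).
\]
Both factors lie in $[0,1]$, so the right-hand side is non-negative, which is exactly the claim.

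There is no real obstacle here; the only subtlety is the convention for $\Pr(B\mid A)$ when $\Pr(A) = 0$, which I will dispatch in one sentence. The whole argument is a one-line calculation after applying the definition of $\Rightarrow$ and the law of total probability restricted to $A$ and its complement.
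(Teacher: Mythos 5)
Your proof is correct: decomposing $\neg A \lor B$ into the disjoint pieces $\neg A$ and $A \cap B$ and factoring the difference as $\Pr(\neg A)\bigl(1 - \Pr(B \mid A)\bigr) \geq 0$ is exactly the standard one-line argument, and your handling of the $\Pr(A) = 0$ edge case is fine. The paper states this lemma by citation (to Arora et al.) without reproducing a proof, and your argument matches that standard proof in substance.
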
 

\begin{lemma}
Let $\bfw \sim \cN(0, \bfI_d)$, $\bfG \in \R^{d \times k}$ be a fixed matrix and $\bfF = \bfw^{\top}\bfG$ be a random matrix. Then, conditioned on the value of $\bfF$, $\bfw$ remains Gaussian in the null space of the row space of $\bfG$. Mathematically,
\begin{align*}
    \bfPi_{\bfG}^{\perp}\bfw \overset{d}{=}_{\bfF = \bfw^{\top}\bfG} \bfPi_{\bfG}^{\perp}\tilde{\bfw},
\end{align*}
where $\tilde{\bfw}$ is a i.i.d. copy of $\bfw$. \label{lemma:nullspace_equivalence}
\end{lemma}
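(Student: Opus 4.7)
The plan is to prove this by the standard fact that uncorrelated jointly Gaussian random vectors are independent, together with the observation that $\bfF$ is a deterministic function of $\bfPi_{\bfG}\bfw$ (the projection of $\bfw$ onto the column space of $\bfG$). So conditioning on $\bfF$ can only carry information about $\bfPi_{\bfG}\bfw$, not about $\bfPi_{\bfG}^{\perp}\bfw$, provided these two projections are shown to be independent.

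First, I would decompose $\bfw = \bfPi_{\bfG}\bfw + \bfPi_{\bfG}^{\perp}\bfw$ using that $\bfPi_{\bfG} + \bfPi_{\bfG}^{\perp} = \bfI_d$. Since both $\bfPi_{\bfG}\bfw$ and $\bfPi_{\bfG}^{\perp}\bfw$ are linear images of the Gaussian vector $\bfw$, the pair $(\bfPi_{\bfG}\bfw,\bfPi_{\bfG}^{\perp}\bfw)$ is jointly Gaussian. Their cross-covariance is
\begin{align*}
\E\bigl[(\bfPi_{\bfG}\bfw)(\bfPi_{\bfG}^{\perp}\bfw)^{\top}\bigr] \;=\; \bfPi_{\bfG}\,\E[\bfw\bfw^{\top}]\,(\bfPi_{\bfG}^{\perp})^{\top} \;=\; \bfPi_{\bfG}\bfPi_{\bfG}^{\perp} \;=\; \mathbf{0},
\end{align*}
since the two projectors have orthogonal ranges. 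For jointly Gaussian vectors, uncorrelatedness is equivalent to independence, so $\bfPi_{\bfG}\bfw$ and $\bfPi_{\bfG}^{\perp}\bfw$ are independent.

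Next I would show that $\bfF$ is measurable with respect to $\bfPi_{\bfG}\bfw$. Because $\bfPi_{\bfG}^{\perp}\bfw$ lies in the orthogonal complement of the column space of $\bfG$, we have $(\bfPi_{\bfG}^{\perp}\bfw)^{\top}\bfG = \mathbf{0}$, and therefore
\begin{align*}
\bfF \;=\; \bfw^{\top}\bfG \;=\; (\bfPi_{\bfG}\bfw)^{\top}\bfG + (\bfPi_{\bfG}^{\perp}\bfw)^{\top}\bfG \;=\; (\bfPi_{\bfG}\bfw)^{\top}\bfG.
\end{align*}
Thus $\sigma(\bfF) \subseteq \sigma(\bfPi_{\bfG}\bfw)$. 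Combined with the independence established above, this gives $\bfPi_{\bfG}^{\perp}\bfw \perp\!\!\!\perp \bfF$. Consequently the conditional distribution of $\bfPi_{\bfG}^{\perp}\bfw$ given $\bfF$ coincides with its unconditional distribution, namely $\cN(\mathbf{0},\bfPi_{\bfG}^{\perp})$. The same holds for $\bfPi_{\bfG}^{\perp}\tilde{\bfw}$ for any i.i.d.\ copy $\tilde{\bfw}$ (the event $\{\bfF=\bfw^{\top}\bfG\}$ does not involve $\tilde{\bfw}$), proving the distributional equality claimed in the lemma.

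There is no real obstacle here: the only subtlety is invoking ``jointly Gaussian plus uncorrelated implies independent,'' which requires one to verify joint Gaussianity of the pair $(\bfPi_{\bfG}\bfw,\bfPi_{\bfG}^{\perp}\bfw)$ rather than merely marginal Gaussianity. This is automatic because both vectors are obtained by a single linear map applied to $\bfw$, so their concatenation is a linear image of a Gaussian vector and hence Gaussian. Everything else is a one-line algebraic check.
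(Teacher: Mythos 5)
Your proof is correct. The paper does not prove this lemma at all --- it is imported verbatim from \citet{arora2019exact} as a known fact --- and your argument (orthogonal decomposition $\bfw = \bfPi_{\bfG}\bfw + \bfPi_{\bfG}^{\perp}\bfw$, joint Gaussianity with vanishing cross-covariance giving independence, and $\bfF = (\bfPi_{\bfG}\bfw)^{\top}\bfG$ since $\bfPi_{\bfG}^{\perp}\bfG = \mathbf{0}$) is precisely the standard argument behind that cited result, so there is nothing to add.
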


\subsubsection{Pre-activations are close to the NTK matrix}

In the first step, we show that $\ip{\ec{\bfh}{l-1}(\bfx)}{\ec{\bfh}{l-1}(\bfx')}$ is close to $\ec{\Sigma}{l - 1}(\bfx, \bfx')$. We formalize this idea in the following lemma.

\begin{lemma}
For $\sigma(z) = \sigma_s(z)$ and $[\ec{\bfW}{l}]_{ij} \overset{\text{i.i.d.}}{\sim} \cN(0,1)$ for all $i \in [d_{l+1}], j\ \in [d_l]$ and $l \in \{0,1, 2,\dots, L\}$. There exist constants $c_1, c_2 > 0$ such that if $\min_{l} d_l \geq c_1 \frac{s^L L^2c_s^2}{\varepsilon^2}\log^2(sL/\delta \varepsilon) $ and $\varepsilon \leq c_2/s$, then for fixed ${\bfx}, {\bfx}' \in \cX$, 
\begin{align*}
    \left| \ip{\ec{\bfh}{l}(\bfy)}{\ec{\bfh}{l}(\bfy')} - \ec{\Sigma}{l}(\bfy, \bfy') \right| \leq \varepsilon,
\end{align*}
holds with probability at least $1 - \delta$ for all $l \in \{0,1,\dots,L\}$ and all $(\bfy, \bfy') \in \left\{ ({\bfx}, {\bfx}), ({\bfx}, {\bfx}'), ({\bfx}', {\bfx}') \right\}$. In other words, $\Pr(\bar{\cA}(\varepsilon)) \geq 1 - \delta$, for fixed ${\bfx}, {\bfx}'$. 
\label{theorem:gradient_approx_theorem}
\end{lemma}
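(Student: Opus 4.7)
I plan to prove the statement by induction on the layer index $l \in \{0,1,\dots,L\}$, peeling off the weight matrices one layer at a time and exploiting the fact that, conditional on $\ec{\bfW}{1},\dots,\ec{\bfW}{l-1}$, the pre-activation vector $\ec{\bff}{l}$ is a linear function of a fresh Gaussian matrix $\ec{\bfW}{l}$. The base case $l=0$ is immediate since $\ec{\bfh}{0}(\bfy)=\bfy$ and $\ec{\Sigma}{0}(\bfy,\bfy')=\bfy^{\top}\bfy'$. For the inductive step, conditional on $\ec{\bfh}{l-1}$, the rows of $\ec{\bfW}{l}$ being i.i.d.\ $\cN(0,\bfI_{d_{l-1}})$ imply that $(\ec{f_i}{l}(\bfy),\ec{f_i}{l}(\bfy'))_{i=1}^{d_l}$ is an i.i.d.\ sequence of centered bivariate Gaussian pairs with covariance
\[
\widehat{\bfSigma}^{(l-1)}(\bfy,\bfy') = \begin{pmatrix} \|\ec{\bfh}{l-1}(\bfy)\|^2 & \ip{\ec{\bfh}{l-1}(\bfy)}{\ec{\bfh}{l-1}(\bfy')} \\ \ip{\ec{\bfh}{l-1}(\bfy)}{\ec{\bfh}{l-1}(\bfy')} & \|\ec{\bfh}{l-1}(\bfy')\|^2 \end{pmatrix},
\]
which by the inductive hypothesis is entry-wise within $\varepsilon_{l-1}$ of the target $\ec{\Lambda}{l}(\bfy,\bfy')$.

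The quantity of interest then reads $\ip{\ec{\bfh}{l}(\bfy)}{\ec{\bfh}{l}(\bfy')} = \tfrac{c_s}{d_l}\sum_{i=1}^{d_l}\sigma_s(u_i)\sigma_s(v_i)$, whose conditional expectation equals $\breve{\sigma}_s(\widehat{\bfSigma}^{(l-1)}(\bfy,\bfy'))$. I would split the deviation from $\ec{\Sigma}{l}(\bfy,\bfy')=\breve{\sigma}_s(\ec{\Lambda}{l}(\bfy,\bfy'))$ into a \emph{concentration error} $|\tfrac{c_s}{d_l}\sum_i \sigma_s(u_i)\sigma_s(v_i) - \breve{\sigma}_s(\widehat{\bfSigma}^{(l-1)})|$ and a \emph{perturbation error} $|\breve{\sigma}_s(\widehat{\bfSigma}^{(l-1)}) - \breve{\sigma}_s(\ec{\Lambda}{l})|$. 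The perturbation error is handled by a local Lipschitz bound for $\breve{\sigma}_s$ on $\cM_{+}^{\gamma}$: combining the $s$-homogeneity of $\sigma_s$ with the Lipschitz constant $s^{2}/(2s-1)$ of $\bar{\sigma}_s$ recorded in~\eqref{eqn:dual_derivative} gives an error of the form $O(s)\cdot\varepsilon_{l-1}$, so the recursion unrolls to an $O(s^L)$ multiplicative amplification across the $L$ layers. The restriction $\varepsilon\leq c_2/s$ keeps $\widehat{\bfSigma}^{(l-1)}$ inside $\cM_{+}^{\gamma}$ for the region of validity of this Lipschitz bound.

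The main obstacle is the concentration error, because for $s\geq 2$ the summands $\sigma_s(u_i)\sigma_s(v_i)$ are polynomials of degree $2s$ in jointly Gaussian variables and are therefore only sub-Weibull, not sub-exponential. I would apply a Bernstein-type inequality for sub-Weibull variables (their $\psi_{1/s}$ Orlicz norm being controlled via Gaussian moment estimates together with the fact that, on $\bar{\cA}^{l-1}$, the entries of $\widehat{\bfSigma}^{(l-1)}$ stay within a constant of $1$) to show that a width $d_l = \Omega\!\bigl(s^{O(1)} c_s^{2}\varepsilon^{-2}\log^{2}(1/\delta')\bigr)$ forces this error below $\varepsilon$ with probability $1-\delta'$. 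This is the step where the $c_s^{2}$ and $\log^{2}(sL/\delta\varepsilon)$ factors in the width requirement originate, and where the polynomial-in-$s$ Orlicz norm of $\sigma_s(u)\sigma_s(v)$ combines with the per-layer Lipschitz amplification to yield the $s^{L}$ factor in the stated width.

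To close the argument, I would choose $\varepsilon_{l}$ geometrically decreasing backward from $l=L$ so that, under the Lipschitz plus concentration recursion, the final error at layer $L$ is at most $\varepsilon$; take $\delta' = \delta/[3(L+1)]$ per layer per pair; and union-bound over the three pairs $(\bfx,\bfx),(\bfx,\bfx'),(\bfx',\bfx')$ and the $L+1$ layers. Because the conditioning chain is sequential in $l$, the tower property applied layer by layer (together with the fact that, given $\ec{\bfh}{l-1}$, the concentration bound at layer $l$ is independent of the inductive hypothesis) yields $\Pr(\bar{\cA}(\varepsilon))\geq 1-\delta$. The auxiliary constants $c_1,c_2$ are then set to absorb the various universal factors produced by the sub-Weibull Bernstein bound and the Lipschitz constant of $\breve{\sigma}_s$.
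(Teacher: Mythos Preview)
Your proposal is correct and follows essentially the same route as the paper: layer-wise induction splitting the error into a concentration term (empirical average of $\sigma_s(u_i)\sigma_s(v_i)$ versus its conditional mean) and a perturbation term controlled by the $O(s)$ Lipschitz constant of $\breve{\sigma}_s$ on $\cM_{+}^{1/s}$, followed by a union bound over layers and the three pairs. The only cosmetic difference is that the paper packages the concentration step as a bespoke two-regime tail bound for $\tfrac{1}{n}\sum_i\sigma_s(X_i)\sigma_s(Y_i)$ (its Lemma~\ref{lemma:concentration_of_activation_fn}, derived via a truncation-plus-Chernoff argument) and then cites the induction skeleton from \citet{Daniely2016}, whereas you invoke an off-the-shelf sub-Weibull Bernstein inequality and unroll the induction explicitly; both yield the same $c_s^{2}\varepsilon^{-2}\log^{2}(\cdot)$ width requirement with the $s^{L}$ amplification coming from the accumulated Lipschitz factors $B_{L,s}=\sum_{j=0}^{L-1}\beta_s^{j}$.
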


\begin{proof}
The proof of this Lemma relies on following lemmas.

\begin{lemma}
Let $\rho \in [-1,1]$ and $(X, Y) \sim \cN(0, \Sigma_{\rho})$. If $(X_1, Y_1), (X_2, Y_2), \dots, (X_n, Y_n)$ are independent samples from $\cN(0, \Sigma_{\rho})$ and $S_n = \frac{1}{n} \sum_{i =1}^n \sigma_s(X_i) \sigma_s(Y_i)$ for some $\sigma_s \in \cA$, then for any $t \geq 0$,
\begin{align*}
    \Pr(S_n -\mu_s \geq t) & \leq \begin{cases} (n + 1) \exp\left(- \dfrac{nt^2}{2 \sqrt{3 \mu_{4s,\rho}}} \right) & \text{ if } t \leq t^*(n), \\ (n + 1) \exp\left(- \dfrac{(nt)^{1/s}}{8(1 + \rho)} \right) & \text{ if } t > t^*(n). 
    \end{cases} \\
    \Pr(S_n - \mu_s \leq -t) &  \leq  \exp\left(  - \frac{n t^2}{2\mu_{2s, \rho}} \right),
\end{align*}
where $t^*(n) = \left( \dfrac{\sqrt{3 \mu_{4s,\rho}}}{4(1 + \rho)} \right)^{2 - 1/s} n^{-\left(\frac{s - 1}{2s - 1} \right)}$. Consequently, if $n \geq n_{*}(s, \delta, \rho)$, then
\begin{align*}
    \Pr \left( |S_n - \mu_{s, \rho}| \geq \sqrt{\frac{2(\sqrt{3 \mu_{4s,\rho}} + \mu_{2s,\rho})}{n} \log \left(\frac{n + 1}{\delta} \right)} \right) \leq \delta,
\end{align*}
where $\displaystyle n_{*}(s, \delta, \rho) := \min\left\{m \in \N: m \geq \frac{(8(1+ \rho))^{2s}}{2 \sqrt{3 \mu_{4s, \rho}}} \left[ \log \left( \frac{m + 1}{\delta} \right)\right]^{2s - 1}\right\} + 1$ \label{lemma:concentration_of_activation_fn}
\end{lemma}

For simplicity of notation, we define $\varphi_s(n, \delta, \rho) := \sqrt{\frac{2(\sqrt{3 \mu_{4s,\rho}} + \mu_{2s,\rho})}{n} \log \left(\frac{n + 1}{\delta} \right)}$. Thus, the result of Lemma~\ref{lemma:concentration_of_activation_fn} can be restated as $\Pr(|S_n - \mu_{s, \rho}| \geq \varphi_s(n, \delta, \rho)) \leq \delta$.

\begin{lemma}
For a given $s \in \N$, the dual activation function $\breve{\sigma}_s$ is $\beta_s$-Lipschitz in $\cM_{+}^{\gamma_s}$ w.r.t. the $\infty$-norm for $\gamma_s \leq 1/s$ and $\beta_s \leq 6s$.
\label{lemma:decency_parameters}
\end{lemma}

\begin{lemma}[\cite{Daniely2016}]
For $\sigma(z) = \sigma_s(z)$ and $[\ec{\bfW}{l}]_{ij} \overset{\text{i.i.d.}}{\sim} \cN(0,1)$ for all $i \in [d_{l+1}], j\ \in [d_l]$ and $l \in \{0,1, 2,\dots, L\}$. If $\min_{l \in [L]} d_l \geq n_s^*\left( \frac{\varepsilon}{B_{L,s}}, \frac{\delta}{8 \bar{d}} \right)$, then for fixed ${\bfx}, {\bfx}' \in \cX$, 
\begin{align*}
    \left| \ip{\ec{\bfh}{l}(\bfy)}{\ec{\bfh}{l}(\bfy')} - \ec{\Sigma}{l}(\bfy, \bfy') \right| \leq \varepsilon,
\end{align*}
holds with probability at least $1 - \delta$ for all $l \in \{0,1,\dots,L\}$ and all $(\bfy, \bfy') \in \left\{ ({\bfx}, {\bfx}), ({\bfx}, {\bfx}'), ({\bfx}', {\bfx}') \right\}$.  In the above expression, $B_{L, s} = \sum_{j = 0}^{L - 1}\beta_s^i$, $\bar{d} = \sum_{l = 1}^{L+1} d_l$ and $n^*_s(\varepsilon, \delta) = \min \{ n : \varphi_s(n,\delta) \leq \varepsilon\}$.\label{lemma:daniely_error_lemma}
\end{lemma}

Lemma~\ref{theorem:gradient_approx_theorem} follows immediately from Lemma~\ref{lemma:daniely_error_lemma} which in turn follows from the previous lemmas. The proofs of Lemma~\ref{lemma:concentration_of_activation_fn} and~\ref{lemma:decency_parameters} are provided in Appendix~\ref{sec:helper_lemma_proofs}.

\end{proof}

\subsubsection{Pre-activation gradients are close to NTK derivative matrices}

In the second step, we show that $\ip{\ec{\bfb}{l}(\bfx)}{\ec{\bfb}{l}(\bfx')}$ is close to $\prod_{j = l}^{L+1} \ec{\dot{\Sigma}}{j}(\bfx, \bfx')$. We formalize this idea in the following lemma.

\begin{lemma}
For $\sigma = \sigma_s$ and $[\ec{\bfW}{l}]_{ij} \overset{\text{i.i.d.}}{\sim} \cN(0,1)$ for all $i \in [d_{l+1}], j\ \in [d_l]$ and $l \in \{0,1, 2,\dots, L\}$. There exist constants $c_1, c_2 > 0$ such that if $\min_{l} d_l \geq c_1 \frac{s^L L^2c_s^2}{\varepsilon^2}\log^2(sL/\delta \varepsilon) $ and $\varepsilon \leq c_2 s^{-L+2}/L$, then for fixed ${\bfx}, {\bfx}' \in \cX$, 
\begin{align*}
    \left| \ip{\ec{\bfb}{l}(\bfy)}{\ec{\bfb}{l}(\bfy')} - \prod_{j = l}^{L+1} \ec{\dot{\Sigma}}{j}(\bfy, \bfy') \right| \leq L(\beta_s + 1)s^{L+1}\varepsilon,
\end{align*}
holds with probability at least $1 - \delta$ for all $l \in \{0,1,\dots,L\}$ and all $(\bfy, \bfy') \in \left\{ ({\bfx}, {\bfx}), ({\bfx}, {\bfx}'), ({\bfx}', {\bfx}') \right\}$. In other words, if $\min_{l} d_l \geq c_1 \frac{s^L L^2c_s^2}{\varepsilon_1^2}\log^2(sL/\delta \varepsilon_1) $ and $\varepsilon_1 \leq c_2 s^{-L+2}/L$, then for fixed ${\bfx}, {\bfx}'$, $\Pr\left(\bar{\cA}(\varepsilon_1/2) \land \bar{\cB}(L(\beta_s + 1)s^{L+1}\varepsilon_1) \right) \geq 1 - \delta$. \label{theorem:ntk_derivative_approx_theorem}
\end{lemma}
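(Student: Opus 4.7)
The plan is to proceed by backward induction on the layer index $l$, from $l=L+1$ down to $l=1$, with each inductive step resting on a conditioning-plus-concentration argument for the Gaussian weight matrix $\ec{\bfW}{l+1}$. The base case $l=L+1$ is immediate since $\ec{\bfb}{L+1}(\bfx) \equiv 1$ and $\ec{\dot{\Sigma}}{L+1}(\bfx,\bfx') \equiv 1$ by definition. For the inductive step I would expand the backward recursion for $\ec{\bfb}{l}$ to obtain
\begin{align*}
\ip{\ec{\bfb}{l}(\bfx)}{\ec{\bfb}{l}(\bfx')} = \frac{c_s}{d_l}\, \ec{\bfb}{l+1}(\bfx)^{\top}\, \ec{\bfW}{l+1}\, \ec{\bfDelta}{l}(\bfx,\bfx')\, (\ec{\bfW}{l+1})^{\top}\, \ec{\bfb}{l+1}(\bfx'),
\end{align*}
and show that this quantity is close to $\ec{\dot{\Sigma}}{l}(\bfx,\bfx')\,\ip{\ec{\bfb}{l+1}(\bfx)}{\ec{\bfb}{l+1}(\bfx')}$. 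Combining this single-step estimate with the inductive hypothesis and using $|\ec{\dot{\Sigma}}{j}|\le 1$, the product $\prod_{j=l}^{L+1}\ec{\dot{\Sigma}}{j}(\bfx,\bfx')$ falls out with a controlled, additive accumulation of errors across layers.

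The heart of the single-step estimate is Lemma~\ref{lemma:nullspace_equivalence}. Setting $\ec{\bfH}{l}=[\ec{\bfh}{l}(\bfx),\ec{\bfh}{l}(\bfx')] \in \R^{d_l\times 2}$, I would exploit that $\ec{\bfb}{l+1}$ depends on $\ec{\bfW}{l+1}$ only through the product $\ec{\bff}{l+1} = \ec{\bfW}{l+1}\ec{\bfH}{l}$: conditioning on this product freezes $\ec{\bfb}{l+1}$ while leaving the null-space component of $\ec{\bfW}{l+1}$ distributed as a fresh i.i.d.\ Gaussian $\tilde{\bfW}$. The two-dimensional row-space contribution to the quadratic form can be bounded using $\|\ec{\bfDelta}{l}\|_2$ (event $\bar{\cE}$), while the dominant null-space part is analysed by applying the Hanson--Wright inequality to $\tilde{\bfW}\,\ec{\bfDelta}{l}\,\tilde{\bfW}^{\top}$, whose conditional mean is $\tr(\ec{\bfDelta}{l})\,\mathbf{I}$. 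Combining this with event $\bar{\cD}$ (that $\tfrac{c_s}{d_l}\tr(\ec{\bfDelta}{l}(\bfx,\bfx')) \approx \ec{\dot{\Sigma}}{l}(\bfx,\bfx')$) then yields the desired single-step approximation.

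Two auxiliary concentration lemmas establishing $\bar{\cD}$ and $\bar{\cE}$ must be proved before launching the induction. Both would be obtained by first conditioning on the forward-pass event $\bar{\cA}(\varepsilon/2)$ supplied by Lemma~\ref{theorem:gradient_approx_theorem} --- so that the empirical covariance of $\ec{\bff}{l}$ is $\varepsilon$-close to $\ec{\Lambda}{l}$ --- and then applying moment bounds in the spirit of Lemma~\ref{lemma:concentration_of_activation_fn} to the $d_l$ i.i.d.\ coordinate samples $\sigma'_s(\ec{\bff}{l}_i(\bfx))\,\sigma'_s(\ec{\bff}{l}_i(\bfx'))$. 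The final statement is then obtained by a union bound across the $L+1$ layer levels and the three pairs $(\bfy,\bfy')\in\{(\bfx,\bfx),(\bfx,\bfx'),(\bfx',\bfx')\}$, absorbing a factor of $\Oc(L)$ into the probability budget.

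The main obstacle I anticipate is handling $\sigma'_s$ for $s \ge 2$. In the ReLU case $\sigma'_1$ is a bounded $0/1$ indicator, and the concentration and moment estimates of \citet{arora2019exact} go through immediately. For $s \ge 2$, however, $\sigma'_s(x) = s(\max(0,x))^{s-1}$ is an unbounded polynomial, so sub-Gaussian-style tails must be replaced by heavier-tailed Bernstein-type bounds whose moments grow with $s$, paralleling the $s$-dependent bounds required for $\sigma_s$ itself in Lemma~\ref{lemma:concentration_of_activation_fn}. This is precisely the source of the $s^{L+1}$ factor in the final error bound: each of the $L$ telescoping steps contributes a multiplicative factor of order $s$ coming from both the Lipschitz constant $\beta_s = \Oc(s)$ of the dual $\bar{\sigma}_s$ (Lemma~\ref{lemma:decency_parameters}) and the moment growth of $\sigma'_s$ in the concentration step, so careful bookkeeping is required to keep the error scaling exactly as stated.
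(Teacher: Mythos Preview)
Your overall architecture---backward induction on $l$, the conditioning trick of Lemma~\ref{lemma:nullspace_equivalence} to decouple the null-space part of $\ec{\bfW}{l+1}$, Hanson--Wright/Gaussian-chaos concentration for that part, and auxiliary concentration lemmas establishing $\bar{\cD}$ and $\bar{\cE}$ from $\bar{\cA}$---matches the paper's proof essentially step for step. Two points need correction, however.

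First, the claim ``using $|\ec{\dot{\Sigma}}{j}|\le 1$'' is false for $s\ge 2$: one has $\ec{\dot{\Sigma}}{l}(\bfy,\bfy') = \tfrac{s^2}{2s-1}\,\bar{\sigma}_{s-1}(\cdot) \le \tfrac{s^2}{2s-1}\le s$, so $\prod_{j=l}^{L+1}\ec{\dot{\Sigma}}{j}$ is only bounded by $s^{L-l+1}$, not $1$. You correctly sense an $s^{L+1}$ blow-up later, but attribute it entirely to $\beta_s$ and moment growth; part of it comes directly from this product.

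Second, and more substantively, your plan to bound the ``two-dimensional row-space contribution'' using only $\|\ec{\bfDelta}{l}\|_2$ is incomplete. After conditioning on $\ec{\bfF}{l+1}=\ec{\bfW}{l+1}\ec{\bfH}{l}$, the row-space component $\bfPi_{\bfG}(\ec{\bfW}{l+1})^{\top}\ec{\bfb}{l+1}(\bfy)$ is \emph{deterministic} and could a~priori be large in the direction of $\ec{\bfh}{l}(\bfy)$, since $\ec{\bfb}{l+1}$ depends on $\ec{\bfW}{l+1}$. The paper resolves this with an extra ingredient you do not mention: the event $\bar{\cC}(\varepsilon_3)=\{|f(\bfx;\bfW)|\le\varepsilon_3,\ |f(\bfx';\bfW)|\le\varepsilon_3\}$ together with the $s$-homogeneity identity $f=s^{l-L}(\ec{\bfb}{l+1})^{\top}\ec{\bfW}{l+1}\ec{\bfh}{l}$, which pins the $\ec{\bfh}{l}$-direction component to exactly $s^{L-l}|f|/\|\ec{\bfh}{l}\|$. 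The remaining one-dimensional piece $\bfPi_{\bfG/\bfh}$ is handled by a second application of Lemma~\ref{lemma:nullspace_equivalence} (conditioning on $\ec{\bff}{l+1}(\bfy)$ alone). Without this homogeneity-plus-$\bar{\cC}$ argument, the row-space term does not obviously shrink with $d_l$.
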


\begin{proof}

We first state some helper lemmas that will be used in the proof followed by the proof of the above lemma.

\begin{lemma}[\citet{arora2019exact}, Lemma E.4]
\begin{align*}
    \Pr\left[ \bar{\cA}^{L}(\varepsilon/2) \Rightarrow \bar{\cC}\left(2 \sqrt{\log \left( \frac{4}{\delta} \right)} \right) \right] \geq 1 - \delta \ \ \quad \forall \ \varepsilon \in [0,1], \ \ \forall \ \delta \in (0, 1). 
\end{align*}\label{lemma:a_l_implies_c}
\end{lemma}

\begin{lemma}
If $\bar{\cA}^{L}(\varepsilon_1/2) \land \bar{\cB}^{l+1}(\varepsilon_2) \land \bar{\cC}(\varepsilon_3) \land \bar{\cD}^l(\varepsilon_4) \land \bar{\cE}^l(\varepsilon_5)$ for any pair of points $\bfx, \bfx' \in \cX$, then for any $(\bfy, \bfy') \in \{ (\bfx, \bfx), (\bfx, \bfx'), (\bfx', \bfx') \}$, we have
\begin{align*}
    \left| c_{s} \frac{\tr(\ec{\bfDelta}{l}(\bfy, \bfy'))}{d_l}\ip{\ec{\bfb}{l+1}(\bfy)}{\ec{\bfb}{l+1}(\bfy')}  - \prod_{j = l}^L \ec{\dot{\Sigma}}{l}(\bfy, \bfy') \right| \leq s^{L - l}\varepsilon_4 + s\varepsilon_2.
\end{align*} \label{lemma:trace_D_times_b}
\end{lemma}

\begin{lemma}
If $\bar{\cA}^{L}(\varepsilon_1/2) \land \bar{\cB}^{l+1}(\varepsilon_2) \land \bar{\cC}(\varepsilon_3) \land \bar{\cD}^l(\varepsilon_4) \land \bar{\cE}^l(\varepsilon_5)$ for any pair of points $\bfx, \bfx' \in \cX$, then for any $(\bfy, \bfy') \in \{ (\bfx, \bfx), (\bfx, \bfx'), (\bfx', \bfx') \}$
\begin{align*}
    & \left| \frac{c_s}{d_l}\left(\ec{\bfb}{l+1}(\bfy)\right)^{\top} \ec{\bfW}{l+1}\bfPi_{\bfG}^{\perp} \ec{\bfDelta}{l}(\bfy, \bfy') \bfPi_{\bfG}^{\perp}  \left(  \ec{\bfW}{l+1}\right)^{\top}\ec{\bfb}{l+1}(\bfy) - \frac{c_s}{d_l} \tr(\ec{\bfDelta}{l}(\bfy, \bfy')) \ip{\ec{\bfb}{l+1}(\bfy)}{\ec{\bfb}{l+1}(\bfy')}   \right|  \\
    &\leq c_s \left(\ip{\ec{\bfb}{l+1}(\bfy)}{\ec{\bfb}{l+1}(\bfy)} + \ip{\ec{\bfb}{l+1}(\bfy')}{\ec{\bfb}{l+1}(\bfy')}\right) \left( \sqrt{\frac{2}{d_l} \log\left( \frac{3}{\delta} \right) } + \frac{1}{d_l}\log\left( \frac{3}{\delta} \right) \right) \varepsilon_5 \\
    & \ \ \ \ \ \ \ \ \ \ \ \ +  \frac{2c_s}{d_l}\ip{\ec{\bfb}{l+1}(\bfy)}{\ec{\bfb}{l+1}(\bfy')} \varepsilon_5.
\end{align*}
holds with probability at least $1 - \delta$. Consequently, for any $\bfy \in \{\bfx, \bfx'\}$,
\begin{align*}
    \sqrt{\frac{c_s}{d_l}} \left\| \left(\ec{\bfb}{l+1}(\bfy)\right)^{\top} \ec{\bfW}{l+1}\bfPi_{\bfG}^{\perp} \ec{\bfD}{l}(\bfy) \right\| & \leq \sqrt{2c_s \ip{\ec{\bfb}{l+1}(\bfy)}{\ec{\bfb}{l+1}(\bfy)} \varepsilon_5}.
\end{align*} \label{lemma:b_h_proj_G_close_to_trace_D_times_b}
\end{lemma}

\begin{lemma}
If $\bar{\cA}^{L}(\varepsilon_1/2) \land \bar{\cB}^{l+1}(\varepsilon_2) \land \bar{\cC}(\varepsilon_3) \land \bar{\cD}^l(\varepsilon_4) \land \bar{\cE}^l(\varepsilon_5)$ for any pair of points $\bfx, \bfx' \in \cX$, then for any $(\bfy, \bfy') \in \{ (\bfx, \bfx), (\bfx, \bfx'), (\bfx', \bfx') \}$
\begin{align*}
    & \frac{c_s}{d_l} \bigg|\left(\ec{\bfb}{l+1}(\bfy) \right)^{\top} \ec{\bfW}{l+1} \ec{\bfDelta}{l}(\bfy, \bfy') \left(\ec{\bfW}{l+1}\right)^{\top} \ec{\bfb}{l+1}(\bfy') \\
    & \ \ \ \ \ \ \ \ \ \ \ \ \ \ \ \ - \left(\ec{\bfb}{l+1}(\bfy) \right)^{\top} \ec{\bfW}{l+1} \bfPi_{\bfG}^{\perp}\ec{\bfDelta}{l}(\bfy, \bfy')\bfPi_{\bfG}^{\perp} \left(\ec{\bfW}{l+1}\right)^{\top} \ec{\bfb}{l+1}(\bfy')\bigg| \leq \\ 
    & \frac{c_s \varepsilon_5}{\sqrt{d_l}}\left[(s^{(L - l)/2} + \varepsilon_2)\sqrt{2 \log\left(\frac{8}{\delta}\right)} + s^{L - l} \varepsilon_3 \sqrt{2}\right] \times \\
    & \ \ \ \ \ \ \ \ \ \ \ \ \ \ \ \ \ \left\{\frac{1}{\sqrt{d_l}}\left[(s^{(L - l)/2} + 1)\sqrt{2 \log\left(\frac{8}{\delta}\right)} + s^{L - l} \varepsilon_3 \sqrt{2}\right] + \sqrt{8}(s^{(L - l)/2} + 1) \right\}
\end{align*}
holds with probability at least $1 - \delta$.\label{lemma:b_W_Delta_diff_with_its_proj}
\end{lemma}

\begin{lemma} For any $\varepsilon_1 \in [0,1/s]$, $\varepsilon_2, \varepsilon_4 \in [0,1]$ and $ \varepsilon_3, \varepsilon_5 \geq 0$,
\begin{align*}
    \Pr \left[\bar{\cA}^{L}(\varepsilon_1/2) \land \bar{\cB}^{l+1}(\varepsilon_2) \land \bar{\cC}(\varepsilon_3) \land \bar{\cD}^l(\varepsilon_4) \land \bar{\cE}^l(\varepsilon_5) \Rightarrow \bar{\cB}^{l}(\varepsilon') \right] \geq 1 - \delta,
\end{align*}
where 
\begin{align*}
    &\varepsilon' =  2c_s (s^{L - l} + 1)\left( \sqrt{\frac{2}{d_l} \log\left( \frac{6}{\delta} \right) } + \frac{1}{d_l}\log\left( \frac{6}{\delta} \right) \right) \varepsilon_5  +  \frac{2c_s}{d_l}(s^{L - l} + 1) \varepsilon_5 + s^{L - l}\varepsilon_4 + s\varepsilon_2 \\
    & + \frac{c_s \varepsilon_5}{\sqrt{d_l}}\left[(s^{(L - l)/2} + 1)\sqrt{2 \log\left(\frac{16}{\delta}\right)} + s^{L - l} \varepsilon_3 \sqrt{2}\right] \times \\
    & \ \ \ \ \ \ \ \ \ \ \ \left\{\frac{1}{\sqrt{d_l}}\left[(s^{(L - l)/2} + 1)\sqrt{2 \log\left(\frac{16}{\delta}\right)} + s^{L - l} \varepsilon_3 \sqrt{2}\right]  + \sqrt{8}(s^{(L - l)/2} + 1) \right\}.
\end{align*}\label{lemma:b_h_to_b_h_plus_1}
\end{lemma}

We now prove Lemma~\ref{theorem:ntk_derivative_approx_theorem} by using induction on Lemma~\ref{lemma:b_h_to_b_h_plus_1}. Before the inductive step, we note some relations that will be useful to us during the analysis. Firstly, using Lemma~\ref{theorem:gradient_approx_theorem}, we can conclude that if $d_l \geq n_s^*(\varepsilon/B_{L,s}, \delta/32\bar{d}(L+1))$ and $\varepsilon \leq c_2/s$, then
\begin{align}
    \Pr\left[ \bar{\cA}^{L}(\varepsilon/2) \right] \geq 1 - \delta/4. \label{eqn:thm_2_proof_prob_bound_A_L}
\end{align}
From Lemma~\ref{lemma:a_l_implies_c}, we can conclude that
\begin{align}
    \Pr\left[ \bar{\cA}^{L}(\varepsilon/2) \Rightarrow \bar{\cC}\left(2 \sqrt{\log \left( \frac{16}{\delta} \right)} \right) \right] \geq 1 - \delta/4. \label{eqn:thm_2_proof_prob_bound_C}
\end{align}
If $d_l \geq n_s^*(\varepsilon/3, \delta/24(L+1))$, then we can conclude that $3s^2 \varphi_{s-1}((\min_l d_l), \delta/24(L+1)) + \frac{s^2\beta_s \varepsilon}{2s - 1} \leq (\beta + 1)s^2\varepsilon$. Consequently for $\varepsilon \leq 2/s$, 
\begin{align}
    \Pr\left[ \bar{\cA}(\varepsilon/2) \Rightarrow \bar{\cD}((\beta_s + 1)s^2\varepsilon) \land \bar{\cE}\left(\varepsilon''\right) \right] \geq 1 - \delta/4,\label{eqn:thm_2_proof_a_l_implies_d_e}
\end{align} 
where $\displaystyle \varepsilon'' = 3s^2 \left[2\log\left(\frac{6(L+1) (\max_{l} d_l) }{\delta}\right) \right]^{s-1}$. Applying the union bound on~\eqref{eqn:thm_2_proof_prob_bound_A_L},~\eqref{eqn:thm_2_proof_prob_bound_C} and~\eqref{eqn:thm_2_proof_a_l_implies_d_e}, we obtain that 
\begin{align}
    \Pr\left[ \bar{\cA}(\varepsilon/2) \land \bar{\cC}\left(2 \sqrt{\log \left( \frac{16}{\delta} \right)} \right) \land \bar{\cD}((\beta_s + 1)s^2\varepsilon) \land \bar{\cE}\left(\varepsilon''\right) \right] \geq 1 - 3\delta/4.\label{eqn:thm_2_prob_a_c_d_e}
\end{align}
Let $d_l$ be chosen large enough to ensure that
\begin{align*}
     s^{L - l+2}\varepsilon & \geq 2c_s (s^{L - l} + 1)\left( \sqrt{\frac{2}{d_l} \log\left( \frac{24(L + 1)}{\delta} \right) } + \frac{1}{d_l}\left(  + \log\left( \frac{24(L + 1)}{\delta} \right) \right) \right) \varepsilon'' \\
    & + c_s \varepsilon'' \sqrt{\frac{8}{d_l}} (s^{(L - l)/2} + 1) \left[(s^{(L - l)/2} + 1)\sqrt{2 \log\left(\frac{64(L + 1)}{\delta}\right)} + 4s^{L - l}\sqrt{\log \left( \frac{16(L + 1)}{\delta} \right)} \right] \\
    & + \frac{c_s \varepsilon''}{d_l} \left[(s^{(L - l)/2} + 1)\sqrt{2 \log\left(\frac{64(L + 1)}{\delta}\right)} + 4s^{L - l} \sqrt{\log \left( \frac{16(L + 1)}{\delta} \right)}\right]^2.
\end{align*}
Define $\upsilon_l := (L + 1 - l)(\beta_s + 1)s^{L - l  +2}\varepsilon$ for $l = 0,1, \dots, L+1$. Invoking Lemma~\ref{lemma:b_h_to_b_h_plus_1} with $\varepsilon_1 = \varepsilon$, $\varepsilon_2 = \upsilon_{l+1}$, $\varepsilon_3 = 2 \sqrt{\log \left( \frac{16}{\delta} \right)}$, $\varepsilon_4 = (\beta_s + 1)s^2 \varepsilon$ and $\varepsilon_5 = \varepsilon''$ gives us that
\begin{align*}
    \Pr \left[\bar{\cA}^{L}(\varepsilon/2) \land \bar{\cB}^{l+1}(\upsilon_{l+1}) \land \bar{\cC}\left(2 \sqrt{\log \left( \frac{16}{\delta} \right)}\right) \land \bar{\cD}^l((\beta_s + 1)s^2 \varepsilon) \land \bar{\cE}^l(\varepsilon'') \Rightarrow \bar{\cB}^{l}(\upsilon_l) \right] \geq 1 - \frac{\delta}{4(L + 1)}.
\end{align*}
Thus,
\begin{align*}
    & \Pr \left[\bar{\cA}(\varepsilon/2) \land  \bar{\cB}(\upsilon_1) \right] \\
    & \geq \Pr \left[\underbrace{\bar{\cA}(\varepsilon/2) \land \bar{\cC}\left(2 \sqrt{\log \left( \frac{16}{\delta} \right)}\right) \land \bar{\cD}((\beta_s + 1)s^2 \varepsilon) \land \bar{\cE}(\varepsilon'')}_{:=\cF(\varepsilon)} \land \left( \bigcap_{l = 1}^{L+1} \bar{\cB}^{l}(\upsilon_l) \right) \right] \\
    & \geq \Pr \left[\cF(\varepsilon) \land \bar{\cB}^{L+1}(\upsilon_{L+1}) \land \bigcap_{l = 0}^{L} \left\{ \cF(\varepsilon) \land \left(\bigcap_{j = L + 1 -l}^{L+1} \bar{\cB}^{j+1}(\upsilon_{j+1}) \right) \Rightarrow \bar{\cB}^{L-l}(\upsilon_{L-l}) \right\} \right] \\
    & \geq \Pr \Bigg[\cF(\varepsilon) \land \bar{\cB}^{L+1}(\upsilon_{L+1}) \land \\
    & \ \ \ \ \ \ \ \ \ \ \ \ \ \ \ \ \ \ \ \bigcap_{l = 0}^{L} \left\{ \bar{\cA}^{L}(\varepsilon/2) \land \bar{\cC}\left(2 \sqrt{\log \left( \frac{16}{\delta} \right)}\right) \land \bar{\cD}^{l}((\beta_s + 1)s^2 \varepsilon) \land \bar{\cE}^{l}(\varepsilon'') \land  \bar{\cB}^{l+1}(\upsilon_{l+1})  \Rightarrow \bar{\cB}^{l}(\upsilon_{l}) \right\} \Bigg] \\
    % & \geq \Pr \left[ \bigcap_{h = 0}^{L} \left\{ \bar{\cA}(\varepsilon/2) \land \bar{\cC}\left(2 \sqrt{\log \left( \frac{16}{\delta} \right)}\right) \land \bar{\cD}((\beta + 1)s^2 \varepsilon) \land \bar{\cE}(\varepsilon'') \land \left( \bigcap_{j = L + 2-h}^{L+1} \bar{\cB}^{h}(\upsilon_j) \right) \Rightarrow \bar{\cB}^{L+1-h}(\upsilon_{L+1-h}) \right\} \right] \\
    % & \geq \Pr \left[ \bigcap_{h = 0}^{L} \left\{ \bar{\cA}^L(\varepsilon/2) \land \bar{\cC}\left(2 \sqrt{\log \left( \frac{16}{\delta} \right)}\right) \land \bar{\cD}^h((\beta + 1)s^2 \varepsilon) \land \bar{\cE}^h(\varepsilon'') \land  \bar{\cB}^{h+1}(\upsilon_{h+1}) \Rightarrow \bar{\cB}^{h}(\upsilon_{h}) \right\} \right] \\
    & \geq 1 - \Pr\left[ \neg \left\{\bar{\cA}(\varepsilon/2) \land \bar{\cC}\left(2 \sqrt{\log \left( \frac{16}{\delta} \right)}\right) \land \bar{\cD}((\beta_s + 1)s^2 \varepsilon) \land \bar{\cE}(\varepsilon'') \right\} \right] - \Pr\left[\neg \bar{\cB}^{L+1}(\upsilon_{L+1}) \right] \\
    & \ \ \ \ \ \ - \sum_{l = 0}^L\Pr \left[ \neg\left\{\bar{\cA}^{L}(\varepsilon/2) \land \bar{\cC}\left(2 \sqrt{\log \left( \frac{16}{\delta} \right)}\right) \land \bar{\cD}^{h}((\beta_s + 1)s^2 \varepsilon) \land \bar{\cE}^{l}(\varepsilon'') \land  \bar{\cB}^{l+1}(\upsilon_{l+1})  \Rightarrow \bar{\cB}^{l}(\upsilon_{l}) \right\} \right] \\
    & \geq 1 - \frac{3\delta}{4} - \sum_{l = 0}^L\frac{\delta}{4(L + 1)}\\
    & \geq 1 - \delta.
\end{align*}

Thus, $\Pr \left[\bar{\cA}(\varepsilon/2) \land  \bar{\cB}(L(\beta_s + 1)s^{L+1}\varepsilon) \right] \geq 1 - \delta$, as required. The proof of the helper lemmas are provided in Appendix~\ref{sec:helper_lemma_proofs}.

\end{proof}

Now, it is straightforward to note that Lemma~\ref{lemma:kernel_init_to_NTK} immediately follows from the Lemma~\ref{theorem:gradient_approx_theorem} and Lemma~\ref{theorem:ntk_derivative_approx_theorem}. With Lemma~\ref{lemma:kernel_init_to_NTK} in place, the rest of the proof of Theorem~\ref{theorem:appx_error} follows similar to the proof of Theorem 3.2 in~\citet{arora2019exact}. The only step in the proof of Theorem 3.2 that is dependent on kernel being ReLU, is Lemma F.6, where they bound the perturbation in gradient based on the amount of perturbation in the weight matrices. We would like to emphasize that this is only step apart from the ones which use Theorem 3.1. Such steps follow immediately by invoking Lemma~\ref{lemma:kernel_init_to_NTK} proven above, instead of Theorem 3.1. Using the result from~\citet[Theorem 3.2]{liu2020linearity}, we know that the spectral norm of the Hessian of the neural net is $\Oc(m^{-1/2})$. Here the implied constant only depends on $s$ and $L$. Thus for the a perturbation in the weight matrices of the order of $\Oc(\sqrt{m} \omega)$, the corresponding perturbation in the gradient is $\Oc(\omega)$, with the implied constant depending only on $s$ and $L$. On substituting this relation in place of Lemma F.6, the claim of Theorem~\ref{theorem:appx_error} follows using the same arguments as the ones used in the proof of Theorem 3.2 of~\citet{arora2019exact}.

\section{Proof of Helper Lemmas}
\label{sec:helper_lemma_proofs}

We first state some additional intermediate lemmas which are used in the proofs several helper lemmas then provide a proof of all the lemmas.

\begin{definition}
\begin{align*}
    \ec{\bfG}{l}(\bfx, \bfx') = [\ec{\bfh}{l}(\bfx) \ \  \ec{\bfh}{l}(\bfx')] \in \R^{d_l \times 2}
\end{align*}
\begin{align*}
    \ec{\tilde{\bfG}}{l}(\bfx, \bfx') = [\ec{\bfG}{l}(\bfx, \bfx')]^{\top}\ec{\bfG}{l}(\bfx, \bfx') = \begin{bmatrix} \ip{\ec{\bfh}{l}(\bfx)}{\ec{\bfh}{l}(\bfx)} & \ip{\ec{\bfh}{l}(\bfx)}{\ec{\bfh}{l}(\bfx')} \\ \ip{\ec{\bfh}{l}(\bfx')}{\ec{\bfh}{l}(\bfx)} & \ip{\ec{\bfh}{l}(\bfx')}{\ec{\bfh}{l}(\bfx')} \\  \end{bmatrix} \equiv \begin{bmatrix} G_{11} & G_{12} \\ G_{12} & G_{22} \\  \end{bmatrix} 
\end{align*}
\end{definition}

\begin{lemma}
\begin{align*}
    \Pr\left[ \bar{\cA}^{l}(\varepsilon) \Rightarrow \bar{\cD}^{l + 1}\left(3s^2 \varphi_{s-1}(d_l, \delta/6, 1) + \frac{2s^2\beta_s \varepsilon}{2s - 1} \right) \land \bar{\cE}^{l+1} \left(3s^2 \left[2\log\left(\frac{6d_l}{\delta}\right) \right]^{s-1} \right) \right] \geq 1 - \delta \\
    \quad \forall \ \varepsilon \in [0,1/s],\  \delta \in (0, 1).
\end{align*} \label{lemma:a_h_implies_d_h_plus_1}
\end{lemma}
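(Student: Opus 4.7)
The plan is to condition on the first $l$ weight matrices so that $\ec{\bfh}{l}(\bfx)$ and $\ec{\bfh}{l}(\bfx')$ become fixed, while the entries of $\ec{\bfW}{l+1}$ remain i.i.d.\ $\cN(0,1)$. Under this conditioning, the $d_{l+1}$ pairs $(u_i,v_i) := ([\ec{\bff}{l+1}(\bfx)]_i,[\ec{\bff}{l+1}(\bfx')]_i)$ are i.i.d.\ mean-zero Gaussian vectors with covariance $\ec{\tilde{\bfG}}{l}(\bfx,\bfx')$. On the event $\bar{\cA}^l(\varepsilon)$, every entry of $\ec{\tilde{\bfG}}{l}(\bfx,\bfx')$ lies within $\varepsilon$ of the corresponding entry of $\ec{\Lambda}{l+1}(\bfx,\bfx')$; since the diagonal entries of $\ec{\Lambda}{l+1}$ equal $1$, this puts $\ec{\tilde{\bfG}}{l} \in \cM_+^{\varepsilon}$, which (using $\varepsilon \le 1/s$) validates the Lipschitz hypothesis of Lemma~\ref{lemma:decency_parameters}. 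I then handle the two sub-events separately and finish with a union bound; throughout I split the failure budget $\delta$ uniformly across the three required input pairs $(\bfy,\bfy')\in\{(\bfx,\bfx),(\bfx,\bfx'),(\bfx',\bfx')\}$ and across the two sub-events.

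For the event $\bar{\cD}^{l+1}$, use $\sigma_s'(z) = s\,\sigma_{s-1}(z)$ to rewrite
\begin{align*}
\frac{c_s}{d_{l+1}}\tr\!\bigl(\ec{\bfDelta}{l+1}(\bfx,\bfx')\bigr) \;=\; \frac{c_s s^2}{d_{l+1}}\sum_{i=1}^{d_{l+1}} \sigma_{s-1}(u_i)\,\sigma_{s-1}(v_i).
\end{align*}
Rescale each coordinate by the corresponding square-root of the diagonal entries of $\ec{\tilde{\bfG}}{l}$ and exploit $(s{-}1)$-homogeneity of $\sigma_{s-1}$ to reduce the sum to an empirical average over i.i.d.\ samples drawn from $\cN(0,\Sigma_{\rho'})$, where $\rho'$ is the induced correlation. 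Applying Lemma~\ref{lemma:concentration_of_activation_fn} to $\sigma_{s-1}$ with the appropriate failure probability controls this average within $\mathcal{O}(s^2\,\varphi_{s-1}(d_l,\delta/6,1))$ of the quantity $c_s s^2\,\breve{\sigma}_{s-1}(\ec{\tilde{\bfG}}{l}(\bfx,\bfx'))/c_{s-1}$; the normalization moments $\mu_{2(s-1),\rho'},\mu_{4(s-1),\rho'}$ are uniformly bounded on $\cM_+^\varepsilon$ which is why the argument $\rho=1$ gives a worst-case bound. Finally, the Lipschitzness of the dual map $\breve{\sigma}_{s-1}$ on $\cM_+^\varepsilon$ (Lemma~\ref{lemma:decency_parameters}) swaps $\ec{\tilde{\bfG}}{l}$ for $\ec{\Lambda}{l+1}$ at the cost of an additive term $\tfrac{2 s^2 \beta_s}{2s-1}\varepsilon$; this identifies the target mean with $\ec{\dot{\Sigma}}{l+1}(\bfx,\bfx')$ via the identity $\bar{\sigma}_s'(\rho)=\tfrac{s^2}{2s-1}\bar{\sigma}_{s-1}(\rho)$ together with $c_s/c_{s-1}=1/(2s-1)$.

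For the event $\bar{\cE}^{l+1}$, observe that $\ec{\bfDelta}{l+1}(\bfx,\bfx') = \ec{\bfD}{l+1}(\bfx)\,\ec{\bfD}{l+1}(\bfx')$ is diagonal, so its spectral norm equals $\max_i |\sigma'(u_i)\sigma'(v_i)| = s^2\max_i|u_i|^{s-1}|v_i|^{s-1}$. Since the marginals of $u_i,v_i$ are Gaussians with variance at most $1+\varepsilon\le 2$, a standard Gaussian-maximum bound gives $\max_i|u_i|, \max_i|v_i| \le \sqrt{2\log(6 d_l/\delta)}$ each with probability at least $1-\delta/6$; combining yields the stated bound $3s^2[2\log(6d_l/\delta)]^{s-1}$. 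A union bound over the three input pairs and the two sub-events then gives the $1-\delta$ conclusion. The chief technical obstacle is the Lipschitz transfer from $\ec{\tilde{\bfG}}{l}$ to $\ec{\Lambda}{l+1}$: one must verify that the perturbed Gram matrix still lies inside the domain $\cM_+^{1/s}$ on which Lemma~\ref{lemma:decency_parameters} yields the clean Lipschitz bound (this is precisely why the hypothesis $\varepsilon\le 1/s$ enters), and one must track how the homogeneous rescaling interacts with the concentration bound so that the effective sample-complexity moments remain controlled uniformly in $\rho'$.
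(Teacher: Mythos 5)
Your argument follows the paper's proof essentially step for step: condition on the first $l$ layers so that the layer-$(l{+}1)$ pre-activation pairs are i.i.d.\ Gaussians with covariance $\ec{\tilde{\bfG}}{l}$, rewrite the trace via $\sigma_s'=s\,\sigma_{s-1}$ and $(s{-}1)$-homogeneity, apply Lemma~\ref{lemma:concentration_of_activation_fn} after rescaling, swap $\ec{\tilde{\bfG}}{l}$ for $\ec{\Lambda}{l+1}$ using the Lipschitzness of $\breve{\sigma}_{s-1}$ (Lemma~\ref{lemma:decency_parameters}), and union-bound over the three input pairs — this is exactly the paper's route through Lemma~\ref{lemma:trace_of_diagonal}. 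The only deviation is cosmetic: for the spectral norm you bound $\max_i|u_i|$ and $\max_i|v_i|$ separately instead of using the product tail bound~\eqref{eqn:tail_bound_zi}, which is fine, but the variance inflation must be absorbed as $(1+\varepsilon)^{s-1}\le e\le 3$ using $\varepsilon\le 1/s$ (literally taking ``variance $\le 2$'' and dropping it would cost a factor $2^{s-1}$ that does not fit inside the stated constant $3$).
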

% CHECK $h$ and $h + 1$.

\begin{lemma}
Let
\begin{align*}
     \varepsilon' = 3s^2 \varphi_{s-1}((\min_l d_l), \delta/6(L+1), 1) + \frac{2s^2\beta_s \varepsilon}{2s - 1}, \quad
     \varepsilon'' = 3s^2 \left[2\log\left(\frac{6(L+1) (\max_{l} d_l) }{\delta}\right) \right]^{s-1}.
\end{align*}
Then,
\begin{align*}
    \Pr\left[ \bar{\cA}(\varepsilon) \Rightarrow \bar{\cD}(\varepsilon') \land \bar{\cE}(\varepsilon'') \right] \geq 1 - \delta  \quad \forall \ \varepsilon \in [0,1/s],\  \delta \in (0, 1).
\end{align*} \label{lemma:a_all_implies_d_all}
\end{lemma}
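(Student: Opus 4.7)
The plan is to deduce this statement directly from Lemma~\ref{lemma:a_h_implies_d_h_plus_1} by a union bound over layers, exploiting the definitional fact that $\bar{\cA}(\varepsilon)$, $\bar{\cD}(\varepsilon')$ and $\bar{\cE}(\varepsilon'')$ are by construction the intersections of their per-layer counterparts:
\[
\bar{\cA}(\varepsilon)=\bigcap_{l=0}^{L}\bar{\cA}^{l}(\varepsilon),\qquad \bar{\cD}(\varepsilon')=\bigcap_{l=1}^{L+1}\bar{\cD}^{l}(\varepsilon'),\qquad \bar{\cE}(\varepsilon'')=\bigcap_{l=1}^{L+1}\bar{\cE}^{l}(\varepsilon'').
\]
The index shift is exactly what Lemma~\ref{lemma:a_h_implies_d_h_plus_1} provides: $\bar{\cA}^{l}$ at layer $l$ yields conclusions about $\bar{\cD}^{l+1}$ and $\bar{\cE}^{l+1}$ at layer $l+1$, so letting $l$ range over $\{0,1,\dots,L\}$ covers every index $l+1\in\{1,\dots,L+1\}$ appearing in $\bar{\cD}$ and $\bar{\cE}$.

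Concretely, I would first apply Lemma~\ref{lemma:a_h_implies_d_h_plus_1} at each layer $l\in\{0,1,\dots,L\}$ with the failure probability parameter rescaled to $\delta/(L+1)$. This yields $L+1$ statements of the form
\[
\Pr\!\left[\bar{\cA}^{l}(\varepsilon)\Rightarrow \bar{\cD}^{l+1}(\varepsilon'_{l})\land \bar{\cE}^{l+1}(\varepsilon''_{l})\right]\ \geq\ 1-\frac{\delta}{L+1},
\]
where $\varepsilon'_{l}=3s^2\varphi_{s-1}(d_{l},\delta/6(L+1),1)+\tfrac{2s^2\beta_s\varepsilon}{2s-1}$ and $\varepsilon''_{l}=3s^2[2\log(6(L+1)d_l/\delta)]^{s-1}$. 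Next, I would replace $\varepsilon'_l$ and $\varepsilon''_l$ by the uniform (worst-case) bounds stated in the lemma: since $\varphi_{s-1}(n,\cdot,\cdot)$ is monotonically decreasing in $n$, substituting $\min_l d_l$ into $\varphi_{s-1}$ only weakens the bound, giving $\varepsilon'_l\le\varepsilon'$ for every $l$; and since the logarithmic factor in $\varepsilon''_l$ is increasing in $d_l$, substituting $\max_l d_l$ gives $\varepsilon''_l\le\varepsilon''$ for every $l$. Hence each per-layer implication also holds with $(\varepsilon',\varepsilon'')$ in place of $(\varepsilon'_l,\varepsilon''_l)$.

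The final step is a union bound. Let $\cF_l$ denote the event inside the implication above with $(\varepsilon',\varepsilon'')$, which holds with probability at least $1-\delta/(L+1)$. Intersecting all $\cF_l$ for $l=0,\dots,L$ succeeds with probability at least $1-\delta$. On this intersection, if $\bar{\cA}(\varepsilon)$ holds, then every $\bar{\cA}^{l}(\varepsilon)$ holds, hence every $\bar{\cD}^{l+1}(\varepsilon')\land\bar{\cE}^{l+1}(\varepsilon'')$ holds, which is precisely $\bar{\cD}(\varepsilon')\land\bar{\cE}(\varepsilon'')$. Using the elementary identity $\Pr(A\Rightarrow B)\ge\Pr(B\mid A)$ from Lemma~\ref{lemma:pr_a_implies_b} and the logical fact that a conjunction of implications implies the implication of the conjunction, this yields the claim.

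There is essentially no hard step here: the lemma is a packaging/aggregation statement, and the only subtlety worth double-checking is the monotonicity of $\varphi_{s-1}$ and of $[2\log(\cdot)]^{s-1}$ in $d_l$, which justifies collapsing the per-layer error terms into the uniform constants $\varepsilon'$ and $\varepsilon''$. The $(L+1)$ factor inside the logarithm and inside $\varphi$ is inherited from the rescaling $\delta\to\delta/(L+1)$ used in the union bound, so the constants match those in the statement exactly.
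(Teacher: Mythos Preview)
Your proposal is correct and follows essentially the same approach as the paper: both apply Lemma~\ref{lemma:a_h_implies_d_h_plus_1} at each layer with failure probability $\delta/(L+1)$, then union bound over the $L+1$ layers. The paper carries out the probability manipulation explicitly via De Morgan's laws, while you phrase it as ``a conjunction of implications implies the implication of the conjunction''; you also make explicit the monotonicity of $\varphi_{s-1}$ in $n$ and of the logarithm in $d_l$ that the paper leaves implicit in ``the choice of $\varepsilon'$ and $\varepsilon''$''. The reference to Lemma~\ref{lemma:pr_a_implies_b} in your final step is unnecessary (the containment $\bigcap_l \cF_l \subseteq \{\bar{\cA}(\varepsilon)\Rightarrow \bar{\cD}(\varepsilon')\land\bar{\cE}(\varepsilon'')\}$ already suffices), but this does not affect correctness.
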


\begin{lemma}
For any $1 \leq l \leq L$, any fixed $\{\ec{\bfW}{i}\}_{i = 1}^{l-1}$ and $\bfx, \bfx' \in \cX$, with probability $1 - \delta$ over the randomness of $\ec{\bfW}{l}$, we have,
\begin{align*}
    \left| c_{s} \frac{\tr(\ec{\bfDelta}{l}(\bfx, \bfx'))}{d_l} - \frac{s^2}{2s - 1}\breve{\sigma}_{s-1}\left(\ec{\tilde{\bfG}}{l}(\bfx, \bfx') \right) \right| \leq s^2(G_{11}G_{22})^{\frac{(s-1)}{2}} \varphi_{s-1}(d_l, \delta/2, \rho_G),
\end{align*} 
and
\begin{align*}
    \|\ec{\bfDelta}{l}(\bfx, \bfx')\|_2 & \leq s^2 \left[\left(\sqrt{G_{11}G_{22}} + G_{12}\right)\log\left(\frac{2d_l}{\delta}\right) \right]^{s-1},
\end{align*}
where $\rho_{G} = \frac{G_{12}}{\sqrt{G_{11}G_{22}}}$. \label{lemma:trace_of_diagonal}
\end{lemma}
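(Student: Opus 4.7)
The proof rests on two structural observations. First, since $\ec{\bfD}{l}(\bfx)$ and $\ec{\bfD}{l}(\bfx')$ are both diagonal, so is $\ec{\bfDelta}{l}(\bfx,\bfx')$, and using $\sigma_s'(x) = s\sigma_{s-1}(x)$ its $i$-th diagonal entry equals $s^2\sigma_{s-1}(\ec{\bff}{l}_i(\bfx))\sigma_{s-1}(\ec{\bff}{l}_i(\bfx'))$. Second, once we condition on $\{\ec{\bfW}{j}\}_{j=1}^{l-1}$ the hidden representations $\ec{\bfh}{l-1}(\bfx)$ and $\ec{\bfh}{l-1}(\bfx')$ become deterministic, and since each row of $\ec{\bfW}{l}$ is an independent standard Gaussian vector, the $d_l$ pairs $(\ec{\bff}{l}_i(\bfx),\ec{\bff}{l}_i(\bfx'))$ are i.i.d.\ centered Gaussians with covariance whose entries are exactly $G_{11},G_{12},G_{22}$ and correlation $\rho_G$. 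This reduces the lemma to two statements about i.i.d.\ Gaussian pairs.

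For the first (concentration) inequality, I rescale to unit-variance coordinates $\tilde U_i = \ec{\bff}{l}_i(\bfx)/\sqrt{G_{11}}$ and $\tilde V_i = \ec{\bff}{l}_i(\bfx')/\sqrt{G_{22}}$ so that $(\tilde U_i,\tilde V_i)\sim\cN(0,\Sigma_{\rho_G})$ i.i.d., and use the $(s-1)$-homogeneity of $\sigma_{s-1}$ to pull out a factor $(G_{11}G_{22})^{(s-1)/2}$ per summand. Combining $c_s/c_{s-1}=1/(2s-1)$ with the $k$-homogeneity identity $\breve\sigma_{s-1}(\cdot)=(G_{11}G_{22})^{(s-1)/2}\bar\sigma_{s-1}(\rho_G)$ recorded earlier in the text, the conditional mean of $(c_s/d_l)\tr(\ec{\bfDelta}{l}(\bfx,\bfx'))$ equals exactly $\tfrac{s^2}{2s-1}\breve\sigma_{s-1}(\ec{\tilde{\bfG}}{l}(\bfx,\bfx'))$. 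The deviation is then controlled by applying Lemma~\ref{lemma:concentration_of_activation_fn} to the rescaled i.i.d.\ sum $\tfrac{1}{d_l}\sum_i\sigma_{s-1}(\tilde U_i)\sigma_{s-1}(\tilde V_i)$ with parameters $(d_l,\delta/2,\rho_G)$; multiplying through by the scaling $s^2(G_{11}G_{22})^{(s-1)/2}$ yields the claimed bound.

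For the spectral-norm bound, diagonality of $\ec{\bfDelta}{l}$ reduces $\|\ec{\bfDelta}{l}(\bfx,\bfx')\|_2$ to the maximum of its $d_l$ diagonal entries. The key pointwise estimate is $\sigma_{s-1}(U)\sigma_{s-1}(V) \le \bigl((U+V)_+/2\bigr)^{2(s-1)}$, which is trivial when either argument is nonpositive since the left-hand side vanishes, and when $U,V>0$ follows from $(U+V)^2\ge 4UV$. Rescaling again to $(\tilde U_i,\tilde V_i)\sim\cN(0,\Sigma_{\rho_G})$, the sum $\tilde U_i+\tilde V_i$ is a single mean-zero Gaussian of variance $2(1+\rho_G)$, and a union-bound Gaussian maxima argument gives $\max_i |\tilde U_i+\tilde V_i|^2\le 4(1+\rho_G)\log(2d_l/\delta)$ with probability $1-\delta$. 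Undoing the rescaling and raising to the $(s-1)$-th power produces the factor $(\sqrt{G_{11}G_{22}}+G_{12})^{s-1}\log^{s-1}(2d_l/\delta)$, using $\sqrt{G_{11}G_{22}}(1+\rho_G)=\sqrt{G_{11}G_{22}}+G_{12}$.

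The main obstacle I anticipate is obtaining the tight factor $\sqrt{G_{11}G_{22}}+G_{12}$ in the spectral-norm bound rather than the much looser $G_{11}+G_{22}$ one would get from the naive estimate $|UV|\le (U^2+V^2)/2$. Tightness hinges on exploiting the one-sided support of $\sigma_{s-1}$ (which forces both arguments positive) to work with $(U+V)_+$ rather than $U^2+V^2$, together with the normalization to unit-variance coordinates before taking the maximum. The concentration inequality is comparatively routine once the conditional Gaussian structure is identified and the homogeneity bookkeeping is carried out, reducing it to a direct application of Lemma~\ref{lemma:concentration_of_activation_fn}.
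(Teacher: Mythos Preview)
Your proposal is correct. For the concentration inequality you do exactly what the paper does: identify the conditional i.i.d.\ Gaussian structure of the pairs $(\ec{\bff}{l}_i(\bfx),\ec{\bff}{l}_i(\bfx'))$, rescale to unit-variance marginals with correlation $\rho_G$, use homogeneity to pull out $(G_{11}G_{22})^{(s-1)/2}$, match the mean to $\tfrac{s^2}{2s-1}\breve\sigma_{s-1}(\ec{\tilde{\bfG}}{l})$ via $c_s/c_{s-1}=1/(2s-1)$, and invoke Lemma~\ref{lemma:concentration_of_activation_fn}.

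For the spectral-norm bound your route differs slightly from the paper's. The paper reuses the tail bound $\Pr(\sigma_1(X)\sigma_1(Y)>t)\le\exp(-t/(1+\rho))$ established inside the proof of Lemma~\ref{lemma:concentration_of_activation_fn} (via the rotation $U=(X+Y)/\sqrt2$, $V=(X-Y)/\sqrt2$ and integrating out $V$), lifts it to $\sigma_{s-1}$ by the $s{-}1$ power, and union-bounds over the $d_l$ diagonal entries. You instead use the pointwise AM--GM inequality $\sigma_{s-1}(U)\sigma_{s-1}(V)\le((U+V)_+/2)^{2(s-1)}$ and then a single-Gaussian tail bound on $\tilde U_i+\tilde V_i\sim\cN(0,2(1+\rho_G))$. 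Both arguments land on exactly the same bound after the substitution $\sqrt{G_{11}G_{22}}(1+\rho_G)=\sqrt{G_{11}G_{22}}+G_{12}$; your version is a bit more self-contained since it does not rely on an intermediate step from another lemma's proof, while the paper's version is natural given that the tail estimate is already available. One minor bookkeeping point: to have both conclusions hold simultaneously with probability $1-\delta$ you should allocate $\delta/2$ to each part, which is precisely why $\varphi_{s-1}(d_l,\delta/2,\rho_G)$ and $\log(2d_l/\delta)$ appear in the statement.
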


\begin{lemma}[\cite{boucheron2013}]
Let $\bfxi \sim \cN(0, \bfI_n)$ be an $n$-dimensional unit Gaussian random vector and $\bfA \in \R^{n \times n}$ be a symmetric matrix, then for any $t >0$,
\begin{align*}
    \Pr \left( \left|\bfxi^{\top} \bfA \bfxi - \E\left[\bfxi^{\top} \bfA \bfxi \right] \right| > 2 \|\bfA\|_{F} \sqrt{t} + 2 \|\bfA\|_2 t \right) \leq \exp(-t),
\end{align*}
or equivalently,
\begin{align*}
    \Pr \left( \left|\bfxi^{\top} \bfA \bfxi - \E\left[\bfxi^{\top} \bfA \bfxi \right] \right| >  t \right) \leq \exp\left(-\frac{t^2}{4\|\bfA\|_{F}^2 +  4\|\bfA\|_2}\right).
\end{align*}\label{lemma:gaussian_chaos}
\end{lemma}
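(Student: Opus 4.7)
The plan is to reduce the quadratic form to a one-dimensional sum of independent centered chi-squared random variables via the spectral theorem, and then control its tails via a Bernstein-type moment generating function (MGF) argument. First I would diagonalize $\bfA = U^{\top}\Lambda U$ with $U$ orthogonal and $\Lambda = \mathrm{diag}(\lambda_1, \dots, \lambda_n)$, and exploit the rotational invariance of $\cN(0, \bfI_n)$: since $U\bfxi$ has the same distribution as $\bfxi$, setting $\eta_i := (U\bfxi)_i$ yields i.i.d.\ standard Gaussians with
\begin{align*}
\bfxi^{\top}\bfA\bfxi - \E[\bfxi^{\top}\bfA\bfxi] \;\stackrel{d}{=}\; \sum_{i=1}^n \lambda_i\,(\eta_i^2 - 1) \;=:\; Z,
\end{align*}
using $\E[\bfxi^{\top}\bfA\bfxi] = \tr(\bfA) = \sum_i \lambda_i$. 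The two relevant norms arise naturally from this decomposition: $\sum_i \lambda_i^2 = \|\bfA\|_F^2$ and $\max_i |\lambda_i| = \|\bfA\|_2$.

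Next I would bound the MGF of $Z$ and apply Chernoff. A direct computation gives $\E[e^{\theta(\eta^2 - 1)}] = e^{-\theta}/\sqrt{1 - 2\theta}$ for $\theta < 1/2$, and a short Taylor-series estimate (expanding the cumulant about $0$) shows that $-x - \tfrac{1}{2}\ln(1 - 2x) \leq x^2/(1 - 2|x|)$ whenever $|2x| < 1$. Applying this coordinatewise, for any $\theta$ satisfying $2|\theta|\|\bfA\|_2 < 1$,
\begin{align*}
\ln\E[e^{\theta Z}] \;=\; \sum_{i=1}^n \left[-\theta\lambda_i - \tfrac{1}{2}\ln(1 - 2\theta\lambda_i)\right] \;\leq\; \frac{\theta^2\,\|\bfA\|_F^2}{1 - 2|\theta|\,\|\bfA\|_2}.
\end{align*}
Substituting into $\Pr(Z > u) \leq \inf_{\theta > 0} \exp(-\theta u + \ln \E[e^{\theta Z}])$ with target $u = 2\|\bfA\|_F\sqrt{t} + 2\|\bfA\|_2 t$ and the calibrated choice $\theta = \sqrt{t}/(\|\bfA\|_F + 2\|\bfA\|_2\sqrt{t})$ makes $1 - 2\theta\|\bfA\|_2 = \|\bfA\|_F/(\|\bfA\|_F + 2\|\bfA\|_2\sqrt{t})$ and collapses the exponent to exactly $-t$. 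The lower tail follows identically by taking $\theta < 0$, a union bound gives the two-sided statement, and the equivalent form $\exp(-u^2/(4\|\bfA\|_F^2 + 4\|\bfA\|_2 u))$ follows from the algebraic identity $t(4\|\bfA\|_F^2 + 4\|\bfA\|_2 u) - u^2 = 4\|\bfA\|_2^2 t^2 \geq 0$, which shows $t \geq u^2/(4\|\bfA\|_F^2 + 4\|\bfA\|_2 u)$.

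The main obstacle, beyond routine Gaussian MGF algebra, is the Chernoff optimization: a naive $d/d\theta$ of the exponent leads to an algebraically unpleasant optimum. The standard workaround, used above, is to pre-guess the two-regime shape of the tail (sub-Gaussian $\sqrt{t}$ behavior when $u \lesssim \|\bfA\|_F^2/\|\bfA\|_2$, sub-exponential linear behavior otherwise) and pick $\theta$ so the linear and quadratic pieces of the Chernoff exponent balance exactly; the clean constants $2\|\bfA\|_F\sqrt{t} + 2\|\bfA\|_2 t$ in the lemma reflect precisely this calibration.
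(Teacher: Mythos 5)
Your proposal is correct, and it is essentially the canonical proof of this result: the paper itself does not prove Lemma~\ref{lemma:gaussian_chaos} at all, it simply imports it from Boucheron--Lugosi--Massart, and your argument (diagonalize $\bfA$, use rotational invariance to reduce to $\sum_i \lambda_i(\eta_i^2-1)$, bound the cumulant by $\theta^2\|\bfA\|_F^2/(1-2|\theta|\|\bfA\|_2)$ via the series estimate $-x-\tfrac12\ln(1-2x)\le x^2/(1-2|x|)$, then Chernoff with a calibrated $\theta$) is exactly the sub-gamma route that reference takes; I checked that your choice $\theta=\sqrt{t}/(\|\bfA\|_F+2\|\bfA\|_2\sqrt{t})$ does collapse the exponent to $-t$, and that your identity $t(4\|\bfA\|_F^2+4\|\bfA\|_2 u)-u^2=4\|\bfA\|_2^2t^2$ correctly yields the second form. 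The only caveat is cosmetic: combining the two one-sided $e^{-t}$ bounds by a union bound gives $2e^{-t}$ for the two-sided event, whereas the lemma as stated (and as used in the paper) writes $e^{-t}$; this slack is in the paper's phrasing of the cited result, not in your argument, and is immaterial to how the lemma is applied.
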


\begin{lemma}
If $\bar{\cA}^{L}(\varepsilon_1/2) \land \bar{\cB}^{l+1}(\varepsilon_2) \land \bar{\cC}(\varepsilon_3) \land \bar{\cD}^l(\varepsilon_4) \land \bar{\cE}^l(\varepsilon_5)$ for any pair of points $\bfx, \bfx' \in \cX$, then for any $\bfy \in \{ \bfx, \bfx' \}$
\begin{align*}
    \left\| \bfPi_{\bfG} \left( \ec{\bfW}{l+1} \right)^{\top} \ec{\bfb}{l+1}(\bfy) \right\| & \leq (s^{(L - l)/2} + 1)\sqrt{2 \log\left(\frac{4}{\delta}\right)} + s^{L - l} \varepsilon_3 \sqrt{2}.
\end{align*}
holds with probability at least $1 - \delta$.  \label{lemma:proj_G_times_W_b_bound}
\end{lemma}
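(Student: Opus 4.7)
My plan is to decompose the rank-$2$ projection $\bfPi_{\bfG}$ into two orthogonal rank-$1$ projections tailored to the conditional independence structure of the network. Fix $\bfy \in \{\bfx, \bfx'\}$, let $\bfy'$ be the other point, and write $\bfPi_{\bfG} = \bfPi_1 + \bfPi_2$, where $\bfPi_1$ is the projection onto $\ec{\bfh}{l}(\bfy)$ and $\bfPi_2$ is the projection onto the component of $\ec{\bfh}{l}(\bfy')$ orthogonal to $\ec{\bfh}{l}(\bfy)$. Orthogonality gives $\|\bfPi_{\bfG}\bfv\|^2 = \|\bfPi_1\bfv\|^2 + \|\bfPi_2\bfv\|^2 \leq (\|\bfPi_1\bfv\| + \|\bfPi_2\bfv\|)^2$, so it is enough to control the two pieces separately, matching the two terms in the stated bound.

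For the $\bfPi_1$ piece, $\|\bfPi_1 (\ec{\bfW}{l+1})^{\top}\ec{\bfb}{l+1}(\bfy)\| = |\langle \ec{\bff}{l+1}(\bfy), \ec{\bfb}{l+1}(\bfy)\rangle|/\|\ec{\bfh}{l}(\bfy)\|$. The $s$-homogeneity identity $z\sigma_s'(z) = s\sigma_s(z)$ yields $\ec{\bfD}{k}(\bfy)\ec{\bff}{k}(\bfy) = s\sqrt{d_k/c_s}\,\ec{\bfh}{k}(\bfy)$ at every hidden layer. Iterating the back-propagation recursion for $\ec{\bfb}{k}$ together with this identity telescopes from $k=l+1$ up through $k=L$ (with the linear terminal layer giving $\ec{\bfb}{L+1}=1$ and $\ec{\bff}{L+1}(\bfy)=f(\bfy;\bfW)$) to the identity
\begin{align*}
    \langle \ec{\bff}{l+1}(\bfy), \ec{\bfb}{l+1}(\bfy)\rangle = s^{L-l}\, f(\bfy;\bfW).
\end{align*}
Event $\bar{\cC}(\varepsilon_3)$ gives $|f(\bfy;\bfW)| \leq \varepsilon_3$, while $\bar{\cA}^l(\varepsilon_1/2)$ with $\varepsilon_1 \leq 1$ gives $\|\ec{\bfh}{l}(\bfy)\|^2 \geq 1-\varepsilon_1/2 \geq 1/2$, so $\|\bfPi_1 \bfv\| \leq \sqrt{2}\, s^{L-l}\varepsilon_3$.

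For the $\bfPi_2$ piece, the key observation is that $\ec{\bfb}{l+1}(\bfy)$ depends on $\ec{\bfW}{l+1}$ only through $\ec{\bff}{l+1}(\bfy)=\ec{\bfW}{l+1}\ec{\bfh}{l}(\bfy)$ (deeper layers use independent weights, and $\ec{\bfh}{l+1}(\bfy)$ is determined by $\ec{\bff}{l+1}(\bfy)$). Let $\bfu$ be the unit vector along the component of $\ec{\bfh}{l}(\bfy')$ orthogonal to $\ec{\bfh}{l}(\bfy)$, so $\|\bfPi_2 \bfv\| = |\langle \ec{\bfW}{l+1}\bfu, \ec{\bfb}{l+1}(\bfy)\rangle|$. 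Since $\bfu \perp \ec{\bfh}{l}(\bfy)$, Lemma~\ref{lemma:nullspace_equivalence} shows $\ec{\bfW}{l+1}\bfu \sim \cN(0,\bfI_{d_{l+1}})$ is independent of $\ec{\bfb}{l+1}(\bfy)$; conditioning on $\ec{\bfb}{l+1}(\bfy)$, the inner product is a centered Gaussian with variance $\|\ec{\bfb}{l+1}(\bfy)\|^2$. Under $\bar{\cB}^{l+1}(\varepsilon_2)$ combined with the explicit evaluation $\prod_{j=l+1}^{L+1}\ec{\dot{\Sigma}}{j}(\bfy,\bfy) = (s^2/(2s-1))^{L-l} \leq s^{L-l}$ (using $c_s \E_{u\sim\cN(0,1)}[(\sigma_s'(u))^2] = s^2/(2s-1)$ and $\ec{\dot{\Sigma}}{L+1}\equiv 1$), this variance is at most $(s^{(L-l)/2}+1)^2$ for $\varepsilon_2 \leq 1$. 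A standard Gaussian tail bound together with a union bound over $\bfy\in\{\bfx,\bfx'\}$ yields $\|\bfPi_2\bfv\| \leq (s^{(L-l)/2}+1)\sqrt{2\log(4/\delta)}$ with probability at least $1-\delta$, and summing gives the claim.

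The main obstacle is setting up the conditioning cleanly so that Lemma~\ref{lemma:nullspace_equivalence} applies: one has to recognize that the entire dependence of $\ec{\bfb}{l+1}(\bfy)$ on $\ec{\bfW}{l+1}$ is one-dimensional (through $\ec{\bfW}{l+1}\ec{\bfh}{l}(\bfy)$), after which the Gaussian argument is essentially scalar. A secondary technical care is running the $s$-homogeneity telescoping across the boundary between the last hidden layer and the linear output layer, where $\sigma_s$ is absent and $\ec{\bfb}{L+1}\equiv 1$.
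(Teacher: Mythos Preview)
Your proposal is correct and follows essentially the same approach as the paper: the same rank-$1$ splitting $\bfPi_{\bfG}=\bfPi_{\bfh}+\bfPi_{\bfG/\bfh}$, the same $s$-homogeneity identity to reduce the $\bfPi_{\bfh}$ piece to $s^{L-l}|f(\bfy;\bfW)|/\|\ec{\bfh}{l}(\bfy)\|$, and the same application of Lemma~\ref{lemma:nullspace_equivalence} plus a Gaussian tail bound (with $\|\ec{\bfb}{l+1}(\bfy)\|$ controlled via $\bar{\cB}^{l+1}$) for the orthogonal piece. The only minor remark is that your bound on $\|\ec{\bfh}{l}(\bfy)\|$ invokes $\bar{\cA}^{l}(\varepsilon_1/2)$ rather than the $\bar{\cA}^{L}(\varepsilon_1/2)$ appearing in the hypothesis---the paper's own proof does the same thing, and in context the lemma is only ever applied when the full event $\bar{\cA}(\varepsilon_1/2)$ holds, so this is harmless.
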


%%%%%%%%%%%%%%%%%%%%%%%%%%
% END OF LEMMAS
%%%%%%%%%%%%%%%%%%%%%%%%%%

%%%%%%%%%%%%%% BEGIN PROOF %%%%%%%%%%%%%%%%%%%

\subsection{Proof of Lemma~\ref{lemma:concentration_of_activation_fn}}

We fix a particular $s \in \N$. Let $(X_1, Y_1), (X_2, Y_2), \dots, (X_n, Y_n)$ be $n$ independent samples from $\cN(0, \Sigma_{\rho})$ and let $Z_i = \sigma_s(X_i)\sigma_s(Y_i)$. Define $S_n := \frac{1}{n}\sum_{i = 1}^n Z_i$. Note that $\E[S_n] = \E[Z_1] = \mu_s$. To prove the bound, we first bound $\Pr(Z_i > t)$ for any $t \geq 0$ and then using techniques borrowed from~\citet{Gantert2014concentration}, we obtain an upper bound on $\Pr(S_n - \mu_s > t)$. The bound on $\Pr(S_n - \mu_s < -t)$ follows from Cramer's Theorem~\citep{Cramer1938}. \\

We begin with bounding $\Pr(Z_i > t)$ for any $t \geq 0$. Since $Z_i = (\sigma_1(X_i)\sigma_1(Y_i))^s$, we just focus on bounding $\Pr(\sigma_1(X)\sigma_1(Y) > t)$ for $(X,Y) \sim \cN(0, \Sigma_{\rho})$. Fix a $t \geq 0$. We have,
\begin{align*}
    \Pr(\sigma_1(X)\sigma_1(Y) > t) & = \E \left[ \1 \left\{\sigma_1(X)\sigma_1(Y) > t \right\}\right] \\
    & = \E \left[ \1 \left\{ XY > t, X \geq 0, Y \geq 0 \right\}\right].
\end{align*}

Using a change of variables define $U = (X+Y)/\sqrt{2}$ and $V = (X-Y)/\sqrt{2}$. Note that $U$ and $V$ are zero-mean Gaussian random variables with $\cov(U, V) = 0$ implying that they are independent. Also $\var(U) = 1 + \rho$ and $\var(V) = 1 - \rho$. Thus,
\begin{align*}
    \Pr(\sigma_1(X)\sigma_1(Y) > t) & = \E_{X, Y} \left[ \1 \left\{ XY > t, X \geq 0, Y \geq 0 \right\}\right] \\
    & = \E_{U, V} \left[ \1 \left\{ U^2 - V^2 > 2t, U \geq 0, V \in [-U, U] \right\}\right] \\
    & \leq \E_{U, V} \left[ \1 \left\{ U^2 > V^2 + 2t, U \geq 0 \right\}\right] \\
    & \leq \E_{V} \left[ \Pr\left( U > \sqrt{V^2 + 2t} \right) \right] \\
    & \leq \E_{V} \left[ \exp\left( -\frac{V^2 + 2t}{2(1 + \rho)} \right)\right] \\
    & \leq \frac{1}{\sqrt{2 \pi (1 - \rho)}}\int_{-\infty}^{\infty} \exp\left( -\frac{v^2 + 2t}{2(1 + \rho)}\right) \exp\left( -\frac{v^2}{2(1 - \rho)}\right)\ \mathrm{d}v  \\
    & \leq \frac{e^{-t/(1 + \rho)}}{\sqrt{2 \pi (1 - \rho)}}\int_{-\infty}^{\infty} \exp\left( -\frac{v^2}{(1 - \rho^2)}\right) \  \mathrm{d}v  \\
    & \leq \exp\left( - \frac{t}{1 + \rho} \right) \sqrt{\frac{1 + \rho}{2}}  \\
    & \leq \exp\left( - \frac{t}{1 + \rho} \right).
\end{align*}
In the sixth line we used the concentration bound for Gaussian random variable and in the eighth line we used the Gaussian integral. Consequently, 
\begin{align}
    \Pr(Z_i > t) = \Pr(\sigma_1(X)\sigma_1(Y) > t^{1/s}) \leq \exp\left( - \frac{t^{1/s}}{1 + \rho} \right). \label{eqn:tail_bound_zi}
\end{align}
Using this relation, we now move on to bound $\Pr(S_n \geq x)$ for $x \geq \mu_{s, \rho}$. For the rest of the proof we drop the argument $\rho$ for simplicity. Firstly, note that
\begin{align*}
    \Pr(S_n \geq x) \leq \underbrace{\Pr\left( \max_{j \in [n]} Z_j \geq n(x - \mu_s) \right)}_{A_1^n} + \underbrace{\Pr\left( S_n \geq x, \ \max_{j \in [n]} Z_j < n(x - \mu_s) \right)}_{A_2^n}.
\end{align*}
We begin with the first term.
\begin{align*}
    \Pr\left( \max_{j \in [n]} Z_j \geq n(x - \mu_s) \right) \leq n \Pr(Z_i \geq n(x - \mu_s)) \leq n \exp\left(- \frac{n^{1/s}(x - \mu_s)^{1/s}}{1 + \rho} \right).
\end{align*}
Let $\beta_{\zeta}(n) = \frac{\zeta n^{1/s}}{1 + \rho} > 0$ for some $\zeta > 0$ which will be specified later. We have,
\begin{align*}
    A_2^n & = \Pr\left( S_n \geq x, \ \max_{j \in [n]} Z_j < n(x - \mu_s) \right) \\
     & = \Pr\left( \exp \left( \beta_{\zeta}(n) S_n \right) \geq \exp \left( \beta_{\zeta}(n) x \right), \ \max_{j \in [n]} Z_j < n(x - \mu_s) \right) \\
    & \leq \exp \left( - \beta_{\zeta}(n) x \right) \E\left[ \exp \left( \beta_{\zeta}(n) S_n \right) \1 \left\{ \max_{j \in [n]} Z_j < n(x - \mu_s) \right\} \right] \\
    & \leq \exp \left( - \beta_{\zeta}(n) x \right) \prod_{j = 1}^n \E\left[ \exp \left( \frac{\beta_{\zeta}(n) Z_j}{n} \right) \1 \left\{ Z_j < n(x - \mu_s) \right\} \right] \\
    \implies \log \left( A_2^n \right) & \leq  - \beta_{\zeta}(n) x + \sum_{j = 1}^n \ec{\Gamma_{\zeta}}{j}(n),
\end{align*}
where $\ec{\Gamma_{\zeta}}{j}(n) = \log \left( \E\left[ \exp \left( \dfrac{\beta_{\zeta}(n) \ec{Z_j}{n}}{n} \right) \right] \right)$ and $\ec{Z_j}{n} := Z_j \1 \left\{ Z_j < n(x - \mu_s) \right\}$. Using the relations $\log(1 + x) \leq x$ and $e^x \leq 1 + x + \frac{x^2}{2}e^x$, we have, 
\begin{align*}
    \sum_{j = 1}^n \ec{\Gamma_{\zeta}}{j}(n) & = \sum_{j = 1}^n \log \left( \E\left[ \exp \left( \frac{\beta_{\zeta}(n) \ec{Z_j}{n}}{n} \right) \right] \right) \\
    & \leq \sum_{j = 1}^n \log \left( 1 + \E\left[ \frac{\beta_{\zeta}(n) \ec{Z_j}{n}}{n} \right] + \frac{1}{2} \E\left[ \left(\frac{\beta_{\zeta}(n) \ec{Z_j}{n}}{n} \right)^2 \exp\left(\frac{\beta_{\zeta}(n) \ec{Z_j}{n}}{n} \right) \right] \right) \\
    & \leq \sum_{j = 1}^n  \left\{ \E\left[ \frac{\beta_{\zeta}(n) \ec{Z_j}{n}}{n} \right] + \frac{1}{2} \E\left[ \left(\frac{\beta_{\zeta}(n) \ec{Z_j}{n}}{n} \right)^2 \exp\left(\frac{\beta_{\zeta}(n) \ec{Z_j}{n}}{n} \right) \right] \right\}.
\end{align*}
Note that,
\begin{align*}
    \sum_{j = 1}^n \E\left[ \frac{\beta_{\zeta}(n) \ec{Z_j}{n}}{n} \right] & = \sum_{j = 1}^n \frac{\beta_{\zeta}(n)}{n} \E \left[ \ec{Z_j}{n} \right] \\
    & \leq \beta_{\zeta}(n) \mu_s.
\end{align*}
% Similarly, 
% \begin{align*}
%     \sum_{j = 1}^n \frac{1}{2} \E\left[ \left(\frac{\beta_{\zeta}(n) \ec{Z_j}{n}}{n} \right)^2 \right] & = \frac{1}{2}\sum_{j = 1}^n \left(\frac{\beta_{\zeta}(n)}{n} \right)^2 \E \left[ \left(\ec{Z_j}{n}\right)^2 \right] \\
%     & \leq \frac{(\beta_{\zeta}(n))^2}{2n} \mu_{2s}.
% \end{align*}
For the second term we have, 
\begin{align*}
    \E\left[ \left(\frac{\beta_{\zeta}(n) \ec{Z_1}{n}}{n} \right)^2 \exp\left(\frac{\beta_{\zeta}(n) \ec{Z_1}{n}}{n} \right) \right] & \leq \left(\frac{\beta_{\zeta}(n)}{n} \right)^2 \E \left[ \left(\ec{Z_1}{n}\right)^{\frac{2(1 + \eta)}{\eta}} \right]^{\frac{\eta}{1 + \eta}} \E\left[  \exp\left(\frac{(1 + \eta)\beta_{\zeta}(n) \ec{Z_1}{n}}{n} \right) \right]^{\frac{1}{1 + \eta}}
\end{align*}
for some $\eta > 0$ by using H\"older's inequality. Since $\E \left[ \left(\ec{Z_1}{n}\right)^{2(1 + \eta)/\eta} \right]^{\eta/(1 + \eta)} = \left[\mu_{\frac{2s(1 + \eta)}{\eta}}\right]^{\frac{\eta}{1 + \eta}}$, we focus on the other term. We have,
\begin{align*}
    & \E\left[  \exp\left(\frac{(1 + \eta)\beta_{\zeta}(n) \ec{Z_1}{n}}{n} \right) \right] \\
    & = 1 + \int_{0}^{n(x - \mu_s)}\frac{(1+ \eta) \beta_{\zeta}(n)}{n}\exp\left(\frac{(1 + \eta)\beta_{\zeta}(n) z}{n} \right) \Pr(Z_1 > z) \ \mathrm{d}z \\
    & = 1 + n(x - \mu_s)\int_{0}^{1}\frac{(1+ \eta) \beta_{\zeta}(n)}{n}\exp\left(\frac{(1 + \eta)\beta_{\zeta}(n) n(x - \mu_s) y}{n} \right) \Pr(Z_1 > n(x - \mu_s)y) \ \mathrm{d}y \\
    & \leq 1 + (x - \mu_s)\int_{0}^{1}(1+ \eta) \beta_{\zeta}(n)\exp\left((1 + \eta)\beta_{\zeta}(n) (x - \mu_s) y - \frac{(n(x - \mu_s)y)^{1/s}}{1 + \rho}\right)  \ \mathrm{d}y \\
    & \leq 1 + (x - \mu_s)(1+ \eta) \beta_{\zeta}(n)\int_{0}^{1}\exp\left(\beta_{\zeta}(n) \left[(1 + \eta) (x - \mu_s) y - \frac{(x - \mu_s)^{1/s}y^{1/s}}{\zeta}\right]\right)   \ \mathrm{d}y.
\end{align*}
For $\zeta \leq \dfrac{(x - \mu_s)^{1/s - 1}}{2(1 + \eta)}$, we have, 
\begin{align*}
    & \E\left[  \exp\left(\frac{(1 + \eta)\beta_{\zeta}(n) \ec{Z_1}{n}}{n} \right) \right] \\
    & \leq 1 + (x - \mu_s)(1+ \eta) \beta_{\zeta}(n)\int_{0}^{1}\exp\left(\beta_{\zeta}(n) \left[(1 + \eta) (x - \mu_s) y - \frac{(x - \mu_s)^{1/s}y^{1/s}}{\zeta}\right]\right)   \ \mathrm{d}y \\
    & \leq 1 + (x - \mu_s)(1+ \eta) \beta_{\zeta}(n)\int_{0}^{1}\exp\left(-\frac{1}{2}\beta_{\zeta}(n) (1 + \eta) (x - \mu_s) y\right)   \ \mathrm{d}y \\
    & \leq 3.
\end{align*}
On combining all the equations, we obtain that for $\zeta \leq \frac{(x - \mu_s)^{1/s - 1}}{2(1 + \eta)}$ and $\eta > 0$
\begin{align*}
    \log \left( A_2^n \right) & \leq  - \beta_{\zeta}(n) (x - \mu_s) + \frac{(\beta_{\zeta}(n))^2}{2n}  \left( \left[\mu_{\frac{2s(1 + \eta)}{\eta}}\right]^{\frac{\eta}{1 + \eta}} 3^{\frac{1}{1 + \eta}}\right)
\end{align*}
We set $\eta = 1$. Thus for $\zeta \leq 0.25{(x - \mu_s)^{1/s - 1}}$, we have,
\begin{align*}
    \log \left( A_2^n \right) & \leq  - \frac{\zeta n^{1/s}(x - \mu_s)}{1 + \rho} + \frac{\zeta^2 n^{2/s - 1}}{2(1 + \rho)^2}  \sqrt{3\mu_{4s}}.
\end{align*}
Note that the RHS is minimized at $\zeta_{*} = \dfrac{n^{1 - 1/s}(x - \mu_s)(1 + \rho)}{\sqrt{3\mu_{4s}}}$. However, $\zeta_{*} \leq 0.25{(x - \mu_s)^{1/s - 1}}$ only for $\displaystyle (x - \mu_s) \leq \underbrace{\left( \frac{\sqrt{3 \mu_{4s}}}{4(1 + \rho)} \right)^{2 - 1/s} n^{-\left(\frac{s - 1}{2s - 1} \right)}}_{:= t^*(n)}$. Thus, for $(x - \mu_s) \leq t^*(n)$, we can plug in $\zeta = \zeta_{*}$ in the above expression to obtain
\begin{align*}
    \log \left( A_2^n \right) & \leq  - \frac{n(x - \mu_s)^2}{2 \sqrt{3 \mu_{4s}}}.
\end{align*}
For $(x - \mu_s) > t^*(n)$, we plug in $\zeta = 0.25{(x - \mu_s)^{1/s - 1}}$ to obtain
\begin{align*}
    \log \left( A_2^n \right) & \leq  - \frac{n^{1/s}(x - \mu_s)^{1/s}}{8(1 + \rho)}.
\end{align*}
On combining the two, we obtain that for any $t \geq 0$
\begin{align*}
    \Pr(S_n -\mu_s \geq t) = \begin{cases} (n + 1) \exp\left(- \dfrac{nt^2}{2 \sqrt{3 \mu_{4s}}} \right) & \text{ if } t \leq t^*(n), \\ (n + 1) \exp\left(- \dfrac{(nt)^{1/s}}{8(1 + \rho)} \right) & \text{ if } t > t^*(n).
    \end{cases}
\end{align*}
We now consider the other side. Consider any $x \leq \mu_s$ and $\lambda > 0$. We have, 
\begin{align*}
    \Pr(S_n \leq x) & = \Pr\left(\exp(-\lambda S_n) \geq e^{-\lambda x} \right) \\
    & \leq \E \left[ \exp(-\lambda S_n)\right] e^{\lambda x} \\
    & \leq \E \left[ 1 - \lambda S_n + \frac{\lambda^2 S_n^2}{2}\right] e^{\lambda x} \\
    & \leq  \left( 1 - \lambda \E[S_n] + \frac{\lambda^2 \E[S_n^2]}{2}\right) e^{\lambda x} \\
    & \leq  \left( 1 - \lambda \mu_s + \frac{\lambda^2 \mu_{2s}}{2n}\right) e^{\lambda x} \\
    & \leq  \exp\left(  - \lambda \mu_s + \frac{\lambda^2 \mu_{2s}}{2n} + \lambda x \right) \\
    & \leq  \exp\left(  \lambda (x - \mu_s) + \frac{\lambda^2 \mu_{2s}}{2n} + \lambda x \right).
\end{align*}
Minimizing this over $\lambda$ yields $\lambda_{*} = -\dfrac{n(x - \mu_s)}{\mu_{2s}} > 0$. On plugging this value, we obtain,
\begin{align*}
    \Pr(S_n \leq x) &  \leq  \exp\left(  - \frac{n (x - \mu_s)^2}{2\mu_{2s}} \right).
\end{align*}
Consequently for any $t\geq 0$, we have, 
\begin{align*}
    \Pr(S_n - \mu_s \leq -t) &  \leq  \exp\left(  - \frac{n t^2}{2\mu_{2s}} \right).
\end{align*}
% Define:
% $$\displaystyle n_{*}(s, \delta, \rho) := \min\left\{m \in \N: m \geq \frac{(8(1+ \rho))^{2s}}{2 \sqrt{3 \mu_{4s}}} \left[ \log \left( \frac{m + 1}{\delta} \right)\right]^{2s - 1}\right\} + 1.$$
On combining the relations, we can conclude that if $n \geq n_{*}(s, \delta, \rho)$, then
\begin{align*}
    \Pr \left( |S_n - \mu_s| \geq \sqrt{\frac{2(\sqrt{3 \mu_{4s}} + \mu_{2s})}{n} \log \left(\frac{n + 1}{\delta} \right)} \right) \leq \delta.
\end{align*}

%%%%%%%%%%%%%% END PROOF %%%%%%%%%%%%%%%%%%%%%

%%%%%%%%%%%%%% BEGIN PROOF %%%%%%%%%%%%%%%%%%%

\subsection{Proof of Lemma~\ref{lemma:decency_parameters}}

Fix a $s \in \N$. We drop the subscript $s$ for the remainder of the proof for ease of notation. We need to show that the dual activation function $\breve{\sigma}$ is $\beta$-Lipschitz in $\cM_{+}^{\gamma}$ w.r.t. the $\infty$-norm. Let $\Sigma = \begin{pmatrix} \Sigma_{11} & \Sigma_{12} \\ \Sigma_{12} & \Sigma_{22} \\ \end{pmatrix} \in \cM_{+}^{\gamma}$. In order to prove $\beta$-Lipschitzness w.r.t. the $\infty$-norm, it is sufficient to show that,
\begin{align*}
    \|\nabla \breve{\sigma}_s\|_1 = \left| \frac{\partial \breve{\sigma}_s }{\partial \Sigma_{12}}\right| + \left| \frac{\partial \breve{\sigma}_s }{\partial \Sigma_{11}}\right| + \left| \frac{\partial \breve{\sigma}_s }{\partial \Sigma_{22}}\right| \leq \beta.
\end{align*}
Recall that since $\sigma_{s}$ is $s$-homogeneous, 
\begin{align*}
    \breve{\sigma}_s(\Sigma) = (\Sigma_{11} \Sigma_{22})^{s/2} \bar{\sigma}_s \left( \frac{\Sigma_{12}}{\sqrt{\Sigma_{11} \Sigma_{22}}} \right).
\end{align*}
Using this relation, we have,
\begin{align*}
    \frac{\partial \breve{\sigma}_s }{\partial \Sigma_{12}} & = (\Sigma_{11} \Sigma_{22})^{\frac{s-1}{2}} \bar{\sigma}_s' \left( \frac{\Sigma_{12}}{\sqrt{\Sigma_{11} \Sigma_{22}}} \right) \\
    \frac{\partial \breve{\sigma}_s }{\partial \Sigma_{11}} & = \frac{s}{2} (\Sigma_{11} \Sigma_{22})^{\frac{s}{2} - 1} \Sigma_{22}\bar{\sigma}_s \left( \frac{\Sigma_{12}}{\sqrt{\Sigma_{11} \Sigma_{22}}} \right) - \frac{1}{2} (\Sigma_{11} \Sigma_{22})^{\frac{s- 1}{2}} \bar{\sigma}_s' \left( \frac{\Sigma_{12}}{\sqrt{\Sigma_{11} \Sigma_{22}}} \right)  \frac{\Sigma_{12}}{\Sigma_{11}}  \\
    & = \Sigma_{22} \left\{ \frac{s}{2} (\Sigma_{11} \Sigma_{22})^{\frac{s}{2} - 1} \bar{\sigma}_s \left( \frac{\Sigma_{12}}{\sqrt{\Sigma_{11} \Sigma_{22}}} \right) - \frac{1}{2} (\Sigma_{11} \Sigma_{22})^{\frac{s- 1}{2}} \bar{\sigma}_s' \left( \frac{\Sigma_{12}}{\sqrt{\Sigma_{11} \Sigma_{22}}} \right)  \frac{\Sigma_{12}}{\Sigma_{11}\Sigma_{22}} \right\}  \\
    \frac{\partial \breve{\sigma}_s }{\partial \Sigma_{22}} & = \frac{s}{2} (\Sigma_{11} \Sigma_{22})^{\frac{s}{2} - 1} \Sigma_{11}\bar{\sigma}_s \left( \frac{\Sigma_{12}}{\sqrt{\Sigma_{11} \Sigma_{22}}} \right) - \frac{1}{2} (\Sigma_{11} \Sigma_{22})^{\frac{s- 1}{2}} \bar{\sigma}_s' \left( \frac{\Sigma_{12}}{\sqrt{\Sigma_{11} \Sigma_{22}}} \right)  \frac{\Sigma_{12}}{\Sigma_{22}}  \\
    & = \Sigma_{11} \left\{ \frac{s}{2} (\Sigma_{11} \Sigma_{22})^{\frac{s}{2} - 1} \bar{\sigma}_s \left( \frac{\Sigma_{12}}{\sqrt{\Sigma_{11} \Sigma_{22}}} \right) - \frac{1}{2} (\Sigma_{11} \Sigma_{22})^{\frac{s- 1}{2}} \bar{\sigma}_s' \left( \frac{\Sigma_{12}}{\sqrt{\Sigma_{11} \Sigma_{22}}} \right)  \frac{\Sigma_{12}}{\Sigma_{11}\Sigma_{22}} \right\}.
\end{align*}
Consequently, 
\begin{align*}
    \|\nabla \breve{\sigma}_s\|_1 & = (\Sigma_{11} \Sigma_{22})^{\frac{s-1}{2}} \left|  \bar{\sigma}_s' \left( \frac{\Sigma_{12}}{\sqrt{\Sigma_{11} \Sigma_{22}}} \right) \right| + \\
    & \quad \quad \frac{(\Sigma_{11} + \Sigma_{22}) (\Sigma_{11} \Sigma_{22})^{\frac{s}{2} - 1}}{2}  \left| s \bar{\sigma}_s \left( \frac{\Sigma_{12}}{\sqrt{\Sigma_{11} \Sigma_{22}}} \right) -  \bar{\sigma}_s' \left( \frac{\Sigma_{12}}{\sqrt{\Sigma_{11} \Sigma_{22}}} \right)  \frac{\Sigma_{12}}{\sqrt{\Sigma_{11}\Sigma_{22}}} \right|.
\end{align*}
For simplicity, let $\tilde{\rho} = \frac{\Sigma_{12}}{\sqrt{\Sigma_{11} \Sigma_{22}}}$. For $s = 1$,~\citet{arora2019exact} showed that the above expression is $1 + o(\gamma)$.  We consider the case for $s > 1$. Using~\eqref{eqn:dual_derivative}, we have,
\begin{align*}
    \|\nabla \breve{\sigma}_s\|_1 & = \frac{s^2}{2s - 1}(\Sigma_{11} \Sigma_{22})^{\frac{s-1}{2}} \left|  \bar{\sigma}_{s-1} \left( \tilde{\rho} \right) \right| + \frac{(\Sigma_{11} + \Sigma_{22}) (\Sigma_{11} \Sigma_{22})^{\frac{s}{2} - 1}}{2}  \left| s \bar{\sigma}_s \left( \tilde{\rho} \right) -  \frac{s^2}{2s - 1} \bar{\sigma}_{s-1} \left( \tilde{\rho} \right)  \tilde{\rho} \right| \\
    & \leq \frac{2s}{3}(\Sigma_{11} \Sigma_{22})^{\frac{s-1}{2}} + \frac{3s}{4} (\Sigma_{11} + \Sigma_{22}) (\Sigma_{11} \Sigma_{22})^{\frac{s}{2} - 1} \\
    & \leq \frac{13s}{6}(1 + \gamma)^{s-1} \\
\end{align*}
If $\gamma \leq 1/s$, then $\|\nabla \breve{\sigma}_s\|_1 \leq 6s$, as required.

%%%%%%%%%%%%%% END PROOF %%%%%%%%%%%%%%%%%%%%%

%%%%%%%%%%%%%% BEGIN PROOF %%%%%%%%%%%%%%%%%%%

\subsection{Proof of Lemma~\ref{lemma:trace_D_times_b}}
\label{proof:trace_D_times_b}

We have,
\begin{align*}
    & \left| c_{s} \frac{\tr(\ec{\bfDelta}{l}(\bfy, \bfy'))}{d_l}\ip{\ec{\bfb}{l+1}(\bfy)}{\ec{\bfb}{l+1}(\bfy')}  - \prod_{j = l}^L \ec{\dot{\Sigma}}{j}(\bfy, \bfy') \right| \\
    & \leq \left| c_{s} \frac{\tr(\ec{\bfDelta}{l}(\bfy, \bfy'))}{d_l}  - \ec{\dot{\Sigma}}{l}(\bfy, \bfy') \right| \left| \ip{\ec{\bfb}{l+1}(\bfy)}{\ec{\bfb}{l+1}(\bfy')} \right| +  \\
    & ~~~~~~~~~~~~~~~~~~~~~~~~~~~~~ \left|\ec{\dot{\Sigma}}{l}(\bfy, \bfy') \right| \left| \ip{\ec{\bfb}{l+1}(\bfy)}{\ec{\bfb}{l+1}(\bfy')}  - \prod_{j = l + 1}^L \ec{\dot{\Sigma}}{j}(\bfy, \bfy') \right| \\
    & \leq \left| \ip{\ec{\bfb}{l+1}(\bfy)}{\ec{\bfb}{l+1}(\bfy')} \right| \varepsilon_4 +    \frac{s^2}{2s - 1}\breve{\sigma}_{s-1}\left(\ec{\Lambda}{l}(\bfx, \bfx') \right) \varepsilon_2 \\
\end{align*} 
Since $\bar{\cB}^{l+1}(\varepsilon_2)$, $\ip{\ec{\bfb}{l+1}(\bfy)}{\ec{\bfb}{l+1}(\bfy')} \leq \prod_{j = l + 1}^L \ec{\dot{\Sigma}}{j}(\bfy, \bfy') + \varepsilon_2 \leq \prod_{j = l+1}^L \ec{\dot{\Sigma}}{j}(\bfy, \bfy') + 1$. Moreover, since $\ec{\dot{\Sigma}}{l}(\bfy, \bfy') = \frac{s^2}{2s - 1}\breve{\sigma}_{s-1}\left(\ec{\Lambda}{l}(\bfy, \bfy') \right) \leq \frac{s^2}{2s -1}$, we have,
\begin{align*}
    & \left| c_{s} \frac{\tr(\ec{\bfDelta}{l}(\bfy, \bfy'))}{d_l}\ip{\ec{\bfb}{l+1}(\bfy)}{\ec{\bfb}{l+1}(\bfy')}  - \prod_{j = l+1}^L \ec{\dot{\Sigma}}{j}(\bfy, \bfy') \right| & \leq \left(\left( \frac{s^2}{2s -1}\right)^{L - l} + 1\right) \varepsilon_4 +    s \varepsilon_2 \\
    & \leq s^{L - l}\varepsilon_4 + s\varepsilon_2.
\end{align*} 

%%%%%%%%%%%%%% END PROOF %%%%%%%%%%%%%%%%%%%%%

%%%%%%%%%%%%%% BEGIN PROOF %%%%%%%%%%%%%%%%%%%

\subsection{Proof of Lemma~\ref{lemma:b_h_proj_G_close_to_trace_D_times_b}}
\label{proof:b_h_proj_G_close_to_trace_D_times_b}

Fix any $(\bfy, \bfy') \in \{(\bfx, \bfx), (\bfx, \bfx'), (\bfx', \bfx')\}$. Recall that $\ec{\bfG}{l}(\bfy, \bfy') = [\ec{\bfh}{l}(\bfy) \ \  \ec{\bfh}{l}(\bfy')] \in \R^{d_l \times 2}$. Similarly define, $\ec{\bfF}{l+1}(\bfy, \bfy') = [\ec{\bff}{l+1}(\bfy) \ \  \ec{\bff}{l+1}(\bfy')] \in \R^{d_l \times 2}$. Using Lemma~\ref{lemma:nullspace_equivalence} for each row of $\ec{\bfW}{l+1}$, we can conclude that
\begin{align*}
    \ec{\bfW}{l+1}\bfPi_{\bfG}^{\perp} \overset{d}{=}_{\ec{\bfF}{l+1} = \ec{\bfW}{l+1}\ec{\bfG}{l}} \widetilde{\bfW}\bfPi_{\bfG}^{\perp},
\end{align*}
where $\widetilde{\bfW}$ is a i.i.d. copy of $\ec{\bfW}{l+1}$. \\

Note that $\left[\left(\ec{\bfb}{l+1}(\bfy)\right)^{\top}\widetilde{\bfW} \left(\ec{\bfb}{l+1}(\bfy')\right)^{\top}\widetilde{\bfW}\right]^{\top} \in \R^{2d_l} \sim \cN(0, \bfSigma)$ where $\bfSigma = \begin{bmatrix} b \Ii_{d_l} & b' \Ii_{d_l} \\ b' \Ii_{d_l} & b'' \Ii_{d_l} \end{bmatrix}$. In the definition of $\bfSigma$, $b = \ip{\ec{\bfb}{l+1}(\bfy)}{\ec{\bfb}{l+1}(\bfy)}, b' = \ip{\ec{\bfb}{l+1}(\bfy)}{\ec{\bfb}{l+1}(\bfy')}$ and $b'' = \ip{\ec{\bfb}{l+1}(\bfy')}{\ec{\bfb}{l+1}(\bfy')}$. If $\bfM$ is such that $\bfSigma = \bfM \bfM^{\top}$, then
\begin{align*}
    \left[\left(\ec{\bfb}{l+1}(\bfy)\right)^{\top}\widetilde{\bfW} \left(\ec{\bfb}{l+1}(\bfy')\right)^{\top}\widetilde{\bfW}\right]^{\top} \overset{d}{=} \bfM \bfxi,
\end{align*}
where $\bfxi \sim \cN(0, \Ii_{2d_l})$. Thus, for fixed $\ec{\bfG}{l}$ and conditioned on the value of $\{\ec{\bfb}{l+1}(\bfy), \ec{\bfb}{l+1}(\bfy'), \ec{\bfh}{l}(\bfy), \ec{\bfh}{l}(\bfy') \}$, we have,
\begin{align*}
    & \left(\ec{\bfb}{l+1}(\bfy)\right)^{\top} \ec{\bfW}{l+1}\bfPi_{\bfG}^{\perp} \ec{\bfDelta}{l}(\bfy, \bfy') \bfPi_{\bfG}^{\perp}  \left(  \ec{\bfW}{l+1}\right)^{\top}\ec{\bfb}{l+1}(\bfy) \\
    & \overset{d}{=} \left(\ec{\bfb}{l+1}(\bfy)\right)^{\top}\widetilde{\bfW}\bfPi_{\bfG}^{\perp} \ec{\bfDelta}{l}(\bfy, \bfy') \bfPi_{\bfG}^{\perp}  \left( \widetilde{\bfW} \right)^{\top}\ec{\bfb}{l+1}(\bfy) \\
    & \overset{d}{=} ([\Ii_{d_l} \ \  0] \bfM \bfxi)^{\top} \bfPi_{\bfG}^{\perp} \ec{\bfDelta}{l}(\bfy, \bfy') \bfPi_{\bfG}^{\perp}  ([0 \ \  \Ii_{d_l}] \bfM \bfxi) \\
    & \overset{d}{=} \frac{1}{2} \bfxi^{\top} \bfM^{\top} \begin{bmatrix} 0 & \bfPi_{\bfG}^{\perp} \ec{\bfDelta}{l}(\bfy, \bfy') \bfPi_{\bfG}^{\perp} \\ \bfPi_{\bfG}^{\perp} \ec{\bfDelta}{l}(\bfy, \bfy') \bfPi_{\bfG}^{\perp} & 0 \end{bmatrix} \bfM \bfxi
\end{align*}
Define
\begin{align*}
    \bfA := \frac{1}{2} \bfM^{\top} \begin{bmatrix} 0 & \bfPi_{\bfG}^{\perp} \ec{\bfDelta}{l}(\bfy, \bfy') \bfPi_{\bfG}^{\perp} \\ \bfPi_{\bfG}^{\perp} \ec{\bfDelta}{l}(\bfy, \bfy') \bfPi_{\bfG}^{\perp} & 0 \end{bmatrix} \bfM.
\end{align*}
Then we have,
\begin{align*}
    \E[\bfxi^{\top} \bfA \bfxi] & = \tr(\bfA) = \frac{1}{2} \tr \left( \begin{bmatrix} 0 & \bfPi_{\bfG}^{\perp} \ec{\bfDelta}{l}(\bfy, \bfy') \bfPi_{\bfG}^{\perp} \\ \bfPi_{\bfG}^{\perp} \ec{\bfDelta}{l}(\bfy, \bfy') \bfPi_{\bfG}^{\perp} & 0 \end{bmatrix} \bfSigma\right) \\
    & = b' \tr(\bfPi_{\bfG}^{\perp}\ec{\bfDelta}{l}(\bfy, \bfy')\bfPi_{\bfG}^{\perp}\Ii_{d_l}) \\
    & = b' \tr(\ec{\bfDelta}{l}(\bfy, \bfy')\bfPi_{\bfG}^{\perp}) \\
    & = b' \tr(\ec{\bfDelta}{l}(\bfy, \bfy')(\bfI_{d_l} - \bfPi_{\bfG})) \\
    & = b' \left[ \tr(\ec{\bfDelta}{l}(\bfy, \bfy')) - \tr(\ec{\bfDelta}{l}(\bfy, \bfy')\bfPi_{\bfG}) \right].
\end{align*}
Since $\tr(\bfPi_{\bfG}) = \mathrm{rank}(\bfPi_{\bfG})) \leq 2$, we have,
\begin{align*}
    0 \leq \tr(\ec{\bfDelta}{l}(\bfy, \bfy')\bfPi_{\bfG}) \leq \|\ec{\bfDelta}{l}(\bfy, \bfy')\|_2 \tr(\bfPi_{\bfG}) \leq 2 \|\ec{\bfDelta}{l}(\bfy, \bfy')\|_2.
\end{align*}
Consequently, 
\begin{align*}
    \left| \E[\bfxi^{\top} \bfA \bfxi] - b'\tr(\ec{\bfDelta}{l}(\bfy, \bfy'))  \right| \leq 2b'\|\ec{\bfDelta}{l}(\bfy, \bfy')\|_2.
\end{align*}
For the upper bound on the spectrum, note that $\| \bfM \|_2^2 = \| \bfSigma \|_2 \leq b + b''$. Hence,
\begin{align*}
    \|\bfA\|_2 & \leq \frac{1}{2} \| \bfM \|_2^2 \| \bfPi_{\bfG}^{\perp}\ec{\bfDelta}{l}(\bfy, \bfy')\bfPi_{\bfG}^{\perp}\|_2 \\
    & \leq \frac{1}{2} (b + b'') \| \bfPi_{\bfG}^{\perp}\|_2 \|\ec{\bfDelta}{l}(\bfy, \bfy')\|_2 \|\bfPi_{\bfG}^{\perp}\|_2 \\
    & \leq \frac{b + b''}{2}\|\ec{\bfDelta}{l}(\bfy, \bfy')\|_2 . \\
\end{align*}
Thus, on using Lemma~\ref{lemma:gaussian_chaos} with $t = \log(3/\delta)$, we have with probability at least $1 - \delta/3$
\begin{align*}
    \left| \bfxi^{\top} \bfA \bfxi - \E[\bfxi^{\top} \bfA \bfxi] \right| & \leq 2 (\|\bfA\|_F \sqrt{t} + \| \bfA\|_2 t) \\
    & \leq 2 \| \bfA\|_2 ( \sqrt{2d_l t} +  t) \\
    & \leq (b + b'')\|\ec{\bfDelta}{l}(\bfy, \bfy')\|_2 \left( \sqrt{2d_l \log\left( \frac{3}{\delta} \right) } + \log\left( \frac{3}{\delta} \right) \right).
\end{align*}
Consequently,
\begin{align*}
    & \left| \frac{c_s}{d_l}\left(\ec{\bfb}{l+1}(\bfy)\right)^{\top} \ec{\bfW}{l+1}\bfPi_{\bfG}^{\perp} \ec{\bfDelta}{l}(\bfy, \bfy') \bfPi_{\bfG}^{\perp}  \left(  \ec{\bfW}{l+1}\right)^{\top}\ec{\bfb}{l+1}(\bfy) - \frac{c_s}{d_l} \tr(\ec{\bfDelta}{l}(\bfy, \bfy')) \ip{\ec{\bfb}{l+1}(\bfy)}{\ec{\bfb}{l+1}(\bfy')}   \right| \\
    & \leq \frac{c_s}{d_l} \left| \bfxi^{\top} \bfA \bfxi - \E[\bfxi^{\top} \bfA \bfxi] \right| + \frac{c_s}{d_l}  \left| \E[\bfxi^{\top} \bfA \bfxi] - b'\tr(\ec{\bfDelta}{l}(\bfy, \bfy'))  \right| \\
    & \leq c_s(b + b'')\|\ec{\bfDelta}{l}(\bfy, \bfy')\|_2 \left( \sqrt{\frac{2}{d_l} \log\left( \frac{3}{\delta} \right) } + \frac{1}{d_l}\log\left( \frac{3}{\delta} \right) \right) +  \frac{2b'c_s}{d_l}\|\ec{\bfDelta}{l}(\bfy, \bfy')\|_2,
\end{align*}
holds with probability at least $1 - \delta/3$. The lemma follows by taking a union bound over all possible choices of $(\bfy, \bfy')$. \\

We can use the above result to obtain the following bound for any $\bfy \in \{\bfx, \bfx'\}$
\begin{align*}
    & \sqrt{\frac{c_s}{d_l}} \left\| \left(\ec{\bfb}{l+1}(\bfy)\right)^{\top} \ec{\bfW}{l+1}\bfPi_{\bfG}^{\perp} \ec{\bfD}{l}(\bfy) \right\| \\
    & \leq \left\{ \frac{c_s}{d_l}  \left(\ec{\bfb}{l+1}(\bfy)\right)^{\top} \ec{\bfW}{l+1}\bfPi_{\bfG}^{\perp} \ec{\bfDelta}{l}(\bfy, \bfy) \bfPi_{\bfG}^{\perp}  \left(  \ec{\bfW}{l+1}\right)^{\top}\ec{\bfb}{l+1}(\bfy) \right\}^{1/2} \\
    & \leq \left\{ \frac{c_s}{d_l} \tr(\ec{\bfDelta}{l}(\bfy, \bfy))b +  2c_s b\|\ec{\bfDelta}{l}(\bfy, \bfy)\|_2 \left( \sqrt{\frac{2}{d_l} \log\left( \frac{3}{\delta} \right) } + \frac{1}{d_l}\log\left( \frac{3}{\delta} \right) \right) +  \frac{2bc_s}{d_l}\|\ec{\bfDelta}{l}(\bfy, \bfy')\|_2  \right\}^{1/2} \\
    & \leq \sqrt{bc_s \|\ec{\bfDelta}{l}(\bfy, \bfy)\|_2} \cdot \left\{ 1 +   2\left( \sqrt{\frac{2}{d_l} \log\left( \frac{3}{\delta} \right) } + \frac{1}{d_l}\left( 1 + \log\left( \frac{3}{\delta} \right) \right) \right)  \right\}^{1/2} \\
    & \leq \sqrt{2bc_s \|\ec{\bfDelta}{l}(\bfy, \bfy)\|_2},
\end{align*}
where $b = \ip{\ec{\bfb}{l+1}(\bfy)}{\ec{\bfb}{l+1}(\bfy)} $ and the last step uses the relation the bound on $d_l$.

%%%%%%%%%%%%%% END PROOF %%%%%%%%%%%%%%%%%%%%%

%%%%%%%%%%%%%% BEGIN PROOF %%%%%%%%%%%%%%%%%%%

\subsection{Proof of Lemma~\ref{lemma:b_W_Delta_diff_with_its_proj}}
\label{proof:b_W_Delta_diff_with_its_proj}

Fix any $(\bfy, \bfy') \in \{ (\bfx, \bfx), (\bfx, \bfx'), (\bfx', \bfx') \}$. For ease of writing let $\ec{\bfV}{l+1}(\bfx) := \left(\ec{\bfW}{l+1}\right)^{\top} \ec{\bfb}{l+1}(\bfx)$. We are interested in bounding the following expression.
\begin{align*}
    & \left(\ec{\bfV}{l+1}(\bfy)\right)^{\top} \ec{\bfDelta}{l}(\bfy, \bfy') \ec{\bfV}{l+1}(\bfy') - \left(\ec{\bfV}{l+1}(\bfy)\right)^{\top} \bfPi_{\bfG}^{\perp}\ec{\bfDelta}{l}(\bfy, \bfy')\bfPi_{\bfG}^{\perp} \ec{\bfV}{l+1}(\bfy') \\
    & = \left(\ec{\bfV}{l+1}(\bfy)\right)^{\top} (\bfPi_{\bfG} + \bfPi_{\bfG}^{\perp}) \ec{\bfD}{l}(\bfy) \ec{\bfD}{l}(\bfy') (\bfPi_{\bfG} + \bfPi_{\bfG}^{\perp} ) \ec{\bfV}{l+1}(\bfy') - \\
    & ~~~~~~~~~~~~~ \left(\ec{\bfV}{l+1}(\bfy)\right)^{\top} \bfPi_{\bfG}^{\perp}\ec{\bfD}{l}(\bfy) \ec{\bfD}{l}(\bfy')\bfPi_{\bfG}^{\perp} \ec{\bfV}{l+1}(\bfy') \\
    & = \left(\ec{\bfV}{l+1}(\bfy)\right)^{\top}\bfPi_{\bfG} \ec{\bfD}{l}(\bfy) \ec{\bfD}{l}(\bfy')\bfPi_{\bfG} \ec{\bfV}{l+1}(\bfy') + 
    \left(\ec{\bfV}{l+1}(\bfy)\right)^{\top}\bfPi_{\bfG}\ec{\bfD}{l}(\bfy) \ec{\bfD}{l}(\bfy')\bfPi_{\bfG}^{\perp} \ec{\bfV}{l+1}(\bfy') + \\ 
    & \ \ \ \ \ \left(\ec{\bfV}{l+1}(\bfy)\right)^{\top} \bfPi_{\bfG}^{\perp} \ec{\bfD}{l}(\bfy) \ec{\bfD}{l}(\bfy')\bfPi_{\bfG} \ec{\bfV}{l+1}(\bfy').
\end{align*} 
Consequently, 
\begin{align*}
    & \frac{c_s}{d_l} \left|\left(\ec{\bfV}{l+1}(\bfy)\right)^{\top} \ec{\bfDelta}{l}(\bfy, \bfy') \ec{\bfV}{l+1}(\bfy') - \left(\ec{\bfV}{l+1}(\bfy)\right)^{\top} \bfPi_{\bfG}^{\perp}\ec{\bfDelta}{l}(\bfy, \bfy')\bfPi_{\bfG}^{\perp} \ec{\bfV}{l+1}(\bfy')\right| \\
    & = \frac{c_s}{d_l} \bigg|\left(\ec{\bfV}{l+1}(\bfy)\right)^{\top}\bfPi_{\bfG} \ec{\bfD}{l}(\bfy) \ec{\bfD}{l}(\bfy')\bfPi_{\bfG} \ec{\bfV}{l+1}(\bfy') + 
    \left(\ec{\bfV}{l+1}(\bfy)\right)^{\top}\bfPi_{\bfG}\ec{\bfD}{l}(\bfy) \ec{\bfD}{l}(\bfy')\bfPi_{\bfG}^{\perp} \ec{\bfV}{l+1}(\bfy') + \\ 
    & \ \ \ \ \ \left(\ec{\bfV}{l+1}(\bfy)\right)^{\top} \bfPi_{\bfG}^{\perp} \ec{\bfD}{l}(\bfy) \ec{\bfD}{l}(\bfy')\bfPi_{\bfG} \ec{\bfV}{l+1}(\bfy') \bigg| \\
    & \leq \frac{c_s}{d_l} \left\|\left(\ec{\bfb}{l+1}(\bfy) \right)^{\top} \ec{\bfW}{l+1}\bfPi_{\bfG} \ec{\bfD}{l}(\bfy) \right\| \left\|\ec{\bfD}{l}(\bfy')\bfPi_{\bfG} \left(\ec{\bfW}{l+1}\right)^{\top} \ec{\bfb}{l+1}(\bfy')\right\| + \\
    & \ \ \ \ \ \frac{c_s}{d_l} \left\| \left(\ec{\bfb}{l+1}(\bfy) \right)^{\top} \ec{\bfW}{l+1}\bfPi_{\bfG}\ec{\bfD}{l}(\bfy) \right\| \left\|\ec{\bfD}{l}(\bfy')\bfPi_{\bfG}^{\perp} \left(\ec{\bfW}{l+1}\right)^{\top} \ec{\bfb}{l+1}(\bfy')\right\| + \\ 
    & \ \ \ \ \ \frac{c_s}{d_l} \left\|\left(\ec{\bfb}{l+1}(\bfy) \right)^{\top} \ec{\bfW}{l+1} \bfPi_{\bfG}^{\perp} \ec{\bfD}{l}(\bfy)\right\| \left\|\ec{\bfD}{l}(\bfy')\bfPi_{\bfG} \left(\ec{\bfW}{l+1}\right)^{\top} \ec{\bfb}{l+1}(\bfy')\right\|  \\
    & \leq \frac{c_s}{d_l} \left[(s^{(L - l)/2} + 1)\sqrt{2 \log\left(\frac{8}{\delta}\right)} + s^{L - l} \varepsilon_3 \sqrt{2}\right]\left[(s^{(L - l)/2} + 1)\sqrt{2 \log\left(\frac{8}{\delta}\right)} + s^{L - l} \varepsilon_3 \sqrt{2}\right]  \| \ec{\bfD}{l}(\bfy) \|_2 \| \ec{\bfD}{l}(\bfy') \|_2  + \\
    & \ \ \ \ \ c_s\sqrt{\frac{2}{d_l}} \left[(s^{(L - l)/2} + 1)\sqrt{2 \log\left(\frac{8}{\delta}\right)} + s^{L - l} \varepsilon_3 \sqrt{2}\right]\| \ec{\bfD}{l}(\bfy) \|_2  \|\ec{\bfb}{l+1}(\bfy')\| \| \ec{\bfD}{l}(\bfy') \|_2   + \\ 
    & \ \ \ \ \  c_s\sqrt{\frac{2}{d_l}} \left[(s^{(L - l)/2} + 1)\sqrt{2 \log\left(\frac{8}{\delta}\right)} + s^{L - l} \varepsilon_3 \sqrt{2}\right]\| \ec{\bfD}{l}(\bfy) \|_2  \|\ec{\bfb}{l+1}(\bfy)\| \| \ec{\bfD}{l}(\bfy') \|_2  \\
    & \leq \frac{c_s}{d_l} \left[(s^{(L - l)/2} + 1)\sqrt{2 \log\left(\frac{8}{\delta}\right)} + s^{L - l} \varepsilon_3 \sqrt{2}\right]^2  \| \ec{\bfDelta}{l}(\bfy, \bfy') \|_2  + \\
    & \ \ \ \ \ c_s\sqrt{\frac{8}{d_l}} \left[(s^{(L - l)/2} + 1)\sqrt{2 \log\left(\frac{8}{\delta}\right)} + s^{L - l} \varepsilon_3 \sqrt{2}\right]  (s^{(L - l)/2} + 1) \| \ec{\bfDelta}{l}(\bfy, \bfy') \|_2      \\
    & \leq \frac{c_s \varepsilon_5}{\sqrt{d_l}}\left[(s^{(L - l)/2} + 1)\sqrt{2 \log\left(\frac{8}{\delta}\right)} + s^{L - l} \varepsilon_3 \sqrt{2}\right] \times \\
    & ~~~~~~~~~~~~~` \left\{\frac{1}{\sqrt{d_l}}\left[(s^{(L - l)/2} + 1)\sqrt{2 \log\left(\frac{8}{\delta}\right)} + s^{L - l} \varepsilon_3 \sqrt{2}\right] + \sqrt{8}(s^{(L - l)/2} + 1) \right\},
\end{align*} 
where we used Lemma~\ref{lemma:b_h_proj_G_close_to_trace_D_times_b} and~\ref{lemma:proj_G_times_W_b_bound} in fourth step and the condition of $\bar{\cB}^{l+1}(\varepsilon_2) \land \bar{\cE}^l(\varepsilon_5)$ in the last step. Both lemmas hold with probability at least $1 - \delta/2$, ensuring that the overall expression is true with probability at least $1 - \delta$.

%%%%%%%%%%%%%% END PROOF %%%%%%%%%%%%%%%%%%%%%

%%%%%%%%%%%%%% BEGIN PROOF %%%%%%%%%%%%%%%%%%%

\subsection{Proof of Lemma~\ref{lemma:b_h_to_b_h_plus_1}}
\label{proof:b_h_to_b_h_plus_1}

Fix any $(\bfy, \bfy') \in \{ (\bfx, \bfx), (\bfx, \bfx'), (\bfx', \bfx') \}$. Conditioned on $\bar{\cA}^{L}(\varepsilon_1/2) \land \bar{\cB}^{l+1}(\varepsilon_2) \land \bar{\cC}(\varepsilon_3) \land \bar{\cD}^l(\varepsilon_4) \land \bar{\cE}^l(\varepsilon_5)$, we begin with bounding $\left| \ip{\ec{\bfb}{l}(\bfy)}{\ec{\bfb}{l}(\bfy')} - \prod_{j = l}^{L+1} \ec{\dot{\Sigma}}{j}(\bfx, \bfy')\right|$. Using the definition of $\ec{\bfb}{l}(\bfy)$ we have,
\begin{align*}
    & \left| \ip{\ec{\bfb}{l}(\bfy)}{\ec{\bfb}{l}(\bfy')} - \prod_{j = l}^{L+1} \ec{\dot{\Sigma}}{j}(\bfy, \bfy')\right|  \\
    & = \left| \frac{c_s}{d_l} \left(\ec{\bfb}{l+1}(\bfy) \right)^{\top} \ec{\bfW}{l+1} \ec{\bfDelta}{l}(\bfy, \bfy') \left(\ec{\bfW}{l+1}\right)^{\top} \ec{\bfb}{l+1}(\bfy') - \prod_{j = l}^{L+1} \ec{\dot{\Sigma}}{j}(\bfy, \bfy')\right| \\
    & \leq \frac{c_s}{d_l} \bigg|\left(\ec{\bfb}{l+1}(\bfy) \right)^{\top} \ec{\bfW}{l+1} \ec{\bfDelta}{l}(\bfy, \bfy') \left(\ec{\bfW}{l+1}\right)^{\top} \ec{\bfb}{l+1}(\bfy') \\
    & \ \ \ \ \ \ \ \ \ \ \ \ \ \ \ \ - \left(\ec{\bfb}{l+1}(\bfy) \right)^{\top} \ec{\bfW}{l+1} \bfPi_{\bfG}^{\perp}\ec{\bfDelta}{l}(\bfy, \bfy')\bfPi_{\bfG}^{\perp} \left(\ec{\bfW}{l+1}\right)^{\top} \ec{\bfb}{l+1}(\bfy')\bigg| \\
    & +  \Bigg| \frac{c_s}{d_l}\left(\ec{\bfb}{l+1}(\bfy)\right)^{\top} \ec{\bfW}{l+1}\bfPi_{\bfG}^{\perp} \ec{\bfDelta}{l}(\bfy, \bfy') \bfPi_{\bfG}^{\perp}  \left(  \ec{\bfW}{l+1}\right)^{\top}\ec{\bfb}{l+1}(\bfy) - \\
    & ~~~~~~~~~~~~~~~~~~~~~~~~~~~~~~~~~~~~~~~~~~~~~~~~~~~~~~~~~~~~~~~~~~ \frac{c_s}{d_l} \tr(\ec{\bfDelta}{l}(\bfy, \bfy')) \ip{\ec{\bfb}{l+1}(\bfy)}{\ec{\bfb}{l+1}(\bfy')}   \Bigg|  \\
    &  + \left| c_{s} \frac{\tr(\ec{\bfDelta}{l}(\bfy, \bfy'))}{d_l}\ip{\ec{\bfb}{l+1}(\bfy)}{\ec{\bfb}{l+1}(\bfy')}  - \prod_{j = l}^{L+1} \ec{\dot{\Sigma}}{l}(\bfy, \bfy') \right| \\
    & \leq \Bigg( \frac{c_s \varepsilon_5}{\sqrt{d_l}}\left[(s^{(L - l)/2} + 1)\sqrt{2 \log\left(\frac{16}{\delta}\right)} + s^{L - l} \varepsilon_3 \sqrt{2}\right] \times \\
    & ~~~~~~~~~~~~~~~ \left\{\frac{1}{\sqrt{d_l}}\left[(s^{(L - l)/2} + 1)\sqrt{2 \log\left(\frac{16}{\delta}\right)} + s^{L - l} \varepsilon_3 \sqrt{2}\right] + \sqrt{8}(s^{(L - l)/2} + 1) \right\} \Bigg) \\
    & + c_s \left(\ip{\ec{\bfb}{l+1}(\bfy)}{\ec{\bfb}{l+1}(\bfy)} + \ip{\ec{\bfb}{l+1}(\bfy')}{\ec{\bfb}{l+1}(\bfy')}\right) \left( \sqrt{\frac{2}{d_l} \log\left( \frac{6}{\delta} \right) } + \frac{1}{d_l}\log\left( \frac{6}{\delta} \right) \right) \varepsilon_5 \\
    & +  \frac{2c_s}{d_l}\ip{\ec{\bfb}{l+1}(\bfy)}{\ec{\bfb}{l+1}(\bfy')} \varepsilon_5 + s^{L - l}\varepsilon_4 + s\varepsilon_2 \\
    & \leq \Bigg( \frac{c_s \varepsilon_5}{\sqrt{d_l}}\left[(s^{(L - l)/2} + 1)\sqrt{2 \log\left(\frac{16}{\delta}\right)} + s^{L - l} \varepsilon_3 \sqrt{2}\right] \times \\
    & ~~~~~~~~~~~~~~~~ \left\{\frac{1}{\sqrt{d_l}}\left[(s^{(L - l)/2} + 1)\sqrt{2 \log\left(\frac{16}{\delta}\right)} + s^{L - l} \varepsilon_3 \sqrt{2}\right] + \sqrt{8}(s^{(L - l)/2} + 1) \right\} \Bigg) \\
    & + 2c_s (s^{L - l} + 1)\left( \sqrt{\frac{2}{d_l} \log\left( \frac{6}{\delta} \right) } + \frac{1}{d_l}\log\left( \frac{6}{\delta} \right) \right) \varepsilon_5  +  \frac{2c_s}{d_l}(s^{L - l} + 1) \varepsilon_5 + s^{L - l}\varepsilon_4 + s\varepsilon_2.
\end{align*}
In the fourth step, we used Lemma~\ref{lemma:trace_D_times_b},~\ref{lemma:b_W_Delta_diff_with_its_proj} and~\ref{lemma:b_h_proj_G_close_to_trace_D_times_b} with the last two each holding with probability at least $1 - \delta/2$. Consequently, the above expression holds with probability at least $1 - \delta$.

%%%%%%%%%%%%%% END PROOF %%%%%%%%%%%%%%%%%%%%%

%%%%%%%%%%%%%% BEGIN PROOF %%%%%%%%%%%%%%%%%%%

\subsection{Proof of Lemma~\ref{lemma:a_h_implies_d_h_plus_1}}
\label{proof:a_h_implies_d_h_plus_1}

We carry out the analysis conditioned on $\bar{\cA}^{l}(\varepsilon)$. Since $\left| \left(\ec{\bfh}{l}(\bfy) \right)^{\top}\ec{\bfh}{l}(\bfy') - \ec{\Sigma}{l}(\bfy, \bfy')\right| \leq \varepsilon$ for all $(\bfy, \bfy') \in \{ (\bfx, \bfx), (\bfx, \bfx'), (\bfx',\bfx') \}$ we can conclude that $\| \ec{\tilde{\bfG}}{l} - \ec{\Lambda}{l} \|_{\infty} \leq 2 \varepsilon$. Since $\breve{\sigma}_{s-1}$ is $\beta_{s-1}$-Lipschitz in $\cM_{+}^{\gamma_{s-1}}$ w.r.t. $\infty$-norm, we have, 
\begin{align*}
    |\breve{\sigma}_{s-1}(\ec{\bfG}{l}) - \breve{\sigma}_{s-1}(\ec{\Lambda}{l+1})| \leq \beta_{s-1} \| \ec{\tilde{\bfG}}{l} - \ec{\Lambda}{l+1} \|_{\infty} \leq 2\beta_{s-1}\varepsilon,
\end{align*}
for $\varepsilon \leq \gamma_{s-1}$. Fix any $(\bfy, \bfy') \in \{ (\bfx, \bfx), (\bfx, \bfx'), (\bfx',\bfx') \}$.  We have that,
\begin{align*}
    \left| c_{s} \frac{\tr(\ec{\bfDelta}{l+1}(\bfy, \bfy'))}{d_{l+1}} - \ec{\dot{\Sigma}}{l+1}(\bfy, \bfy')  \right| & \leq
    \left| c_{s} \frac{\tr(\ec{\bfDelta}{l}(\bfy, \bfy'))}{d_l} - \frac{s^2}{2s - 1}\breve{\sigma}_{s-1}\left(\ec{\Lambda}{l + 1}(\bfy, \bfy') \right) \right| \\
    & \leq \left| c_{s} \frac{\tr(\ec{\bfDelta}{l}(\bfy, \bfy'))}{d_l} - \frac{s^2}{2s - 1}\breve{\sigma}_{s-1}\left(\ec{\tilde{\bfG}}{l}(\bfy, \bfy') \right) \right| + \\
    & \ \ \ \ \ \frac{s^2}{2s - 1} |\breve{\sigma}_{s-1}(\ec{\tilde{\bfG}}{l}) - \breve{\sigma}_{s-1}(\ec{\Lambda}{l+1})| \\
    & \leq s^2(1 + \varepsilon)^{s-1} \varphi_{s-1}(d_l, \delta/2, \min\{1, \ec{\Sigma}{l}(\bfy, \bfy') + \varepsilon\}) + \frac{2s^2\beta_s \varepsilon}{2s - 1},
\end{align*}
holds with probability $1 - \delta$. The last step follows from Lemma~\ref{lemma:trace_of_diagonal}. On taking a union bound, we can conclude that 
\begin{align*}
    \left| c_{s} \frac{\tr(\ec{\bfDelta}{l+1}(\bfy, \bfy'))}{d_{l+1}} - \ec{\dot{\Sigma}}{l+1}(\bfy, \bfy')  \right| 
    & \leq s^2(1 + \varepsilon)^{s-1} \varphi_{s-1}(d_l, \delta/6, 1) + \frac{2s^2\beta_s \varepsilon}{2s - 1},
\end{align*}
holds for all $(\bfy, \bfy') \in \{ (\bfx, \bfx), (\bfx, \bfx'), (\bfx',\bfx') \}$ with probability $1 - \delta$. Since we have already invoked Lemma~\ref{lemma:trace_of_diagonal}, it also gives us that
\begin{align*}
    \|\ec{\bfDelta}{l+1}(\bfx, \bfx')\|_2 & \leq s^2 \left[2\left(1 + \varepsilon\right)\log\left(\frac{6d_l}{\delta}\right) \right]^{s-1},
\end{align*}
holds for all $(\bfy, \bfy') \in \{ (\bfx, \bfx), (\bfx, \bfx'), (\bfx',\bfx') \}$ with probability $1 - \delta$. Using this result along with Lemma~\ref{lemma:pr_a_implies_b} and $\varepsilon \leq 1/s$ yields us
\begin{align*}
    \Pr\left[ \bar{\cA}^{l}(\varepsilon) \Rightarrow \bar{\cD}^{l + 1}\left(3s^2 \varphi_{s-1}(d_l, \delta/6, 1) + \frac{2s^2\beta_s \varepsilon}{2s - 1} \right) \land \bar{\cE}^{l+1} \left(3s^2 \left[2\log\left(\frac{6d_l}{\delta}\right) \right]^{s-1} \right) \right] \geq 1 - \delta,
\end{align*}
as required.

%%%%%%%%%%%%%% END PROOF %%%%%%%%%%%%%%%%%%%%%

%%%%%%%%%%%%%% BEGIN PROOF %%%%%%%%%%%%%%%%%%%

\subsection{Proof of Lemma~\ref{lemma:a_all_implies_d_all}}
\label{proof:a_all_implies_d_all}

Consider,
\begin{align*}
    \Pr\left[ \bar{\cA}(\varepsilon) \Rightarrow \bar{\cD}(\varepsilon') \land \bar{\cE}(\varepsilon'') \right] & = \Pr\left[ \left(\bigcap_{l = 0}^L \bar{\cA}^l (\varepsilon) \right) \Rightarrow \left(\bigcap_{l = 0}^L \left\{\bar{\cD}^{l+1}(\varepsilon') \land \bar{\cE}^{l+1}(\varepsilon'')\right\} \right) \right] \\
    & = \Pr\left[ \neg\left(\bigcap_{l = 0}^L \bar{\cA}^l (\varepsilon) \right) \lor \left(\bigcap_{l = 0}^L \left\{\bar{\cD}^{l+1}(\varepsilon') \land \bar{\cE}^{l+1}(\varepsilon'')\right\} \right) \right] \\
    & = 1 - \Pr\left[ \left(\bigcap_{l = 0}^L \bar{\cA}^l (\varepsilon) \right) \land \left(\bigcup_{l = 0}^L \neg\left\{\bar{\cD}^{l+1}(\varepsilon') \land \bar{\cE}^{l+1}(\varepsilon'')\right\} \right) \right] \\
    & = 1 - \Pr\left[ \left(\bigcup_{l = 0}^L \left(\bigcap_{l = 0}^L \bar{\cA}^l (\varepsilon) \right) \land  \neg\left\{\bar{\cD}^{l+1}(\varepsilon') \land \bar{\cE}^{l+1}(\varepsilon'')\right\} \right) \right] \\
    & \geq 1 - \sum_{l = 0}^L \Pr\left[  \left(\bigcap_{l = 0}^L \bar{\cA}^l (\varepsilon) \right) \land  \neg\left\{\bar{\cD}^{l+1}(\varepsilon') \land \bar{\cE}^{l+1}(\varepsilon'')\right\}  \right] \\
    % & = 1 - \Pr\left[ \bigcap_{l = 0}^L \left(\bar{\cA}^l (\varepsilon) \Rightarrow \bar{\cD}^{l+1}(\varepsilon') \land \bar{\cE}^{l+1}(\varepsilon'')\right) \right] \\
    % & = 1 - \Pr\left[ \bigcup_{l = 0}^L \neg \left(\bar{\cA}^l (\varepsilon) \Rightarrow \bar{\cD}^{l+1}(\varepsilon') \land \bar{\cE}^{l+1}(\varepsilon'')\right) \right] \\
    & \geq 1 - \sum_{l = 0}^L \Pr\left[   \bar{\cA}^l (\varepsilon) \land  \neg\left\{\bar{\cD}^{l+1}(\varepsilon') \land \bar{\cE}^{l+1}(\varepsilon'')\right\}  \right] \\
    & \geq 1 - \sum_{l = 0}^L \Pr\left[  \neg \left(\bar{\cA}^l (\varepsilon) \Rightarrow \bar{\cD}^{l+1}(\varepsilon') \land \bar{\cE}^{l+1}(\varepsilon'')\right) \right] \\
    & \geq 1 - \sum_{l = 0}^L \frac{\delta}{L + 1}  \\
    & \geq 1 - \delta,
\end{align*}
where the eighth line follows from Lemma~\ref{lemma:a_h_implies_d_h_plus_1} and the choice of $\varepsilon'$ and $\varepsilon''$.

%%%%%%%%%%%%%% END PROOF %%%%%%%%%%%%%%%%%%%%%

%%%%%%%%%%%%%% BEGIN PROOF %%%%%%%%%%%%%%%%%%%

\subsection{Proof of Lemma~\ref{lemma:trace_of_diagonal}}
\label{proof:trace_of_diagonal}

We can rewrite $\tr\left(\ec{\bfDelta}{l}(\bfx, \bfx')\right)$ as follows:
\begin{align*}
    \tr(\ec{\bfDelta}{l}(\bfx, \bfx')) & = \tr\left[\ec{\bfD}{l}(\bfx)\ec{\bfD}{l}(\bfx')\right] \\
    & = \tr\left[\diag\left( \sigma_s'\left(\ec{\bff}{l}(\bfx)\right) \right) \diag\left( \sigma_s'\left(\ec{\bff}{l}(\bfx')\right) \right) \right] \\
    & = \ip{\sigma_s'\left(\ec{\bff}{l}(\bfx)\right)}{\sigma_s'\left(\ec{\bff}{l}(\bfx')\right) } \\
    & = s^2\ip{\sigma_{s-1}\left( \ec{\bfW}{l}\ec{\bfh}{l-1}(\bfx)\right)}{\sigma_{s-1}\left(\ec{\bfW}{l}\ec{\bfh}{l-1}(\bfx')\right) }.
\end{align*}
Note that since all the matrices $\{\ec{\bfW}{i}\}_{i = 1}^{l-1}$ are fixed, $\ec{\bfh}{l-1}(\cdot)$ is a fixed function. Consequently,  each term of the vector $\ec{\bfW}{l}\ec{\bfh}{l-1}(\bfx)$ is distributed as $\cN\left(0, \ip{\ec{\bfh}{l}(\bfx)}{\ec{\bfh}{l}(\bfx)}\right)$ and is independent of others. If $(\ec{\bfw}{l}_j)^{\top}$ denotes the $j^{\text{th}}$ row of the matrix $\ec{\bfW}{l}$, then we can write the inner product as
\begin{align*}
    \tr(\ec{\bfDelta}{l}(\bfx, \bfx')) & = s^2\ip{\sigma_{s-1}\left( \ec{\bfW}{l}\ec{\bfh}{l-1}(\bfx)\right)}{\sigma_{s-1}\left(\ec{\bfW}{l}\ec{\bfh}{l-1}(\bfx')\right) } \\
    & = s^2 \sum_{j = 1}^{d_{l}}\sigma_{s-1}\left( \left(\ec{\bfw}{l}_j\right)^{\top}\ec{\bfh}{l-1}(\bfx)\right)\sigma_{s-1}\left(\left(\ec{\bfw}{l}_j\right)^{\top}\ec{\bfh}{l-1}(\bfx')\right) \\
    & = s^2 \sum_{j = 1}^{d_{l}} Z_j,
\end{align*}
where $Z_j$ are all independent and identically distributed as $\sigma_{s-1}(X)\sigma_{s-1}(Y)$ where $(X, Y) \sim \cN\left(0, \ec{\tilde{\bfG}}{l-1}(\bfx, \bfx') \right) $. If we define $\tilde{Z}_j := (G_{11}G_{22})^{-\frac{(s-1)}{2}}$, then $\tilde{Z}_j$ are all independent and identically distributed as $\sigma_{s-1}(\tilde{X})\sigma_{s-1}(\tilde{Y})$ where $(\tilde{X}, \tilde{Y}) \sim \cN\left(0, \Sigma_{\rho_G} \right) $ and $\rho_{G} = \frac{G_{12}}{\sqrt{G_{11}G_{22}}}$. From Lemma~\ref{lemma:concentration_of_activation_fn}, we know that
\begin{align*}
    \left| \frac{1}{d_l}\sum_{j = 1}^{d_{l}} \tilde{Z}_j - \E[\tilde{Z}_1] \right| \leq \varphi_{s-1}(d_l, \delta, \rho_G) 
\end{align*}
with probability at least $1 - \delta$. Consequently, 
\begin{align*}
    \left| \frac{1}{d_l}\sum_{j = 1}^{d_{l}} {Z}_j - \E[Z_1] \right| \leq (G_{11}G_{22})^{\frac{(s-1)}{2}} \varphi_{s-1}(d_l, \delta, \rho_G) 
\end{align*}
with probability at least $1 - \delta$. Note that $\E[Z_1] = \underset{(X, Y) \sim \cN\left(0, \ec{\tilde{\bfG}}{h}(\bfx, \bfx') \right)}{\E} [\sigma_{s-1}(X)\sigma_{s-1}(Y)] = \frac{1}{c_{s-1}}\breve{\sigma}_{s-1}\left( \ec{\tilde{\bfG}}{l}(\bfx, \bfx')\right)$ and $c_{s-1} = c_s(2s - 1)$. On combining this with the previous result, we can conclude that,
\begin{align*}
    \left| c_{s} \frac{\tr(\ec{\bfDelta}{l}(\bfx, \bfx'))}{d_l} - \frac{s^2}{2s - 1}\breve{\sigma}_{s-1}\left(\ec{\bfG}{l-1}(\bfx, \bfx') \right) \right| & \leq s^2\left| \frac{1}{d_l}\sum_{j = 1}^{d_{l}} {Z}_j - \E[Z_1] \right| \\
    & \leq s^2(G_{11}G_{22})^{\frac{(s-1)}{2}} \varphi_{s-1}(d_l, \delta/2, \rho_G),
\end{align*}
holds with probability $1- \delta/2$. For the spectral norm of $\ec{\bfDelta}{l}(\bfx, \bfx')$, we have,
\begin{align*}
    \|\ec{\bfDelta}{l}(\bfx, \bfx')\|_2 & = s^2 \max_{j \in d_l} Z_j = s^2(G_{11}G_{22})^{\frac{(s-1)}{2}} \max_{j \in d_l} \tilde{Z}_j.
    % & \leq s^2(G_{11}G_{22})^{\frac{(s-1)}{2}} \left[ \mu_s + \sqrt{2(\sqrt{3 \mu_{4s}(\rho_{G})} + \mu_{2s}(\rho_{G})) \log \left(\frac{2(d_h + 1)}{\delta} \right)} \right],
\end{align*}
Using~\eqref{eqn:tail_bound_zi}, we can show that
\begin{align*}
    \Pr\left(\max_{j \in d_l} \tilde{Z}_j > \left[(1 + \rho_{G})\log\left(\frac{d_l}{\delta}\right) \right]^{s-1}  \right) & \leq d_l \Pr\left(\tilde{Z}_1 > \left[(1 + \rho_{G})\log\left(\frac{d_l}{\delta}\right) \right]^{s-1}  \right) \\
    & \leq d_l \exp \left( -\frac{1 + \rho_{G}}{1 + \rho_{G}}\log\left(\frac{d_l}{\delta}\right) \right) \leq \delta.
\end{align*}
Consequently,
\begin{align*}
    \|\ec{\bfDelta}{l}(\bfx, \bfx')\|_2 & = s^2(G_{11}G_{22})^{\frac{(s-1)}{2}} \max_{j \in d_l} \tilde{Z}_j \\
    & \leq s^2(G_{11}G_{22})^{\frac{(s-1)}{2}} \left[(1 + \rho_{G})\log\left(\frac{2d_l}{\delta}\right) \right]^{s-1}
\end{align*}
holds with probability $1 - \delta/2$. Thus, both the hold simultaneously with probability at least $1 - \delta$.

%%%%%%%%%%%%%% END PROOF %%%%%%%%%%%%%%%%%%%%%

%%%%%%%%%%%%%% BEGIN PROOF %%%%%%%%%%%%%%%%%%%

\subsection{Proof of Lemma~\ref{lemma:proj_G_times_W_b_bound}}
\label{proof:proj_G_times_W_b_bound}

We will prove the claim for $\bfy \in \{\bfx, \bfx'\}$. Recall that $\ec{\bfG}{l}(\bfy, \bfy') = [\ec{\bfh}{l}(\bfy) \ \  \ec{\bfh}{l}(\bfy')] \in \R^{d_l \times 2}$. For simplicity, we drop the arguments from $\ec{\bfh}{l}$ and $\ec{\bfb}{l + 1}$. Define $\bfPi_{\bfh} = \bfh \bfh^{\top}$ and $\bfPi_{\bfG / \bfh} = \bfPi_{\bfG} - \bfPi_{\bfh}$. Note that $\bfPi_{\bfG / \bfh}$ is still a projection matrix of rank $0$ or $1$. \\

Recall that $\ec{\bfb}{l + 1}$ is the gradient with respect to the pre-activation layer $l + 1$ given by $\ec{\bff}{l+1} = \ec{\bfW}{l+1} \ec{\bfh}{l}$. If we view the output $f$ as a function of $\ec{\bfh}{l}, \ec{\bfW}{l+1},  \dots, \ec{\bfW}{L+1}$, then we have the following relation:
\begin{align*}
    \frac{\partial}{\partial \ec{\bfh}{l}} f(\ec{\bfh}{l}, \ec{\bfW}{l+1},  \dots, \ec{\bfW}{L+1}) = \left(\ec{\bfb}{l + 1}\right)^{\top} \ec{\bfW}{l+1}.
\end{align*}
Recall that $\sigma_s$ is $s$-homogeneous,that is, $\sigma_s(\lambda x) = \lambda^s \sigma_s(x)$ for any $\lambda > 0$. Using this recursion repeatedly, we have,
\begin{align*}
    f(\lambda \ec{\bfh}{l}, \ec{\bfW}{l+1},  \dots, \ec{\bfW}{L+1}) & = \lambda^{s^{L -l}} f(\ec{\bfh}{l}, \ec{\bfW}{l+1},  \dots, \ec{\bfW}{L+1}) \\
    \implies \frac{\partial}{\partial \lambda} f(\lambda \ec{\bfh}{l}, \ec{\bfW}{l+1},  \dots, \ec{\bfW}{L+1}) & = s^{L - l} \lambda^{(s^{L -l} - 1)} f(\ec{\bfh}{l}, \ec{\bfW}{l+1},  \dots, \ec{\bfW}{L+1}).
\end{align*}
Using this, we can write,
\begin{align*}
    f(\ec{\bfh}{l}, \ec{\bfW}{l+1},  \dots, \ec{\bfW}{L+1}) & = s^{l - L}  \frac{\partial}{\partial \lambda} f(\lambda \ec{\bfh}{l}, \ec{\bfW}{l+1},  \dots, \ec{\bfW}{L+1}) \bigg|_{\lambda = 1} \\
    & = s^{l - L}  \frac{\partial}{\partial (\lambda \ec{\bfh}{l})} f(\lambda \ec{\bfh}{l}, \ec{\bfW}{l+1},  \dots, \ec{\bfW}{L+1}) \bigg|_{\lambda = 1}\frac{\partial (\lambda \ec{\bfh}{l}) }{\partial \lambda} \bigg|_{\lambda = 1}  \\
    & = s^{l - L}  \frac{\partial}{\partial (\lambda \ec{\bfh}{l})} f(\lambda \ec{\bfh}{l}, \ec{\bfW}{l+1},  \dots, \ec{\bfW}{L+1}) \bigg|_{\lambda = 1} \ec{\bfh}{l}  \\
    & = s^{l - L}  \left(\ec{\bfb}{l + 1}\right)^{\top} \ec{\bfW}{l+1}  \ec{\bfh}{l}.
\end{align*}
By definition of $\bfPi_{\bfh}$, we have,
\begin{align*}
    \left\| \bfPi_{\bfh} \left( \ec{\bfW}{l+1} \right)^{\top} \ec{\bfb}{l+1}  \right\| & = \left\| \frac{1}{\|\ec{\bfh}{l} \|^2} \cdot \ec{\bfh}{l} \left( \ec{\bfh}{l} \right)^{\top} \left( \ec{\bfW}{l+1} \right)^{\top} \ec{\bfb}{l+1}  \right\| \\
    & = s^{L - l} \frac{|f(\bfy; \bfW)|}{\|\ec{\bfh}{l} \|}.
\end{align*}
Since $\bar{\cA}^{L}(\varepsilon_1/2) \land \bar{\cC}(\varepsilon_3)$, $\|\ec{\bfh}{l}\| \geq 1 - \varepsilon_1/2 \geq 1/2$ and $|f(\bfy; \bfW)| \leq \varepsilon_3$. Consequently, 
\begin{align*}
    \left\| \bfPi_{\bfh} \left( \ec{\bfW}{l+1} \right)^{\top} \ec{\bfb}{l+1}  \right\| & \leq s^{L - l} \varepsilon_3 \sqrt{2}.
\end{align*}

Similar to the proof of Lemma~\ref{lemma:b_h_proj_G_close_to_trace_D_times_b}, we note that for a fixed $\bfh$ and conditioned on $\ec{\bff}{l+1}$ and $\ec{\bfb}{l+1}$ Lemma~\ref{lemma:nullspace_equivalence} gives us that
\begin{align*}
    \ec{\bfW}{l+1}\bfPi_{\bfh}^{\perp} \overset{d}{=}_{\ec{\bff}{l+1} = \ec{\bfW}{l+1}\ec{\bfh}{l}} \widetilde{\bfW}\bfPi_{\bfh}^{\perp},
\end{align*}
where $\widetilde{\bfW}$ is a i.i.d. copy of $\ec{\bfW}{l+1}$. From the definition of $\bfPi_{\bfG/\bfh}$, we can conclude that $\bfPi_{\bfG/\bfh}^{\top} \bfPi_{\bfh} = 0$. Thus, combining this with the previous equation, we can conclude that
\begin{align*}
    \ec{\bfW}{l+1}\bfPi_{\bfG/\bfh} \overset{d}{=}_{\ec{\bff}{l+1} = \ec{\bfW}{l+1}\ec{\bfh}{l}} \widetilde{\bfW}\bfPi_{\bfG/\bfh}.
\end{align*}
If $\text{rank}(\bfPi_{\bfG/\bfh}) = 1$, then $\bfPi_{\bfG/\bfh} = \bfu \bfu^{\top}$ for some unit vector $\bfu$. Thus,
\begin{align*}
    \left\| \bfPi_{\bfG/\bfh} \left( \ec{\bfW}{l+1} \right)^{\top} \ec{\bfb}{l+1} \right\| & = \left\| \bfu \bfu^{\top} \left( \ec{\bfW}{l+1} \right)^{\top} \ec{\bfb}{l+1} \right\| \\
    & = \left| \bfu^{\top} \left( \ec{\bfW}{l+1} \right)^{\top} \ec{\bfb}{l+1} \right| \\
    & \overset{d}{=} \left| \bfu^{\top} \widetilde{\bfW}^{\top} \ec{\bfb}{l+1} \right| = |t|,
\end{align*}
where $t \sim \cN(0, \|\ec{\bfb}{l+1}\|^2)$. Therefore, with probability at least $1 - \delta/2$,
\begin{align*}
    |t| \leq \|\ec{\bfb}{h+1}\| \sqrt{2 \log\left(\frac{4}{\delta}\right)} \leq (s^{(L - l)/2} + 1)\sqrt{2 \log\left(\frac{4}{\delta}\right)}.
\end{align*}
In the above step, we used similar arguments as used in Appendix.~\ref{proof:a_h_implies_d_h_plus_1} to bound $\|\ec{\bfb}{l+1}\|$. If $\text{rank}(\bfPi_{\bfG/\bfh}) = 0$, $\left\| \bfPi_{\bfG/\bfh} \left( \ec{\bfW}{l+1} \right)^{\top} \ec{\bfb}{l+1} \right\| = 0$. On combining this with the previous result, we obtain that with probability $1 - \delta$
\begin{align*}
    \left\| \bfPi_{\bfG} \left( \ec{\bfW}{l+1} \right)^{\top} \ec{\bfb}{l+1} \right\| & \leq \left\| \bfPi_{\bfG/\bfh} \left( \ec{\bfW}{l+1} \right)^{\top} \ec{\bfb}{l+1} \right\| + \left\| \bfPi_{\bfh} \left( \ec{\bfW}{l+1} \right)^{\top} \ec{\bfb}{l+1} \right\| \\
    & \leq (s^{(L - l)/2} + 1)\sqrt{2 \log\left(\frac{4}{\delta}\right)} + s^{L - l} \varepsilon_3 \sqrt{2}.
\end{align*}
Taking the union bound over the two possible choices of $\bfy$ gives us the final result.

%%%%%%%%%%%%%% END PROOF %%%%%%%%%%%%%%%%%%%%%

\section{Proof of Theorem~\ref{theorem:conf_interval}}
\label{sec:thm_2_proof}

The proof of Theorem~\ref{theorem:conf_interval} involves combining several smaller results, some of which follow from Theorem~\ref{theorem:appx_error} and the others are minor modifications of existing results. We being with stating these results along with proofs, if needed, and then combine them to prove Theorem~\ref{theorem:appx_error}. \\

Firstly, using Theorem~\ref{theorem:appx_error} along with Lemma 5.1 from~\citet{zhou2020neuralUCB}, we can conclude that for $m \geq \text{poly}(T, L, K, \lambda^{-1}, \lambda_0^{-1}, S^{-1}, \log(1/\delta))$, there exists a $\bfW^*$ such that $h(\bfx) = \ip{\bfg(\bfx; \bfW_0)}{\bfW^* - \bfW_0}$ with probability at least $1 - \delta$ for all $\bfx \in \cZ = \{\{\bfx_{t, a}\}_{a = 1}^K \}_{t=1}^T$. Furthermore, $\|\bfW^* - \bfW_0\|_2 \leq S$. \\

Secondly, using Lemma~\ref{theorem:gradient_approx_theorem} and~\ref{theorem:ntk_derivative_approx_theorem}, along with~\eqref{eqn:gradient_def}, we can conclude that for $m \geq \text{poly}(T, L, K, \lambda^{-1}, \lambda_0^{-1}, S^{-1}, \log(1/\delta))$, $\|\bfg(\bfx; \bfW_0)\| \leq \Oc(s^L)$, independent of $m$. \\

Thirdly, we know that the spectral norm of the Hessian of the neural net is $\Oc(m^{-1/2})$~\citep[Theorem 3.2]{liu2020linearity} for all $\bfW$ such that $\|\bfW - \bfW_0\| \leq R$, where $R$ is some finite radius. Here the implied constant depends on $s, L$ and $R$. Thus, by choosing $m$ to be sufficiently large, we can conclude that for all $\|\bfW -\bfW_0\| \leq \sqrt{T/\lambda}$, $\|\tilde{\bfH}(\bfW)\| \leq C_{s,L}m^{-1/3}$. Here $\tilde{\bfH}$ denotes the Hessian and $C_{s,L}$ denotes a constant that depends only on $s$ and $L$. Using this relation, we can conclude that for any  $\|\bfW -\bfW_0\| \leq \sqrt{T/\lambda}$, we have
\begin{align*}
    |f(\bfx;\bfW) - f(\bfx;\bfW_0) - \ip{\bfg(\bfx; \bfW_0)}{\bfW - \bfW_0}| & \leq C_{s, L} m^{-1/3} T/\lambda \\
    \implies |f(\bfx;\bfW) - \ip{\bfg(\bfx; \bfW_0)}{\bfW - \bfW_0}| & \leq C_{s, L} \lambda^{-1} m^{-1/6},
\end{align*}
where the last step follows by choosing a sufficiently large $m$ and Assumption~\ref{ass1}. Moreover, the above result holds for any $\bfx \in \cZ$. \\

Lastly, using the relation $h(\bfx) = \ip{\bfg(\bfx; \bfW_0)}{\bfW^* - \bfW_0}$ along with the result from~\citet[Theorem 1]{vakili2021optimal} on the kernel defined by neural net at initialization, i.e. $\hat{k}(\bfx; \bfx') = \ip{\bfg(\bfx; \bfW_0)}{\bfg(\bfx'; \bfW_0)}$, we can conclude that
\begin{align*}
    |h(\bfx) - \bfg^{\top}(\bfx; \bfW_0)\bfV_t^{-1}\bfr| \leq \left( S + \nu \sqrt{\frac{2}{\lambda} \log(1/\delta)} \right) \sigmahat_t(\bfx).
\end{align*}
In the above equation, $\bfV_t$ and $\sigmahat_t$ are defined with respect to the dataset $\cD$ and $\bfr = \sum_{i = 1}^t y_i \bfg(\bfx_i; \bfW_0)$. Also, we have used the independence of dataset from noise sequence to invoke the above theorem. \\

Using these results we can prove our main result. For any $\bfx \in \cZ$, we have,
\begin{align*}
    |h(\bfx) - f(\bfx; \bfW_t)| & \leq |h(\bfx) - \bfg^{\top}(\bfx; \bfW_0)\bfV^{-1}\bfr| + |\bfg^{\top}(\bfx; \bfW_0)\bfV^{-1}\bfr - f(\bfx; \bfW_t)| \\
    & \leq \left( S + \nu \sqrt{\frac{2}{\lambda} \log(1/\delta)} \right) \sigmahat_t(\bfx) + |\bfg^{\top}(\bfx; \bfW_0)\bfV_t^{-1}\bfr - f(\bfx; \bfW_t)| \\
    & \leq \frac{C_{s, L}}{\lambda m^{1/6}} + \left( S + \nu \sqrt{\frac{2}{\lambda} \log(1/\delta)} \right) \sigmahat(\bfx) + |\bfg^{\top}(\bfx; \bfW_0)\bfV_t^{-1}\bfr - \ip{\bfg(\bfx; \bfW_0)}{\bfW_t - \bfW_0}| \\
    & \leq \frac{C_{s, L}}{\lambda m^{1/6}} + \left( S + \nu \sqrt{\frac{2}{\lambda} \log(1/\delta)} \right) \sigmahat(\bfx) + \|\bfW_t - \bfW_0 - \bfV_t^{-1}\bfr\| \|\bfV_t\| \sigmahat(\bfx)  \\
    & \leq \frac{C_{s, L}}{\lambda m^{1/6}} + \left( S + \nu \sqrt{\frac{2}{\lambda} \log(1/\delta)} + (1 - \eta \lambda)^{J/2}\sqrt{t/\lambda} + \frac{C_{s,L}'}{\lambda m^{1/6}} \right) (\lambda + C_{s,L}''t) \sigmahat(\bfx),
\end{align*}
as required. As before $C_{s,L}, C_{s,L}'$ and $C_{s,L}''$ represent constants that depend only on $s$ and $L$. In the last but one step, we used the fact that for any positive definite matrix $\bfA \in \R^{d \times d}$ and $\bfx, \bfy \in \R^d$, $\ip{\bfx}{\bfy} \leq (\bfx^{T} \bfA^{-1} \bfx) \cdot (\bfy^{T} \bfA \bfy) \leq (\bfx^{T} \bfA^{-1} \bfx) \|\bfA\|_2 \|\bfy\|_2$. And in the last step, we used the results from~\citet[Lemmas B.3,B.5,C.2]{zhou2020neuralUCB} along with the bound on gradient established earlier.

\section{Additional Details on NeuralGCB}
\label{sec:thm_3_proof}

We first provide all the pseudo codes for NeuralGCB followed by proof of Theorem~\ref{theorem:regret_guarantees}.

\subsection{Pseudo Codes}

\begin{algorithm}
	\caption{NeuralGCB}
	\label{alg:NeuralGCB}
	\begin{algorithmic}[1]
		\STATE \textbf{Require}: Time horizon $T$, maximum initial variance $\sigma_0$, error probability $\delta$
		\STATE \textbf{Initialize}: $R \gets \lceil \log_2 T \rceil$, Ensemble of $R$ Neural Nets with $\bfW_0^{(r)} = \bfW_0$, $\Psi_{0}^{(r)} \leftarrow \emptyset, \ \  \forall \ r \in [R]$, $\cH \leftarrow \emptyset$, arrays \texttt{ctr}, \texttt{max\_mu}, \texttt{fb} of size $R$ with all elements set to $0$, batch sizes $\{q_r\}_{r = 1}^R$
		\FOR{$t = 1,2,3, \dots T$}
		\STATE $r \leftarrow 1, \hat{A}_r(t) = [K]$
		\WHILE{True}
		\STATE Receive the context-action pairs $\{\bfx_{t, a}\}_{a = 1}^K$
		\STATE $\{f(\bfx_{t,a}; \bfW_t^{(r)}), \sigmahat_{t-1}^{(r)}(\bfx_{t,a})\}_{a =1}^{K}, \bfW_{t}^{(r)}, \texttt{fb}[r] \leftarrow$ GetPredictions$\left(\cH, \Psi_{t-1}^{(r)}, \{\bfx_{t, a}\}_{a = 1}^K, \texttt{fb}[r], \bfW_{t-1}^{(r)}, q_r\right)$ 
		\STATE $\tilde{\sigma}_{t-1}^{(r)} \leftarrow \max_{a \in \hat{A}_r(t)} \sigmahat_{t-1}^{(r)}(x_{t, a})$, \quad $\texttt{max\_mu}[r] \leftarrow \argmax_{a \in \hat{A}_r(t)} f(x_{t, a}; \bfW_{t-1}^{(r)})$
		\IF{$\tilde{\sigma}_{t-1}^{(r)} \leq \sigma_0 2^{-r}$}
		\STATE $a_{\text{UCB}} \leftarrow \argmax_{a \in \hat{A}_r(t)} \text{UCB}_{t-1}^{(r)}(\bfx_{t,a})  $
		\IF{${\sigma}_{t-1}^{(r)}(x_{t, a_{\text{UCB}}}) \leq \eta_0/\sqrt{t}$}
		\STATE Choose $a_t \gets a_{\text{UCB}}$ and set $\upsilon_t \gets 1$
		\STATE Receive $y_t = h(\bfx_{t, a_t}) + \xi_t$ and update $\cH \leftarrow \cH \cup \{(x_{t, a_t}, y_t)\}$
		\STATE Set $\Psi_{t}^{(r+1)} \leftarrow \Psi_{t-1}^{(r+1)} \cup \{(t, \upsilon_t)\}$ and $\Psi_{t}^{(r')} \leftarrow \Psi_{t-1}^{(r')} $ for all $r' \in [R]\setminus\{r+1\}$
		\STATE \textbf{break}
		\ELSE
		\STATE $\hat{A}_{r+1}(t) \leftarrow \{a \in \hat{A}_{r}(t) : \text{UCB}_{t-1}^{(r)}(\bfx_{t,a})\geq \max_{a' \in \hat{A}_r(t)} \text{LCB}_{t-1}^{(r)}(\bfx_{t,a'})\}$, $r \leftarrow r + 1$
% 		\STATE 
		\ENDIF
		\ELSE
		\IF{$r = 1$ \algorithmicor $\ \texttt{ctr}[r] > \alpha_0 4^r$}
		\STATE Choose any $a_t \in \hat{A}_r(t)$ such that $\sigmahat_{t-1}^{(r)}(x_{t, a}) > \sigma_0 2^{-r}$ and set $\upsilon_t \gets 2$
		\ELSE
		\STATE Choose $a_t \gets \texttt{max\_mu}[r-1]$ and set $\upsilon_t \gets 3$
		\ENDIF
		\STATE Receive $y_t = h(\bfx_{t, a_t}) + \xi_t$ and update $\cH \leftarrow \cH \cup \{(x_{t, a_t}, y_t)\}$, $\texttt{ctr}[r] \leftarrow \texttt{ctr}[r] + 1$
		\STATE Set $\Psi_{t}^{(r)} \leftarrow \Psi_{t-1}^{(r)} \cup \{(t, \upsilon_t)\}$ and $\Psi_{t}^{(r')} \leftarrow \Psi_{t-1}^{(r')} $ for all $r' \in [R]\setminus\{r\}$
		\STATE \textbf{break}
		\ENDIF
		\ENDWHILE
		\ENDFOR
	\end{algorithmic}
\end{algorithm}

In Alg.~\ref{alg:NeuralGCB}, $\text{UCB}_{t}^{(r)}$ and $\text{LCB}_t^{(r)}$ refer to the upper and lower confidence scores respectively at time $t$ corresponding to index $r$ and are defined as $\text{UCB}_{t}^{(r)}(\cdot) = f(\cdot; \bfW_{t}^{(r)}) + \beta_t \sigmahat_{t}^{(r)}(\cdot)$ and $\text{LCB}_{t}^{(r)}(\cdot) = f(\cdot; \bfW_{t}^{(r)}) - \beta_t \sigmahat_{t}^{(r)}(\cdot)$. The arrays \texttt{ctr}, \texttt{max\_mu} and \texttt{fb} are used to store the exploitation count, maximizer of the neural net output and feedback time instants respectively. The exploitation count is tracked to ensure that the exploitation budget is not exceeded. Similarly, the maximizer of the neural net output is stored to guide the algorithm in exploitation at the next level and the feedback time instants are used to keep track of last time instant when the neural net was retrained, or equivalently obtained a feedback. Lastly, $\upsilon_t$'s provide analytical and notational convenience and are not crucial to the working of the algorithm. \\

GetPredictions is a local routine that calculates the predictive mean and variance after appropriately retraining the neural net and described in Alg.~\ref{alg:GetPredictions} and used as a sub-routine in NeuralGCB. We would like to emphasize that NeuralGCB computes only the mean with the trained parameters. The predictive variance is computed using the gradients at initialization. This is made possible by having an additional ad hoc neural net which is just use to compute the gradients for calculating the variance. This is similar to the setup used in~\citep{kassraie2021neural}. \\

After computing the predictive variance for the set of current actions, the GetPredictions routine decides whether the neural net needs to be retrained based on the previous feedback instant $t_{\texttt{fb}}$ and batch parameter $q$. In particular, we consider two training schemes, fixed and adaptive. The fixed frequency scheme retrains the neural net if there are an additional of $q$ points in $\Psi$ after $t_{\texttt{fb}}$. Consequently, the neural corresponding to the $r^{\text{th}}$ subset in the partition is retrained after every $q_r$ points are added to the subset. On the other hand, in the adaptive scheme, inspired by the rarely switching bandits~\cite{abbasi2011improved, Wang2021}, the neural net is retrained whenever $\det(\bfV) > q \det(\bfV_{\texttt{fb}})$ where $\bfV$ and $\bfV_{\texttt{fb}}$ are as defined in line 2 and 3 of Alg.~\ref{alg:GetPredictions}. Since $\log(\det(\bfV)/\det(\lambda \Ii))$ is a measure of information gain, the neural net is effectively retrained only once sufficient additional information has been obtained since the last time the neural net was trained. We would like to point out that in the main text, we only refer to the fixed training scheme due to space constraints. However, our NeuralGCB works even with the adaptive scheme and the regret guarantees under this training schedule are formalized in Appendix~\ref{sec:thm_3_proof}. \\

Lastly, TrainNN is another local routine that carries out the training of neural net for a given dataset using gradient descent for $J$ epochs with a step size of $\eta$ starting with $\bfW_0$.

\begin{algorithm}
	\caption{GetPredictions}
	\label{alg:GetPredictions}
	\begin{algorithmic}[1]
		\STATE \textbf{Input}: Set of past actions and their corresponding rewards $\cH$, Index set $\Psi$, Current set of actions $\{\bfx_{t, a}\}_{a = 1}^{K}$, Previous feedback instant $t_{\texttt{fb}}$, $\bfW_{t_{\texttt{fb}}}$, Batch choice parameter $q$
		\STATE $\bfZ \leftarrow \lambda \Ii + \frac{1}{m}\sum_{i \in \Psi} \bfg(\bfx_i;\bfW_0)\bfg(\bfx_i;\bfW_0)^{\top}$
		\STATE $\bfZ_{\texttt{fb}} \leftarrow \lambda \Ii + \frac{1}{m}\sum_{i \in \Psi, i \leq t_{\texttt{fb}}} \bfg(\bfx_i;\bfW_0)\bfg(\bfx_i;\bfW_0)^{\top}$
		\STATE Set $\sigma^2(\bfx_{t,a}) \leftarrow \bfg(\bfx_{t, a};\bfW_0)^{\top}\bfZ^{-1}\bfg(\bfx_{t,a};\bfW_0)/m$ for all $a \in [K]$
		\STATE \textbf{For fixed batch size:} $\texttt{to\_train} = (|\Psi| - t_{\texttt{fb}} == q)$
		\STATE \textbf{For adaptive batch size:} $\texttt{to\_train} = (\det(\bfZ) > q \det(\bfZ_{\texttt{fb}}))$
		\IF{\texttt{to\_train} }
		\STATE $\bfW \leftarrow$ TrainNN$(m, L, J, \eta, \lambda, \bfW_0, \{(x_r, y_r) \in \cH : r\in \Psi\})$, $t_{\texttt{fb}} \leftarrow |\Psi|$
		\ELSE
		\STATE $\bfW \leftarrow \bfW_{t_{\texttt{fb}}}$
		\ENDIF
		\RETURN $\{f(\bfx_{t,a}; \bfW), \sigma(\bfx_{t,a})\}_{a =1}^{K}, \bfW, t_{\texttt{fb}}$.
	\end{algorithmic}
\end{algorithm}

\begin{algorithm}
	\caption{TrainNN($m, L, J, \eta, \lambda, \bfW_0, \{(\bfx_i, y_i)\}_{i = 1}^n$)}
	\label{alg:TrainNN}
	\begin{algorithmic}[1]
% 		\STATE \textbf{Input}: Set of actions and their corresponding rewards $\{(\bfx_i, y_i)\}_{i =1}^{n}$
		\STATE Define $\mathcal{L}(\bfW) = \sum_{i = 1}^n(f(\bfx_i; \bfW) - y_i)^2 + m\lambda \|\bfW - \bfW_0\|^2$
		\FOR{$j = 0,1,\dots, J - 1$}
		\STATE $\bfW_{j+1} = \bfW_j - \eta \nabla_{\bfW} \mathcal{L}(\bfW_j)$
		\ENDFOR
		\RETURN $\bfW_J$
	\end{algorithmic}
\end{algorithm}

\subsection{Proof of Theorem~\ref{theorem:regret_guarantees}}

To analyse the regret of NeuralGCB, we divide the time horizon into three disjoint sets depending on how the action at that time instant was chosen. In particular, for $i = \{1, 2, 3\}$, we define 
\begin{align*}
    \cT_i(s) & := \{t \in \psi_{T}^{(r)}: \upsilon_t = i\} \\
    \cT_i & := \bigcup_{r = 1}^{R} \cT_i(r).
\end{align*}
Thus, $\cT_1, \cT_2 $ and$\cT_3$ consist of all the points chosen by the chosen algorithm at line $12, 21$ and $23$ respectively. We consider the regret incurred at time instants in each of these sets separately. We begin with $\cT_1$. For any $t \in \cT_1(r)$,
\begin{align*}
    h(\bfx_{t, a^*_t}) - h(\bfx_{t, a_t})  & \leq f(\bfx_{t, a^*_t}; \bfW_{t-1}^{(r)}) + \tilde{\beta} + \beta_{t-1} \sigmahat_{t-1}^{(r-1)}(\bfx_{t, a^*_t}) - (f(\bfx_{t, a_t}; \bfW_{t-1}^{(r)}) - \tilde{\beta} - \beta_{t-1} \sigmahat_{t-1}^{(r-1)}(\bfx_{t, a_t})) \\
    & \leq f(\bfx_{t, a_t}; \bfW_{t-1}^{(r)}) + \tilde{\beta} +  \beta_{t-1} \sigmahat_{t-1}^{(r-1)}(\bfx_{t, a_t}) - f(\bfx_{t, a_t}; \bfW_{t-1}^{(r)}) + \tilde{\beta} +  \beta_{t-1}  \sigmahat_{t-1}^{(r-1)}(\bfx_{t, a_t}) \\
    & \leq 2\tilde{\beta} + 2\beta_{t-1} \sigmahat_{t-1}^{(r-1)}(\bfx_{t, a_t}) \\
    & \leq 2\tilde{\beta} + \frac{2\eta_0 \beta_{t-1}}{\sqrt{t}},
\end{align*}
where we use Theorem~\ref{theorem:conf_interval} in the first step, $\tilde{\beta} = \frac{C_{s, L}}{\lambda m^{1/6}}$ and $\beta_t = \left( S + \nu \sqrt{\frac{2}{\lambda} \log(1/\delta)} + (1 - \eta \lambda)^{J/2}\sqrt{t/\lambda} + \frac{C_{s,L}'}{\lambda m^{1/6}} \right) (\lambda + C_{s,L}''t)$. Since this is independent of $r$, this relation holds for all $t \in \cT_1$. Consequently, the regret incurred over the time instants in $\cT_1$ can be bounded as
\begin{align*}
    \sum_{t \in \cT_1} h(\bfx_{t, a^*_t}) - h(\bfx_{t, a_t}) & \leq \sum_{t \in \cT_1} \left[ 2\tilde{\beta} + \frac{2\eta_0 \beta_{t-1}}{\sqrt{t}} \right] \\
    & \leq 2\tilde{\beta} |\cT_1| +  2\eta_0 \beta_T \sqrt{T}.
\end{align*}
We now consider a time instant in $\cT_2$ or $\cT_3$. Fix any $r > 1$ and $t \in \cT_2(r) \cup \cT_3(r)$ which implies that $a_t \in \hat{A}_r(t)$. Consequently, $\bfx_{t,a_t}$ satisfies the following relation: 
\begin{align*}
    f(\bfx_{t, a_t}; \bfW_{t-1}^{(r-1)}) + \beta_{t-1} \sigmahat_{t-1}^{(r-1)}(\bfx_{t, a_t}) & \geq \max_{a' \in \hat{A}_r(t)} f(\bfx_{t, a'}; \bfW_{t-1}^{(r-1)}) - \beta_{t-1} \sigmahat_{t-1}^{(r-1)}(\bfx_{t, a'}) 
\end{align*}
Note that output of the neural net is based on the parameters evaluated during the last time the neural net was trained, which we denote by $t_{\texttt{fb}}$. In other words, $f(\bfx_{t, a_t}; \bfW_{t-1}^{(r-1)}) = f(\bfx_{t, a_t}; \bfW_{t_{\texttt{fb}}}^{(r-1)})$. However, $\sigmahat_{t-1}^{(r-1)}$ is evaluated using all the points in $\Psi_{t-1}^{(r)}$. We can account for the discrepancy using Lemma 12 from~\citet{abbasi2011improved} which states that for any two positive definite matrices $\bfA \succeq \bfB \in \R^{d \times d}$ and $\bfx \in \R^{d}$, we have $\bfx^{\top}\bfA \bfx \leq \bfx^{\top}\bfB \bfx \cdot \sqrt{\det(\bfA)/\det(\bfB)}$. Using this result along with noting that $\sigmahat_{t-1}^{(r-1)}(\bfx_{t, a}) = \bfg^{\top}(\bfx_{t, a}; \bfW_0) \bfV_{t-1}^{-1}\bfg(\bfx_{t, a}; \bfW_0)$, $\sigmahat_{t_{\texttt{fb}}}^{(r-1)}(\bfx_{t, a}) = \bfg^{\top}(\bfx_{t, a}; \bfW_0) \bfV_{t_{\texttt{fb}}}^{-1}\bfg(\bfx_{t, a}; \bfW_0)$ and $\bfV_{t-1} \succeq \bfV_{\texttt{fb}}$, we can conclude that $\sigmahat_{t_{\texttt{fb}}}^{(r-1)}(\bfx_{t, a}) \leq \sigmahat_{t-1}^{(r-1)}(\bfx_{t, a}) \cdot \sqrt{\det(\bfV_{t-1})/\det(\bfV_{\texttt{fb}})}$. \\

For any batch such that $\det(\bfV_t)/\det(\bfV_{\texttt{fb}}) \leq 4$, we have, $\sigmahat_{t_{\texttt{fb}}}^{(r-1)}(\bfx_{t, a}) \leq 2\sigmahat_{t-1}^{(r-1)}(\bfx_{t, a})$ for all $a \in [K]$. Consequently, $|h(\bfx_{t, a}) - f(\bfx_{t, a_t}; \bfW_{t_{\texttt{fb}}}^{(r-1)})| \leq \tilde{\beta} + \beta_{t_{\texttt{fb}}}\sigmahat_{t_{\texttt{fb}}}^{(r-1)}(\bfx_{t, a}) \leq \tilde{\beta} + 2\beta_{t-1}\sigmahat_{t-1}^{(r-1)}(\bfx_{t, a})$. Thus, for any such batch and a time instant $t \in \cT_2(r)$, using Theorem~\ref{theorem:conf_interval} we have,
\begin{align*}
    f(\bfx_{t, a_t}; \bfW_{t-1}^{(r-1)}) + \beta_{t-1} \sigmahat_{t-1}^{(r-1)}(\bfx_{t, a_t}) & \geq \max_{a' \in \hat{A}_r(t)} f(\bfx_{t, a'}; \bfW_{t-1}^{(r-1)}) - \beta_{t-1} \sigmahat_{t-1}^{(r-1)}(\bfx_{t, a'}) \\
    \implies f(\bfx_{t, a_t}; \bfW_{t_{\texttt{fb}}}^{(r-1)}) + \beta_{t-1} \sigmahat_{t-1}^{(r-1)}(\bfx_{t, a_t}) & \geq f(\bfx_{t, a^*_t}; \bfW_{t_{\texttt{fb}}}^{(r-1)}) - \beta_{t-1} \sigmahat_{t-1}^{(r-1)}(\bfx_{t, a^*})  \\
    \implies h(\bfx_{t, a_t}) + 3 \beta_{t-1} \sigmahat_{t-1}^{(r-1)}(\bfx_{t, a_t}) + \tilde{\beta} & \geq h(\bfx_{t, a^*}) - 3\beta_{t-1}\sigmahat_{t-1}^{(r-1)}(x_{t, a^*}) - \tilde{\beta}.
\end{align*}
Since $\hat{A}_r(t) \subseteq \hat{A}_{r-1}(t)$, $\sigmahat_{t-1}^{(r-1)}(\bfx_{t, a}) \leq \tilde{\sigma}_{t-1}^{(r-1)} \leq \sigma_0 2^{1-r}$ for all $a \in \hat{A}_r(t)$. Thus,
\begin{align*}
    h(\bfx_{t, a^*_t}) - h(\bfx_{t, a_t})  & \leq 3 \beta_{t-1} \sigmahat_{t-1}^{(r-1)}(\bfx_{t, a}) + 3\beta_{t-1} \sigmahat_{t-1}^{(r-1)}(\bfx_{t, a^*_t}) + 2\tilde{\beta} \leq 12 \sigma_0 \beta_{t-1} \cdot 2^{-r} + 2 \tilde{\beta}.
\end{align*}
Similarly, for a time instant $t \in \cT_3(r)$, we have,
\begin{align*}
    h(\bfx_{t, a^*_t}) - h(\bfx_{t, a_t})  & \leq  f(\bfx_{t, a^*_t}; \bfW_{t_{\texttt{fb}}}^{(r-1)}) + \tilde{\beta} + 2\beta_{t-1} \sigmahat_{t-1}^{(r-1)}(\bfx_{t, a^*_t}) - f(\bfx_{t, a_t}; \bfW_{t_{\texttt{fb}}}^{(r-1)}) +  \tilde{\beta} + 2\beta_{t-1} \sigmahat_{t-1}^{(r-1)}(\bfx_{t, a_t}) \\
    & \leq  3\beta_{t-1} \sigmahat_{t-1}^{(r-1)}(\bfx_{t, a^*_t}) + 3 \beta_{t-1} \sigmahat_{t-1}^{(r-1)}(\bfx_{t, a}) + 2\tilde{\beta} \leq 12 \sigma_0 \beta_{t-1} \cdot 2^{-r} + 2 \tilde{\beta} \\
    & \leq 12 \sigma_0 \beta_{t-1} \cdot 2^{-r} + 2 \tilde{\beta},
\end{align*}
where the first step uses Theorem~\ref{theorem:conf_interval} and the last step uses the fact that $\bfx_{t, a_t} \in \hat{A}_{r-1}(t)$. For any $t \in \cT_2(1) \cup \cT_3(1)$, we use the trivial bound $h(\bfx_{t, a^*}) - h(\bfx_{t, a_t}) \leq 2$. \\

We now bound $|\cT_2(r)|$ using similar techniques outlined in~\citet{valko2013finite, auer2002using}. Fix any $r \geq 1$. Using the fact that for all $t \in \cT_2(r)$, $\sigmahat_{t-1}^{(r-1)}(\bfx_{t, a_t}) \geq \sigma_0 2^{-r}$ and the bound on the sum of posterior standard deviations from~\citet[Lemma 4]{Chowdhury2017bandit}, we can write
\begin{align*}
    \sigma_0 2^{-r} |\cT_2(r)| & \leq \sum_{t \in T_2(r)} \sigmahat_{t-1}^{(r-1)}(\bfx_{t, a_t}) \leq \sqrt{8 |\cT_2(r)| (\lambda \Gamma_k(T) + 1)} \\
    \implies |\cT_2(r)| & \leq 2^{r} \sqrt{8\sigma_0^{-2} |\cT_2(r)| (\lambda \Gamma_k(T) + 1)}
\end{align*}
We also used the fact that the information gain of the kernel induced by the finite neural net is close to that of NTK for sufficiently large $m$~\citep[Lemma D.5]{kassraie2021neural}. This also provides a guideline for setting the length of the exploration sequence. Specifically, we can set $\alpha_0$ such that $|\cT_3(r)|$ also satisfies $|\cT_3(r)| \leq 2^{r} \sqrt{8\sigma_0^{-2} |\cT_3(r)| (\lambda \Gamma_k(T) + 1)}$, or equivalently, $\alpha_0 = \Oc(\Gamma_k(T))$.  In general, we have $|\cT_3(r)| \leq \alpha_0 4^r \implies |\cT_3(r)| \leq 2^r \sqrt{\alpha_0 |\cT_3(r)|}$. \\

We can use these conditions to bound the regret incurred for $t \in \cT_2'$, the subset of $\cT_2$ that consists only of batches that satisfy the determinant condition. We have,
\begin{align*}
    \sum_{t \in \cT_2'} h(\bfx_{t, a^*_t}) - h(\bfx_{t, a_t}) & \leq \sum_{r = 1}^R \sum_{t \in \cT_2'(r)}h(\bfx_{t, a^*_t}) - h(\bfx_{t, a_t}) \\
    & \leq |\cT_2(1)| + \sum_{r = 2}^R \sum_{t \in \cT_2(r)} \left[12 \sigma_0 \beta_{t-1} \cdot 2^{-r} + 2 \tilde{\beta}\right] \\
    & \leq |\cT_2(1)| + \sum_{r = 2}^R \left[12 \sigma_0 \beta_{t-1} \cdot 2^{-r} + 2 \tilde{\beta}\right] |\cT_2(r)| \\
    & \leq |\cT_2(1)| + 2 \tilde{\beta}|\cT_2| + \sum_{r = 2}^R (12 \sigma_0 \beta_{t-1} \cdot 2^{-r}) \cdot 2^{r} \sqrt{8\sigma_0^{-2} |\cT_2(r)| (\lambda \Gamma_k(T) + 1)} \\
    & \leq |\cT_2(1)| + 2 \tilde{\beta}|\cT_2| + 12 \beta_{T}  \sqrt{8 |\cT_2| R (\lambda \Gamma_k(T) + 1)},
\end{align*}
where we used Cauchy-Schwarz inequality in the last step. Using a similar analysis, we can bound the regret incurred for $t \in \cT_3'$, which is defined similarly to $\cT_2'$, to obtain,
\begin{align*}
    \sum_{t \in \cT_3'} h(\bfx_{t, a^*_t}) - h(\bfx_{t, a_t}) & \leq |\cT_3(1)| + 2 \tilde{\beta}|\cT_3| + 12 \beta_{T}  \sqrt{8 |\cT_3| R \alpha_0}.
\end{align*}

Now, for batches for which the determinant condition is not satisfied, the regret incurred in any batch corresponding to subset index $r$ can be trivially bounded as $q_r$. We show that the number of such batches is bounded by $\Oc(\Gamma_k(T))$. Fix a $r \geq 1$. Let there be $B_r$ such batches for which the determinant condition is not satisfied and $\cT'(r)$ denote the set of time instants at which such batches begin. Then,
\begin{align*}
    4^{B_r} \leq \prod_{t \in \cT'(r)} \frac{\det(\bfV_{t + q_r})}{\det(\bfV_t)} \leq \frac{\det(\bfV_{T+1})}{\det(\lambda \Ii)}.
\end{align*}
Since $\log\left( \det(\bfV_{T+1})/\det(\lambda \Ii) \right)$ is $\Oc(\Gamma_k(T))$, we can conclude that $B_r$ is also $\Oc(\Gamma_k(T))$. Thus, regret incurred in all such batches can be bounded as $\Oc(\max_{r} q_r \Gamma_k(T) \log T)$ as there are $R = \log T$ batches. \\

We can combine the two cases to obtain the regret incurred over $\cT_2$ and $\cT_3$ as
\begin{align*}
    \sum_{t \in \cT_2} h(\bfx_{t, a^*_t}) - h(\bfx_{t, a_t}) & \leq |\cT_2(1)| + 2 \tilde{\beta}|\cT_2| + 12 \beta_{T}  \sqrt{8 |\cT_2| R (\lambda \Gamma_k(T) + 1)} + \Oc(\max_{r} q_r \Gamma_k(T) \log T) \\
    \sum_{t \in \cT_3} h(\bfx_{t, a^*_t}) - h(\bfx_{t, a_t}) & \leq |\cT_3(1)| + 2 \tilde{\beta}|\cT_3| + 12 \beta_{T}  \sqrt{8 |\cT_3| R \alpha_0} + \Oc(\max_{r} q_r \Gamma_k(T) \log T).
\end{align*}

The overall regret is now obtained by adding the regret incurred over $\cT_1$, $\cT_2$ and $\cT_3$. We have,
\begin{align*}
    R(T) & = \sum_{t = 1}^T h(\bfx_{t, a^*}) - h(\bfx_{t, a_t}) \\
    & = \sum_{t \in \cT_1} h(\bfx_{t, a^*}) - h(\bfx_{t, a_t}) + \sum_{t \in \cT_2} h(\bfx_{t, a^*}) - h(\bfx_{t, a_t}) + \sum_{t \in \cT_3} h(\bfx_{t, a^*}) - h(\bfx_{t, a_t}) \\ 
    & \leq 2\tilde{\beta} (|\cT_1| + |\cT_2| + |\cT_3|) + |\cT_2(1)| + |\cT_3(1)| +  2\eta_0 \beta_T \sqrt{T} + 12 \beta_{T}  \sqrt{8 |\cT_3| R \alpha_0} \\
    & \ \ \ \ \ + 12 \beta_{T}  \sqrt{8 |\cT_2| R (\lambda \Gamma_k(T) + 1)} + \Oc(\max_{r} q_r \Gamma_k(T) \log T) \\
    & \leq 2\tilde{\beta} T +  2\eta_0 \beta_T \sqrt{T} + 12 \beta_{T}  \sqrt{8 T  (\lambda \Gamma_k(T) + 1 + \alpha_0)} + \Oc(\max_{r} q_r \Gamma_k(T) \log T),
\end{align*}
where we again use the Cauchy Schwarz inequality in the last step along with the fact that $|\cT_2(1)|$ and $|\cT_3(1)|$ satisfy $\Oc(\Gamma_k(T))$. Since $m \geq \mathrm{poly}(T)$, the first term $\tilde{\beta}{T}$ is $\Oc(1)$ and using the values of $J$ and $\eta$ specified in Assumption~\ref{ass2} along with the bound on $m$, we have $\beta_T \leq \beta \leq 2S + \nu\sqrt{2\lambda^{-1} \log(1/\delta)}$. Consequently, the regret incurred by NeuralGCB satisfies $\Oc(\sqrt{T \Gamma_k(T) \log(1/\delta)} + \sqrt{T \Gamma_k(T)} + \sqrt{T \log(1/\delta)} + \max_{r} q_r \Gamma_k(T) \log T)$. \\

The above analysis carries through almost as is for the case of adaptive batch sizes. Since the batches always satisfy the determinant condition with $q_r$ instead of $4$, we do not need to separately consider the case where the determinant condition is not met. Using the same steps as above, we can conclude that the regret incurred by NeuralGCB when run with adaptive step sizes is $\Oc(\sqrt{T \Gamma_k(T) \log(1/\delta)} \cdot \max_{r} \sqrt{q_r})$. Thus for the adaptive case, we incur an additional multiplicative factor. This can be explained by the fact that in the adaptive batch setting, the number of times the neural nets are retrained is $\Oc(\Gamma_k(T))$. However, in the fixed batch setting, even if we use the optimal batch size of $\sqrt{T/\Gamma_k(T)}$, we end up retraining the neural nets about $\Oc(\sqrt{T \Gamma_k(T)})$ times, which is more than in the case of the adaptive batch setting. The more frequent of retraining of neural nets helps us achieve a tighter regret bound in the case of fixed batch setting.

\section{Empirical Studies}
\label{sec:expts_appendix}

In this section, we provide further details of the setup used during our experiments. For completeness, we first restate the construction and preprocessing of the datasets followed by the experimental setup. We then provide the complete array of results for the all the algorithm on different datasets.

\subsection{Datasets}

For each of the synthetic datasets, we construct a contextual bandit problem with a feature dimension of $d = 10$ and $K = 4$ actions per context running over a time horizon of $T = 2000$ rounds. The set of context vectors $\{\{\bfx_{t, a}\}_{a = 1}^K\}_{t = 1}^T$ are drawn uniformly from the unit sphere. Similar to~\citet{zhou2020neuralUCB}, we consider the following three reward functions:
\begin{align}
    h_1(\bfx) = 4 |\bfa^{\top}\bfx|^2; \quad \ h_2(\bfx) = 4 \sin^2(\bfa^{\top} \bfx); \quad \ h_3(\bfx) = \|\bfA \bfx\|_2.
\end{align}
For the above functions, the vector $\bfa$ is drawn uniformly from the unit sphere and each entry of matrix $\bfA$ is randomly generated from $\cN(0, 0.25)$.

We also consider two real datasets for classification namely Mushroom and Statlog (Shuttle), both of which are available on the UCI repository~\citep{UCI}. 
\paragraph{Mushroom:} The Mushroom dataset consists of $8124$ samples with $22$ features where each data point is labelled as an edible or poisonous mushroom, resulting in a binary classification problem. For the purpose of the experiments, we carry out some basic preprocessing on this dataset. We drop the attributed labeled ``veil-type" as it is the same for all the datapoints resulting in a $d=21$ dimensional feature vector. Furthermore, we randomly select $1000$ points from each class to create a smaller dataset of size $2000$ to run the experiements. \\
\paragraph{Statlog:} The Statlog (Shuttle) dataset consists of $58000$ entries with $d = 7$ attributes (we do not consider time as an attribute) each. The original dataset is divided into $7$ classes. Since the data is skewed, we convert this into a binary classification problem by combining classes. In particular, we combine the five smallest classes, namely, $2,3,5,6,7$ and combine them to form a single class and drop class $4$ resulting in a binary classification problem, with points labelled as $1$ forming the first class and ones with labels in $\{2,3,5,6,7\}$ forming the second. Once again, we select a subset of $2000$ points by randomly choosing $1000$ points from each class to use for the experiments. Lastly, we normalize the features of this dataset. \\

The classification problem is then converted into a contextual bandit problem using techniques outlined in~\cite{Li2010contextualclassification}, which is also adopted in~\citet{zhou2020neuralUCB} and~\cite{gu2021batched}.
Specifically, each datapoint in the dataset $(\bfx, y) \in \R^d \times \R$ is transformed into $K$ vectors of the form $\ec{\bfx}{1} = (\bfx, \mathbf{0}, \dots, \mathbf{0}) \dots, \ec{\bfx}{K} = (\mathbf{0}, \dots, \mathbf{0}, \bfx) \in \R^{Kd} $ corresponding to the $K$ actions associated with context $\bfx$. Here $K$ denotes the number of classes in the original classification problem. The reward function is set to $h(\ec{\bfx}{k}) = \1\{y = k\}$, that is, the agent receives a reward of $1$ if they classify the context correctly and $0$ otherwise. As mentioned above, we have $d = 21$ for Mushroom dataset and $d = 7$ for the Shuttle dataset and $K = 2$ for both of them.

\subsection{Experimental Setting}

For all the experiments, the rewards are generated by adding zero mean Gaussian noise with a standard deviation of $0.1$ to the reward function. All the experiments are run for a time horizon of $T = 2000$. We report the regret averaged over $10$ Monte Carlo runs with different random seeds along with an error bar of one standard deviation on either side. For all the datasets, we generate new rewards for each Monte Carlo run by changing the noise. Furthermore, for the real datasets, we also shuffle the context vectors for each Monte Carlo run. \\

\paragraph{Setting the hyperparameters:} For all the algorithms, the parameter $\nu$, the standard deviation proxy for the sub-Gaussian noise, to $0.1$, the standard deviation of the Gaussian noise added. Also, the RKHS norm of the reward, $S$ to $4$ for synthetic functions, and $1$ for real datasets in all the algorithms. For all the experiments, we perform a grid search for $\lambda$ and $\eta$ over $\{0.05, 0.1, 0.5\}$ and $\{0.001, 0.01, 0.1\}$, respectively, for each algorithm and choose the best ones for the corresponding algorithm and reward function. The exploration parameter $\beta_t$ is set to the value prescribed by each algorithm. For NeuralUCB and NeuralTS, the prescribed value for the exploration parameter in the original work was given as:
\begin{align*}
    \beta_t & = \beta_0\left( \nu \sqrt{\log \left(\frac{\det(\bfV_{t})}{\det(\lambda \Ii)}\right) + C_2 m^{-1/6}\sqrt{\log m} L^4t^{5/3}\lambda^{-1/6}  + 2\log(1/\delta)} + S \right) \\
    & + (\lambda + C_3 tL)\left[(1 - \eta m \lambda)^{J/2}\sqrt{t/\lambda} + m^{-1/6}\sqrt{\log m}L^{7/2}t^{5/3}\lambda^{-5/3}(1 + \sqrt{t/\lambda}) \right],
\end{align*}
where $\beta_0 = \sqrt{1 + C_1m^{-1/6}\sqrt{\log m}L^4t^{7/6}\lambda^{-7/6}}$. We ignore the smaller order terms and set $\beta_t = 2S + \nu \sqrt{\log \left(\frac{\det(\bfV_{t})}{\det(\lambda \Ii)} \right) + 2\log(1/\delta)} $ as $(\lambda + C_3 tL)(1 - \eta m \lambda)^{J/2}\sqrt{t/\lambda} \leq S$ for given choice of parameters as shown in~\citet{zhou2020neuralUCB}. For NeuralGCB and SupNNUCB, we apply the same process of ignoring the smaller order terms and bounding the additional constant term by $S$ to obtain  $\beta_t = 2S + \nu\sqrt{\frac{2}{\lambda} \log(1/\delta)}$, which is used in the algorithms. Also for NeuralGCB, $\eta_0$ was set to $0.2$ and $\alpha_0$ to $20 \log T$, as suggested in Sec.~\ref{sec:thm_3_proof}, for all experiments. The number of epochs is set to $200$ for synthetic datasets and Mushroom, and to $400$ for Statlog.  \\

\paragraph{Neural Net architecture:} We consider a 2 layered neural net as described in Equation~\eqref{eq:nn} for all the experiments. We perform two different sets of experiments with two different activation functions,  namely $\sigma_1$ (or equivalently, ReLU) and $\sigma_2$. For the experiments with $\sigma_1$ as the activation, we set the number of hidden neurons to $m = 20$ and $m=50$ for synthetic and real datasets respectively. For those with $\sigma_2$, $m$ is set to $30$ and $80$ for synthetic and real datasets, respectively. \\
  
\paragraph{Training Schedule:} For the experiments with sequential algorithms, we retrain the neural nets at every step, including NeuralGCB. For the batched algorithms, we consider a variety of training schedules. Particularly, for each setting (i.e., dataset and activation function) we consider two fixed and two adaptive batch size training schedules, which are denoted by $F1, F2$ and $A1, A2$ respectively. We specify the training schedules below for different dataset and activation functions beginning with those for $\sigma_1$ (ReLU). These were chosen to ensure that both algorithms have roughly the same running time to allow for a fair comparison.
\begin{enumerate}
    \item For Batched NeuralUCB run on a synthetic function:
    \begin{itemize}
        \item $F1$: 200 batches, or equivalently retrain after every $10$ steps.
        \item $F2$: 300 batches, or equivalently retrain after every $6-7$ steps.
        \item $A1$: $q = 2$
        \item $A2$: $q = 3$
    \end{itemize}
    \item For Batched NeuralUCB run on Mushroom:
    \begin{itemize}
        \item $F1$: 100 batches, or equivalently retrain after every $20$ steps.
        \item $F2$: 200 batches, or equivalently retrain after every $10$ steps.
        \item $A1$: $q = 500$
        \item $A2$: $q = 700$
    \end{itemize}
    \item For Batched NeuralUCB run on Shuttle:
    \begin{itemize}
        \item $F1$: 100 batches, or equivalently retrain after every $20$ steps.
        \item $F2$: 200 batches, or equivalently retrain after every $10$ steps.
        \item $A1$: $q = 5$
        \item $A2$: $q = 10$
    \end{itemize}
    \item For NeuralGCB run on a synthetic function:
    \begin{itemize}
        \item $F1$: $q_1 = 4, q_2 = 20, q_r = 40$ for $r \geq 3$ (i.e., retrain after $q_r$ steps)
        \item $F2$: $q_1 = 2, q_2 = 10, q_r = 20$ for $r \geq 3$.
        \item $A1$: $q_1 = 1.2, q_2 = 1.8, q_r = 2.7$ for $r \geq 3$.
        \item $A2$: $q_1 = 1.5, q_2 = 2.25, q_r = 3.375$ for $r \geq 3$.
    \end{itemize}
    \item For NeuralGCB run on Mushroom:
    \begin{itemize}
        \item $F1$: $q_1 = 20, q_2 = 40, q_r = 80$ for $r \geq 3$ (i.e., retrain after $q_r$ steps)
        \item $F2$: $q_1 = 10, q_2 = 20, q_r = 40$ for $r \geq 3$.
        \item $A1$: $q_1 = 300, q_2 = 1200, q_r = 4800$ for $r \geq 3$.
        \item $A2$: $q_1 = 500, q_2 = 2000, q_r = 8000$ for $r \geq 3$.
    \end{itemize}
    \item For NeuralGCB run on Shuttle:
    \begin{itemize}
        \item $F1$: $q_1 = 20, q_2 = 40, q_r = 80$ for $r \geq 3$ (i.e., retrain after $q_r$ steps)
        \item $F2$: $q_1 = 10, q_2 = 20, q_r = 40$ for $r \geq 3$.
        \item $A1$: $q_1 = 5, q_2 = 12.5, q_r = 31.25$ for $r \geq 3$.
        \item $A2$: $q_1 = 10, q_2 = 25, q_r = 62.5$ for $r \geq 3$.
    \end{itemize}
\end{enumerate}

\begin{figure*}[t]
\centering
\subfloat[$h_1(x)$ with $\sigma_1(x)$]{\label{fig:ip_s1_appendix}\centering \includegraphics[scale = 0.23]{plots/inner_product_squared_s1.pdf}}
~
\subfloat[$h_1(x)$ with $\sigma_2(x)$]{\label{fig:ip_s2_appendix}\centering \includegraphics[scale = 0.23]{plots/inner_product_squared_s2.pdf}}
~
\subfloat[$h_1(x)$ with fixed batch and $\sigma_1(x)$]{\label{fig:ip_s1_fixed_batch} \centering \includegraphics[scale = 0.23]{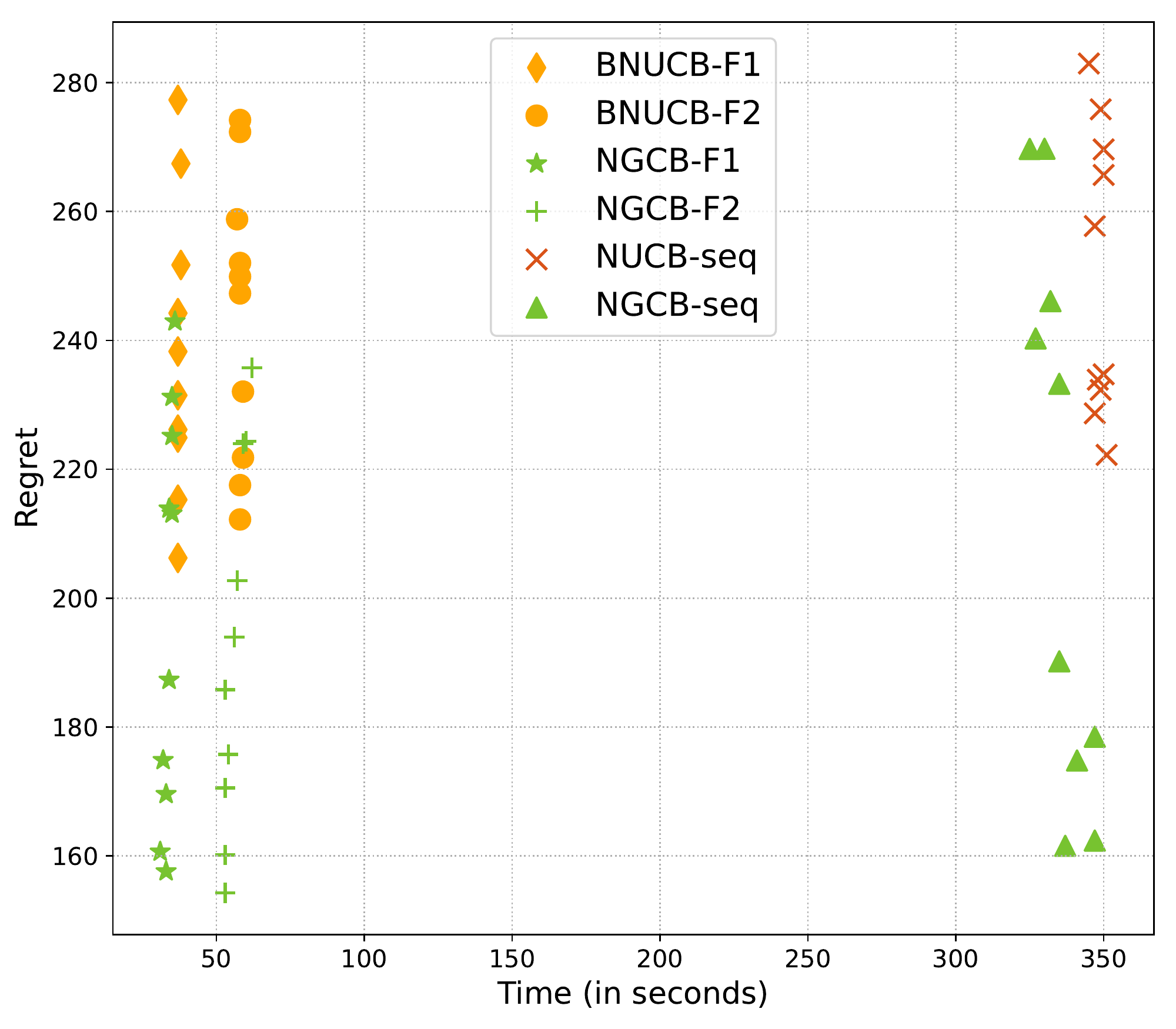}}

\subfloat[$h_1(x)$ with adaptive batch and $\sigma_1(x)$]{\label{fig:ip_s1_adaptive_batch}\centering \includegraphics[scale = 0.23]{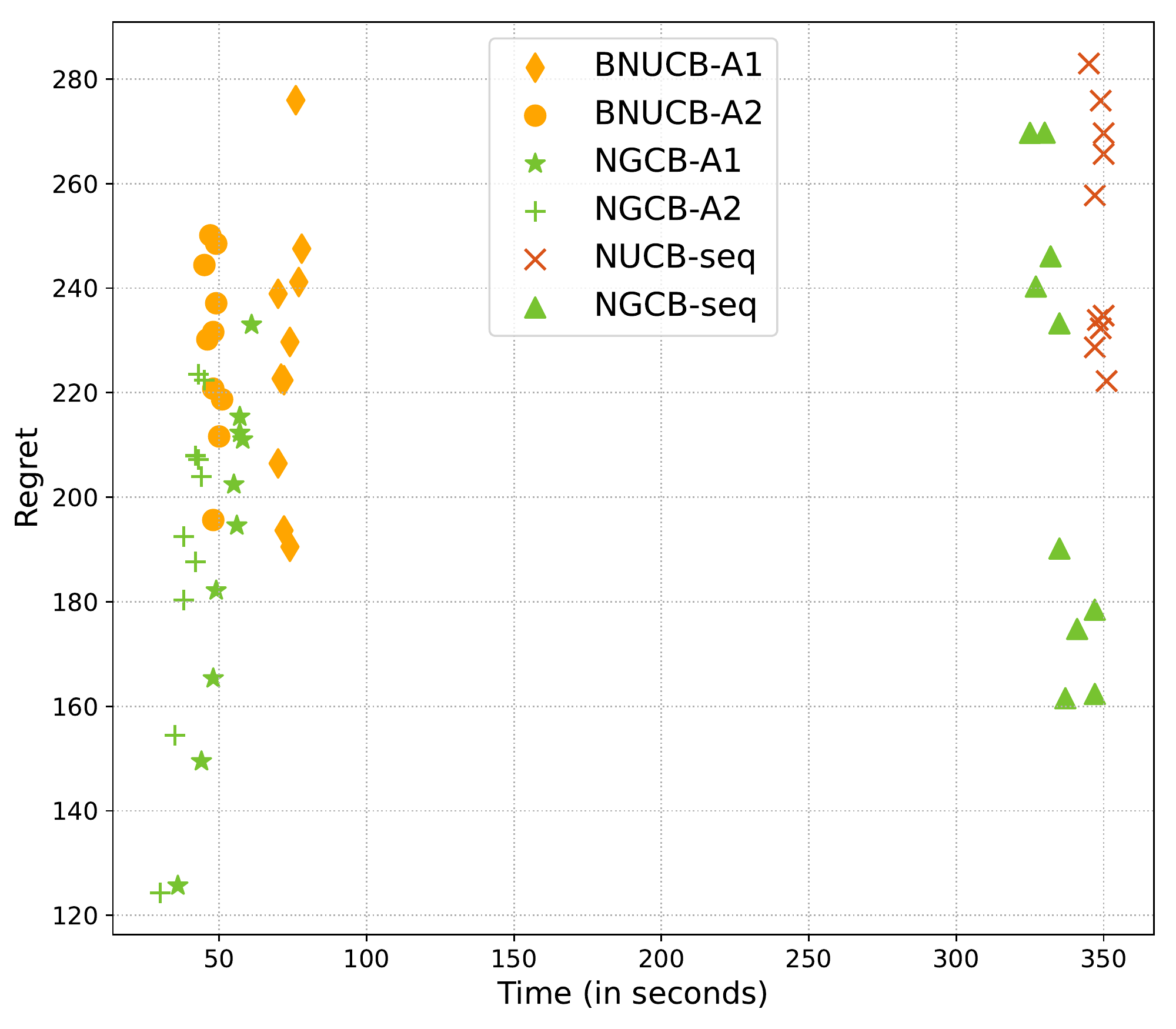}}
~
\subfloat[$h_1(x)$ with fixed batch and $\sigma_2(x)$]{\label{fig:ip_s2_fixed_batch}\centering \includegraphics[scale = 0.23]{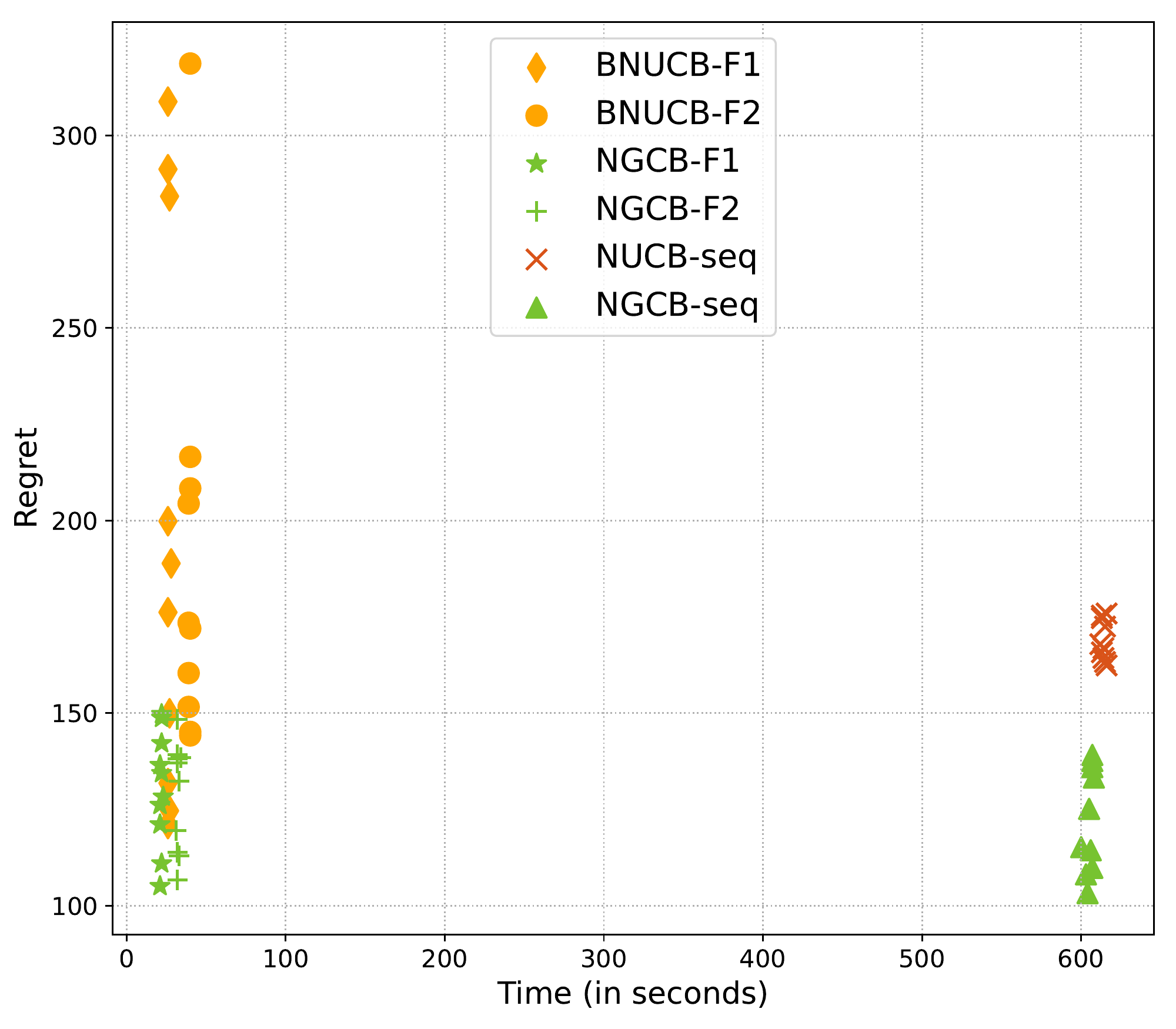}}
~
\subfloat[$h_1(x)$ with adaptive batch and $\sigma_2(x)$]{\label{fig:ip_s2_adaptive_batch} \centering \includegraphics[scale = 0.23]{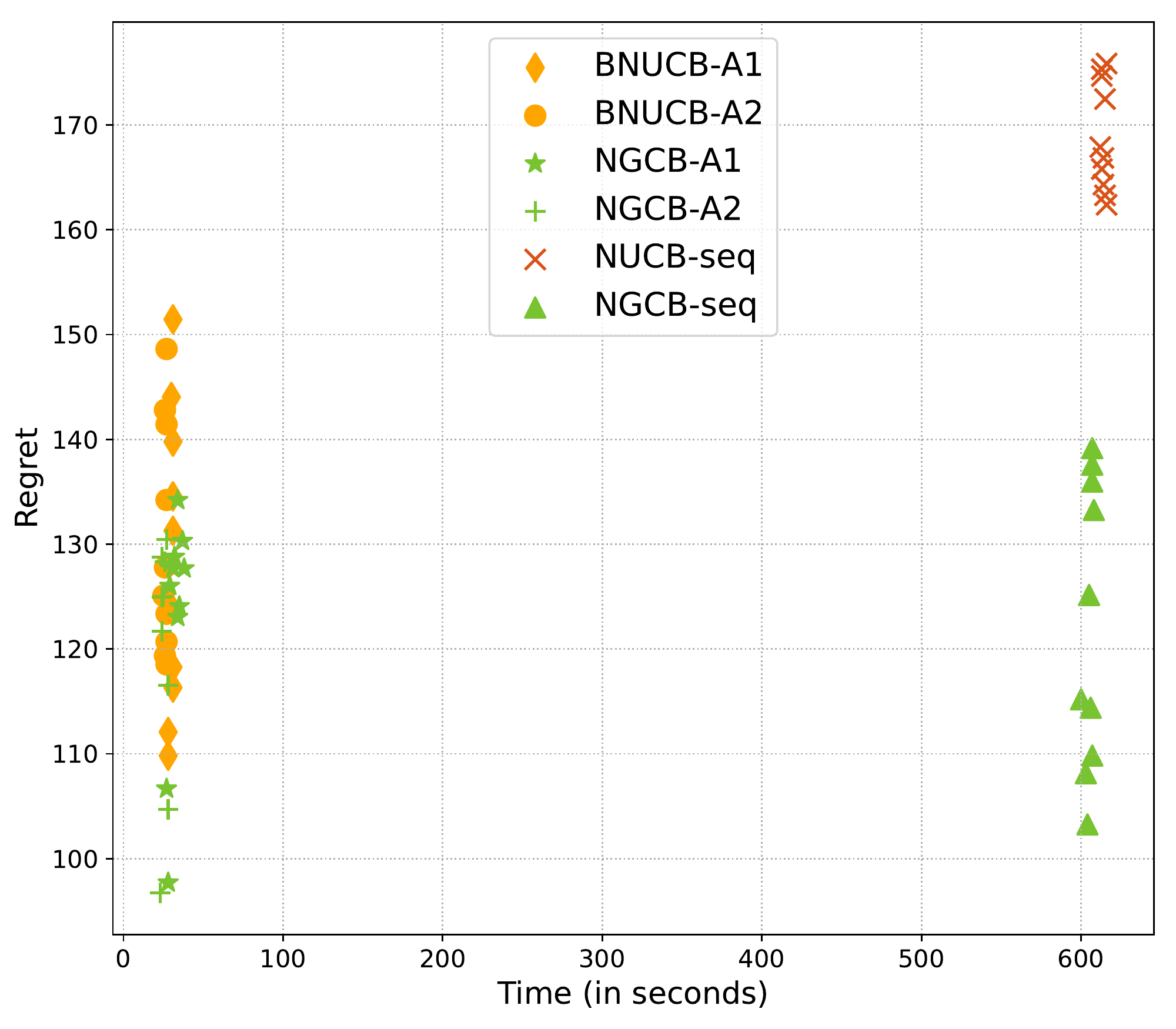}}

\caption{Plots with reward function $h_1(x)$}
\label{fig:plots_h1}
% \vspace{-1em}
\end{figure*}

\begin{figure*}[t]
\centering
\subfloat[$h_2(x)$ with $\sigma_1(x)$]{\label{fig:cosine_s1_appendix}\centering \includegraphics[scale = 0.23]{plots/cosine_s1.pdf}}
~
\subfloat[$h_2(x)$ with $\sigma_2(x)$]{\label{fig:cosine_s2_appendix}\centering \includegraphics[scale = 0.23]{plots/cosine_s2.pdf}}
~
\subfloat[$h_2(x)$ with fixed batch and $\sigma_1(x)$]{\label{fig:cosine_s1_fixed_batch} \centering \includegraphics[scale = 0.23]{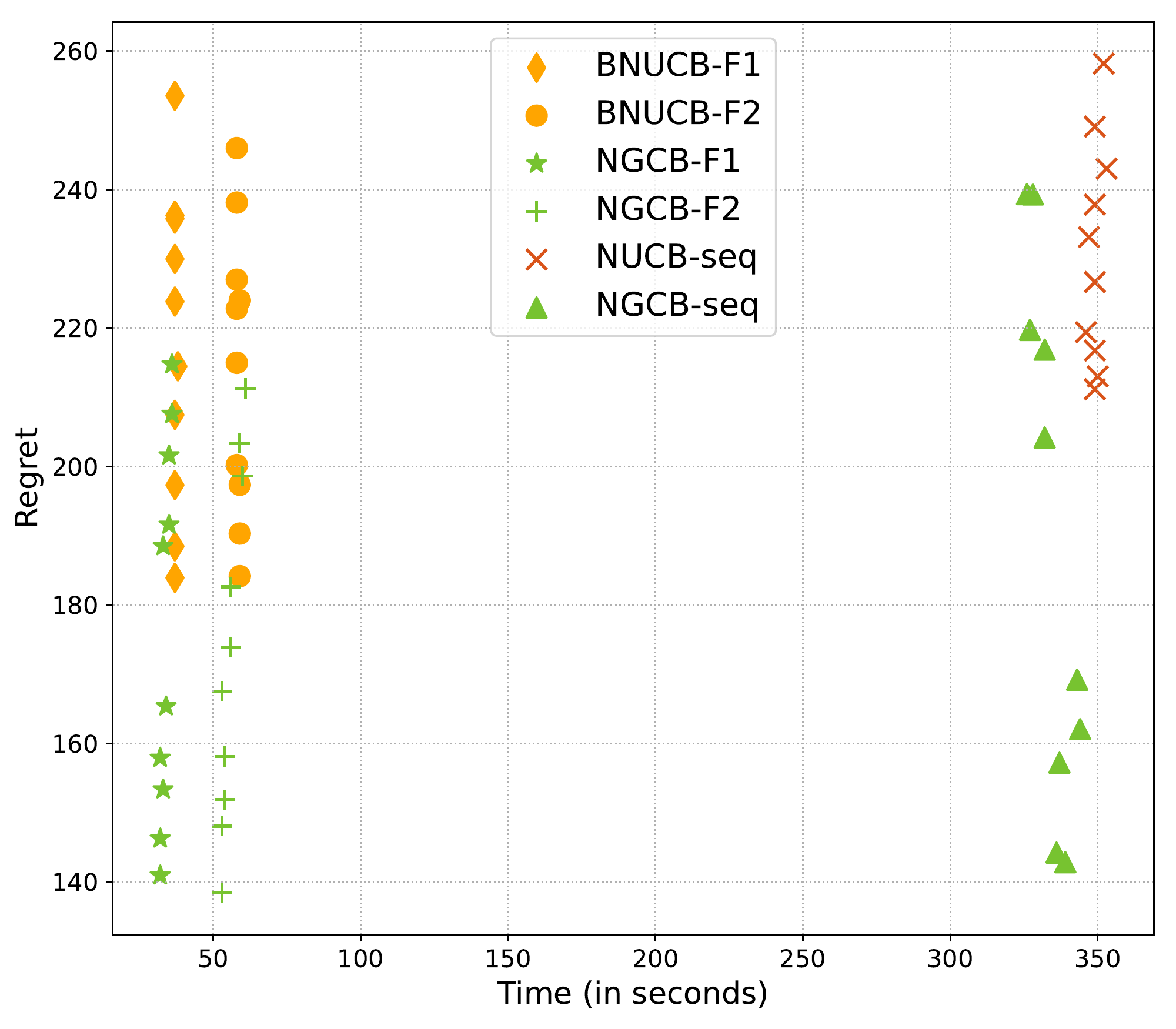}}

\subfloat[$h_2(x)$ with adaptive batch and $\sigma_1(x)$]{\label{fig:cosine_s1_adaptive_batch}\centering \includegraphics[scale = 0.23]{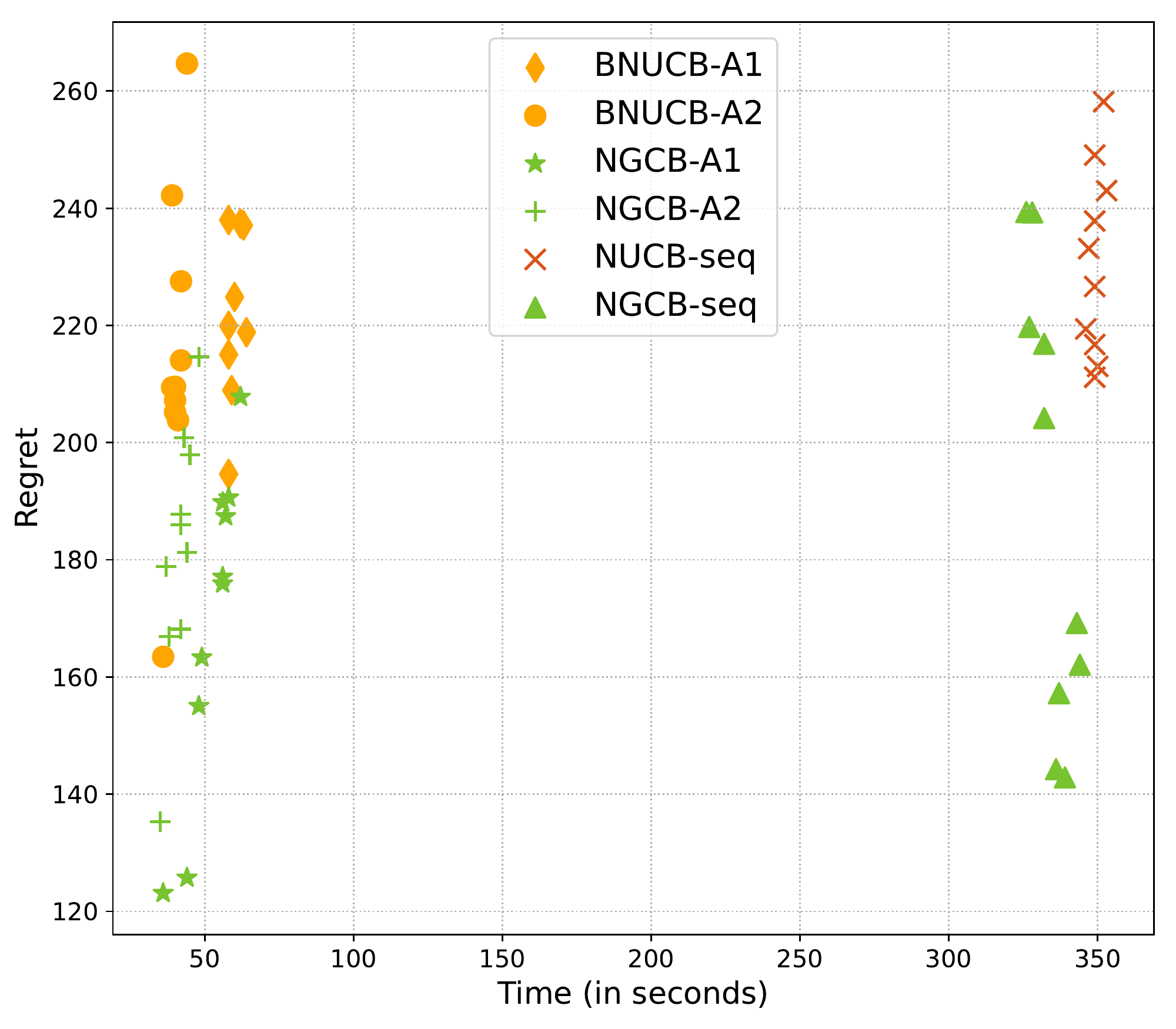}}
~
\subfloat[$h_2(x)$ with fixed batch and $\sigma_2(x)$]{\label{fig:cosine_s2_fixed_batch}\centering \includegraphics[scale = 0.23]{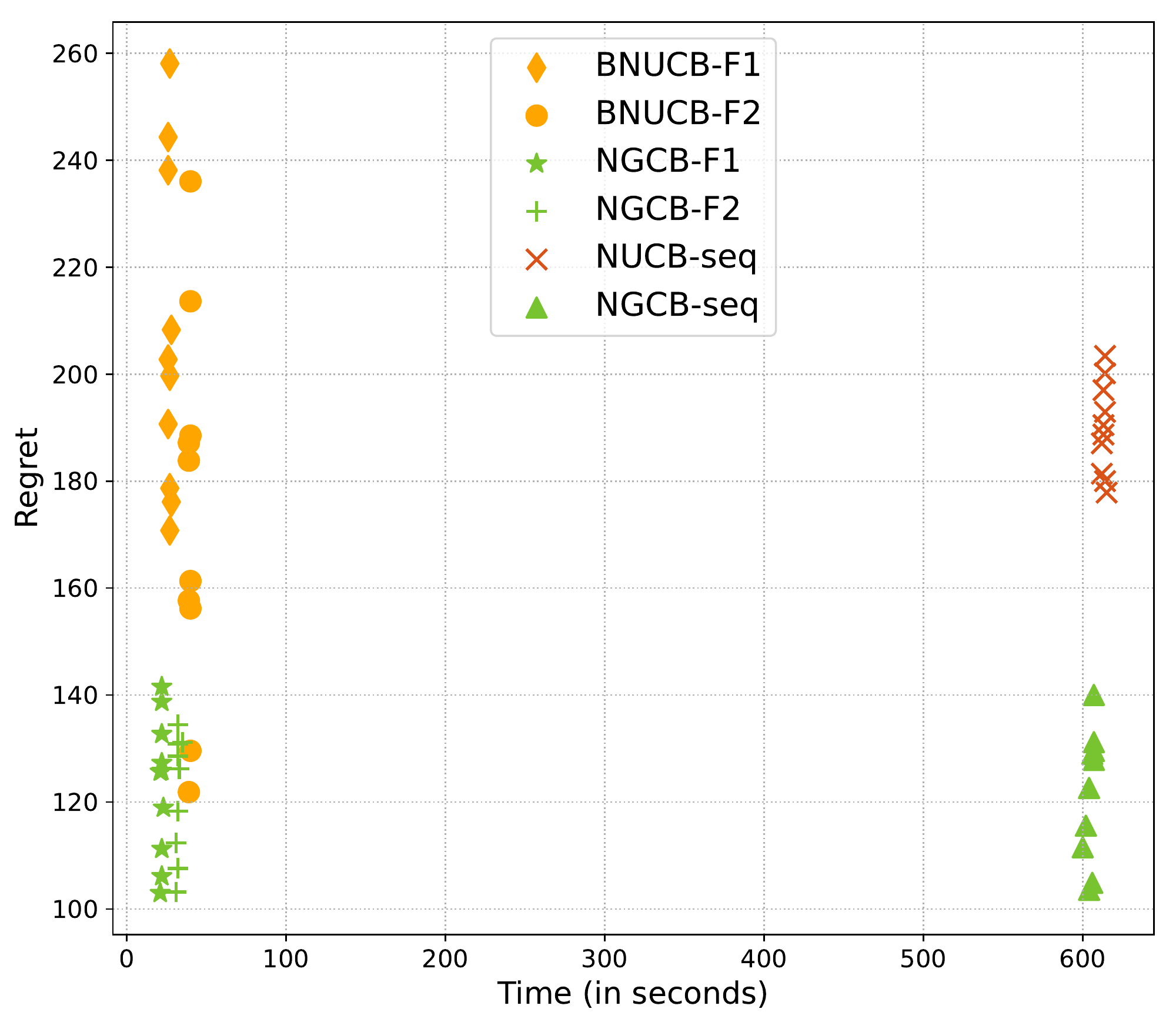}}
~
\subfloat[$h_2(x)$ with adaptive batch and $\sigma_2(x)$]{\label{fig:cosine_s2_adaptive_batch} \centering \includegraphics[scale = 0.23]{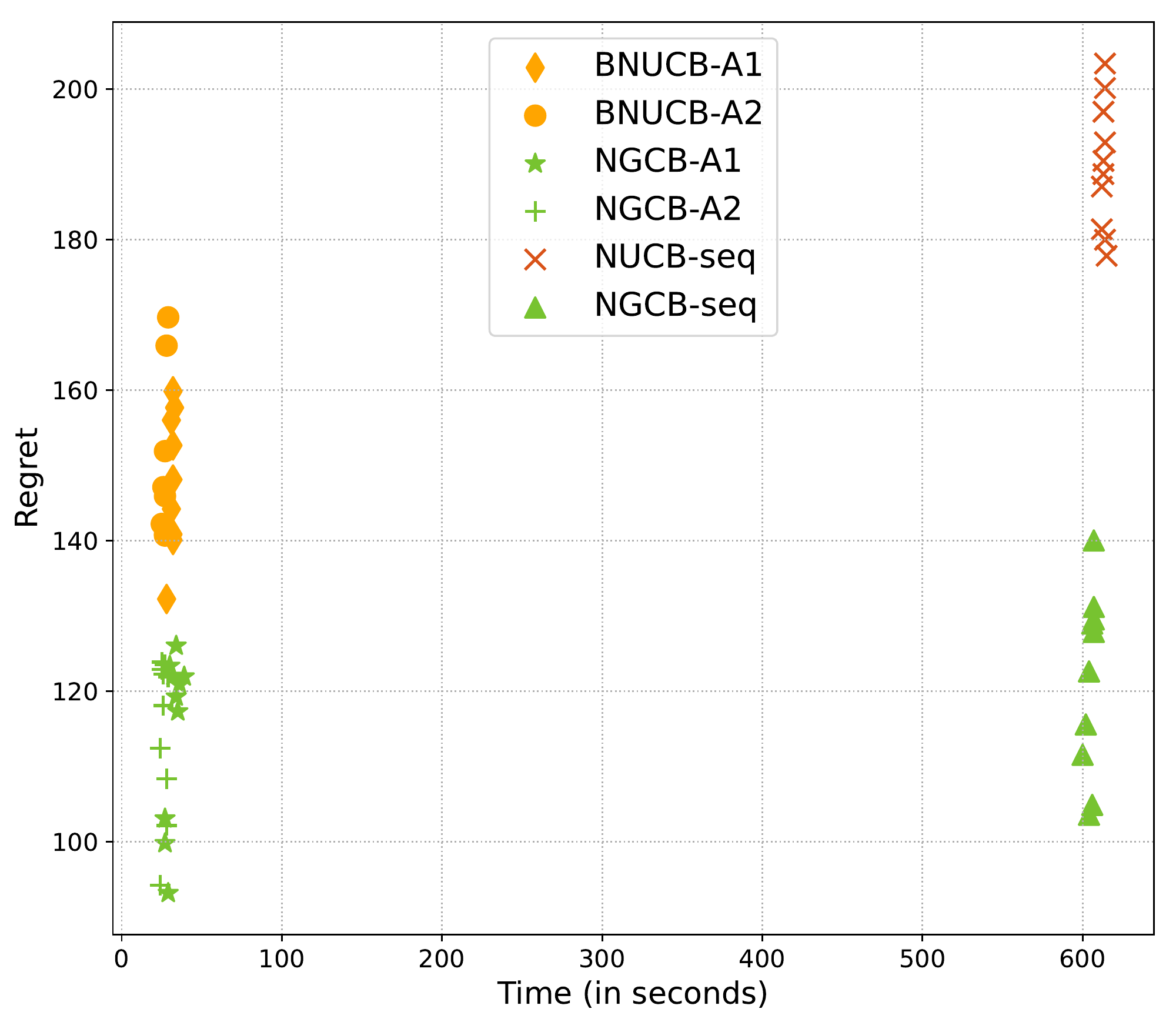}}

\caption{Plots with reward function $h_2(x)$}
\label{fig:plots_h2}
\vspace{-1em}
\end{figure*}

Similarly, for $\sigma_2$, the training schedules are given as follows:

\begin{enumerate}
    \item For Batched NeuralUCB run on a synthetic function:
    \begin{itemize}
        \item $F1$: 200 batches, or equivalently retrain after every $10$ steps.
        \item $F2$: 300 batches, or equivalently retrain after every $6-7$ steps.
        \item $A1$: $q = 3$
        \item $A2$: $q = 4$
    \end{itemize}
    \item For Batched NeuralUCB run on Mushroom:
    \begin{itemize}
        \item $F1$: 100 batches, or equivalently retrain after every $20$ steps.
        \item $F2$: 200 batches, or equivalently retrain after every $10$ steps.
        \item $A1$: $q = 500$
        \item $A2$: $q = 700$
    \end{itemize}
    \item For Batched NeuralUCB run on Shuttle:
    \begin{itemize}
        \item $F1$: 50 batches, or equivalently retrain after every $40$ steps.
        \item $F2$: 100 batches, or equivalently retrain after every $20$ steps.
        \item $A1$: $q = 5$
        \item $A2$: $q = 10$
    \end{itemize}
    \item For NeuralGCB run on a synthetic function:
    \begin{itemize}
        \item $F1$: $q_1 = 4, q_2 = 20, q_r = 40$ for $r \geq 3$ (i.e., retrain after $q_r$ steps)
        \item $F2$: $q_1 = 2, q_2 = 10, q_r = 20$ for $r \geq 3$.
        \item $A1$: $q_1 = 1.2, q_2 = 1.8, q_r = 2.7$ for $r \geq 3$.
        \item $A2$: $q_1 = 1.5, q_2 = 2.25, q_r = 3.375$ for $r \geq 3$.
    \end{itemize}
    \item For NeuralGCB run on Mushroom:
    \begin{itemize}
        \item $F1$: $q_1 = 10, q_2 = 30, q_r = 60$ for $r \geq 3$ (i.e., retrain after $q_r$ steps)
        \item $F2$: $q_1 = 10, q_2 = 20, q_r = 40$ for $r \geq 3$.
        \item $A1$: $q_1 = 300, q_2 = 1200, q_r = 4800$ for $r \geq 3$.
        \item $A2$: $q_1 = 500, q_2 = 2000, q_r = 8000$ for $r \geq 3$.
    \end{itemize}
    \item For NeuralGCB run on Shuttle:
    \begin{itemize}
        \item $F1$: $q_1 = 5, q_2 = 12, q_r = 31$ for $r \geq 3$ (i.e., retrain after $q_r$ steps)
        \item $F2$: $q_1 = 5, q_2 = 10, q_r = 20$ for $r \geq 3$ (i.e., retrain after $q_r$ steps)
        \item $A1$: $q_r = 3$ for all $r$
        \item $A2$: $q_r = 4$ for all $r$
    \end{itemize}
\end{enumerate}

We use this notation of $F1, F2, A1$ and $A2$ to refer to the training schedules in the plots shown in the next section.

\begin{figure*}[t]
\centering
\subfloat[$h_3(x)$ with $\sigma_1(x)$]{\label{fig:xAAx_s1}\centering \includegraphics[scale = 0.23]{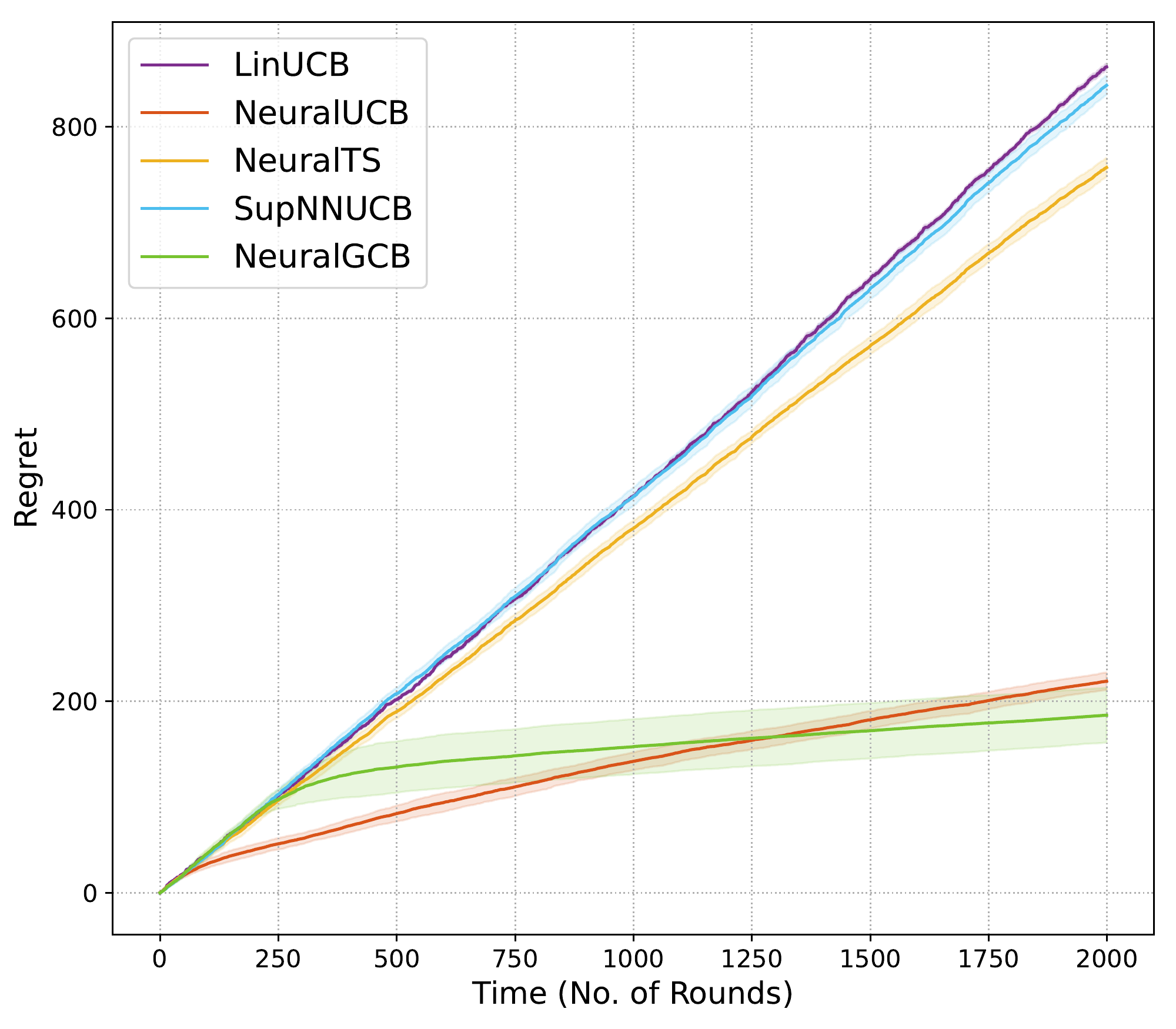}}
~
\subfloat[$h_3(x)$ with $\sigma_2(x)$]{\label{fig:xAAx_s2}\centering \includegraphics[scale = 0.23]{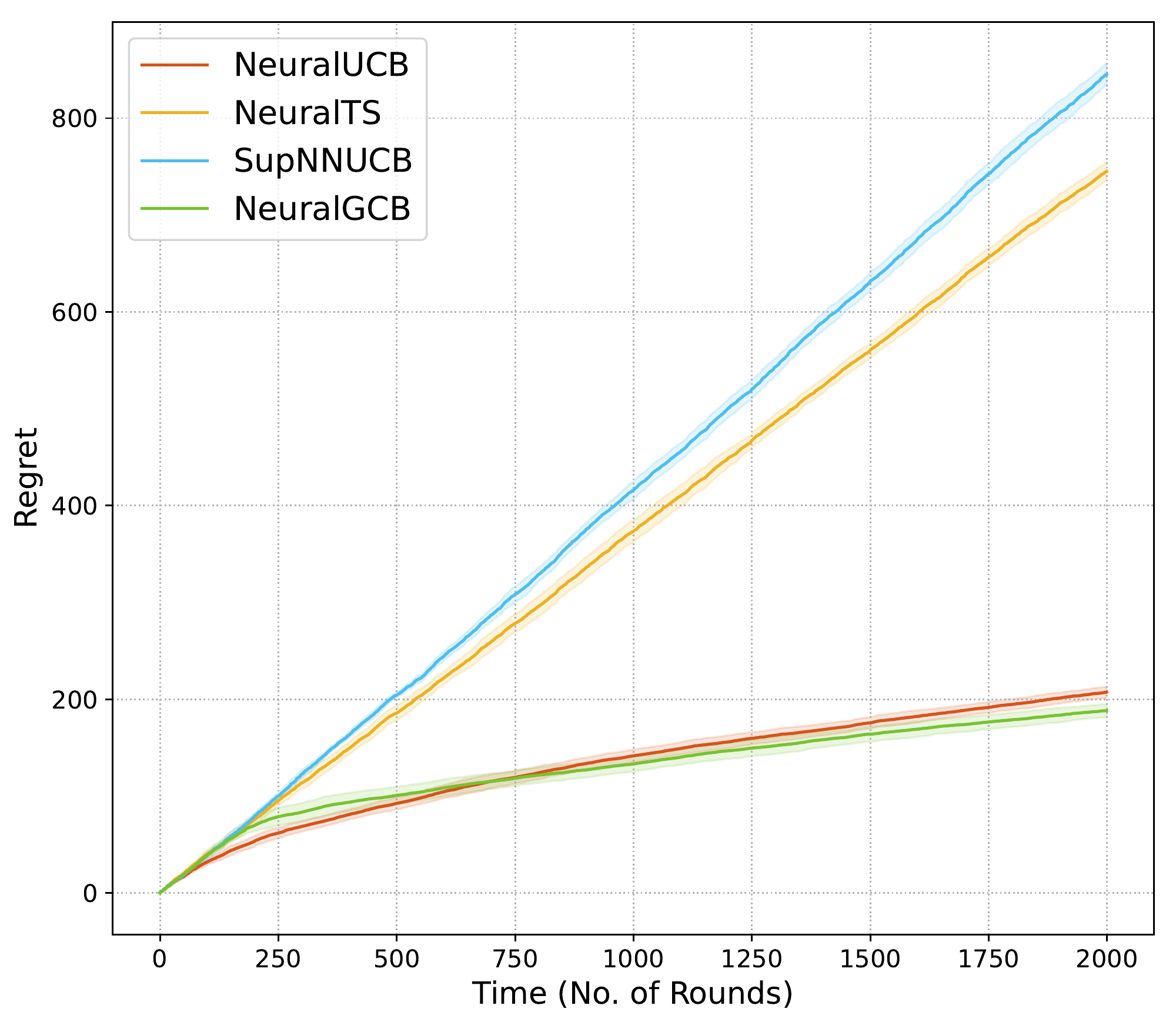}}
~
\subfloat[$h_3(x)$ with fixed batch and $\sigma_1(x)$]{\label{fig:xAAx_s1_fixed_batch} \centering \includegraphics[scale = 0.23]{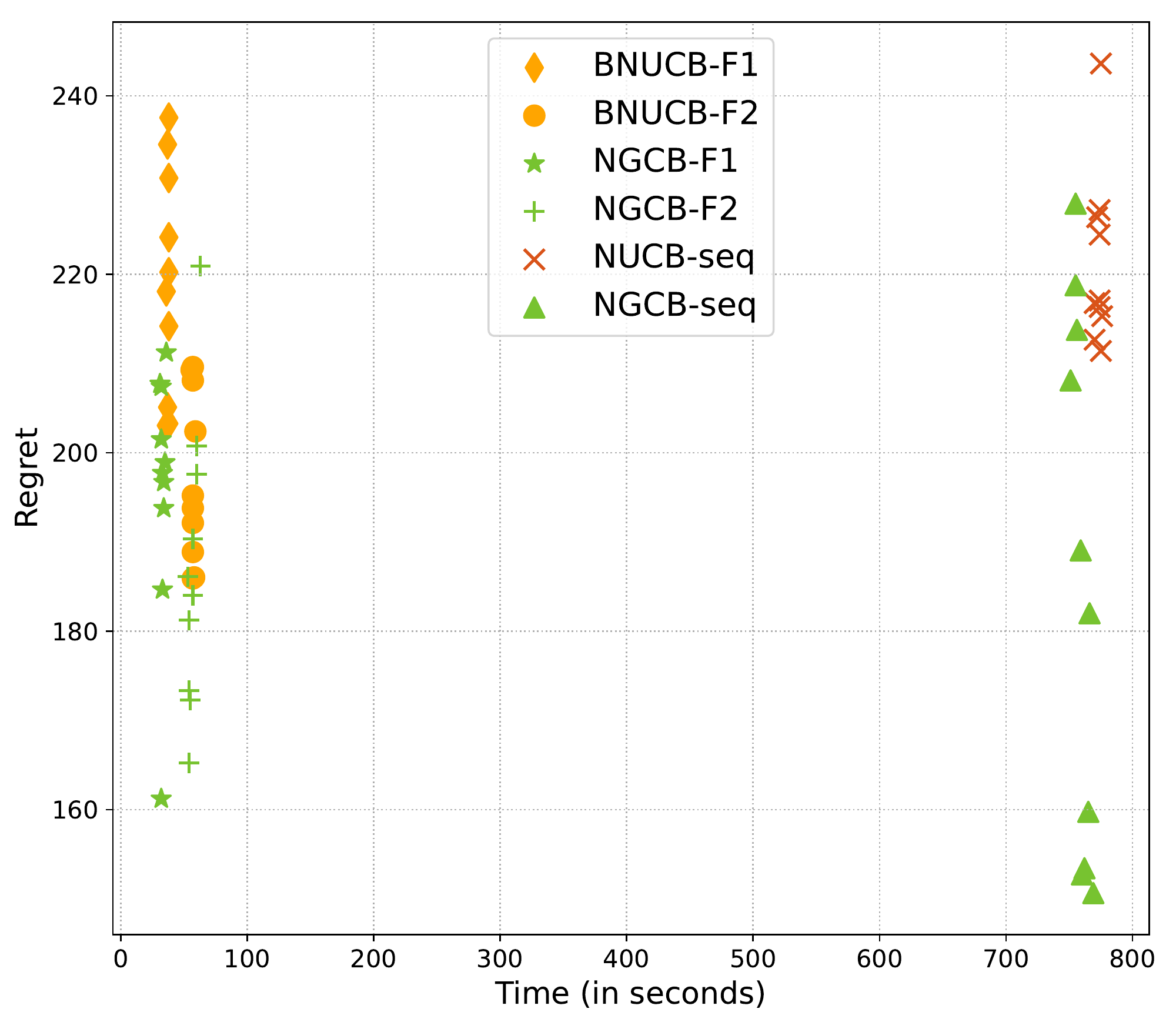}}

\subfloat[$h_3(x)$ with adaptive batch and $\sigma_1(x)$]{\label{fig:xAAx_s1_adaptive_batch}\centering \includegraphics[scale = 0.23]{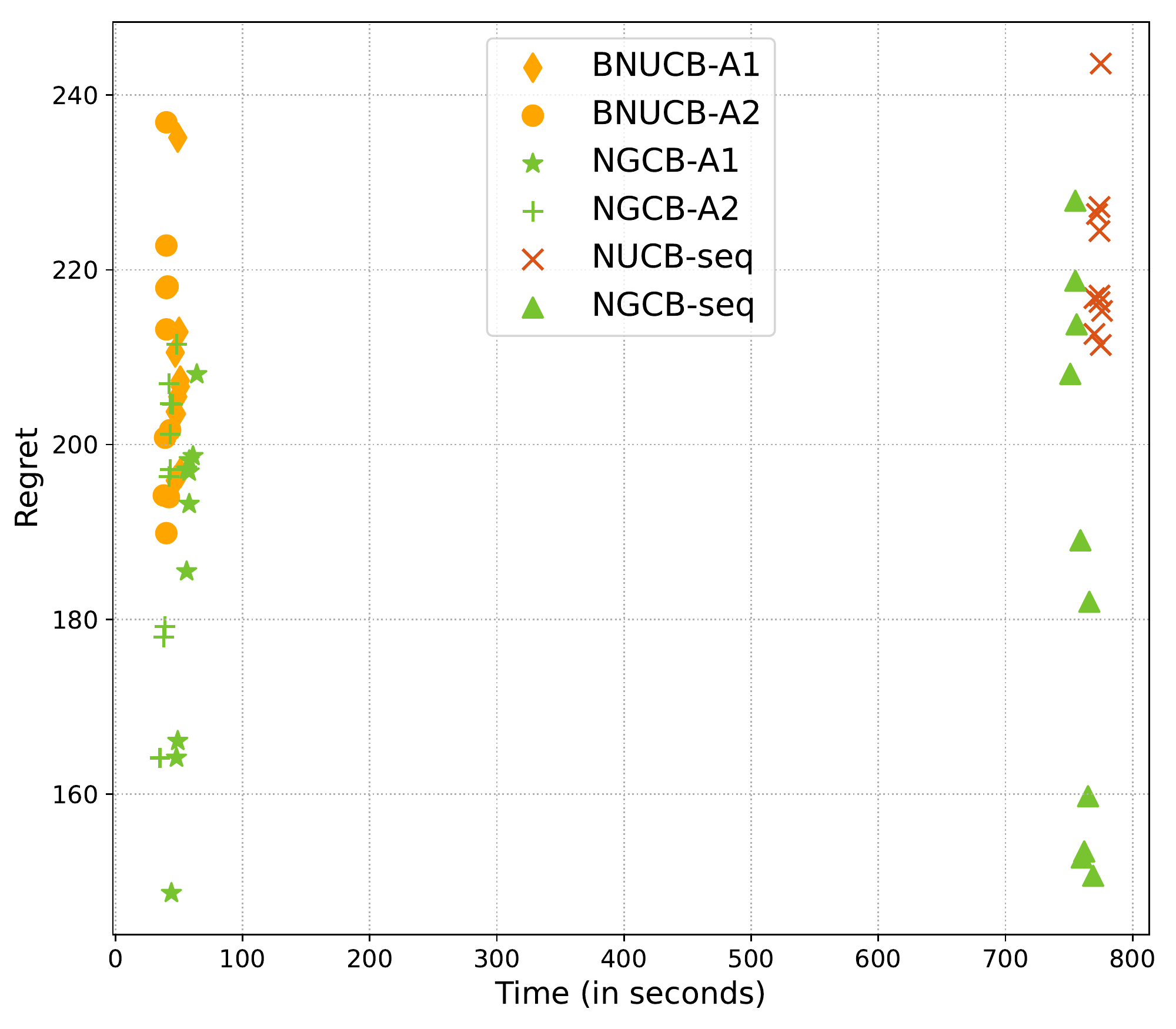}}
~
\subfloat[$h_3(x)$ with fixed batch and $\sigma_2(x)$]{\label{fig:xAAx_s2_fixed_batch}\centering \includegraphics[scale = 0.23]{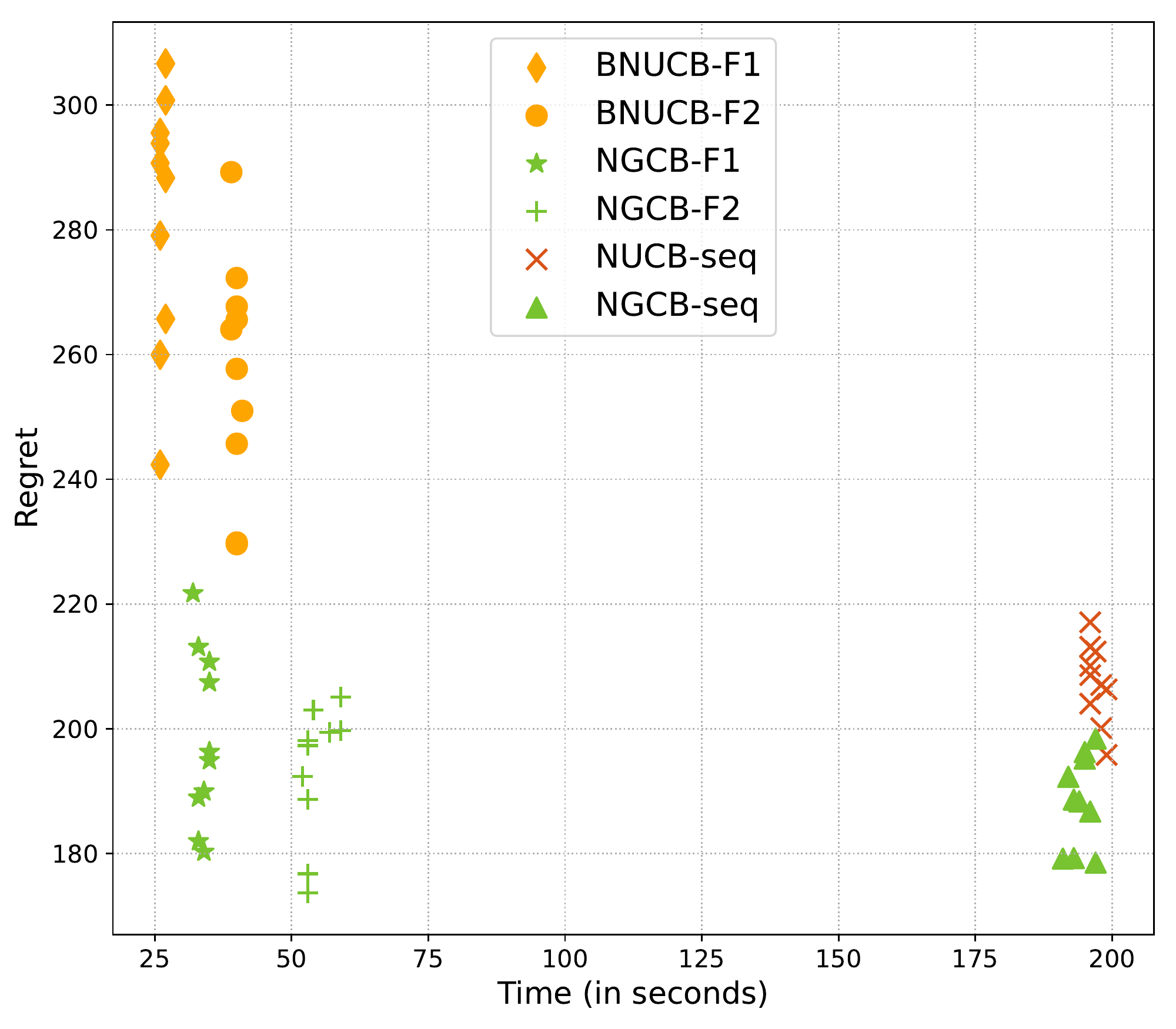}}
~
\subfloat[$h_3(x)$ with adaptive batch and $\sigma_2(x)$]{\label{fig:xAAx_s2_adaptive_batch} \centering \includegraphics[scale = 0.23]{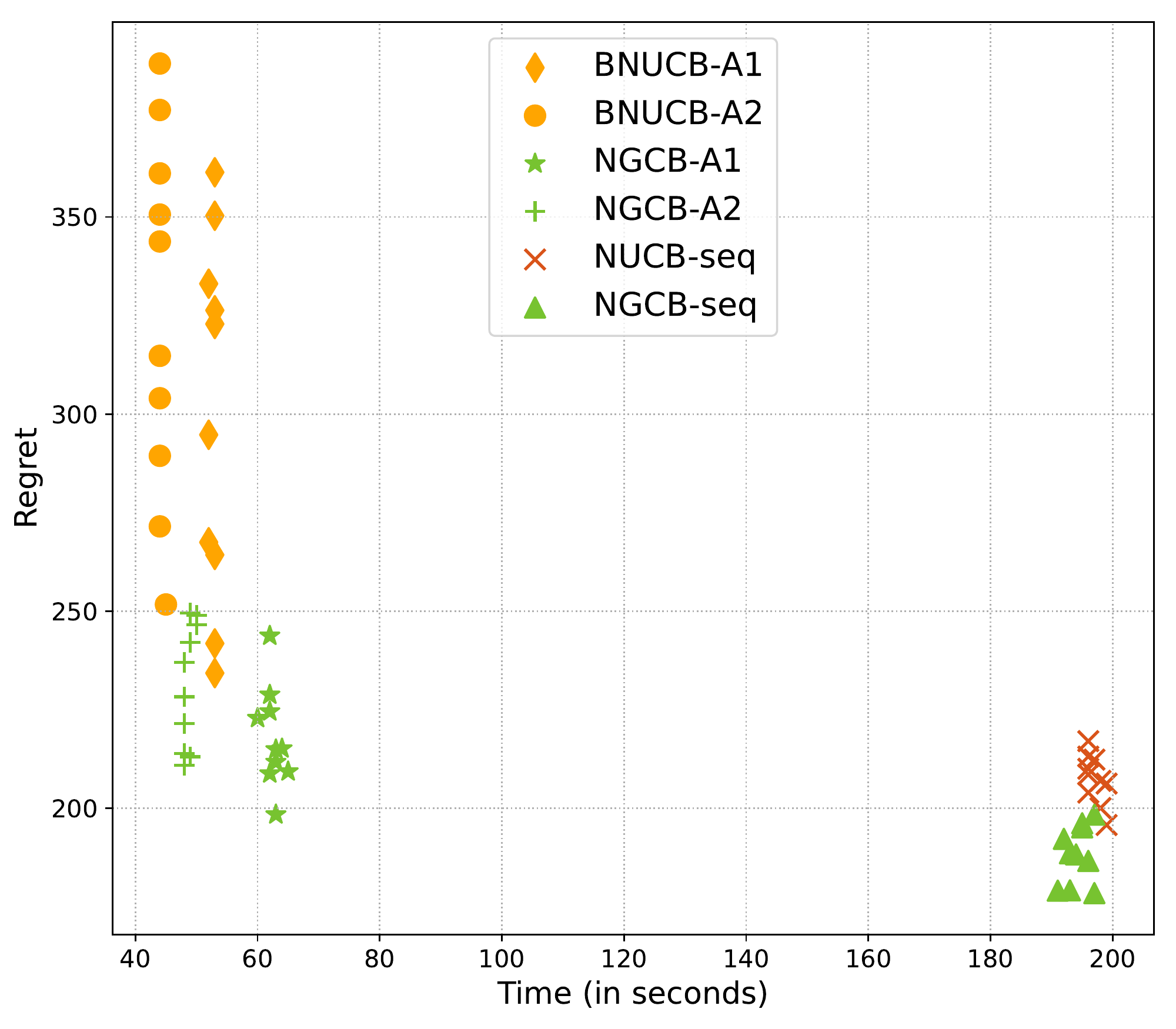}}

\caption{Plots with reward function $h_3(x)$}
\label{fig:plots_h3}
\vspace{-1em}
\end{figure*}

\subsection{Plots}

In this section, we plot the performance of NeuralGCB against several baselines for different experimental setups and reward functions. For each reward function, we plot the following $6$ figures:
\begin{itemize}
    \item Fig. (a): We plot the cumulative regret against time (number of rounds) for various baselines for the case where the activation function of the underlying neural net is $\sigma_1$, or equivalently, ReLU.
    \item Fig (b): We plot the cumulative regret against time (number of rounds) for various baselines for the case where the activation function of the underlying neural net is $\sigma_2$.
    \item Fig (c)-(f): We compare the regret and running time (in seconds) between the batched and the sequential versions of NeuralUCB and NeuralGCB. We plot the regret incurred against time taken (running time, in seconds) for Batched-NeuralUCB (BNUCB), (Batched)-NeuralGCB (BNGCB), NeuralUCB (NUCB-seq), (Sequential) NeuralGCB (NGCB-seq) under different training schedules for $10$ different runs. In particular, in (c) and (e), we consider fixed batch sizes with $\sigma_1$ and $\sigma_2$ as activation functions respectively and in (d) and (f) with consider adaptive batch sizes with $\sigma_1$ and $\sigma_2$ as activation functions respectively. The batch sizes used in the plots are described in the previous section.
\end{itemize}

\begin{figure*}[t]
\centering
\subfloat[Mushroom with $\sigma_1(x)$]{\label{fig:mushroom_s1_appendix}\centering \includegraphics[scale = 0.23]{plots/mushroom_s1.pdf}}
~
\subfloat[Mushroom with $\sigma_2(x)$]{\label{fig:mushroom_s2_appendix}\centering \includegraphics[scale = 0.23]{plots/mushroom_s2.pdf}}
~
\subfloat[Mushroom with fixed batch and $\sigma_1(x)$]{\label{fig:mushroom_s1_fixed_batch} \centering \includegraphics[scale = 0.23]{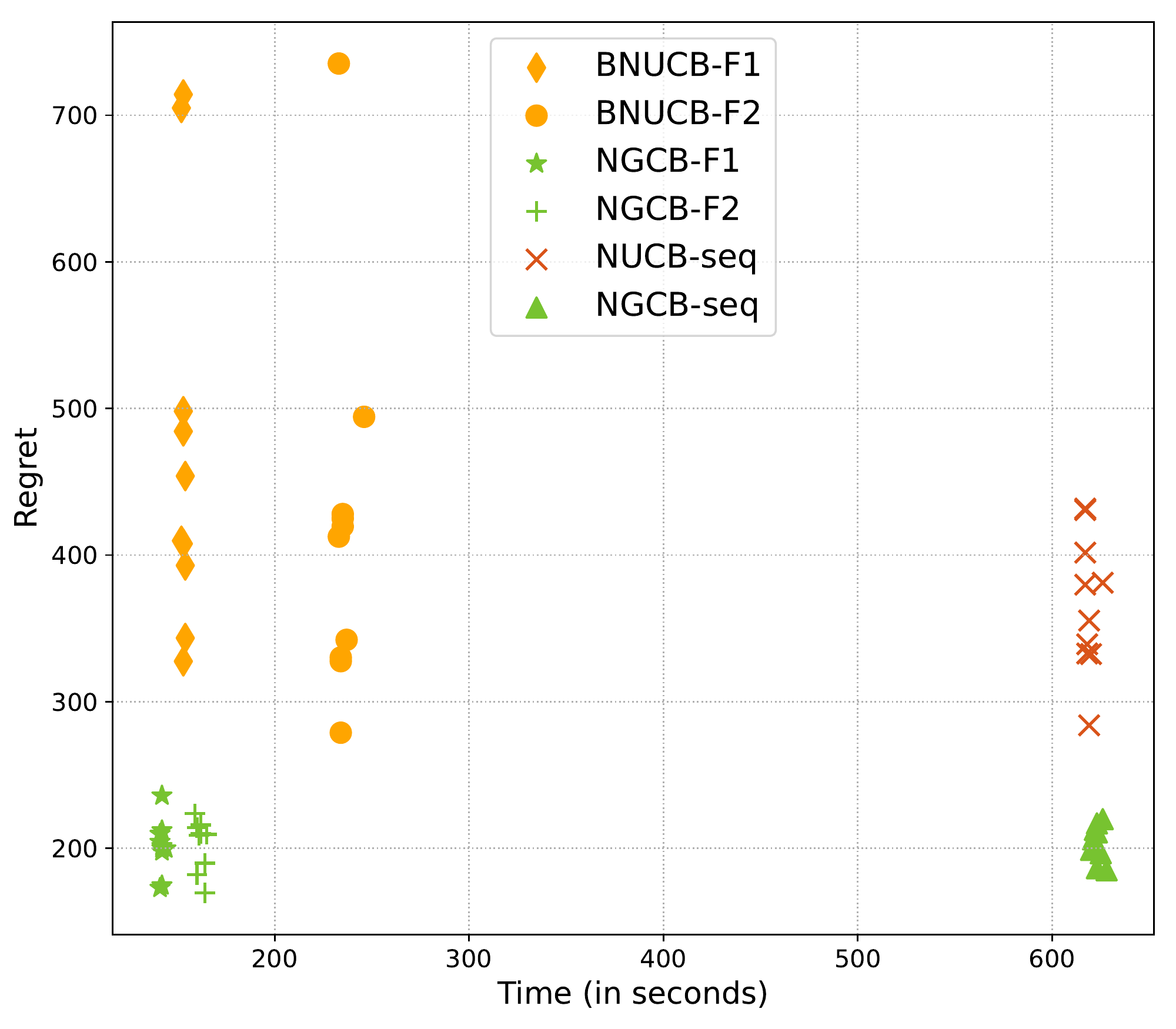}}

\subfloat[Mushroom with adaptive batch and $\sigma_1(x)$]{\label{fig:mushroom_s1_adaptive_batch}\centering \includegraphics[scale = 0.23]{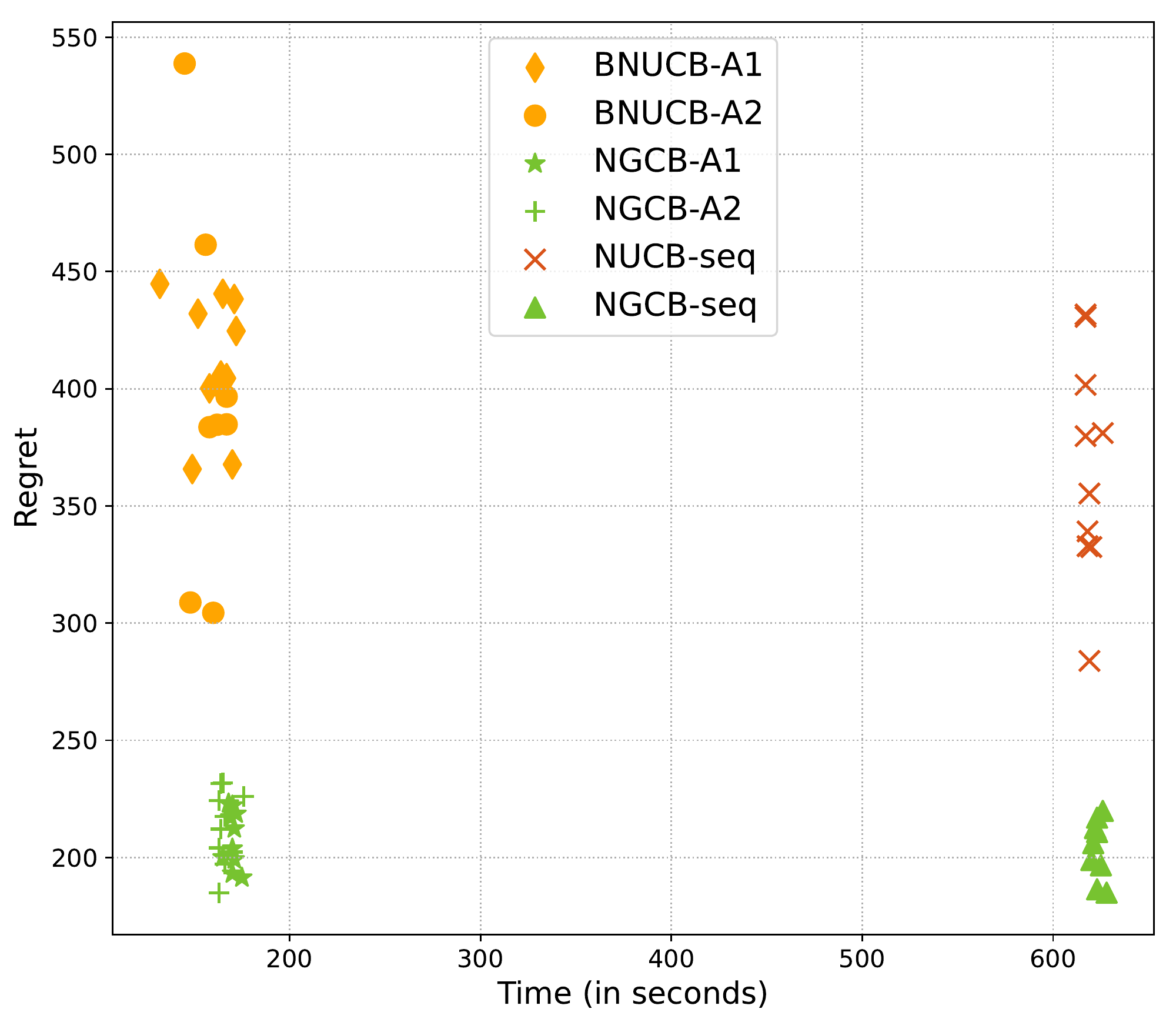}}
~
\subfloat[Mushroom with fixed batch and $\sigma_2(x)$]{\label{fig:mushroom_s2_fixed_batch}\centering \includegraphics[scale = 0.23]{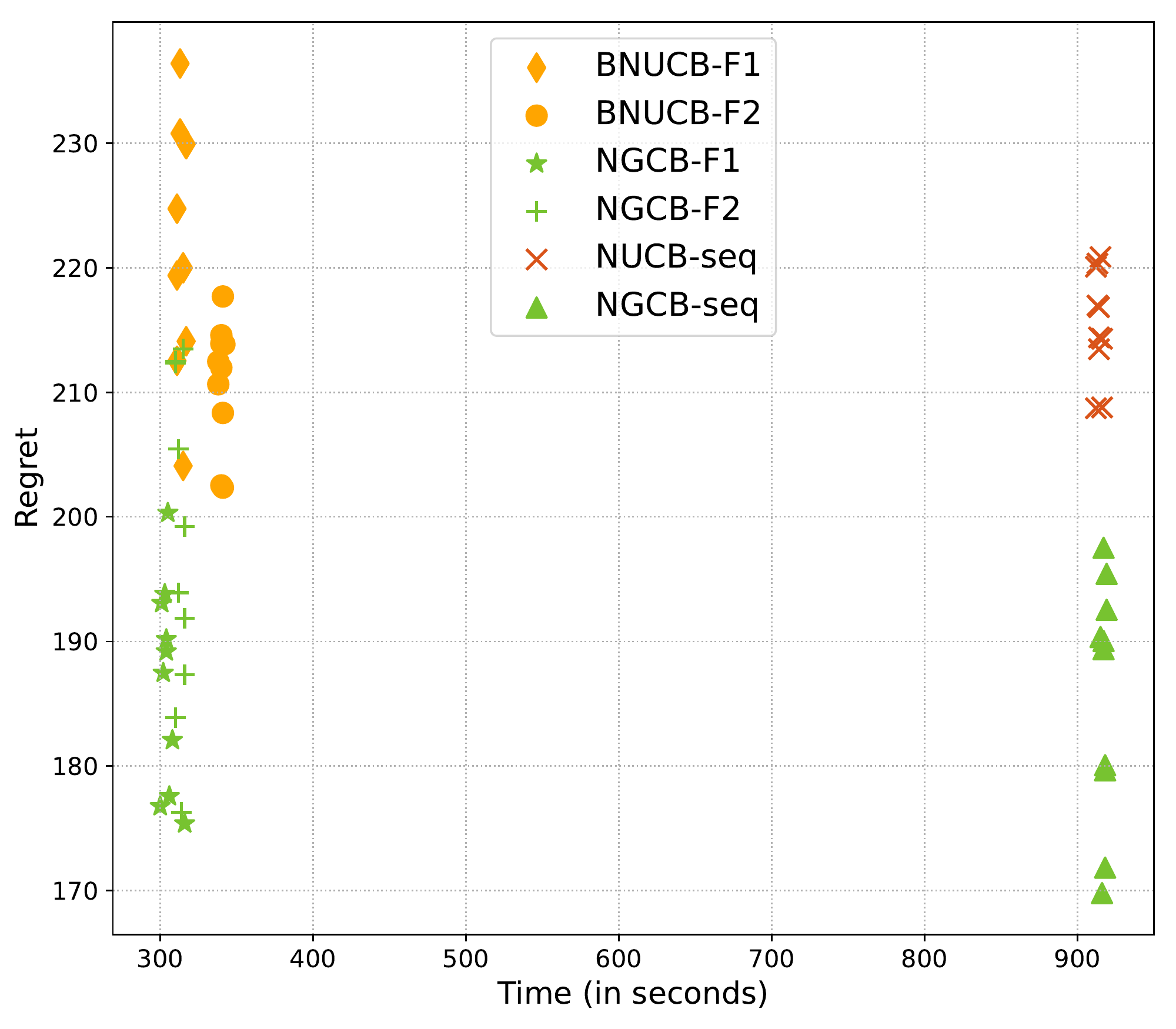}}
~
\subfloat[Mushroom with adaptive batch and $\sigma_2(x)$]{\label{fig:mushroom_s2_adaptive_batch} \centering \includegraphics[scale = 0.23]{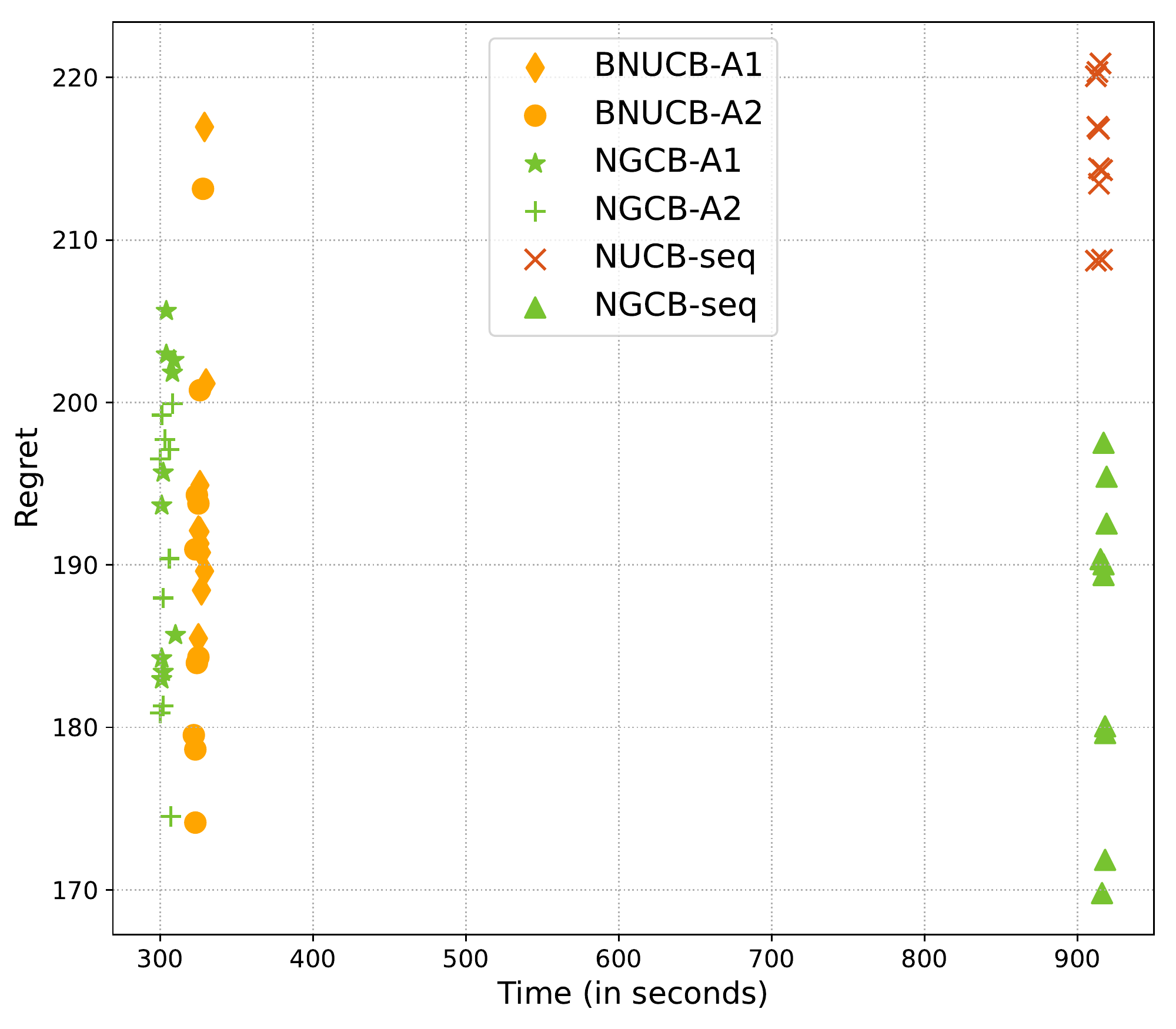}}

\caption{Plots for the dataset Mushroom}
\label{fig:plots_mushroom}
% \vspace{-1em}
\end{figure*}

\begin{figure*}[t]
\centering
\subfloat[Statlog with $\sigma_1(x)$]{\label{fig:shuttle_s1}\centering \includegraphics[scale = 0.23]{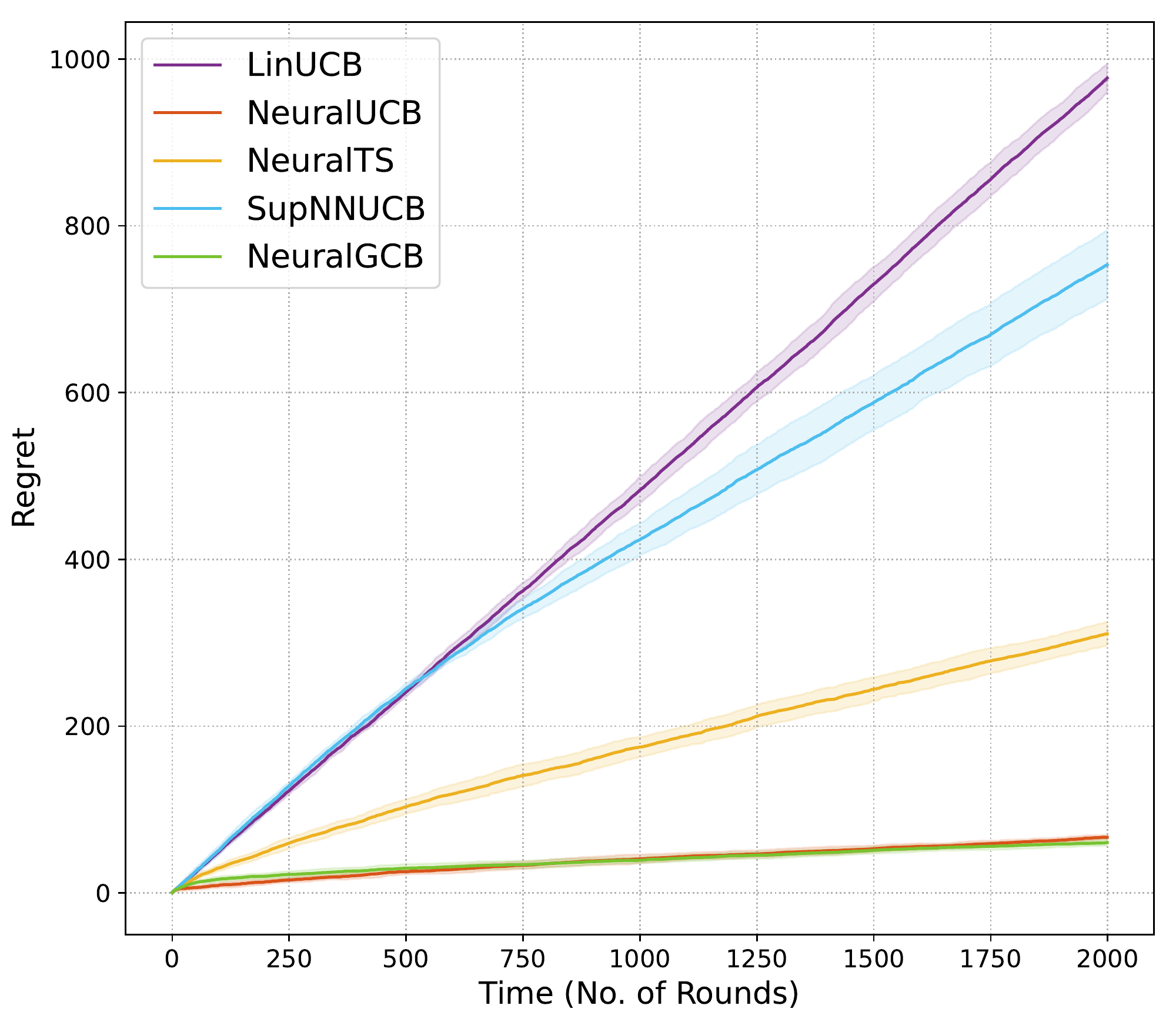}}
~
\subfloat[Statlog with $\sigma_2(x)$]{\label{fig:shuttle_s2}\centering \includegraphics[scale = 0.23]{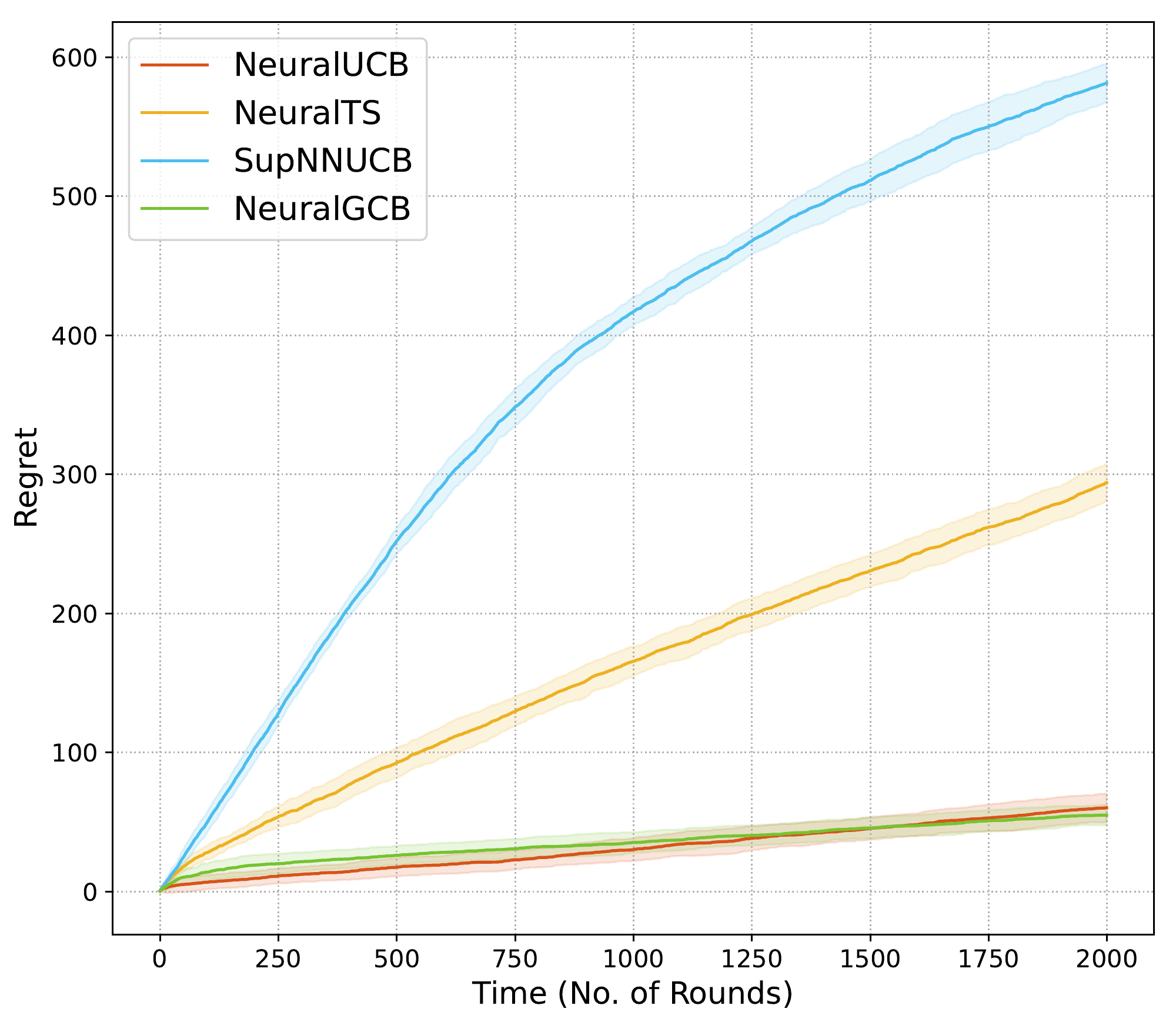}}
~
\subfloat[Statlog with fixed batch and $\sigma_1(x)$]{\label{fig:shuttle_s1_fixed_batch} \centering \includegraphics[scale = 0.23]{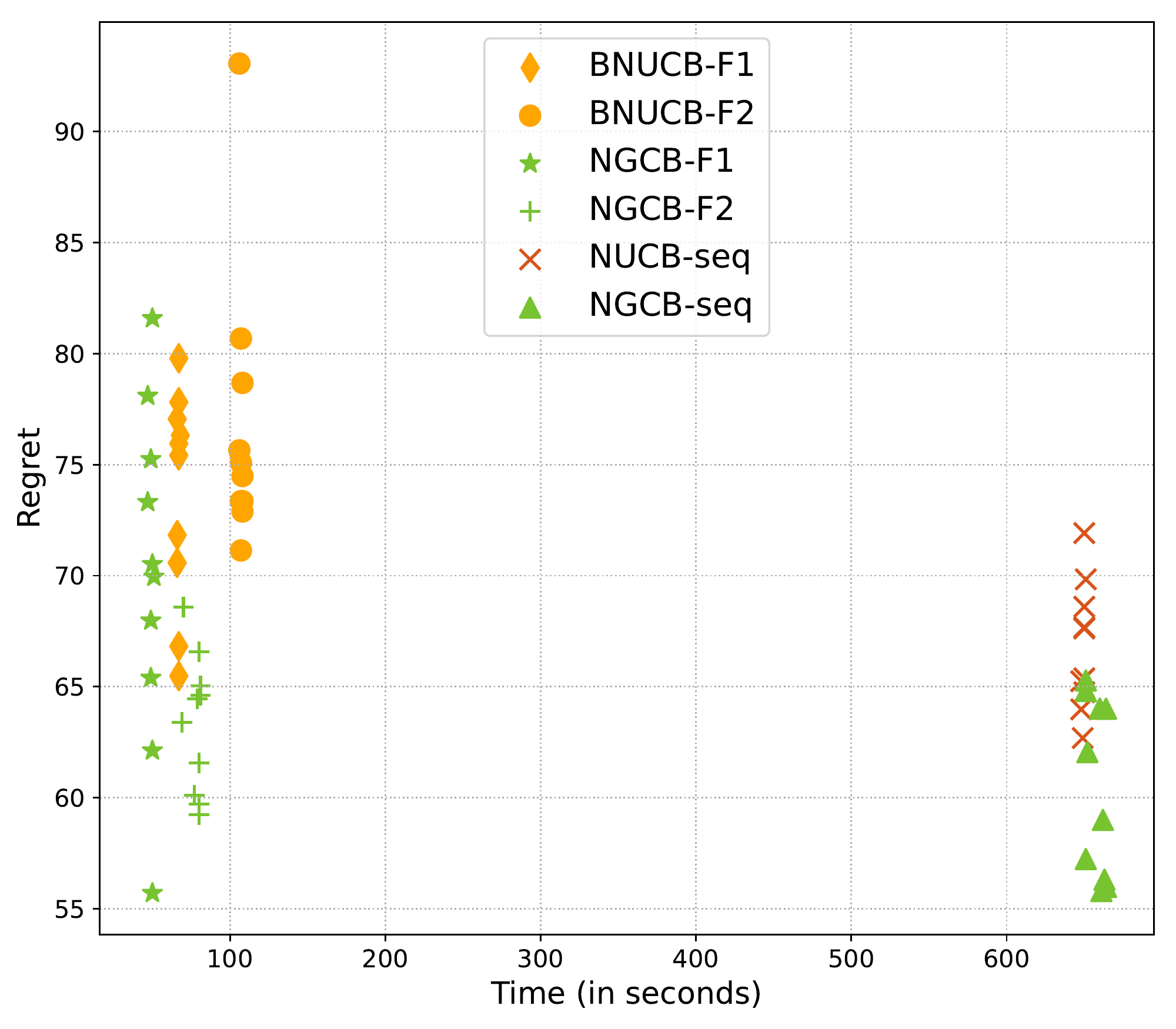}}

\subfloat[Statlog with adaptive batch and $\sigma_1(x)$]{\label{fig:shuttle_s1_adaptive_batch}\centering \includegraphics[scale = 0.23]{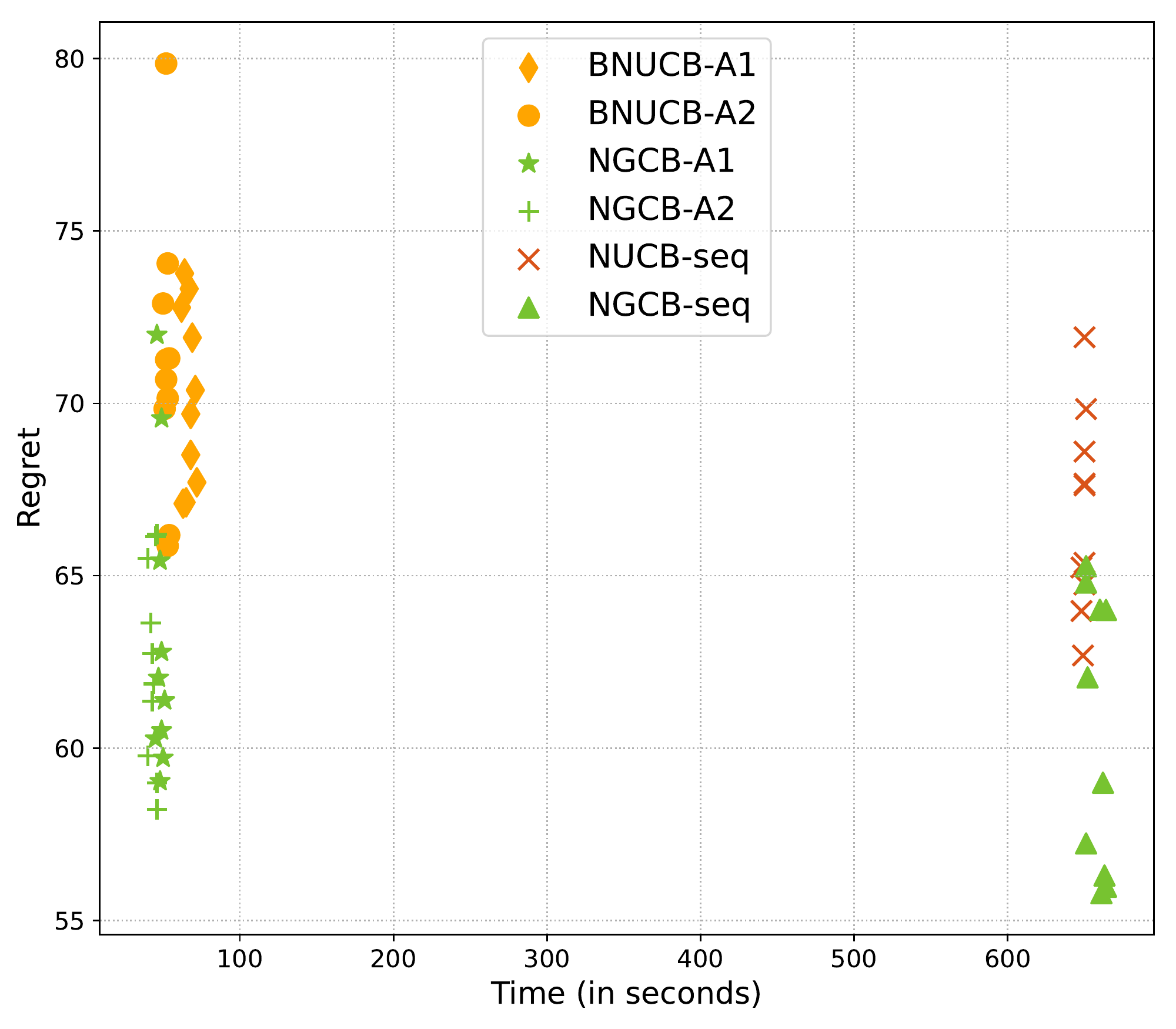}}
~
\subfloat[Statlog with fixed batch and $\sigma_2(x)$]{\label{fig:shuttle_s2_fixed_batch}\centering \includegraphics[scale = 0.23]{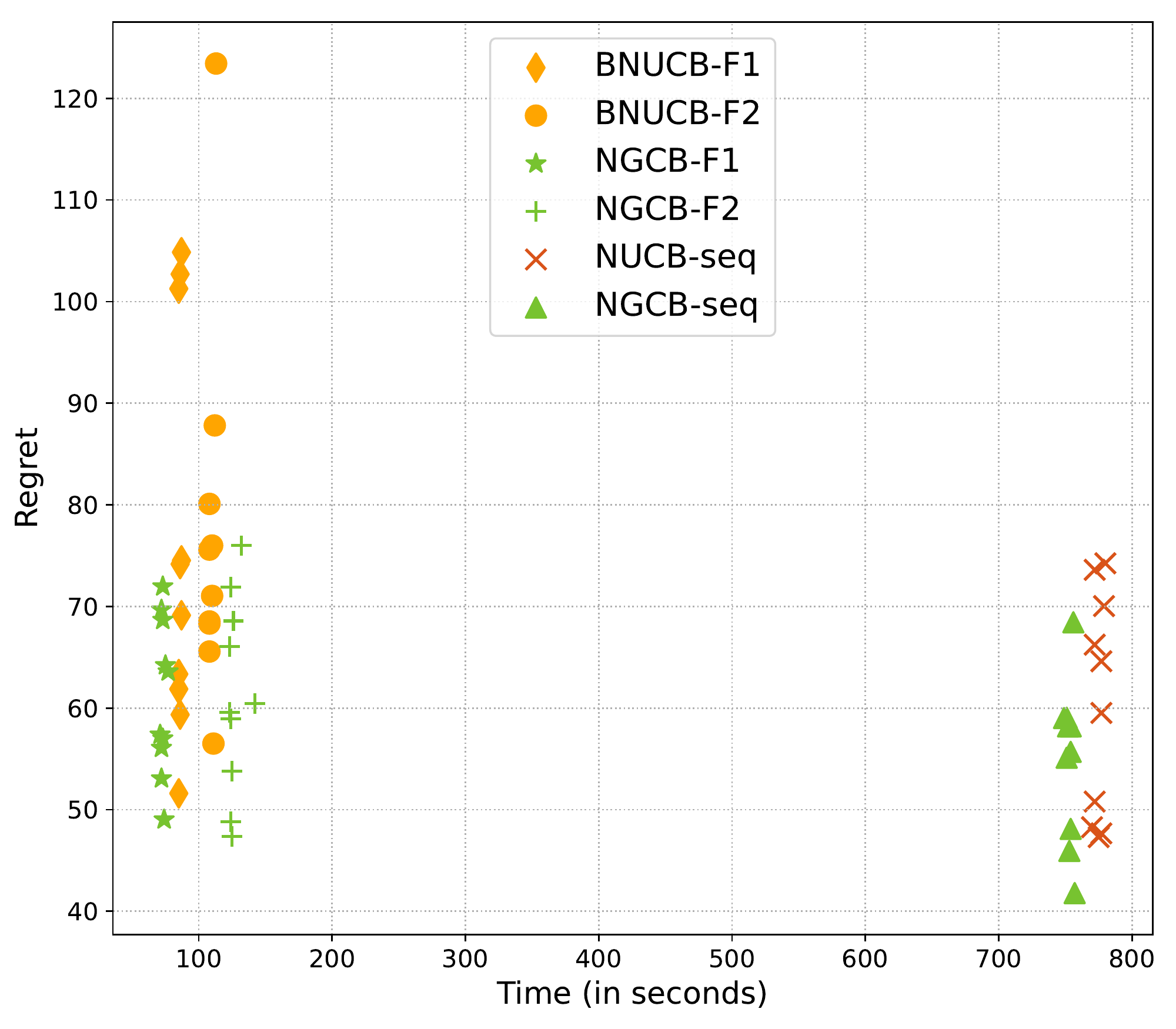}}
~
\subfloat[Statlog with adaptive batch and $\sigma_2(x)$]{\label{fig:shuttle_s2_adaptive_batch} \centering \includegraphics[scale = 0.23]{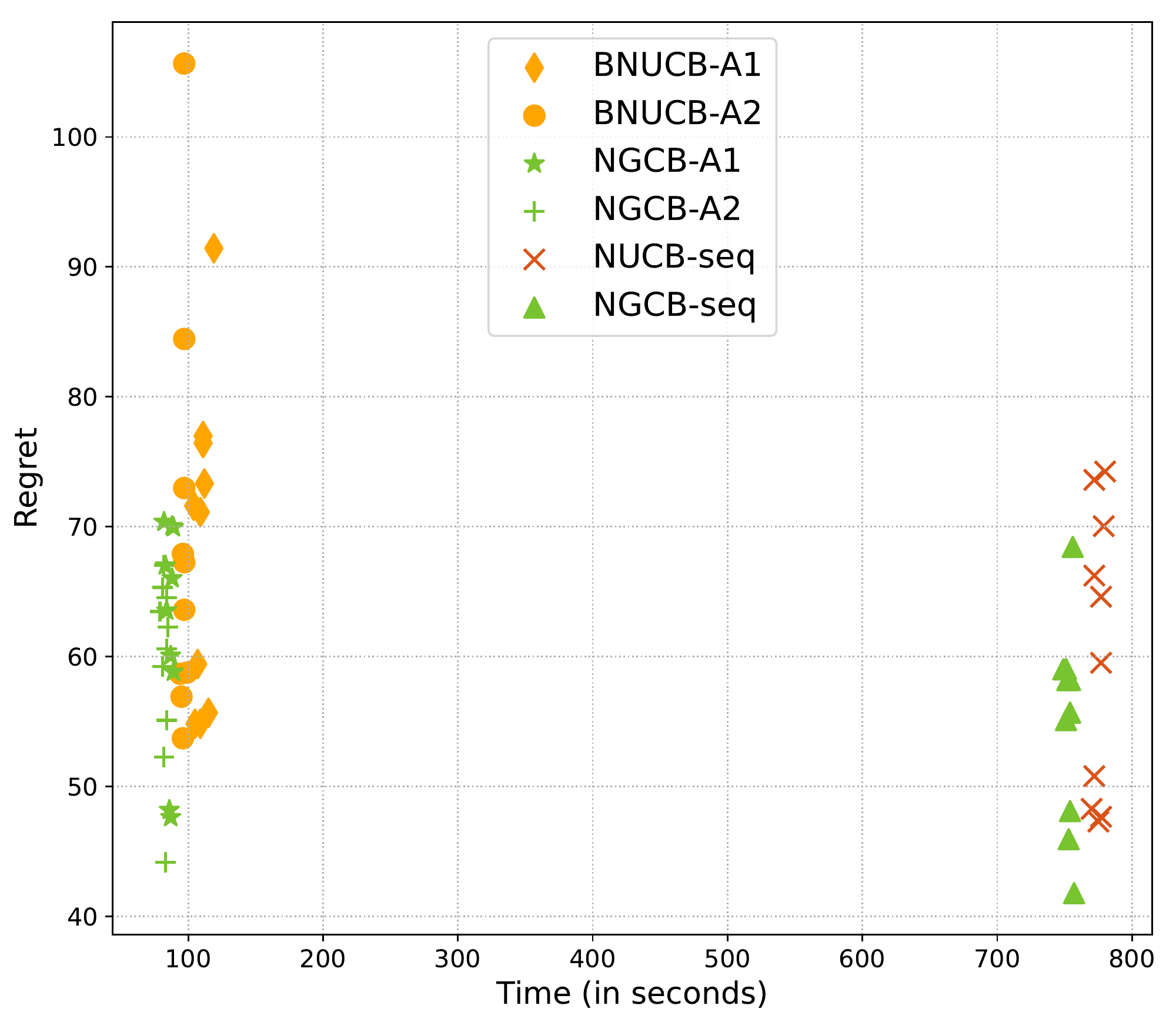}}

\caption{Plots for the dataset Statlog (Shuttle)}
\label{fig:plots_statlog}
% \vspace{-1em}
\end{figure*}

From the above figures, it is evident that NeuralGCB outperforms all other existing algorithms on variety of datasets, including both synthetic and real world datasets. Moreover, the conclusion holds equally well for different activation functions, further bolstering the practical efficiency of the proposed algorithm. Additionally, it can be noted that the regret incurred by the algorithms in experiments with $\sigma_2$ as the activation is less than that for the experiments with $\sigma_1$ as the activation, thereby demonstrating the effect of smooth kernels on the performance of the algorithms in practice. \\

Lastly, the extensive study with different training schedules shows that batched version performs almost as well as the fully sequential version in terms of regret on all the datasets while having a considerably smaller running time. It can be noted from the plots that NeuralGCB has a superior performance compared to Batched-NeuralUCB in terms of regret for comparable running times on different datasets and with different activation functions which further strengthens the case of NeuralGCB as a practically efficient algorithm. \\

All the experiments were carried out using Python3 on a computer (CPU) with 12 GB RAM and Intel i7 processor (3.4 GHz) with an overall compute time of around 250-300 hours.

\end{document}